\def\tb{\textbf}
\newcommand{\vect}[1]{\pmb{#1}}
\newcommand{\mat}[1]{\pmb{#1}}
\newcommand{\norm}[1]{\left\|#1\right\|}
\newcommand{\R}{\mathbb{R}}
\def\mb {\mathbf}
\def\tcb{\textcolor{blue}}
\def\tcr{\textcolor{red}}
\definecolor{newgreen}{RGB}{0,150,0}
\newtheorem{prop}{Proposition}
\newtheorem{theorem}{Theorem}
\newtheorem{lemma}{Lemma}
\def\ben{\begin{equation*}}
\def\een{\end{equation*}}
\def\be{\begin{equation}}
\def\ee{\end{equation}}
\def\beaa{\begin{eqnarray*}}
\def\eeaa{\end{eqnarray*}}
\def\bea{\begin{eqnarray}}
\def\eea{\end{eqnarray}}
\def\bleq{\begin{flalign}}
\def\eleq{\end{flalign}}
\DeclareMathOperator{\Tr}{Tr}
\title{Enhanced Signal Recovery \\via Sparsity Inducing Image Priors}
\author{Hojjat Seyed Mousavi}
\begin{document}
\frontmatter

%


\psutitlepage

\psucommitteepage

\thesisabstract{SupplementaryMaterial/Abstract}

\thesistableofcontents

\thesislistoffigures

\thesislistoftables


\thesisacknowledgments{SupplementaryMaterial/Acknowledgments}

\thesisdedication{SupplementaryMaterial/Dedication}{Dedication}

\thesismainmatter

\allowdisplaybreaks{
%

\chapter{Introduction}
\label{chapter:introduction}

In many domains of data processing, a commonly encountered problem is that of signal representation. This is a well studied problem and has a vast number of applications in information processing, in general. In this dissertation, we are particularly interested in
 signal recovery
under sparsity inducing image priors and its application to some real world problems.

In recent years, sparse modeling and sparse representation of signals have received a lot of attention and are very well-studied by the researchers in the signal processing and statistics community. This chapter aims to review some of relevant ideas to sparse signal representation and give a comprehensive understanding of its application in signal and image processing, machine learning and computer vision. Finally, we will motivate the contributions of this disseration.

\section {Sparsity in Signal Processing: Overview }

Parsimony in signal representation has been  shown to have value in many applications. It is now a well-known fact that a large class of signals can be represented in a compact manner with respect to some basis.
Among those bases, the most widely applicable ones are Fourier, wavelet or cosine basis.  In fact, this idea has been leveraged successfully in commercial signal compression algorithms known to be JPEG 2000 \cite{Taubman:JPEG_Book01}.
Motivated by prevalence of sparsity in human perception \cite{Olshausen_SimpleCellReceptive_Nature96}, research is conducted on sparse coding for signal and images for a variety of applications. Namely, signal and image acquisition \cite{Candes:ExactRecons_InfoTheory06}, data compression \cite{Taubman:JPEG_Book01} and modeling \cite{Lustig:CSforMRI_MRM07}.

A sparse vector or matrix is essentially a vector or matrix with only a few number of non-zero elements. The number of non-zero elements is called sparsity level. Signals or their corresponding vector/matrix representation can be sparse in a variety of different ways. Each of which has its own structure and can be applied in real-world problems to induce different kinds of structures and \emph{a priori} on the signals. Fig. \ref{Fig:SparseNotions} shows a few examples of how differently sparse signals can be represented. these structures have different motivations and backgrounds so that make each of them suitable for a specific scenario.

\begin{figure}
  \centering
  \includegraphics[width=0.4\textwidth]{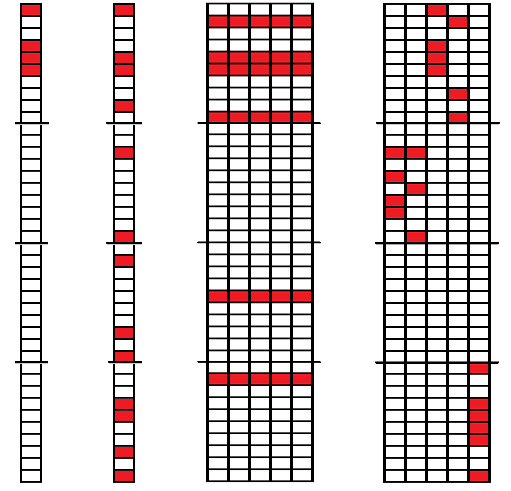}\\
  \caption{Examples of different sparse structures that can be captured by a vector or a matrix. Red boxes indicate non-zero values and white ones indicate zero values.}
  \label{Fig:SparseNotions}
\end{figure}

\subsection{Compressive Sensing}

This section surveys the theory of compressive sensing or CS in signal processing. CS theory \cite{Candes:ExactRecons_InfoTheory06} asserts that certain signals and images are recoverable from far fewer samples or measurements than traditional methods use. Compressive sensing aims to acquire and reconstruct a signal through optimization and  finding solutions to underdetermined linear systems. To make this possible, CS significantly relies on the fact that underlying signal is sparse in some proper basis.

In fact, compressive sensing which is also known as sparse sensing can be viewed as a formalization of the quest for economical signal acquisition and representation and it has witnessed a spurt over the past decade \cite{Candes:ExactRecons_InfoTheory06,Donoho:CS_InfoTheory06}. The sparse sensing problem is described as follows:
\be
\mb{y} = \bm{\Phi}\mb{x} + \mb{n},
\label{eq:sparse_sensing}
\ee
where $\mb{x}\in \mathbb{R}^n$ is the signal, $\mb{y} \in \mathbb{R}^m$ is the measurement ($m < n$), $\bm{\Phi} \in \mathbb{R}^{m\times n}$ is the measurement matrix, and $\mb{n} \in \mathbb{R}^m$ is additive noise. The sparse signal recovery problem is then posed as an optimization problem of the form:
\be
\mb{\hat{x}} = \arg\min_{\mb{x}}\|\mb{x}\|_0 ~\mbox{s.t.}~\|\mb{y} - \bm{\Phi}\mb{x}\|_2 \leq \epsilon.
\label{eq:sparse_recovery}
\ee
where $\|\vect x \|_0$ is the $\ell_0$ pseudo-norm and is equal to the number of non-zero elements of vector $\vect x$.
Under certain conditions, the $\ell_0$-norm can be relaxed to the $\ell_1$-norm, leading to a more tractable and convex optimization formulations. It is well-known that usually additional structure among coefficient of sparse signal is incorporated and it can be leveraged for better sparse recovery \cite{Cevher_SparseRecoveryGraphModels_SPM2010}.  For instance, a power-law relationship can be determined among the sorted coefficients of a compressible signal. Or as it is ascertained in \cite{Cevher_SparseRecoveryGraphModels_SPM2010} specific probability distributions such as the Student's t-distribution and generalized Pareto distribution are well-known to be compressible. In fact, incorporating such priors as \emph{compressible priors} in a probabilistic framework for sparse signal recovery problem has already been established and will be reviewed in later chapters.

The work in this area so far has focused on the application of sparsity for signal reconstruction problems, where a signal is recovered from a set of fewer measurements. This idea can also be extended to classification by learning \emph{class-specific dictionaries}, the novelty arising from the careful choice of dictionaries that best capture the inherent sparsity in a signal representation whilst having discriminative power.

\subsection{Sparse Representation-based Classification}

Classification is a commonly encountered problem in information processing domain. Typically in a classification problem, we have access to data/information belonging to two or more different classes or groups, and the challenge is to develop an effective automatic scheme of assigning a new object to its correct group. A list of classification problems is included but not limited to this indicative list: Object recognition, medical imaging, video tracking, pedestrian detection in video sequences, bioinformatics and biometric applications for security (fingerprint/ face recognition), communications, document classification, automatic target recognition, etc.

A few of common characteristics of real-world classification problems which we aim to address a few of them throughout this dissertation  are as follows:
\begin{itemize}
\item high-dimensional data,
\item limited access to training,
\item data acquisition in the presence of noise,
\item incorporating prior knowledge into training model,
\end{itemize}

A significant contribution to the development of algorithms for image classification that addressed some of the above mentioned challenges up to some extent is a recent sparse representation-based classification (SRC) framework \cite{Wright:SRC_PAMI2009}, which exploits the discriminative capability of sparse representations. The key idea is that of \emph{designing class-specific dictionaries} in combination with the analytical framework of compressive sensing. Given a sufficiently diverse collection of training images from each class, any image from a specific class can be approximately represented as linear combination of training images from the same class. Therefore, if we have training images of all classes and form a basis or dictionary based on that, any new and unseen test image has a sparse representation with respect to such overcomplete dictionary. It is worth to mention that sparsity assumption holds due to the class-specific design of dictionaries as well as the assumption of the linear representation model.

Suppose we have sets of training image vectors (vectorized images) from multiple classes, collected into dictionaries $\mat A_i, i=1,\ldots,K$. Let there be $N_i$ training images (each in $\R^n$) corresponding to the $K$ predefined classes ${C}_i, i = 1,\ldots,K$. The collection of all training images is expressed using the matrix called \emph{dictionary}
\be
\mat A = [\mat A_1 ~ \mat A_2 ~ \ldots ~ \mat A_K],
\label{eq:basis_rep}
\ee
where $\mat A \in \R^{n \times T}$, with $T = \sum_{i=1}^{K}N_i$. A new test image $\vect{y} \in \R^{n}$ can now be represented as a sparse linear combination of the training atoms as,
\be
\vect y ~\simeq~ \mat A_1 \vect x_1 + \mat A_2 \vect x_2 + \ldots + \mat A_K \vect x_K = \mat A \vect x,
\label{eq:sparse_rep}
\ee
where $\vect{x}$ is ideally expected to be a sparse vector (i.e., only a few entries in $\vect{x}$ are nonzero). Motivated by CS framework, the classifier seeks the sparsest representation by solving the following problem:
\be
\hat{\vect{x}} = \arg\min \norm{\vect{x}}_1 \quad\text{subject to}\quad \|\vect y - \mat A\vect{x}\|_2 \leq \epsilon. 
\label{eq:global_l0}
\ee
and then class assignment is simply carried out using the partial reconstruction error terms as follows:
\be
\text{identity}(\vect y) = \arg\min_i\|{\vect y  - \mat A \delta_{C_i}(\hat{\bm{x}})}\|_2
\label{Eq:ClassAsignment}
\ee
where $\delta_{C_i}(\hat {\vect x})$ keeps all the elements of $\vect x$ which are from class $C_i$ and zero-outs everything else.

Experiments have shown that even in the presence of severe distortions such as occlusion and pixel noise, the resulting sparse representation reveals the class-association of $\vect y$ with a high degree of accuracy \cite{Wright:SRC_PAMI2009,Pillai:IrisSRC_PAMI2011}. The robustness of the SRC method to real-world image distortions resulted in the widespread application of SRC in practical classification tasks.

With emergence of SRC as a powerful and robust classifier, modifications to the original framework have been considered. One simple extension uses  regularizers and minimizes the sum of $l_2$-norms of the sub-vectors $\vect x$, which results in an $l_1-l_2$ group sparse regularizer \cite{Yu:GroupSparsity_ISBI11}. This idea is very similar to the idea of the group Lasso proposed in \cite{Yuan:GroupRegression_RoyalStatSociet2006}. Along the same line, to enforce sparsity within each group, hierarchical Lasso was proposed in \cite{Sprechmann:CHI-LASSO_TSP2011}. Some other regularizers that promote group structures and  extend SRC have been proposed in \cite{Majumdar:GroupSparseClassification_ICASSP09,Majumdar:GroupSparseClassifier_PatternRecog10}.

\subsection{Sparse Priors for Signal Representation}

Prior information about the structure of signals often leads to significant performance improvements in many information analysis and signal processing applications. In fact, introducing regularizers in the analytical formulation of problems is motivated by having prior knowledge about the signal and the fact that it can help in obtaining solutions to ill-posed problems \cite{Tychonoff:IllPosedProblems_Book77}. Prior information also manifest itself in other forms such as constraints or probability distributions on signals.
Further, the success of sparse representation-based methods and its many extensions is based on the validity of the linear representation model which itself relies on the assumption that enough diversity is captured by a \emph{large enough} set of training samples. In practice however, most real-world applications have the limitation that rich training is not available.

An attempt to alleviate this problem was proposed to use prior information about the underlying signal. The main idea is that the sparse optimization problem in \eqref{eq:global_l0} can be interpreted as maximizing the posterior probability of observing $\vect x$ under a prior assumption that the coefficients of $\vect x$ are modeled as i.i.d. Laplacians. The Laplacian prior encourages coefficients to take values closer to zero, thereby approximating sparsity.
This framework motivates the interpretation of sparsity as a prior information for signal reconstruction. In fact, this is a particular example of a broader framework: Bayesian. In Bayesian perspective, signal comprehension can be enhanced by incorporating contextual information as priors.
One great benefit of leveraging Bayesian framework is the probabilistic prediction of sparse coefficient and automatic estimation of model parameters. \cite{Tychonoff:IllPosedProblems_Book77}

Sparsity is in fact a first-order description of structure in signals. However, often there is \emph{a priori} structure inherent to the sparse signals that is exploited for better representation, compression or modeling. In fact, We can categorize sparse recovery problems into two different category of approaches. One category uses sparsity inducing regularizers in conjunction with reconstruction error term to obtain a sparse approximation of the signal. Examples of these sort of techniques are \cite{Mohimani:fast_l_0_TSP2009, Tropp:OMP_InfoTheory2007, Tropp:ConvexSOMP_2006Elsevier, Wright:SRC_PAMI2009}. Another approach falls into the category of Model-based Compressive Sensing \cite{Baraniuk:Model_CS_InfoTheory2010} where a set of priors are introduced on top of the sparse signal to capture both sparsity and structure \cite{Baraniuk:Model_CS_InfoTheory2010, JiAndCarin:BayesianCS_TSP2008, Carin:WaveletBayesCS_TSP2009, Babacan_BayesianCSLaplacePriors_TIP2010, Andersen:BayesianSpikeSlab_NIPS2014}.


As an illustration, a connected tree structure can be enforced on wavelet coefficients to capture the multi-scale dependence \cite{Baraniuk:Model_CS_InfoTheory2010}. Other such structured prior models have also been integrated into the CS framework \cite{Carin:WaveletBayesCS_TSP2009, JiAndCarin:BayesianCS_TSP2008, Eldar:MultiChannelSparseRecov_InfoTheo2010}.
The wavelet-based Bayesian approach in \cite{Carin:WaveletBayesCS_TSP2009} employs a ``spike-and-slab'' prior \cite{Ishwaran_SpikeSlab_AnnStat2005, George_VariableSelectionGibbsSampling_StatAssoc1993, Chipman_BaysianVariableSelection_Stat1996, Carvalho_SparseFactorModeling_StatAssoc2008}, which is a mixture model of two components representing the zero and nonzero coefficients, and dependencies are encouraged in the mixing weights across resolution scales.

Introducing priors for capturing sparsity is a particular example of Bayesian inference where the signal recovery can be enhanced by exploiting contextual and prior information. As suggested by \cite{Cevher_LearningCompressiblePriors_NIPS2009, Cevher:SparseRecovGraphicalModel_SPMagaz2010}, sparsity can be induced via solving the following optimization problem:
\bea
    \max_{\vect x} P_{\vect x}(\vect x) & \textit{subject to} & ||\vect y - \mat A \vect x||_2 < \epsilon.        \label{Eq:GeneralMaxPrior}
\eea
where $P_{\vect x}$ is the probability density function of $\vect x$ that  \emph{simultaneously} captures the sparsity and structure (joint distributions of coefficients) of $\vect x$. In comparison, the standard CS recovery \eqref{eq:sparse_recovery} captures only the sparse nature of $\vect x$.
The most common example is the i.i.d. Laplacian prior which is equivalent to $\ell_1$ norm minimization \cite{Cevher:SparseRecovGraphicalModel_SPMagaz2010, Babacan_BayesianCSLaplacePriors_TIP2010}.
The choice of sparsity promoting priors that can capture joint distribution of coefficients (both structure and sparsity) is a challenging task. Examples of such priors are Laplacian \cite{Babacan_BayesianCSLaplacePriors_TIP2010}, generalized Pareto \cite{Cevher_SparseRecoveryGraphModels_SPM2010}, Spike and Slab \cite{Mitchell:BayesVarSelectSpikeSlab_StatAssoc1988}, etc.



\subsection{Other Applications}
Sparse representation methods have vast number of applications and we mentioned a few of them in the previous subsections. However, applications of sparsity based methods is not only limited to the aforementioned ones. They have also been widely used in Automatic Target Recognition (ATR) in many different modalities such as Synthetic Aperture Radar (SAR) imagery, HyperSpectral Imaging (HSI), etc \cite{Srinivas:MetaClassifierATR_RadarConf2011, Zhang:ATR_EUSAR2000, Zhang:MultiViewATR_taes12, Srinivas:SARATR_TAES2014,Bahrampour:KernelDicLearn_Arxiv2015}.

Sparsity has been playing an important role in many fields such as acoustic signal processing \cite{Grosse:SparseCodingAudio_Arxiv2012}, image processing \cite{Elad:SparsityRoleImageProcc_IEEE2010} and recognition \cite{Wright:SparsityComputerVision_IEEE2010}. It has also been widely used for clustering and subspace selection \cite{Elhamifar:SparseSubspaceClustering_PAMI2013}. Recently, sparsity has been applied to the problem of single image super resolution as well \cite{YangAndWright:SparseSR_TIP2010,YangAndWright2:SparseSR_CVPR2010}. In this problem, a single low resolution image is provided and with aim of sparse representation methods the high resolution image is achieved. We will talk about this area later in this dissertation.

\section{Information Fusion}


Advances in sensing technology have facilitated the easy acquisition of multiple different measurements of the same underlying physical phenomena. Often there is complimentary information embedded in these different measurements which can be exploited for improved performance. For example, in face recognition or action recognition we could have different views of a person's face captured under different illumination conditions or with different facial postures \cite{Zhang:JointDynamicSparseFaceRecognition_pattern12, Yuan:VisualClassificationMultitask_TIP12, Gross:MultiPIE, Guha:learning_PAMI12, Bahrampour:TreeSparsity_CVPR2014}. In automatic target recognition, multiple SAR (synthetic aperture radar) views are acquired \cite{Zhang:MultiViewATR_taes12}. The use of complimentary information from different color image channels in medical imaging has been demonstrated in \cite{Srinivas:SHIRC_ISBI2013,Srinivas:SHIRC_TMI2014}. In border security applications, multi-modal sensor data such as voice sensor measurements, infrared images and seismic measurements are fused \cite{Nasrabadi:MultiSensorFusion_Fusion11} for activity classification tasks. The prevalence of such a rich variety of applications where multi-sensor information manifests in different ways is a key motivation for one of our contribution in this dissertation.

\subsection{Collaborative Data}
Motivated by availability of information from different sources, many algorithms have proposed to use the joint and complementary information from different sources or observations. For instance, in classification scenarios, such as multi-view face recognition, multi-view SAR ATR or hyperspectral classification this joint information is  readily available and one can exploit these data for better classification accuracies. The SRC model is extended to incorporate this additional information by enforcing a common support set of training images for the $T$ correlated test images $\vect y_1,\ldots, \vect y_T$:
\be
    \label{eqn::joint_sparsity_model}
    \begin{split}
      \mat Y &= \begin{bmatrix} \vect y_1 & \vect y_2 & \cdots & \vect y_T \end{bmatrix}
    = \begin{bmatrix} \mat A\vect x_1 & \mat A\vect x_2 & \cdots & \mat A\vect x_T\end{bmatrix}\\
    &= \mat A \underbrace{\begin{bmatrix} \vect x_1 & \vect x_2 & \cdots & \vect x_T\end{bmatrix}}_{\mat X}
    = \mat A \mat X.
    \end{split}
\ee
The simplest way of enforcing structure on $\mat X$ is to assume that the vectors $\vect x_i, i = 1,\ldots,T$, all have non-zero entries at the \emph{same} locations, albeit with different weights. This leads to the recovery of a sparse matrix $\mat S$ with only a few nonzero rows,
\begin{equation}
\label{eqn::joint_sparse_recovery}
\hat{\mat X} = \arg\min \norm{\mat Y - \mat A \mat X}_F \quad\text{subject to}\quad \norm{\mat X}_{\text{row},0} \leq K_0,
\end{equation}
where $\norm{\mat X}_{\text{row},0}$ denotes the number of non-zero rows of $\mat X$ and $\norm{\cdot}_F$ is the Frobenius norm. The greedy Simultaneous Orthogonal Matching Pursuit (SOMP) \cite{Tropp:ConvexSOMP_2006Elsevier, Tropp:SOMP_Elsevier2006} algorithm has been proposed to solve this non-convex problem. More general version of such joint sparsity models have been proposed by \emph{Zhang et al.} in \cite{Zhang:JointDynamicSparseFaceRecognition_pattern12} . In their so called Joint Dynamic Sparse Representation-based Classification (JDSRC) they proposed a joint dynamic sparsity model  instead of the rigid row sparsity model discussed previously . \emph{Yuan et al.} proposed a joint classification framework using information from different modalities in \cite{Yuan:VisualClassificationMultitask_TIP12}. Recently, \emph{Srinivas et al.} also proposed a simultaneous sparsity model for histopathological image classification and used the complementary color information in RGB channels \cite{Srinivas:SHIRC_TMI2014}.  Other joint or simultaneous sparse representations methods proposed for regression and multi-view ATR \cite{Zhang:MultiViewATR_taes12, Yuan:GroupRegression_RoyalStatSociet2006}.

Collaborative data may come from different sources according to the type of the problem we are dealing with. For instance, in signal recovery multiple measurements of the same phenomena can provide different sources of data or multiple sensor measurements may provide complementary data that can be used for recovery. One typical form of collaborative data that can provide complementary information is multi-spectral or in particular color information. Color information from separate RGB channels in image processing has been successfully applied for medical image classification \cite{Srinivas:SHIRC_TMI2014} and image enhancement \cite{Srinivas:ColorSR_CIC2011}. Later in this dissertation  we exploit color information as strong prior information for image super resolution and illustrate their benefits on this task.

%
%


\section{Goals and Contributions}
Following this brief overview of the key ideas that will constitute this thesis, it is now appropriate to state the goals of this research as follows:
\begin{itemize}
\item Use prior information on sparse signals and model parameters for boosting performance of both recovery and enhancement problems.
\item Extend the application of joint information to broader applications and exploit color information in optical images for super-resolution.
\item Extend the application of image priors to deep learning frameworks by exploiting signals' structure using wavelets and regularizing network structures using prior knowledge.
\end{itemize}


A snapshot of the main contributions of this dissertation is presented next. Each contribution approaches the issue of signal recovery from different viewpoints using multiple instances of prior knowledge, with the deployment of a rich variety of algorithmic tools from signal processing and machine learning.


\textbf{Chapter \ref{chapter:contrib1}} addresses the sparse signal recovery problem in a Bayesian framework where sparsity is enforced on reconstruction coefficients via probabilistic priors. In particular, we focus on a variant of spike and slab prior to encourage sparsity. The optimization problem resulting from this model has broad applicability in recovery and regression problems and is known to be a hard non-convex problem whose existing solutions involve simplifying assumptions and/or relaxations. We propose an approach called Iterative Convex Refinement (ICR) that aims to solve the aforementioned optimization problem directly allowing for greater generality in the sparse structure. Essentially, ICR solves a sequence of convex optimization problems such that sequence of solutions converges to a sub-optimal solution of the original hard optimization problem. We propose two versions of our algorithm: a.) an unconstrained version, and b.) with a non-negativity constraint on sparse coefficients, which may be required in some real-world problems. Experimental validation is performed on both synthetic data and for a real-world image recovery problem, which illustrates merits of ICR over state of the art alternatives.

\textbf{Chapter \ref{chapter:contrib2}} serves as an introduction to single image super resolution and for the rest of the dissertation.
For super resolution task, sparsity constrained single image super-resolution (SR) has been of much recent interest. A typical approach involves sparsely representing patches in a low-resolution (LR) input image via a dictionary of example LR patches, and then using the coefficients of this representation to generate the high-resolution (HR) output via an analogous HR dictionary. However, most existing sparse representation methods for super resolution focus on the luminance channel information and do not capture interactions between color channels. In this contribution, we extend sparsity based super-resolution to multiple color channels by taking color information into account as strong prior information. Edge similarities amongst RGB color bands are exploited as cross channel correlation constraints. These additional constraints lead to a new optimization problem  which is not easily solvable; however, a tractable solution is proposed to solve it efficiently. Moreover, to fully exploit the complementary information among color channels, a dictionary learning method is also proposed specifically to learn color dictionaries that encourage edge similarities. Merits of the proposed method over state-of-the-art are demonstrated  both visually and quantitatively using image quality metrics.

Finally, \textbf{chapter \ref{chapter:contrib3}} extends the single image super resolution problem to deep learning methods where recent advances have seen a surge of such approaches for image super-resolution. Invariably, a network, e.g.\ a deep convolutional neural network (CNN) or auto-encoder is trained to learn the relationship between low and high-resolution image patches.
Most of the deep learning based image super resolution methods work on spatial domain data and aim to reconstruct pixel values as the output of network. In the first part of this chapter, we explore the advantages of exploiting transform domain data in the  SR task especially for capturing more structural information in the images to avoid artifacts. In addition to this and  motivated by promising performance of VDSR and residual nets in super resolution task, we propose our Deep Wavelet network for Super Resolution (DWSR). Using wavelet coefficients encourages activation sparsity in middle layers as well as output layer. In addition to this, wavelet coefficients decompose the image into sub-bands which  provide structural information depending on the types of wavelets used. For example, Haar wavelets provide vertical, horizontal and diagonal edges in wavelet sub-bands which can be  used to infer more structural information about the image. Essentially our network uses complementary structural information from other sub-bands to predict the desired high-resolution structure in each sub-band.

On the other hand, deep learning methods have shown promising performance in super resolution and many other tasks in presence of abundant training which means thousands or millions of training data points are available. However, they suffer in cases where training data is not readily available. In the second part of this chapter and as our final contribution in this dissertation, we investigate the performance of such deep structures in low training data scenarios and show that their performance drops significantly. We look for remedies to this performance degradation by exploiting prior knowledge about the problem. This could be in terms of prior knowledge about the structure of images, or inter-pixel dependencies. In particular, we propose to use natural image priors for image super resolution and demonstrate that image priors in low training data  scenarios enhance the recovery of high resolution images despite having much less training data available.

\chapter{Contribution I: Iterative Convex Refinement for Sparse Signal Recovery}
\label{chapter:contrib1}

%
%
%
%
%

\section{Introduction}

Sparse signal approximation and compressive sensing (CS) have recently gained considerable interest both in signal and image processing as well as statistics. Sparsity is  often a natural assumption in inverse problems and sparse reconstruction or representation has variety of applications in image/signal classification \cite{Wright:SRC_PAMI2009,Srinivas:SHIRC_TMI2014,Srinivas:SSPIC_TIP2015, Srinivas:SHIRC_ISBI2013, Mousavi:MICHS_ICIP2014, Bahrampour:TreeSparsity_CVPR2014, electronics4020221, farhat2014towards}, dictionary learning \cite{Suo1:DirtyDicLearn_ICIP2014, Pourkamali:CompresiveKSVD_ICASSP2013, SadeghiAndBabaiezade1:DicLearnSparse_SPLetter2013, Vu:DFDL_ISBI2015, Bahrampour:DicLearn_Arxiv2015, Bahrampour:KernelDicLearn_Arxiv2015}, signal recovery \cite{Wright:SpaRSA_TSP2009 , Tropp:OMP_InfoTheory2007}, image denoising and inpainting \cite{Elad:ImageDenoiseSparsity_TIP2006 } and  MRI image reconstruction \cite{ Andersen:BayesianSpikeSlab_NIPS2014}. Typically, sparse models assume that a signal can be efficiently represented as sparse linear combination of atoms in a given or learned dictionary \cite{Wright:SRC_PAMI2009,Sprechmann:CHI-LASSO_TSP2011}. In other words, from CS viewpoint, a sparse signal can be recovered from fewer number of observations \cite{Carin:WaveletBayesCS_TSP2009, JiAndCarin:BayesianCS_TSP2008 }.

A typical sparse reconstruction algorithm aims to recover a sparse signal $\vect x \in \mathbb{R}^{p }$ from a set of fewer measurements $\vect y \in \mathbb{R}^{q}$ ($q\ll p$) according to the following model:
\bea
    \vect y = \mat A \vect x + \vect n,     \label{Eq:y=Ax}
\eea
where $\mat A \in \mathbb{R}^{q\times p}$ is the measurement matrix (Dictionary) and $\vect n \in \mathbb{R}^{q}$ models the additive Gaussian noise with variance $\sigma^2$.

In recent years, many sparse recovery algorithms have been proposed including but not limited to the following: proposing sparsity promoting optimization problems involving different regularizers such as $\ell_1$ norm, $\ell_0$ pseudo norm, greedy algorithms \cite{Tropp:OMP_InfoTheory2007,   Mousavi:AssymLASSO_arXive2013, Mohimani:fast_l_0_TSP2009, bilen2014data, 7390994}, Bayesian-based methods \cite{JiAndCarin:BayesianCS_TSP2008, Lu:SparseCodeBayesPerspec_NeuralNetLearn2013, DobigeonAndHero:HierarchyBayesImageRecons_TIP2009} or general sparse approximation algorithms such as SpaRSA, ADMM, etc. \cite{Wright:SpaRSA_TSP2009, BeckerAndCandes:SparseRecoveryNESTA_ImagScienSIAM2011,  Boyd:ADMM_MachineLearn2011, farhat2016stochastic}.

In Bayesian sparse recovery, the choice of priors plays a key role in promoting sparsity and improving performance. Examples of such priors are Laplacian \cite{Babacan_BayesianCSLaplacePriors_TIP2010}, generalized Pareto \cite{Cevher:SparseRecovGraphicalModel_SPMagaz2010}, Spike and Slab \cite{Mitchell:BayesVarSelectSpikeSlab_StatAssoc1988}, etc.
In particular, we focus on the setup of \emph{Yen et al.}  \cite{Yen:MM_VariableSelectionSpikeSlab_Stat2011} who employ a variant of spike and slab prior to encourage sparsity. The optimization problem resulting from this model has broad applicability in recovery and regression problems and is known to be a hard non-convex problem whose existing solutions involve simplifying assumptions and/or relaxations \cite{Yen:MM_VariableSelectionSpikeSlab_Stat2011, Srinivas:SSPIC_TIP2015, Andersen:BayesianSpikeSlab_NIPS2014}. However, in this work we aim to solve the resulting optimization problem directly in its general form. Our approach can be seen as a logical evolution of $\ell_1$ reweighted methods \cite{Gorodnitsky:L1ReweightFOCUSS_TSP1997, Candes:ReweightedL1_FourierAnalysis2008}. Motivated by this, the \textbf{Main Contributions} of our work are as follows: (1) We propose a novel Iterative Convex Refinement (ICR) for sparse signal recovery.  Essentially, the sequence of solutions from these convex problems approaches a sub-optimal solution of the hard non-convex problem. (2) We propose two versions of ICR: a.) an unconstrained version, and b.) with a non-negativity constraint on sparse coefficients, which may be required in some real-world problems such as image recovery. (3) Finally, we perform experimental validation on both synthetic data and a realistic image recovery problem, which reveals the benefits of ICR over other state-of-the-art sparse recovery methods. Further, we compare the solution of various sparse recovery methods against the \emph{global solution} for a small-scale problem, and remarkably the proposed ICR finds the most agreement with the global solution. Finally, convergence analysis is provided in support of the proposed ICR algorithm.
\section{Proposed Setup for Sparse Signal Recovery }
Introducing priors for capturing sparsity is a particular example of Bayesian inference where the signal recovery can be enhanced by exploiting contextual and prior information. As suggested by \cite{Cevher_LearningCompressiblePriors_NIPS2009, Cevher:SparseRecovGraphicalModel_SPMagaz2010}, sparsity can be induced via solving the following optimization problem:
\bea
    \max_{\vect x} P_{\vect x}(\vect x) & \textit{subject to} & ||\vect y - \mat A \vect x||_2 < \epsilon.        \label{Eq:GeneralMaxPrior}
\eea
where $P_{\vect x}$ is the probability density function of $\vect x$ that  captures  sparsity. The most common example is the i.i.d. Laplacian prior which is equivalent to $\ell_1$ norm minimization \cite{Cevher:SparseRecovGraphicalModel_SPMagaz2010, Babacan_BayesianCSLaplacePriors_TIP2010}.
A well-suited sparsity promoting prior is spike and slab prior which is widely used in sparse recovery and Bayesian inference for variable selection and regression \cite{Ishwaran_SpikeSlab_AnnStat2005, Carin:WaveletBayesCS_TSP2009, Andersen:BayesianSpikeSlab_NIPS2014, Suo:HierarchySpikeSlab_ICASSP2013}. In fact, it is acknowledged that spike and slab prior is indeed the \emph{gold standard} for inducing sparsity in Bayesian inference \cite{Lazaro:SpikeSlabInferMultiTask_NIPS2011}. Using this prior, every coefficient $x_i$ is modeled as a mixture of two densities as follows:
\bea
    x_i \sim (1-w_i) \delta_0 + w_i P_i(x_i) \label{Eq:GeneralSpikeSlab}
\eea
where $\delta_0$ is the Dirac function at zero (spike) and $P_i$ (slab) is an appropriate prior distribution for nonzero values of $x_i$ (e.g. Gaussian). $w_i \in [0,1]$ controls the structural sparsity of the signal. If $w_i$ is chosen to be close to zero $x_i$  tends to remain zero. On the contrary, by choosing $w_i$ close to 1, $P_i$ will be the dominant distribution encouraging $x_i$ to take a non-zero value.

\noindent\textbf{Optimization Problem} (Hierarchical Bayesian Framework):
Any inference from the posterior density for this model will be ill-defined because the Dirac's delta function is unbounded. Some ways to handle this issue include approximations \cite{Lazaro:SpikeSlabInferMultiTask_NIPS2011}, such as approximation of spike term with a narrow Gaussian \cite{George_VariableSelectionGibbsSampling_StatAssoc1993}, approximating the whole posterior function with product of Gaussian(s) and Bernoulli(s) density functions  \cite{Hernandez:EP-SpikeSlab_MachineLearning2014, Hernandez:EP-GeneralizeSpikeSlab_MachineLearnResearch2013, Andersen:BayesianSpikeSlab_NIPS2014, Kappen:VariationalGarrote_MachineLearning2014, Vila:EMBGAMP_Asilomar2011}, etc. In this work, we focus on the setup of \emph{Yen et al.} \cite{Yen:MM_VariableSelectionSpikeSlab_Stat2011} which is an approximate  spike and slab prior for inducing sparsity on $\vect x$.
Inspired by Bayesian compressive sensing (CS) \cite{JiAndCarin:BayesianCS_TSP2008, Suo:HierarchySpikeSlab_ICASSP2013}, we employ a hierarchical Bayesian framework for signal recovery.
More precisely, the Bayesian formulation is as follows:
\bea
    \vect y | \mat A, \vect x, \vect \gamma, \sigma^2 & \sim &   \mathcal{N} \left(\mat A \vect x, \sigma^2\mat I \right) \label{eq:pdf_y}\\
    \vect x | \vect\gamma, \lambda, \sigma^2 & \sim &   \prod_{i=1}^{p} ~\gamma_{i} \mathcal{N}(0,\sigma^2\lambda^{-1}) + (1-\gamma_{i}) \mathbb{I}(x_i = 0) \label{eq:pdf_x}\\
    \vect\gamma | \vect  \kappa & \sim &   \prod_{i=1}^{p} ~\mbox{Bernoulli}(\kappa_{i}) \label{Eq:pdf_gamma}
\eea
where $\mathcal{N}(.)$ represents the Gaussian distribution. Also note that in \eqref{eq:pdf_x} each coefficient of $\vect x$ is modeled based on the framework proposed in \cite{Yen:MM_VariableSelectionSpikeSlab_Stat2011}. Since $\gamma_i$  is a binary variable, it implies that conditioned on $\gamma_i = 0$, $x_i$ is equal to $0$ with probability one. On the other hand, conditioned on $\gamma_i = 1$, $x_i$ follows a normal distribution with mean $0$ and variance $\sigma^2\lambda^{-1}$. Motivated by Yen {\em et al}'s \emph{maximum a posteriori} (MAP) estimation technique \cite{Yen:MM_VariableSelectionSpikeSlab_Stat2011, Srinivas:SSPIC_TIP2015} the optimal $\vect x, \vect \gamma$ are obtained by the following MAP estimate.
\bea
        (\vect x^\ast, \vect \gamma^\ast )& =& \arg \max_{\vect x,\vect \gamma } \left\{  f (\vect x,\vect \gamma |\mat A,\vect y,\mat \kappa,\lambda ,\sigma^2)\right\}.
        \label{Eq:MAP_Estimate}
\eea
\begin{prop}  
The MAP estimation above is equivalent to the following minimization problem: 
\bea
    (\vect x^\ast, \vect \gamma^\ast ) &=& \arg\min_{\vect x, \vect \gamma} ~~   ||\vect y  - \mat A\vect x ||_{2}^2 + \lambda ||\vect x||_{2}^2 + \sum_{i=1}^{p}  \rho_{i}  \gamma_{i}       \label{Eq:MainOptProbICR}
\eea
where $\rho_{i} \triangleq  \sigma^2\log\left(\frac{2\pi\sigma^2(1-\kappa_i)^2}{\lambda\kappa_i^2}\right)$.
\end{prop}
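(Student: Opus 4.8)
The plan is to turn the MAP problem \eqref{Eq:MAP_Estimate} into the minimization \eqref{Eq:MainOptProbICR} by writing the posterior density explicitly via Bayes' rule, taking its negative logarithm, and carefully bookkeeping every normalization constant. First I would use the conditional-independence structure of the hierarchy \eqref{eq:pdf_y}--\eqref{Eq:pdf_gamma} to write
\[
f(\vect x,\vect\gamma\mid\mat A,\vect y,\vect\kappa,\lambda,\sigma^2)\ \propto\ p(\vect y\mid\mat A,\vect x,\sigma^2)\;p(\vect x\mid\vect\gamma,\lambda,\sigma^2)\;p(\vect\gamma\mid\vect\kappa),
\]
where the proportionality constant (the evidence) is independent of $(\vect x,\vect\gamma)$. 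Since $t\mapsto-\log t$ is strictly decreasing, \eqref{Eq:MAP_Estimate} is equivalent to minimizing $-\log f$, so it suffices to compute the negative logarithm of each of the three factors and collect terms.

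Next I would evaluate each factor. The Gaussian likelihood \eqref{eq:pdf_y} contributes $\tfrac{1}{2\sigma^2}\|\vect y-\mat A\vect x\|_2^2$ plus the additive constant $\tfrac q2\log(2\pi\sigma^2)$. The prior \eqref{eq:pdf_x} factorizes over coordinates, and since $\gamma_i$ is binary the $i$-th factor equals the Gaussian density $\sqrt{\lambda/(2\pi\sigma^2)}\,\exp(-\lambda x_i^2/2\sigma^2)$ when $\gamma_i=1$ and the atomic term $\mathbb{I}(x_i=0)$ when $\gamma_i=0$; its negative logarithm is $\tfrac{\lambda x_i^2}{2\sigma^2}+\tfrac12\log\tfrac{2\pi\sigma^2}{\lambda}$ in the first case and, on the support $x_i=0$, equals $0$ in the second (configurations with $\gamma_i=0$ but $x_i\neq0$ lie off the support, contribute $+\infty$, and are excluded from the $\arg\max$, which is the sense in which no explicit support constraint appears in \eqref{Eq:MainOptProbICR}). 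Summing over $i$, and using $x_i=0$ whenever $\gamma_i=0$, the quadratic part aggregates to $\tfrac{\lambda}{2\sigma^2}\|\vect x\|_2^2$ and the remainder is $\sum_i\gamma_i\,\tfrac12\log\tfrac{2\pi\sigma^2}{\lambda}$. Finally the Bernoulli prior \eqref{Eq:pdf_gamma} gives $-\sum_i\big[\gamma_i\log\kappa_i+(1-\gamma_i)\log(1-\kappa_i)\big]=-\sum_i\gamma_i\log\tfrac{\kappa_i}{1-\kappa_i}+\text{const}$.

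Assembling the pieces yields
\[
-\log f \;=\; \text{const} \;+\; \frac{1}{2\sigma^2}\Big(\|\vect y-\mat A\vect x\|_2^2+\lambda\|\vect x\|_2^2\Big) \;+\; \sum_{i=1}^p\gamma_i\Big(\tfrac12\log\tfrac{2\pi\sigma^2}{\lambda}+\log\tfrac{1-\kappa_i}{\kappa_i}\Big),
\]
and multiplying through by $2\sigma^2>0$ (which leaves the minimizer unchanged) and discarding the additive constant gives exactly \eqref{Eq:MainOptProbICR}, with the coefficient of $\gamma_i$ equal to $2\sigma^2\big(\tfrac12\log\tfrac{2\pi\sigma^2}{\lambda}+\log\tfrac{1-\kappa_i}{\kappa_i}\big)=\sigma^2\log\tfrac{2\pi\sigma^2(1-\kappa_i)^2}{\lambda\kappa_i^2}=\rho_i$. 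The main obstacle is not the algebra but the careful handling of the hybrid continuous/atomic prior \eqref{eq:pdf_x}: one must justify that the Dirac/indicator component contributes $0$ on its support (so only the \emph{slab}'s normalizing constant survives) and $+\infty$ off it, and one must propagate the slab's Gaussian normalization $\sqrt{\lambda/(2\pi\sigma^2)}$ together with the Bernoulli weights so that the factor $\sigma^2$, the ratio $2\pi\sigma^2/\lambda$, and in particular the \emph{square} on $(1-\kappa_i)/\kappa_i$ all land correctly inside the logarithm defining $\rho_i$.
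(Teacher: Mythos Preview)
Your proposal is correct and follows essentially the same approach as the paper: factor the posterior via Bayes' rule, take the negative log of each factor, and rescale by $2\sigma^2$ to arrive at \eqref{Eq:MainOptProbICR} with the stated $\rho_i$. If anything, your treatment of the hybrid spike-and-slab prior is slightly more careful than the paper's, since you make explicit that configurations with $\gamma_i=0$ and $x_i\neq 0$ contribute $+\infty$ and are thereby excluded, whereas the paper simply argues the indicator term ``evaluates to zero'' without flagging this off-support case.
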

\begin{proof}
To perform the MAP estimation, note that the posterior probability is given by:
\bea
f(\vect x,\vect \gamma, |\mat A,\vect y,\lambda, \vect \kappa) \propto f(\vect y |\mat A,\vect x,\vect \gamma,\sigma^2)f(\vect x|\vect \gamma,\sigma^2,\lambda) f(\vect \gamma | \vect \kappa).
\label{eq:joint-posterior}
\eea
The optimal $\vect x^\ast, \vect \gamma^\ast $ are obtained by MAP estimation as:
\bea
(\vect x^\ast, \vect \gamma^\ast) = \arg \min_{\vect x,\vect \gamma} \left\{-2\log f(\vect x,\vect \gamma, |\mat A,\vect y,\lambda, \vect \kappa)\right\}.
\label{Eq:Log_MAP_Estimate}
\eea
We now separately evaluate each term on the right hand side of \eqref{eq:joint-posterior}. According to \eqref{eq:pdf_y} we have:
\beaa
f(\vect y |\mat A,\vect x,\vect \gamma,\sigma^2)  =    \frac{1}{(2\pi\sigma^2)^{q/2}} \exp\left\{-\frac{1}{2\sigma^2}(\vect y - \mat A \vect x)^T(\vect y - \mat A \vect x)\right\}
\eeaa
\beaa
\Rightarrow -2\log f(\vect y |\mat A,\vect x,\vect \gamma,\sigma^2)  =  q\log \sigma^2 + q\log (2\pi) + \frac{1}{\sigma^2}||\vect y - \mat A \vect x||^2.
\eeaa
Since $\gamma_i$ is assumed to be the indicator variable and only takes values $1$ and $0$, we can rewrite \eqref{eq:pdf_x} in the following form:
\beaa
     \vect x | \vect\gamma, \lambda, \sigma^2 & \sim &   \prod_{i=1}^{p} ~  \Big( \mathcal{N}(0,\sigma^2\lambda^{-1}) \Big) ^{\gamma_i} . \Big( \mathbb{I}(x_i = 0) \Big) ^{1-\gamma_i} \label{eq:pdf1_x}
\eeaa
Therefore
\beaa
f\big(\vect x |\vect \gamma,\sigma^2,\lambda\big) &=&  \prod_{i=1}^{p} \left(\frac{1}{(2\pi\sigma^2/\lambda)^{1/2}}\right)^{\gamma_i} \exp\left(-\frac{\gamma_i x_i^2}{2\sigma^2\lambda^{-1}}\right)(\mathbb{I}(x_i = 0))^{1-\gamma_i}  \\
&=  &    \left(\frac{2\pi\sigma^2}{\lambda}\right)^{-\frac{1}{2}\sum_{i=1}^{p}\gamma_i} \exp\left\{-\frac{1}{2\sigma^2\lambda^{-1}}\sum_{i=1}^{p}\gamma_i x_i^2 \right\}   \prod_{i=1}^{p} \mathbb{I}(x_i = 0)^{1-\gamma_i}\\
&= &\left(\frac{2\pi\sigma^2}{\lambda}\right) ^{-\frac{1}{2}  \sum_{i=1}^{p} \gamma_i} \exp\left(-\frac{  ||\vect x||_2^2 }{2\sigma^2 \lambda^{-1}} \right)   \prod_{i=1}^{p} \mathbb{I}(x_i = 0)^{1-\gamma_i}
\eeaa
\beaa
&\Rightarrow & -2\log f(\vect x|\vect \gamma,\sigma^2,\lambda) = \frac{  ||\vect x||_2^2}{\sigma^2\lambda^{-1}} + \log\left(\frac{2\pi\sigma^2}{\lambda}\right)  \sum_{i=1}^{p} \gamma_i -2 \sum_{i=1}^{p}(1-\gamma_i)\log\mathbb{I}(x_i = 0).
\eeaa
In fact  the final term on the right hand side evaluates to zero, since $\mathbb{I}(x_i = 0) = 1 \Rightarrow \log\mathbb{I}(x_i = 0) = 0$, and $\mathbb{I}(x_i = 0) = 0 \Rightarrow x_i \neq 0 \Rightarrow \gamma_i = 1 \Rightarrow (1-\gamma_i) = 0$.

Finally  \eqref{Eq:pdf_gamma} implies that
\beaa
f(\vect \gamma | \vect \kappa) & = &  \prod_{i=1}^{p}\kappa_i^{\gamma_i}(1-\kappa_i)^{1-\gamma_i} 
\eeaa
\beaa
\Rightarrow -2\log f(\vect \gamma | \vect \kappa) &=& -2 \sum_{i=1}^{p} \log \kappa_i^{\gamma_i} + \log (1-\kappa_i)^{1-\gamma_i} \\
&=& -2 \sum_{i=1}^{p} \gamma_i \log \kappa_i + (1-\gamma_i)\log (1-\kappa_i)\\
& = & -2 \sum_{i=1}^{p} \gamma_i \log \Big( \frac{\kappa_i}{1-\kappa_i}\Big) + \log (1-\kappa_i)\\
& = &  \sum_{i=1}^{p}\gamma_i\log\left(\frac{1-\kappa_i} {\kappa_i}\right)^2 - 2\sum_{i=1}^{p} \log(1-\kappa_i).
\eeaa
Plugging all these expressions back into  \eqref{Eq:Log_MAP_Estimate} and neglecting constant terms, we obtain:
\bea
(\vect x^\ast, \vect \gamma^\ast )  &=& \arg \min_{\vect x,\vect \gamma }
   q\log \sigma^2 + \frac{1}{\sigma^2}||\vect y - \mat A \vect x||^2 + \frac{  ||\vect x||_2^2}{\sigma^2\lambda^{-1}}  \nonumber \\
 &&  +  \log\left(\frac{2\pi\sigma^2}{\lambda}\right) \sum_{i=1}^{p} \gamma_i+  \sum_{i=1}^{p} \gamma_i\log\left(\frac{1-\kappa_i} {\kappa_i}\right)^2
\eea
Essentially, for fixed $\sigma^2$ The cost function will reduce to:
\bea
L(\vect x, \vect \gamma)& = &   ||\vect y - \mat A\vect x||_2^2 + \lambda ||\vect x||_2^2 +   \sum_{i=1}^{p}  \rho_ i \gamma_i
\eea
where $\rho_{i} \triangleq  \sigma^2\log\left(\frac{2\pi\sigma^2(1-\kappa_i)^2}{\lambda\kappa_i^2}\right)$.
\end{proof}

\emph{Remark:} Note that we are particularly interested in solving \eqref{Eq:MainOptProbICR} which has broad applicability in recovery and regression\cite{Yen:MM_VariableSelectionSpikeSlab_Stat2011}, image classification and restoration \cite{Chouzenoux:l2l1l0withMM_SIAMImageScience2013, Srinivas:SSPIC_TIP2015} and sparse coding\cite{Lu:SparseCodeBayesPerspec_NeuralNetLearn2013, Chaari:l2l1l0BAyesian_ICASSP2014}. This is a non-convex mixed-integer programming involving the binary indicator variable $\vect \gamma$ and is not easily solvable using conventional optimization algorithms. It is worth mentioning that this is a more general formulation than the framework proposed in \cite{Srinivas:SSPIC_TIP2015} or \cite{Yen:MM_VariableSelectionSpikeSlab_Stat2011} where authors simplified the optimization problem by assuming the same $\kappa$ for each coefficient $\gamma_i$. This assumption changes the last term in \eqref{Eq:MainOptProbICR} to $\rho||\vect x||_0$ and the resulting optimization is solved in \cite{Yen:MM_VariableSelectionSpikeSlab_Stat2011} by using  Majorization-Minimization Methods. Further, a relaxation of $\ell_0$ to $\ell_1$ norm reduces the problem to the well-known Elastic-Net \cite{Zou:AdapElasticNet_AnnalStat2009}. The framework in \eqref{Eq:MainOptProbICR} therefore offers greater generality in capturing the sparsity of $\vect x$.  As an example, consider the scenario in a reconstruction or classification problem where some dictionary (training) columns are more important than others\cite{Mohammadi:PCADicLearningSRC_Elsevier2014}. It is then possible to encourage their contribution to the linear model by assigning higher values to the corresponding $\kappa_i$'s, which in turn makes it more likely that the $i^{th}$ coefficient $x_{i}$ becomes activated.

We also have to mention that the goal of ICR is NOT to recover the sparsest possible solution but it is to recover a meaningful sparse signal. The word meaningful can be interpreted differently in various  contexts. For example in images, it is required from ICR to generate images that have structure. Note that the signal $\vect x$ that we recover is not only sparse as it is captured by the spike-and-slab prior but also may satisfy additional properties that are physically meaningful required by the application. For example the smoothness regularizer (prior) $\|\vect  x\|_2$ in images generates output results that are pleasant images.

\section{Iterative Convex Refinement (ICR)}
We first develop a solution to \eqref{Eq:MainOptProbICR} for the case when the entries of $\vect x$ are non-negative. Then, we propose our method in its general form with no constraints.

The central idea of the proposed Iterative Convex Refinement (ICR) algorithm -- see Algorithm 1 -- is to generate a sequence of optimization problems that refines the solution of previous iteration based on solving a modified convex problem. At  iteration $n$ of ICR, the indicator variable $\gamma_i$ is replaced with the normalized ratio $\frac{x_i}{\mu_i^{(n-1)}}$ and the convex optimization problem in \eqref{Eq:NonNegOptProb} is solved which is a simple quadratic programming with non-negativity constraint. Note that, $\mu_i^{(n-1)}$ is intuitively the average value of optimal $x_i^{\ast}$'s obtained  from iteration $1$ up to $n-1$ and is rigorously defined as in \eqref{Eq:UpdateMu}.
The motivation for this substitution is that, if the sequence of solutions $\vect x^{(n)} $ converges to a point in $\mathbb{R}^p$ we also expect $\frac{x_i}{\mu_i^{(n-1)}}$  to converge to $\gamma_i$. Essentially, ICR is solving a sequence of convex quadratic programming problem that their solution converges to a sub-optimal solution of \eqref{Eq:MainOptProbICR}.

To generalize ICR to the unconstrained case, a simple modification is needed at each iteration. In fact, at each iteration \eqref{Eq:UnconsOptProb} is solved instead of \eqref{Eq:NonNegOptProb}. Note that \eqref{Eq:UnconsOptProb} is still convex and we solve it by alternating direction method of multipliers \cite{Boyd:ADMM_MachineLearn2011}. Again we expect the ratio $\frac{|x_i|}{ |\mu_i^{(n-1)} |}$ to converge to the value of optimal $\gamma_i$ and the result of ICR be a sub-optimal solution for \eqref{Eq:MainOptProbICR}. ICR in both its versions is summarized in Algorithm 1\footnote{The Matlab code for ICR is made available online at {\url{http://signal.ee.psu.edu/ICR/ICRpage.htm}}}.
\begin{algorithm}[t]
\caption{Iterative Convex Refinement (ICR)  }
\label{Alg:NonNeg}
\begin{algorithmic}
\REQUIRE $\mat A, \vect \kappa,  \vect y $.\\
\emph{initialize: } $\vect \mu^{(0)} = \mat A^T \vect y $, iteration index $n=1$.
\WHILE{Stopping criterion not met }
\STATE(1) Solve the convex optimization problem at iteration $n$:
\STATE(Non-negative) For non-negative ICR solve
\bea
	\vect x^{(n)}  =   \arg\min_{\vect x  \succcurlyeq \vect 0} ||\vect y - \mat A\vect x ||_2^2 + \lambda ||\vect x||_2^2 +  \sum_{i=1}^{p}  \rho_{i}   \frac{x_{i}}{\mu_{i}^{(n-1)}} \label{Eq:NonNegOptProb}
\eea
\STATE(Unconstrained) For unconstrained ICR solve
\bea
	\vect x^{(n)}  =   \arg\min_{\vect x} ||\vect y - \mat A\vect x ||_2^2 + \lambda ||\vect x||_2^2 +  \sum_{i=1}^{p}  \rho_{i}   \frac{|x_{i}|}{\big|\mu_{i}^{(n-1)}\big|} \label{Eq:UnconsOptProb}
\eea
\STATE(2) \vspace{-0.38in}
\bea
    \text{~~~~Update~} \mu_i^{(n)}:~~~~~~~~~~~~   \mu_{i}^{(n)} = \frac{1}{n} \sum_{k=1}^{n} x_{i}^{(k)} ~~~ i=1,...,p~~~~~~~~~~~~~~~~~~~~~~~~~\label{Eq:UpdateMu}
\eea
\STATE(3) Increase iteration index $n$.
\ENDWHILE{ if $ ||\vect x^{(n)}-\vect x^{(n-1)}|| \le tol$}
\ENSURE $\vect x^\ast = ~\vect x^{(n-1)} ,~  \gamma_i ^\ast = \frac{x_i^\ast}{\mu_i^{(n-1)}}$ for all $i=1,...,p$.
\end{algorithmic}
\end{algorithm}

To analyze the convergence properties of ICR, we first define the function $f_n : \mathbb{R}^p \rightarrow \mathbb{R}$ as follows: 
\bea
	f_n(\vect x) =  \vect x^T(\mat A^T \mat A +\lambda \mat I)\vect x  -2\vect y^T \mat A  \vect x  			+ \sum_{i=1}^{p} \frac{\rho_i}{\big|\mu_i^{(n-1)}\big|} |x_i|		\label{Eq:f_n}
\eea
which is another form of the functions to be minimized at each iteration of ICR. For the rest of our analysis, without loss of generality we assume that $|y_i|\le 1$, $i=1...q$,   $|x_i|\le 1$, $i=1...p$ and columns of $\mat A$ have unity norm.
With this definition and assuming $\alpha$ is a constant that $\alpha <\frac{1}{2(q+p)}$, we propose the following two lemmas:
\begin{lemma}
If \big|$\mu_j^{(n_0)}\big|< \alpha \rho_j$, then $x_j^{(n_0+1)} = 0$. ($\gamma_j \thickapprox \frac{x_j}{\mu_j^{(n_0)}} = 0$)
\end{lemma}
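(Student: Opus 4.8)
The plan is to prove Lemma 2 by contradiction: suppose $x_j^{(n_0+1)} \neq 0$, and show that flipping $x_j$ to zero strictly decreases $f_{n_0+1}$, contradicting the optimality of $\vect x^{(n_0+1)}$ for the convex problem at iteration $n_0+1$. First I would write out $f_{n_0+1}(\vect x^{(n_0+1)})$ using the explicit form in \eqref{Eq:f_n}, isolating all terms that involve the $j$-th coordinate $x_j := x_j^{(n_0+1)}$. Let $\vect x'$ denote $\vect x^{(n_0+1)}$ with the $j$-th entry replaced by $0$. Then $f_{n_0+1}(\vect x^{(n_0+1)}) - f_{n_0+1}(\vect x')$ consists of the quadratic contributions from the $\mat A^T\mat A + \lambda\mat I$ term, the linear term $-2\vect y^T\mat A$ restricted to coordinate $j$, and the penalty term $\frac{\rho_j}{|\mu_j^{(n_0)}|}|x_j|$.

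**Bounding the quadratic and linear terms.** The key estimate is that the ``gain'' from the data-fit and ridge terms when we set $x_j$ to zero is at most something like $2(q+p)|x_j|$ in magnitude. Concretely, the change in $\vect x^T(\mat A^T\mat A + \lambda\mat I)\vect x - 2\vect y^T\mat A\vect x$ can be written as a sum of cross terms $2 x_j \sum_{k\neq j}(\mat A^T\mat A)_{jk} x_k$, a diagonal term $(\|\mat a_j\|_2^2 + \lambda)x_j^2$, and $-2x_j (\mat A^T\vect y)_j$. Using the normalization assumptions ($|y_i|\le 1$, $|x_i|\le 1$, unit-norm columns, and presumably $\lambda \le 1$ or absorbed into the count), each inner product $(\mat A^T\mat A)_{jk}$ and $(\mat A^T\vect y)_j$ is bounded by $1$ in absolute value by Cauchy–Schwarz, so the whole quadratic-plus-linear change is bounded in absolute value by roughly $2(q+p)|x_j|$ — this is exactly where the constant $2(q+p)$ in the hypothesis $\alpha < \frac{1}{2(q+p)}$ enters. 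I would be somewhat careful here to track whether it is $q$, $p$, or $q+p$ that appears and to confirm the bookkeeping matches the paper's stated $\alpha$.

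**Closing the argument with the penalty term.** Now the penalty term difference is $\frac{\rho_j}{|\mu_j^{(n_0)}|}|x_j|$, which is strictly positive and, by the hypothesis $|\mu_j^{(n_0)}| < \alpha\rho_j$, satisfies $\frac{\rho_j}{|\mu_j^{(n_0)}|} > \frac{1}{\alpha} > 2(q+p)$. Therefore
\[
f_{n_0+1}(\vect x^{(n_0+1)}) - f_{n_0+1}(\vect x') \ge \left(\frac{\rho_j}{|\mu_j^{(n_0)}|} - 2(q+p)\right)|x_j| > 0,
\]
so $\vect x'$ achieves a strictly smaller objective value, contradicting optimality of $\vect x^{(n_0+1)}$ unless $x_j^{(n_0+1)} = 0$. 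I would also need to check the edge case that replacing $x_j$ with $0$ keeps $\vect x'$ feasible, which is immediate in both the non-negative case (zero is feasible) and the unconstrained case. The conclusion $\gamma_j \approx x_j/\mu_j^{(n_0)} = 0$ then follows trivially from $x_j^{(n_0+1)} = 0$.

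**Main obstacle.** The only real subtlety is getting the constant exactly right: matching the combinatorial factor from bounding all cross terms (there are at most $p-1$ of them plus a diagonal plus $q$-related linear contributions) against the stated threshold $2(q+p)$, and making sure assumptions like $\lambda \le 1$ (or a similar normalization) are either stated or harmless. The structural logic — perturb one coordinate to zero, show the penalty dominates the smooth part — is routine once the bound is pinned down.
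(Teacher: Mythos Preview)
Your proposal is correct and follows essentially the same argument as the paper: both set the $j$-th coordinate to zero, compute the objective difference, and show the penalty term $\frac{\rho_j}{|\mu_j^{(n_0)}|}|x_j| > \frac{1}{\alpha}|x_j| > 2(q+p)|x_j|$ dominates the smooth part. One small slip: $|(\mat A^T\vect y)_j| = |\mat a_j^T\vect y|$ is bounded by $\sqrt{q}$, not $1$ (since $\|\vect y\|_2 \le \sqrt{q}$), but your final $2(q+p)$ bound still goes through; the paper handles this more cleanly by grouping the smooth change as $2x_j(\vect y - \mat A\vect x_b)^T\mat A\vect e_j$ and applying Cauchy--Schwarz once to get $2(\sqrt{q}+p) \le 2(q+p)$.
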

\begin{proof}
Assume that for a specific $j$, $\big|\mu_j^{(n_0)}\big|< \alpha \rho_j$. Then for the next iteration the cost function to be minimized is as follows:
\bea
	f_{n_0+1}(\vect x) =  \vect x^T(\mat A^T \mat A +\lambda \mat I)\vect x  -2\vect y^T \mat A  \vect x  			+ \sum_{i=1}^{p} \frac{\rho_i}{\big|\mu_i^{(n_0)}\big|} |x_i|		\label{Eq:f_n+1}
\eea
Assume that the argument that minimizes \eqref{Eq:f_n+1} is $\vect x^{(n_0+1)}$. we can rewrite it in the following form:
\bea
    \vect x^{(n_0+1)} = \vect x_b + x_j \vect e_j
\eea
where $\vect e_j$ is the $j^{th}$ basis function with one at component $j$ and zeros elsewhere. $x_j$ is the $j^{th}$ element of $\vect x^{(n_0+1)}$ and $\vect x_b$ is equal to $\vect x^{(n_0+1)}$ except at $j^{th}$ element which is zero. We prove that if $\big|\mu_j^{(n_0)}\big|< \alpha \rho_j$, then $x_j = 0$.
\beaa
    f_{n_0+1}(\vect x_b)  &=& \vect x_b^T(\mat A^T \mat A +\lambda \mat I)\vect x_b  -2\vect y^T \mat A  \vect x_b  + \sum_{i=1}^{p} \frac{\rho_i}{\big|\mu_i^{(n_0)}\big|} |x_{b_i}|		\\
    f_{n_0+1}(\vect x^{(n_0+1)}) &=& ( \vect x_b+x_j \vect e_j )^T(\mat A^T \mat A +\lambda \mat I)( \vect x_b+x_j \vect e_j )   \\
    & &-2\vect y^T \mat A  ( \vect x_b+x_j \vect e_j ) + \sum_{i=1}^{p} \frac{\rho_i}{\big|\mu_i^{(n_0)}\big|} |x_{b_i}| + \frac{\rho_j}{\big|\mu_j^{(n_0)}\big|} |x_j|
\eeaa
Therefore, their difference is:
\bea
    f_{n_0+1}\big (\vect x^{(n_0+1)}\big) - f_{n_0+1}\big(\vect x_b\big)  &=& x_j ^2 \vect e_j^T  (\mat A^T \mat A + \lambda \mat I) \vect e_j +2 x_j \vect x_b^T (\mat A^T \mat A + \lambda \mat I) \vect e_j \nonumber\\
     &&- 2 x_j \vect y^T \mat A \vect e_j + \frac{\rho_j}{\big|\mu_j^{(n_0)}\big|} |x_j| \nonumber\\
    &=& \big |x_j \big| \Big ( |x_j| (\mat A^T \mat A +\lambda \mat I)_{jj} + \frac{\rho_j}{\big|\mu_j^{(n_0)}\big|}\Big ) \nonumber \\
    &&- 2 x_j \Big(  \vect y^T \mat A \vect e_j - \vect x_b^T(\mat A^T \mat A + \lambda \mat I)\vect e_j \Big) \label{Eq:difference}
\eea
We want to show that this difference is always positive except for $x_j=0$ which means $x_j$ must be zero in order for $f_{n_0+1}\big (\vect x^{(n_0+1)}\big)$ to be minimum. To do so, we show the following statements are true for nonzero $x_j$:
\beaa
    \Big| 2 x_j \Big(  \vect y^T \mat A \vect e_j - \vect x_b^T(\mat A^T \mat A + \lambda \mat I)\vect e_j \Big) \Big| <\big |x_j \big| \Big ( |x_j| (\mat A^T \mat A +\lambda \mat I)_{jj} + \frac{\rho_j}{\big|\mu_j^{(n_0)}\big|}\Big )
\eeaa
\bea
    \Leftrightarrow 2\Big|    \vect y^T \mat A \vect e_j - \vect x_b^T\mat A^T \mat A \vect e_j + \lambda \vect x_b^T \vect e_j \Big| &< &  |x_j| (\mat A^T \mat A +\lambda \mat I)_{jj} + \frac{\rho_j}{\big|\mu_j^{(n_0)}\big|} \nonumber\\
    \Leftrightarrow ~~~2 \Big| \vect y^T \mat A \vect e_j - \vect x_b^T\mat A^T \mat A\vect e_j \Big| &<&
    (1+\lambda)|x_j| + \frac{\rho_j}{\big|\mu_j^{(n_0)}\big|} \nonumber\\
    \Leftrightarrow ~~~~~~~2 \Big| (\vect y -\mat A \vect x_b)^T  \mat A \vect e_j \Big| &<&
    (1+\lambda)|x_j| + \frac{\rho_j}{\big|\mu_j^{(n_0)}\big|} \label{Eq:ineq}
\eea
In the above derivations, we used the fact that $(\mat A^T \mat A)_{jj} = 1$ since columns of $\mat A$ have unity norm. On the other hand, Cauchy-Schwarz inequality implies that,
\beaa
    2 \Big| (\vect y -\mat A \vect x_b)^T  \mat A \vect e_j \Big| &\le~~ 2||\vect y -\mat A \vect x_b||.||\mat A \vect e_j || &= ~~ 2||\vect y -\mat A \vect x_b|| \\
    &\le~~~ 2(||\vect y|| + ||\mat A \vect x_b||) &\le ~~2(\sqrt{q}+p)
\eeaa
Last inequality holds because of the fact that we assumed that magnitude of $x_i$ and $y_i$ do not exceed one. Also since we assumed $\big|\mu_j^{(n_0)}\big|< \alpha \rho_j$ and by definition of $\alpha$ we have:
\beaa
    (1+\lambda)|x_j| + \frac{\rho_j}{\big|\mu_j^{(n_0)}\big|} \ge \frac{1}{\alpha} \ge 2(q+p) \ge 2(\sqrt{q}+p)
\eeaa
Therefore, \eqref{Eq:ineq} is always true, since the right hand side is always greater than the left hand side. This implies that \eqref{Eq:difference} is positive for nonzero $x_j$ and, hence we must have $x_j$ = 0. Otherwise, it would contradict the fact that $f\big(\vect x^{(n_0+1)}\big)$ is the minimum value. Note that these are loose bounds and in practice they are easily satisfied. For example, $||\vect y - \mat A \vect x_b||$ is practically very small.
\end{proof}
This lemma also implies that if $\big|\mu_j^{(n)}\big|< \alpha \rho_j$ for some $n$, then $x_j$ will remain zero for all the following iterations.
\emph{Remark:} This is a very interesting result and may be potentially useful in updating $x_j$. During the iterations, the moment one of the $x_j$'s goes below $\alpha \rho _j $, $x_j$ should stay zero for all the following iterations. We can use this property and bring the $j^{th}$ component of $\vect x$ and the corresponding column from matrix $A$ out of the model and reduce the size of the model. This can significantly expedite the ICR algorithm.

\begin{lemma}
If $\big|\mu_j^{(n)}\big| \ge \alpha \rho_j$ for all $n \ge n_0$, then there exists $N_j\ge n_0$ such that for all $n>N_j$ we have
\bea
	\bigg |\frac{1}{\big | \mu_j^{(n+1)} \big |} - 	\frac{1}{\big |\mu_j^{(n)} \big |} \bigg | \le \frac{c}{n+1}
\eea
where $c$ is some positive constant.
\end{lemma}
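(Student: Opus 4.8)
The plan is to work directly from the explicit running–average definition \eqref{Eq:UpdateMu} of $\mu_j^{(n)}$, turning it into a recursion and then using only the boundedness assumption $|x_j^{(k)}|\le 1$ together with the hypothesis $|\mu_j^{(n)}|\ge \alpha\rho_j$ for $n\ge n_0$. No fine analysis of the optimization subproblems \eqref{Eq:NonNegOptProb}--\eqref{Eq:UnconsOptProb} is needed; the estimate is purely a consequence of the averaging update.

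First I would rewrite \eqref{Eq:UpdateMu} recursively. Since $\mu_j^{(n)}=\frac{1}{n}\sum_{k=1}^{n}x_j^{(k)}$, we get $\mu_j^{(n+1)}=\frac{n\,\mu_j^{(n)}+x_j^{(n+1)}}{n+1}$, hence
\[
\mu_j^{(n+1)}-\mu_j^{(n)}=\frac{x_j^{(n+1)}-\mu_j^{(n)}}{n+1}.
\]
Because each iterate obeys $|x_j^{(k)}|\le 1$ (so also $|\mu_j^{(n)}|\le 1$ for $n\ge 1$, being an average of such iterates), this yields the crude but sufficient bound $\big|\mu_j^{(n+1)}-\mu_j^{(n)}\big|\le \frac{2}{n+1}$, and by the reverse triangle inequality $\big|\,|\mu_j^{(n+1)}|-|\mu_j^{(n)}|\,\big|\le \frac{2}{n+1}$. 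Next I would write the quantity of interest as a single fraction,
\[
\bigg|\frac{1}{\big|\mu_j^{(n+1)}\big|}-\frac{1}{\big|\mu_j^{(n)}\big|}\bigg|
=\frac{\big|\,|\mu_j^{(n)}|-|\mu_j^{(n+1)}|\,\big|}{\big|\mu_j^{(n)}\big|\,\big|\mu_j^{(n+1)}\big|},
\]
and this is where the hypothesis enters: for all $n\ge n_0$ we have $|\mu_j^{(n)}|\ge \alpha\rho_j>0$, so both factors in the denominator are at least $\alpha\rho_j$ whenever $n\ge n_0$. Combining with the numerator bound gives
\[
\bigg|\frac{1}{\big|\mu_j^{(n+1)}\big|}-\frac{1}{\big|\mu_j^{(n)}\big|}\bigg|\le \frac{2}{(\alpha\rho_j)^2\,(n+1)},
\]
so the claim holds with $N_j=n_0$ and $c=\frac{2}{(\alpha\rho_j)^2}$ (any larger $N_j$ works, which is why only existence is asserted).

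There is no serious obstacle here; the argument is a two–line calculation once the recursion is written down. The only points requiring a moment of care are: (i) making sure the index $n+1$, not just $n$, lies in the range $n\ge n_0$ where the lower bound $|\mu_j|\ge\alpha\rho_j$ is valid — which is automatic once $n>N_j=n_0$; and (ii) the tacit positivity $\rho_j>0$, which is what makes $\alpha\rho_j$ a genuine positive lower bound (and is consistent with the hypothesis of Lemma~1, whose statement is otherwise vacuous). The regime $\rho_j\le 0$ corresponds to a degenerate spike–and–slab penalty and need not be treated. A sharper constant in place of $2$ in the bound on $|x_j^{(n+1)}-\mu_j^{(n)}|$ is possible but irrelevant to the $O(1/n)$ conclusion.
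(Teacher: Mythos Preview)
Your proof is correct and follows essentially the same approach as the paper: bound the difference of consecutive running averages by $O(1/(n+1))$, then control the difference of reciprocals using the lower bound $|\mu_j^{(n)}|\ge\alpha\rho_j$. Your version is in fact slightly cleaner than the paper's, since you apply the hypothesis directly to both $|\mu_j^{(n)}|$ and $|\mu_j^{(n+1)}|$ in the denominator, whereas the paper takes a small detour by first establishing $|\mu_j^{(n)}|-\tfrac{1}{n+1}\ge\tfrac{\alpha\rho_j}{2}$ for $n>N_j=2/(\alpha\rho_j)$ and only then inverting the sandwich $|\mu_j^{(n)}|-\tfrac{1}{n+1}\le|\mu_j^{(n+1)}|\le|\mu_j^{(n)}|+\tfrac{1}{n+1}$; this extra step is unnecessary given the standing hypothesis, and your choice $N_j=n_0$ suffices.
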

Another interpretation of this lemma is that as the number of iterations grows, the cost functions at each iteration of ICR get closer to each other. In view of these two lemmas, we can show that the sequence of optimal cost function values obtained from ICR algorithm forms a Quasi-Cauchy sequence \cite{Burton:QuasiCauchy_AmericanMAth2010}. In other words, this is a sequence of bounded values that their difference at two consecutive iterations gets smaller.
\begin{proof}
%
%
Assume $\big|\mu_j^{(n)}\big| \ge \alpha\rho_j = \epsilon$.
First, note that it is straightforward to see that the difference of consecutive average values has the following property: $-\frac{1}{n+1}\le\big|\mu_j^{(n+1)}\big| - \big|\mu_j^{(n)}\big| \le \frac{1}{n+1}  $. Now, let $N_j = \frac{2}{\alpha\rho_j}$, then for all $n>N_j$ we have:
\bea
    \big|\mu_j^{(n)}\big| - \frac{1}{n+1} \le \big|\mu_j^{(n+1)}\big| \le \big|\mu_j^{(n)}\big| + \frac{1}{n+1}  \label{Eq:mu_jRange}
\eea
where the left hand side is  positive, since
\beaa
    \big|\mu_j^{(n)}\big| - \frac{1}{n+1} \ge \alpha \rho_j - \frac{1}{N_j} =\frac{\alpha \rho_j}{2} =\delta > 0 \label{eq:Temp}
\eeaa
Using this fact and \eqref{Eq:mu_jRange} we infer that:
\beaa
    &&\frac{1}{\big|\mu_j^{(n)}\big| + \frac{1}{n+1}}  \le \frac{1}{\big|\mu_j^{(n+1)}\big|} \le \frac{1}{\big|\mu_j^{(n)}\big| - \frac{1}{n+1}}\\
    &\Rightarrow  &\frac{1}{\big|\mu_j^{(n)}\big| + \frac{1}{n+1}} -\frac{1}{\big|\mu_j^{(n)}\big|} \le \frac{1}{\big|\mu_j^{(n+1)}\big|} -\frac{1}{\big|\mu_j^{(n)}\big|} \le \frac{1}{\big|\mu_j^{(n)}\big| - \frac{1}{n+1}} - \frac{1}{\big|\mu_j^{(n)}\big|}\\
    &\Rightarrow  &\frac{-\frac{1}{n+1}}{\Big(\big|\mu_j^{(n)}\big| + \frac{1}{n+1}\Big) \big|\mu_j^{(n)}\big| } \le \frac{1}{\big|\mu_j^{(n+1)}\big|} -\frac{1}{\big|\mu_j^{(n)}\big|} \le \frac{\frac{1}{n+1}}{\Big(\big|\mu_j^{(n)}\big| - \frac{1}{n+1}\Big)\big|\mu_j^{(n)} \big|}
\eeaa
In the last expression, we have:
\beaa
    \textit{RHS} = \frac{1}{(n+1)\Big(\big|\mu_j^{(n)}\big| - \frac{1}{n+1}\Big)\big|\mu_j^{(n)} \big|} \le \frac{1}{(n+1)\epsilon \delta}\\
    \textit{LHS} = \frac{-1}{(n+1)\Big(\big|\mu_j^{(n)}\big| + \frac{1}{n+1}\Big) \big|\mu_j^{(n)}\big| } \ge \frac{-1}{(n+1)\epsilon \delta}
\eeaa
Therefore,
\beaa
    \bigg|\frac{1}{\big|\mu_j^{(n+1)}\big|} -\frac{1}{\big|\mu_j^{(n)}\big|} \bigg| \le \frac{1}{(n+1)\epsilon\delta}, ~~~~ n>N_j.
\eeaa
\end{proof}
\begin{theorem}
After a sufficiently large $n$, the sequence of optimal cost function values obtained from ICR forms a Quasi-Cauchy sequence. i.e. $a_n = f_n(\vect x^{(n)})$  is a Quasi-Cauchy sequence of numbers.
\bea
    \big| f_{n+1}(\vect x^{(n+1)}) - f_n(\vect x^{(n)}) \big| \le \frac{c'}{n}. \label{Eq:Theorem1}
\eea
\end{theorem}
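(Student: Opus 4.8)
The plan rests on the observation that $f_{n+1}$ and $f_n$ in \eqref{Eq:f_n} are the \emph{same} quadratic, differing only in the weights of the $\ell_1$ terms, so that $f_{n+1}(\vect x)-f_n(\vect x)=\sum_{i=1}^{p}\rho_i\big(\tfrac{1}{|\mu_i^{(n)}|}-\tfrac{1}{|\mu_i^{(n-1)}|}\big)|x_i|$ for every $\vect x$. First I would sandwich $a_{n+1}-a_n$ using optimality of $\vect x^{(n)}$ for $f_n$ and of $\vect x^{(n+1)}$ for $f_{n+1}$: the inequality $f_{n+1}(\vect x^{(n+1)})\le f_{n+1}(\vect x^{(n)})$ gives $a_{n+1}-a_n\le\sum_i\rho_i\big(\tfrac{1}{|\mu_i^{(n)}|}-\tfrac{1}{|\mu_i^{(n-1)}|}\big)|x_i^{(n)}|$, and $f_n(\vect x^{(n)})\le f_n(\vect x^{(n+1)})$ gives the matching lower bound with $x_i^{(n)}$ replaced by $x_i^{(n+1)}$. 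Taking absolute values and using $\rho_i>0$ reduces everything to
\beaa
    \big| a_{n+1} - a_n \big| \;\le\; \sum_{i=1}^{p} \rho_i \Big| \tfrac{1}{|\mu_i^{(n)}|} - \tfrac{1}{|\mu_i^{(n-1)}|}\Big| \, \max\!\big\{ |x_i^{(n)}|, |x_i^{(n+1)}| \big\}.
\eeaa

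Next I would split $\{1,\dots,p\}$ according to the dichotomy furnished by the two preceding lemmas. If $|\mu_j^{(n_0)}|<\alpha\rho_j$ at some iteration $n_0$, then by Lemma 1 and the remark following it $x_j^{(n)}=0$ for all $n>n_0$, so that coordinate contributes exactly zero to the sum once $n>n_0+1$; keeping the factor $\max\{|x_i^{(n)}|,|x_i^{(n+1)}|\}$ above is essential precisely so that the possibly divergent weight $\tfrac{1}{|\mu_j^{(n)}|}$ is annihilated by a zero. For every remaining coordinate $j$ one has $|\mu_j^{(n)}|\ge\alpha\rho_j$ for all $n$ past some point, so Lemma 2 applies and, after shifting its index, yields $\big|\tfrac{1}{|\mu_j^{(n)}|}-\tfrac{1}{|\mu_j^{(n-1)}|}\big|\le\tfrac{c_j}{n}$ for all sufficiently large $n$. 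Invoking the standing normalization $|x_i|\le 1$ and summing over the finitely many surviving coordinates, for $n$ larger than the maximum of all these (finitely many) per-coordinate thresholds we get $|a_{n+1}-a_n|\le\tfrac{1}{n}\sum_j\rho_j c_j=:\tfrac{c'}{n}$, which is \eqref{Eq:Theorem1}. Since $a_n=\min_{\vect x}f_n(\vect x)\le f_n(\vect 0)=0$ is bounded and $c'/n\to 0$, $(a_n)$ meets the defining property of a Quasi-Cauchy sequence.

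The sandwich step and the final summation are routine; the delicate part is the bookkeeping in the coordinate split. One must argue that each index is of exactly one type --- eventually killed by Lemma 1 (so its runaway $\ell_1$ weight never actually enters the estimate, being multiplied by a zero) or eventually bounded away from the kill region (so Lemma 2 supplies the $O(1/n)$ decay) --- and then make the estimate uniform by passing to the maximum over the finitely many thresholds $N_j$, which is legitimate exactly because $p$ is finite. A minor technicality worth a sentence is the $0\cdot\infty$ ambiguity in $\tfrac{\rho_j}{|\mu_j^{(n)}|}|x_j^{(n)}|$ when $\mu_j^{(n)}=0$: this happens only for coordinates already driven to zero, so one may simply delete them from the model, as in the remark after Lemma 1, without affecting $a_n$.
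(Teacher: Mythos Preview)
Your proposal is correct and follows essentially the same approach as the paper: the sandwich via optimality of $\vect x^{(n)}$ and $\vect x^{(n+1)}$, the coordinate split into those killed by Lemma~1 versus those controlled by Lemma~2, and the final $O(1/n)$ bound are all present in the paper's proof in the same form. The only cosmetic difference is ordering---the paper first establishes the bound $|f_{n+1}(\vect x)-f_n(\vect x)|\le c'/n$ for $\vect x\in\{\vect x^{(n)},\vect x^{(n+1)}\}$ via the coordinate split, and \emph{then} applies the sandwich, whereas you sandwich first and split afterward; your explicit factor $\max\{|x_i^{(n)}|,|x_i^{(n+1)}|\}$ makes the annihilation of the Lemma~1 coordinates slightly more transparent.
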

\begin{proof}
Before proving the proof, note that we can assume for a sufficiently large $N_0$, if $n\ge N_0$, then  $\big|\mu_j^{(n)}\big|$ is either always less than $\alpha\rho_j$ or always greater. Because according to Lemma 1, we know that if $\big|\mu_j^{(n)}\big|$ once becomes smaller than $\alpha\rho_j$ for some $n$, it will remain less than $\alpha\rho_j$ for all the following iterations. Therefore, let $n_j, ~j=1...p$ be the iteration index that for all $n>n_j$, $ \big|\mu_j^{(n)}\big| <\epsilon$. Note that some $n_j$'s may be equal to infinity which means they are never smaller than $\epsilon$. For those $j$ that $n_j=\infty$, let $N_j$ to be the same as $N_j$ defined in proof of Lemma 2.
With these definitions, we now proceed to prove the Theorem. We first show that for $n>N_0 = \max(\max_j{n_j},\max_j{N_j})$, the sequence of $f_n(\vect x^{(n)})$ has the following property:
\bea
    &&\big| f_{n+1}(\vect x^{(n)}) - f_{n}(\vect x^{(n)})\big| = \Big| \sum_{i=1}^{p} \rho_i\Big( \frac{1}{\big| \mu_i^{(n)} \big|} - \frac{1}{\big| \mu_i^{(n-1)} \big|} \Big)|x_i| \Big| \nonumber\\
    &\le& \sum_{|\mu_i^{(n-1)}|<\epsilon} \rho_i\Bigg| \frac{1}{\big| \mu_i^{(n)} \big|} - \frac{1}{\big| \mu_i^{(n-1)} \big|} \Bigg||x_i| + \sum_{|\mu_i^{(n-1)}|\ge\epsilon} \rho_i\Bigg| \frac{1}{\big| \mu_i^{(n)} \big|} - \frac{1}{\big| \mu_i^{(n-1)} \big|} \Bigg||x_i|\nonumber\\
    &\le &\sum_{|\mu_i^{(n-1)}|\ge\epsilon} \rho_i\frac{c}{n}|x_i| ~\le~ p \max{\{\rho_i\}} \frac{c}{n} ~\le~ \frac{c'}{n}. \label{Eq:Prop1}
\eea
This property also holds for $\vect x^{(n+1)}$. Finally, We show that for $n>N_0$, $a_n = f_n(\vect x^{(n)})$ is Quasi-Cauchy.
Since the minimum value $f_{n+1}(\vect x^{(n+1)})$ is smaller than $f_{n+1}(\vect x^{(n)})$, we can write:
\beaa
    f_{n+1}(\vect x^{(n+1)}) - f_{n}(\vect x^{(n)}) \le f_{n+1}(\vect x^{(n)}) - f_{n}(\vect x^{(n)}) \le \frac{c'}{n}
\eeaa
where we used \eqref{Eq:Prop1} for $n>N_0$. With the same reasoning for $n>N_0$ we have:
\beaa
    f_{n+1}(\vect x^{(n+1)}) - f_{n}(\vect x^{(n)}) \ge f_{n+1}(\vect x^{(n+1)}) - f_{n}(\vect x^{(n+1)}) \ge  -\frac{c'}{n}
\eeaa
Therefore,
\beaa
    \big| f_{n+1}(\vect x^{(n+1)}) - f_{n}(\vect x^{(n)}) \big| \le \frac{c'}{n}
\eeaa
for $n>N_0$.
\end{proof}
\emph{Remark:} Despite the fact that analytical results show a decay of order $\frac{1}{n}$ in difference between consecutive optimal cost function values, ICR shows much faster convergence in practice.
Combination of this theorem with a reasonable \emph{stopping criterion} guarantees the termination of the ICR algorithm. The stopping criteria used in this case is the norm of difference in the solutions $\vect x^{(n)}$ in consecutive iterations. At termination where the solution converges, the ratio $\frac{x_i}{\mu_i^{(n)}}$ will be zero for zero coefficients and approaches 1 for nonzero coefficients, which matches the value of $\gamma_i$ in both cases.
\section{Experimental Validation}
We now apply the ICR method to sparse  recovery problem. Two experimental scenarios are considered: 1.) synthetic data and 2.) a real-world image recovery problem. In each case, comparisons are made against state of the art alternatives. 

\noindent \emph{Synthetic data:}  We set up a typical experiment for sparse recovery as in \cite{Mohimani:fast_l_0_TSP2009, Yen:MM_VariableSelectionSpikeSlab_Stat2011} with a randomly generated Gaussian matrix $\mat A \in \mathbb{R}^{q\times p}$ and a sparse vector $\vect x_0 \in \mathbb{R}^p$. Based on $\mat A$ and $\vect x_0$, we form the observation vector $\vect y\in \mathbb{R}^q$ according to the additive noise model: $ \vect y = \mat A \vect x_0 + \vect n$ with $\sigma = 0.01$. The competitive state-of-the-art methods for sparse recovery that we compare against are: 1) SpaRSA \cite{Wright:SpaRSA_TSP2009,SPARSA:Code_Online} which is a  powerful method to solve the problems of the form \eqref{Eq:MainOptProbICR} 
2) \emph{Yen et al.} framework,  Majorization Minimization (MM) algorithm \cite{Yen:MM_VariableSelectionSpikeSlab_Stat2011} 3) Elastic Net \cite{Zou:AdapElasticNet_AnnalStat2009} 
4) FOCUSS algorithm \cite{Gorodnitsky:L1ReweightFOCUSS_TSP1997} which is a reweighted $\ell_1$ algorithm for sparse recovery \cite{Candes:ReweightedL1_FourierAnalysis2008} 5) expectation propagation approach for spike and slab recovery (SS-EP) \cite{Hernandez:EP-SpikeSlab_MachineLearning2014} and  finally 6) Variational Garrote (VG) \cite{Kappen:VariationalGarrote_MachineLearning2014}. 
Initialization for all methods is consistent as suggested in \cite{SPARSA:Code_Online}.

Table \ref{Tab:table1} reports the   experimental results for a small scale problem. We chose to first report results on a small scale problem in order to be able to use the IBM ILOG CPLEX optimizer \cite{CPLEX:IBM_Online} which is a very powerful optimization toolbox for solving many different optimization problems. It can also find the \emph{global} solution  to non-convex and mixed-integer programming problems. We used this feature of CPLEX  to compare ICR's solution with the global minimizer. For obtaining the results in Table \ref{Tab:table1}, we choose $p=64$, $q=32$ and the sparsity level of $\vect x_0$ is $10$. We generated $1000$ realizations of $\mat A, \vect x_0$ and $\vect n$ and recovered $\vect x$ using different methods. Two different types of figures of merit are used for evaluation of different sparse recovery methods: First, we compare different methods in terms of cost function value averaged over realizations, which is a direct measure of the quality of the solution to \eqref{Eq:MainOptProbICR}. Second, we compare performances from the sparse recovery viewpoint, and used the following figures of merit: mean square error (MSE) with respect to the global solution ($\vect x_g$) obtained by CPLEX optimizer, ``Support Match'' (SM) measure indicating how much the support of each solution matches to that of $\vect x_g$. However, cost function values and comparisons with global solution are not provided for SS-EP and VG since they are  not direct solutions to the optimization problem in \eqref{Eq:MainOptProbICR}.

As can be seen from Table \ref{Tab:table1}, ICR outperforms the competing methods in many different aspects. In particular from the first row, we infer that ICR is a better solution to \eqref{Eq:MainOptProbICR} since it achieves a better minimum in average sense. Moreover, significantly higher support match (SM $= 97.13\%$ ) measure for ICR shows that ICR's solution shows much more agreement with the global solution. Finally, the ICR solution is also the closest to the global solution obtained from CPLEX optimizer in the sense of MSE (by more than one order of magnitude).
    \begin{table}[t]
      \centering
      \caption{Comparison of methods for $p=64$ and $q=32$.}
      \label{Tab:table1}
      \begin{tabular}{|l||c|c|c|c|c|}
      \hline
      Method                                &SpaRSA          & MM            & E-Net          & FOCUSS          & ICR            \\
      \hline\hline
      Avg $f(\vect x^*)$                    &2.05E-2         &1.52E-2        &3.33E-2         & 3.89E-2         &\textbf{1.45E-2}            \\
      \hline
      MSE  vs. $\vect x_g$                  &1.07E-3         &5.49E-3        &2.45E-4         & 1.55E-4         &\textbf{8.45E-5}           \\
      \hline
      SM            vs. $\vect x_g$ ($\%$)   &81.57           &80.25          &70.20           & 90.53          &\textbf{97.13}              \\
      \hline
      \end{tabular}
    \end{table}
    \begin{table}
      \centering
      \caption{Comparison of methods for $p=512$ and $q=128$.}
      \label{Tab:table2}
      \begin{tabular}{|l||c|c|c|c|c|}
      \hline
      Method                                &VG               & MM            & FOCUSS        & SS-EP           & ICR        \\
      \hline\hline
      Avg $f(\vect x^*)$                    &--               &1.02E-1        &8.31E-2         & --              &\textbf{6.72E-2}            \\
      \hline
       MSE vs. $\vect x_0$                  &3.65E-4          &1.89E-3        &3.69E-4         & 3.50E-4         &\textbf{2.39E-4}              \\
      \hline
      Sparsity Level                        &16.82            &79.32          &21.37           & 21.68           &\textbf{28.88}              \\
      \hline
      SM vs. $\vect x_0$ ($\%$)             &89.45            &84.12          &94.17           & 93.90           &\textbf{95.41}              \\
      \hline
      Time (sec)                            &0.82             &3.38           &3.01            & 5.37            &3.15                 \\
      \hline
      \end{tabular}
    \end{table}

\begin{figure}
  \centering
  \includegraphics[width=\textwidth]{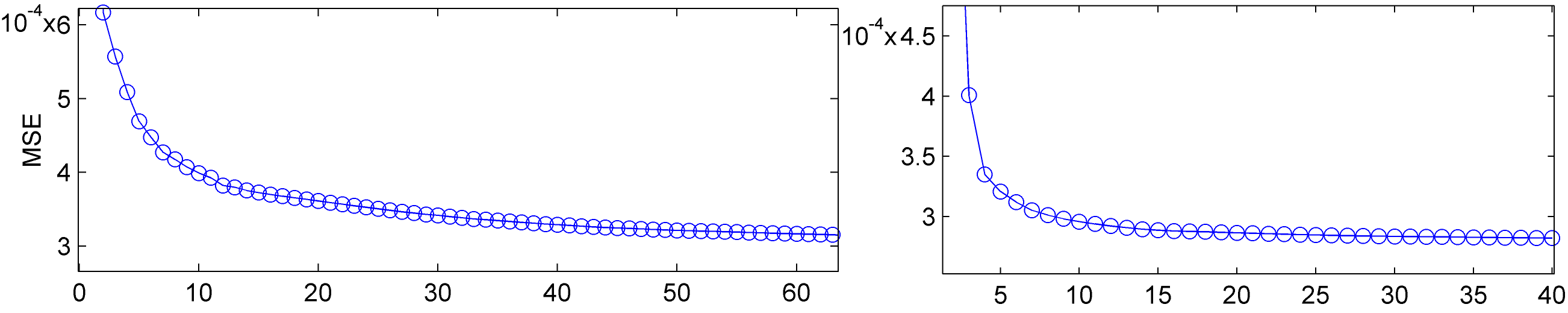}
  \caption{Convergence  of ICR (right) and ICR-NN(left). MSE vs. $\#$ of iteration.  }
  \label{Fig:Convergence1}
\end{figure}

Next, we present results for a typical larger scale problem. We chose $p=512$, $q=128$ and set the sparsity level of $\vect x_0$ to be $30$ and carry out the same experiment as before. Because of the scale of the problem, the global solution is now unavailable and therefore, we compare the results against $\vect x_0$ which is the ``ground truth". Results are reported in Table \ref{Tab:table2}. Table \ref{Tab:table2} also additionally reports the average sparsity level of the solution  and it can be seen that the sparsity level of ICR is the closest to the true sparsity level of $\vect x_0$. In all other figures of merit, viz. the cost function value (averaged over realizations), MSE and support match vs. $\vect x_0$, ICR is again the best. Fig. \ref{Fig:Convergence1} additionally shows the convergence plots for ICR and ICR-NN respectively.

\noindent \emph{Image reconstruction:} In this part we aim to apply our ICR algorithm to real data for reconstruction of handwritten digit images from the well-known MNIST dataset \cite{Lecun:MNIST_Online}. The MNIST dataset contains 60000 digit images ($0$ to $9$) of size $28\times 28$ pixels. Most of pixels in these images are inactive and zero and only a few take non-zero values. Thus, these images are naturally sparse and fit into the spike and slab model. The experiment is set up such that a sparse signal $\vect x$ (vectorized image) is to be reconstructed from a smaller set of random measurements $\vect y$. For any particular image, we assume the random measurement (150 measurements) are obtained by a Gaussian measurement matrix $\mat A \in \mathbb{R}^{150\times784}$ with added noise according to \eqref{Eq:y=Ax}. We compare our result against the following state-of-the-art image recovery methods for sparse images: 1.) SALSA-TV which uses the variable splitting proposed by Figueiredo \emph{et al.} \cite{MarioAndAfonso:ImageRecoverySALSA_TIP2010}  combined with Total Variation (TV) regularizers \cite{Chambolle:TV_Minimization_MathImagVision2004}. 2.) A Bayesian Image Reconstruction (BIR) \cite{Lu:SparseCodeBayesPerspec_NeuralNetLearn2013}, based on a more recent version of Bayesian image reconstruction method \cite{DobigeonAndHero:HierarchyBayesImageRecons_TIP2009} proposed by Hero \emph{et al.}. We also compare our results with Adaptive Elastic Net method \cite{Zou:AdapElasticNet_AnnalStat2009} which is commonly used in sparse image recovery problems. Finally, the result of the non-negative ICR (ICR-NN) is shown which explicitly enforces a non-negativity constraint on $\vect x$ which in this case corresponds to the intensity of reconstructed image pixels. Recovered images are shown in Fig. \ref{Fig:Digits} and the corresponding average reconstruction error (MSE) for the whole database appears next to each method. Clearly, ICR and ICR-NN outperform the other methods both visually and based on MSE value. It is also intuitively satisfying that ICR-NN which captures the non-negativity constraint natural to this problem, provides the best result overall.
\begin{figure}
\centering
    \includegraphics[width=0.9\textwidth]{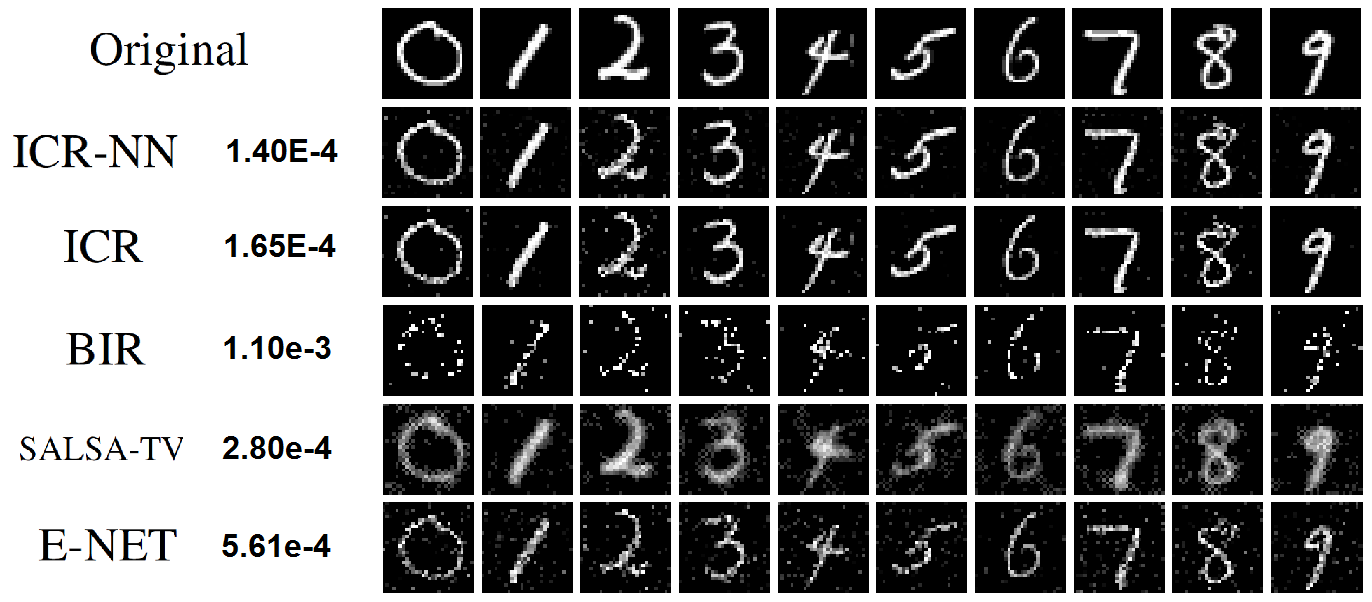}
    \caption{Examples of reconstructed images from MNIST dataset using different methods. The Numbers appeared next to each method is the average MSE.}
    \label{Fig:Digits}
\end{figure}

Following the same experimental setup for synthetic data, we illustrate the performance of the ICR in comparison with others as the sparsity level of $\vect x_0$ ($||\vect x_0||_0$) changes. We vary the true sparsity level from only $5$ non-zero elements in $\vect x_0$ up to $75$ and compared MSE and support match percentage of the solutions from each method. The length of sparse signal is chosen to be $p=512$ and number of observations is $q=128$. 

Fig. \ref{Fig:MSEandSMvsSPlevel2} shows an alternate result as the MSE plotted against the sparsity level; once again the merits of ICR are readily apparent. This Figure also illustrates that the support of ICR's solution is the closest to the support of $\vect x_0$. More than $90\%$ match between the support of ICR's solution  and that of $\vect x_0$ for a wide range of sparsity levels makes ICR very valuable to variable selection problems specially in Bayesian framework. Fig. \ref{Fig:SpLevelvsSPLevel} shows the actual sparsity level of solution for different methods. The dashed line corresponds to the true level of sparsity and ICR's solutions is the closest to the dashed line implying that the level of sparsity of ICR's solution matches the level of sparsity of $\vect x_0$ more than other methods. This also support the results obtained from Fig. \ref{Fig:MSEandSMvsSPlevel2}.

Fig. \ref{Fig:vsSNR}  illustrates the mean square errors (MSEs) and support match (SM) obtained from different methods under different SNRs. The chosen values for $\sigma$ are $0.05, 0.01, 0.001, 0.0001$.

In addition to these results, Table \ref{Tab:table3} shows the results of comparing ICR against a few other methods for sparse signal recovery. These methods are Sparse Recovery by Separable Approximation (SpaRSA) \cite{Wright:SpaRSA_TSP2009} 
and  Elastic Net \cite{Zou:AdapElasticNet_AnnalStat2009}.

\begin{figure}
  \centering
  \includegraphics[width=0.95\textwidth]{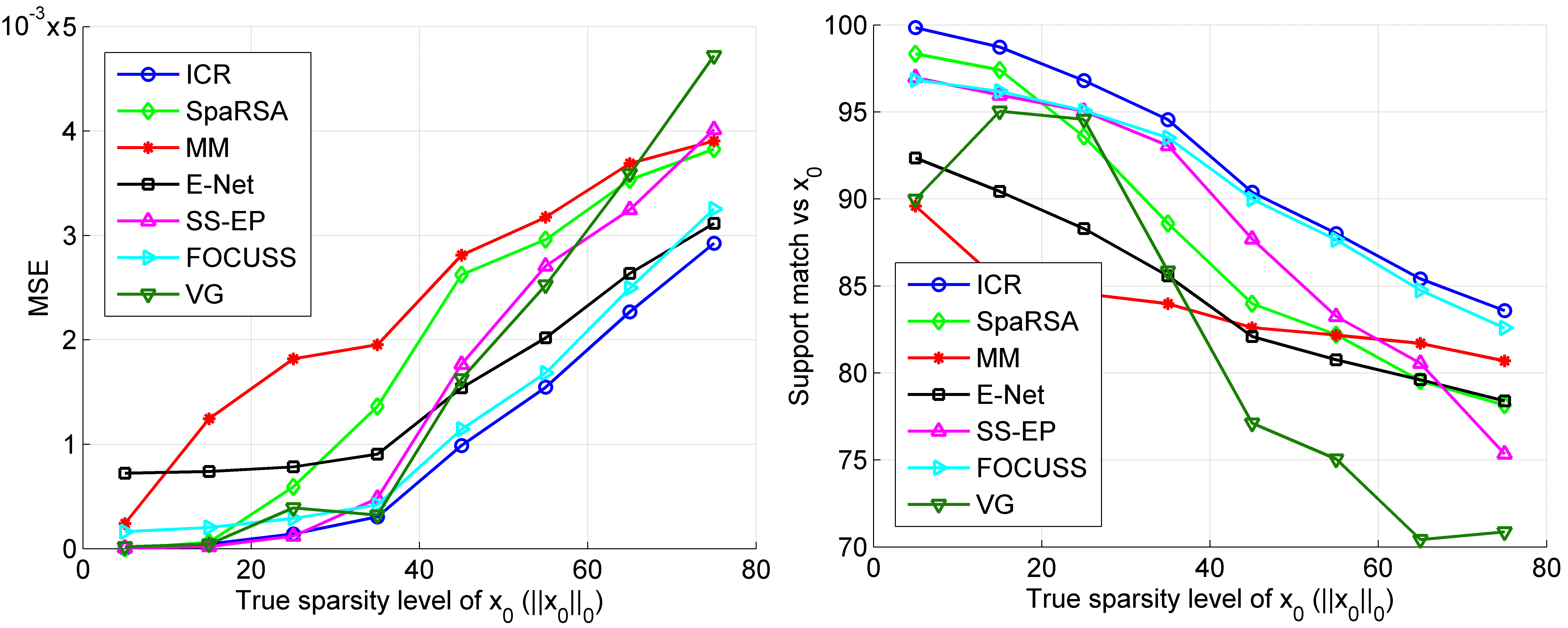}
  \caption{Comparison of  MSE (Left) and Support Match (SM) (Right) obtained by each method  versus sparsity level of $\vect x_0$.  }
  \label{Fig:MSEandSMvsSPlevel2}
\end{figure}
\begin{figure}
  \centering
  \includegraphics[width=0.48\textwidth]{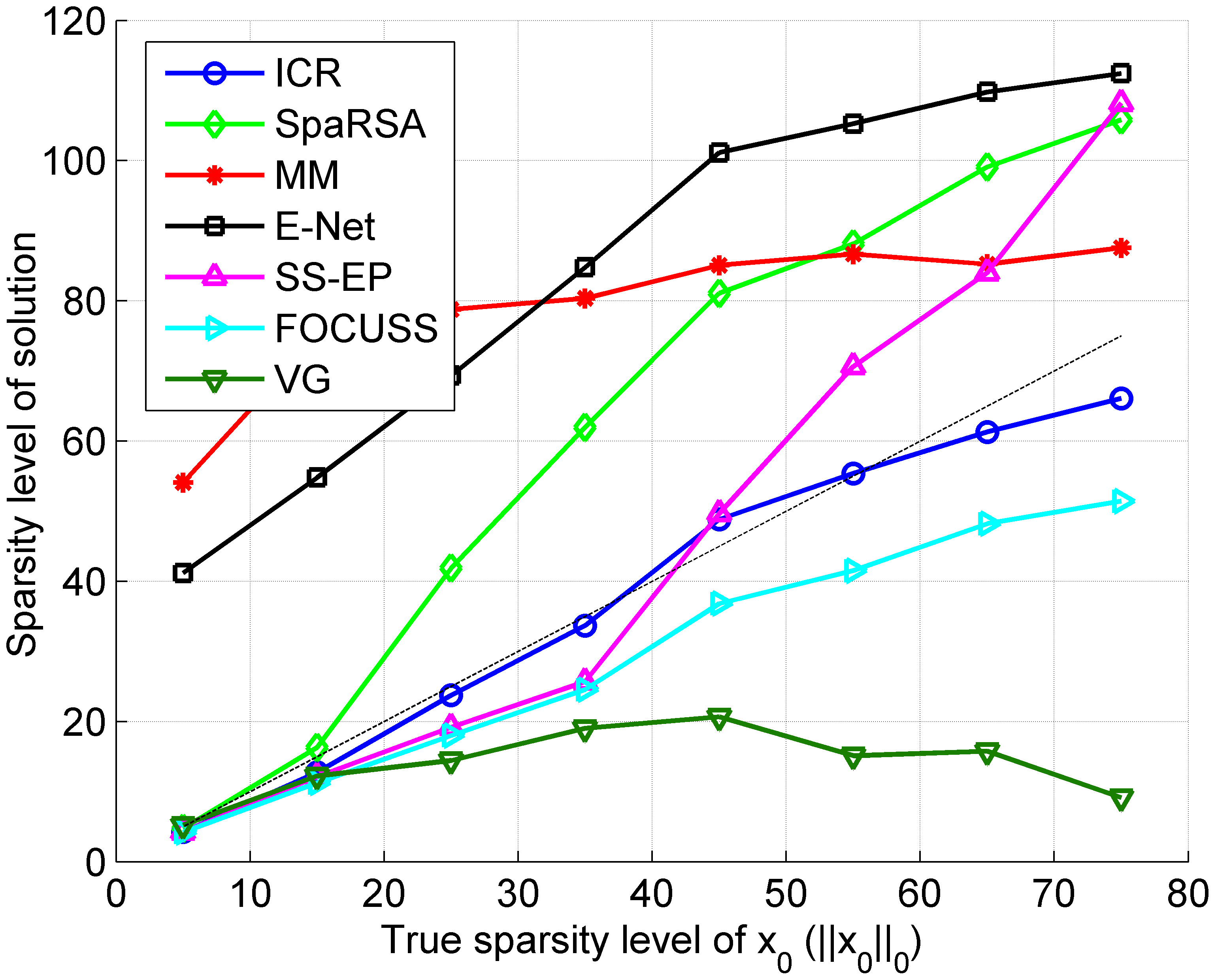}
  \caption{Comparison of average sparsity level obtained by each method versus sparsity level of $\vect x_0$. Dashed line shows the true level of sparsity  }
  \label{Fig:SpLevelvsSPLevel}
\end{figure}
%
      \begin{figure}
          \centering
          \includegraphics[width=0.95\textwidth]{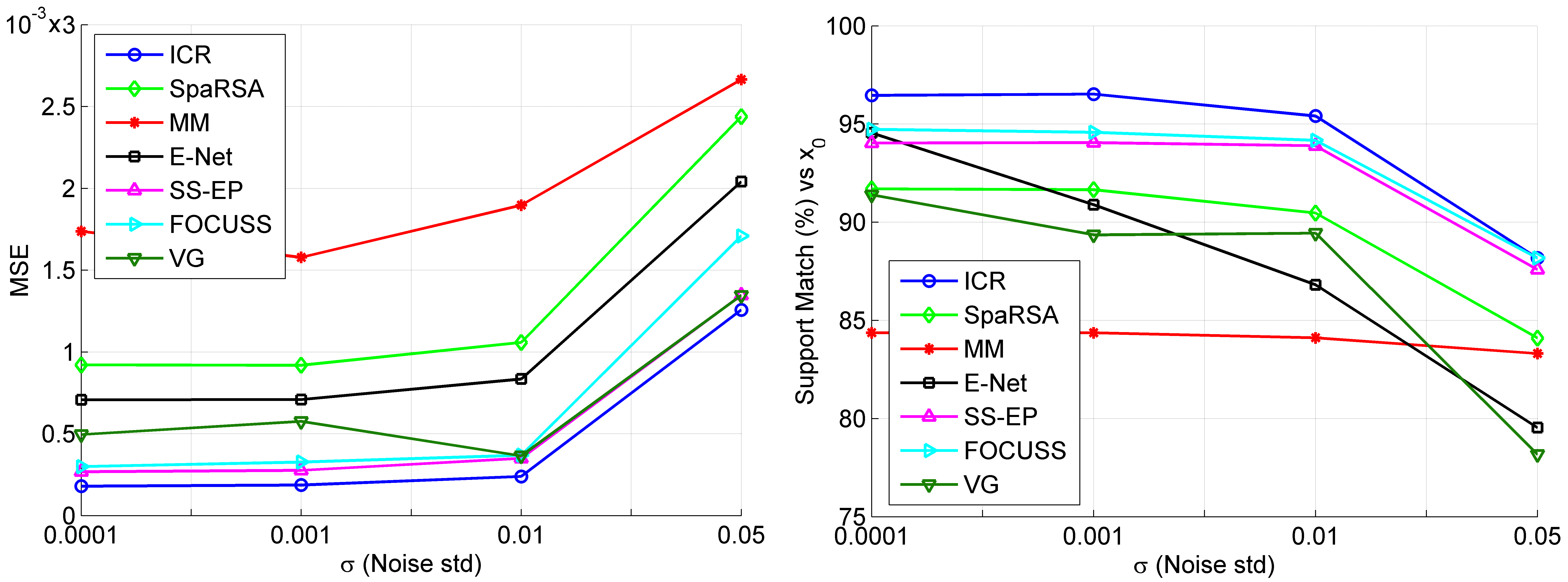}
          \caption{Comparison of  MSE (Left) and Support Match (SM) (Right) obtained by each method under various noise levels.}
          \label{Fig:vsSNR}
        \end{figure}
    \begin{table}[h]
      \centering
      \small{
      \caption{Comparison of methods for $p=512$ and $q=128$.}
      \label{Tab:table3}
      \begin{tabular}{|l||c|c|c|c|}
      \hline
      Method                                &SpaRSA             & Elastic-Net     & ICR        \\
      \hline\hline
      Avg $f(\vect x^*)$                    &8.75E-2          &9.77E-2          &\textbf{6.72E-2}            \\
      \hline
       MSE vs. $\vect x_0$                  &1.05E-3          &8.35E-4          &\textbf{2.39E-4}              \\
      \hline
      Sparsity Level                        &54.12            &77.81            &\textbf{28.88}              \\
      \hline
      SM vs. $\vect x_0$ ($\%$)             &90.47            &86.81           &\textbf{95.41}              \\
      \hline
      \end{tabular}
      }
    \end{table}

\section{Conclusions and Future Work}
\label{sec:concl}
In this chapter, we develop a novel algorithm (ICR) to optimize a hard non-convex cost function with applications in sparse recovery. Unlike existing approaches, ICR does not simplify the optimization by assumptions/relaxations and hence affords a more general sparse structure. Experiments on synthetic data as well as a real-world image recovery problem confirms practical merits of ICR.  Future research may investigate further analysis of ICR properties and extensions to multi-task scenarios.

In this line of research exploiting different sparsifying priors and extension to multi-task and collaborative signal recovery should be considered as future research direction. Also current research focus is about accelerating the ICR algorithm  and parameter learning for obtaining more accurate recovery results.

\chapter{Contribution II: Color Super Resolution via Exploiting Cross Channel Constraints}
\label{chapter:contrib2}

\section{Introduction}
\label{sec:intro}
Super resolution  is a  branch of image reconstruction and an active area of research that focuses on the enhancement of image resolution. Conventional Super-Resolution (SR) approaches require multiple Low Resolution (LR) images of the same scene as input and maps them to a High Resolution (HR) image based on some reasonable assumptions, prior knowledge, or capturing the diversity in LR images \cite{Freeman:ExampleBasedSR_CompGraph2002 ,Farsiu:MultiFrameSR_TIP2004, Park:ReviewSR_SPM2003}. This can be seen as an inverse problem of recovering the high resolution image (signal) by fusing the low resolution images of the scene. The recovered image should produce the same low resolution images if the physical image formation model is applied to the HR image. However, SR task is a severely  ill-posed problem since much information is lost in the process of going from high resolution images to low resolution images and hence the solution is not unique. Consequently, strong prior information is incorporated to yield realistic and robust solutions. Example priors include knowledge of the underlying scene, distribution of pixels, historical data, smoothness and edge information and so on so forth. \cite{Tappen:SparsePriorSR_2003,Fattal:SRstatistic_ACM2007,Dai:edgeSR_CVPR2007, Minaee:Segmen_ICIP2015}

In contrast to conventional super resolution problem with multiple low resolution images as input, single image super-resolution methods have been developed recently that generate the high resolution image only based on a \emph{single} low resolution image. Classically, the solution to this problem is based on example-based methods exploiting nearest neighbor estimations, where pairs of low and high resolution image patches are collected and each low resolution patch is mapped to a corresponding high resolution patch. Freeman \emph{et al.} \cite{Freeman:ExampleBasedSR_CompGraph2002} proposed an estimation scheme where high-frequency details are obtained by taking nearest neighbor based estimation on low resolution patches. Glasner \emph{et al.} \cite{Glasner:SRSingle2009ICCV} used the observation that patches in a natural image tend to redundantly recur many times inside the image, both within the same scale, as well as across different scales and approached the single image super resolution problem. An alternate mapping scheme was proposed by Kim \emph{et al.} \cite{Kim:SRNaturalPrior_PAMI2010} using kernel ridge regression.

Many learning techniques have been developed which attempt to capture the co-occurrence of low resolution and high resolution image patches. \cite{Sun:HallucinationSR_CVPR2003} proposed a Bayesian approach  by using Primal Sketch priors. Inspired by manifold forming methods like locally linear embedding (LLE), Chang \emph{et al.} \cite{Chang:NeighborEmbeddingSR_CVPR2004} proposed a neighbourhood embedding approach. Specifically, small image patches in the low and high resolution images form manifolds with similar local geometry in two distinct feature spaces and local geometry information is used to reconstruct a patch using its neighbors in the feature space.


More recently, sparse representation based methods have been applied to the single image super resolution problem. Essentially in these techniques, a historical record of typical geometrical structures observed in images is exploited and examples of high and low resolution image patches are collected as dictionary (matrix). Yang \emph{et al.} proposed to apply sparse coding for retrieving the high resolution image from the LR image \cite{YangAndWright:SparseSR_TIP2010}. Zeyde \emph{et al.} extended this method to develop a local Sparse-Land model on image patches \cite{Zeyde:SR_Springer2012}. Timofte \emph{et al.} proposed the Anchored Neighborhood Regression (ANR) method which uses learned dictionaries in combination with neighbor embedding methods \cite{Timofte:AnchoredANR_ICCV2013, Timofte:AnchoredARN+_ACCV2014}. Other super resolution methods based on statistical signal processing or dictionary learning methods have been proposed by \cite{Peleg:StatisticalSR_2014TIP, Zhou:KPCAcodingSR2015_SPL, Polatkan:BayesianSR_2015PAMI, Huang:TransformedExampleSR_2015cvpr, Zhang:MultiLinearMap_2015TIP, He:BetaProcessDL_CVPR2013}.

%

On top of sparsity based methods, learning based methods have also been exploited for SR problems to learn dictionaries that are more suitable for this task. Mostly, dictionary learning or example-based learning methods in super-resolution use an image patch or feature-based approach to learn the relationship between high resolution scenes and their low resolution counterparts.  Yang \emph{et al.} \cite{YangAndWright:RawSparseSR_CVPR2008} propose to use a collection of raw image patches as dictionary elements in their framework. Subsequently, a method that learns LR and HR dictionaries jointly was proposed in \cite{YangAndWright:SparseSR_TIP2010}. A semi-coupled dictionary learning (SCDL) model and a mapping function was proposed in  \cite{Wang:SemiCoupledDL_CVPR2012} where the learned dictionary pairs can characterize the structural features of the two image domains, while the mapping function reveals the intrinsic relationship between the two. In addition, coupled dictionary learning for the same problem was proposed in \cite{Yang:CoupledDicLearnSR_TIP2012}, where the learning process is modeled as a bilevel optimization problem. Dual or joint filter learning in addition to dual (joint)  dictionaries was developed by Zhang \emph{et al.} \cite{Zhang:DualDLSR_ICME2011}.

\subsection{Sparsity Based Single Image Super-Resolution}
In the setting proposed by Yang \emph{et al.} (ScSR) \cite{YangAndWright:SparseSR_TIP2010} a large collection of corresponding high resolution and low resolution image patches is obtained from training data. In this framework, the low resolution information can either be in the form of raw image patches, high frequency or edge information, or any other types of representative features, while high resolution information is in the form of image pixels to ensure reconstruction of high resolution images. Using methods mentioned for dictionary learning in SR task and sparsity constraints, high resolution and low resolution dictionaries are jointly learned such that they are capable of representing the LR image patches and their corresponding HR counterparts using the same sparse code. Once the dictionaries are learned, the algorithm searches for a sparse linear representation of each patch of LR image based on the following sparse coding optimization:
\bea
     \vect x^\ast  = \arg\min_{\vect x} ~~ \frac{1}{2}||\vect y_l - \mat D_l \vect x||_2^2 + \lambda || \vect x||_1 \label{Eq:ScSRopt}
\eea
where $\mat D_l$ is the learned low resolution dictionary (or dictionary that is learned based on features extracted from LR patches), $\vect x$ is the   sparse code representing the LR patch (or features extracted from LR patch) with respect to $\mat D_l$ and $\lambda$ is a regularizer parameter for enforcing the sparsity prior and regularizing the ill-posed problem. This is the familiar and famous LASSO \cite{Sprechmann:CHI-LASSO_TSP2011, Mousavi:ICR_SPL2015} problem which can be easily solved using any sparse solver toolbox. The high resolution reconstruction ($\vect y_h$) of each low resolution patch or features of the patch ($\vect y_l$) is then reconstructed using the same sparse code according to the HR dictionary as: $\vect y_h = \mat D_h \vect x^\ast $. Joint dictionary learning for SR considers the problem of learning two joint dictionaries $\mat D_l$ and $\mat D_h$ for two features spaces (low resolution and high resolution domains) which are assumed to be tied by a certain mapping function \cite{YangAndWright:SparseSR_TIP2010, Wang:SemiCoupledDL_CVPR2012}. The assumption is that $\vect x$, the sparse representation of $\vect y_l$ based on learned low resolution dictionary, should be  the same as that of $\vect y_h$ according to $\mat D_h$. The following optimization problem encourages this idea and learns low resolution and high resolution image dictionaries according to the same sparse code:
\bea
    \min_{\mat D_l, \mat D_h,\{\vect x^i\}}  \frac{1}{N} \sum_{i=1}^{N} \frac{1}{2}\| \vect y_l^i - \mat D_l \vect x^i \|_2^2 + \frac{1}{2}\| \vect y_h^i - \mat D_h \vect x^i \|_2^2 + \lambda \|\vect x^i\|_1  \nonumber \\
    \text{st.~~} \|\mat D_l(:,k) \|_2^2 \le 1, ~ \|\mat D_h(:,k) \|_2^2 \le 1, ~~k=1,2,...,K.
\eea
where $N$ is the number of training sample pairs and $K$ is the number of desired dictionary basis atoms. $\mat D(:,k)$ denotes the $k^{th}$ column of the matrix $\mat D$.

\subsection{Motivation and Contributions}
Most  super-resolution methods, especially in single image SR literature, have been designed to increase the resolution of a single channel (monochromatic) image. A related yet more challenging problem, color super-resolution, addresses enhancing resolution of color (multi-channel) low resolution images to increase their spatial resolution.
The typical solution for color super resolution involves applying SR algorithms to each of the color channels independently \cite{Shah:ColorVideoSR_TIP1999, Tom:ColorVideoSR_TIP2001}. Another approach which is more common is to transform the problem to a different color space such as YCbCr, where chrominance information is separated from luminance, and SR is applied only to the luminance channel \cite{Yang:CoupledDicLearnSR_TIP2012, YangAndWright:SparseSR_TIP2010, Timofte:AnchoredANR_ICCV2013} since   human eye is more sensitive to luminance information than chrominance information. Both of these methods are suboptimal for the color super-resolution problem as they do not fully exploit the complementary information that may exist in different color channels. The former ignores cross channel correlations and the latter despite taking into account some level of color correlation, does not methodically capture chrominance geometry. 
In particular, many images have key geometric components in chrominance channels. For instance, Fig. \ref{Fig:Motivation} illustrates a synthetic image where chrominance (Cb and Cr) edges are even more pronounced than those in the luminance channel (Y). In traditional multi-frame super resolution problem, color information has indeed been used in different ways to enhance super resolution results. Farsiu \emph{et al.} \cite{Farsiu:ColorDemosaicSR_TIP2006} proposed a multi-frame demosaicing and super resolution framework for color images using different color regularizers. Belekos \emph{et al.} proposed multi channel video super resolution in \cite{Belekos:MultiChannelVideoSR_2010TIP} and general color dictionary learning for image restoration is proposed in \cite{Mairal:SparsityImageRestoration_2008TIP, Xu:QuaternionColorSR_2015TIP, Jung:MumfordShahSR_2011TIP}. Other methods that use color channel information are proposed in \cite{Shen:SparsityColorSR_2013AppliedMechanics, Dai:SoftCutColorSR_2009TIP, Gong:NeighborEmbedColorSR_2011CompInfo, Wang:EdgeColorSR_2015APPC, Maalouf:GroupletColorSR_2009EuroSP, Liu:ColorizationForSR_2010ECCV, Cheng:RGBSR_2015IETImaPrc, Maalouf:GroupSR_2012IETImgPrc}.

In this work, we extend sparsity based super resolution by effectively using color priors to further enhance super resolution results and develop a sparsity based Multi-Channel (i.e.\ color) constrained Super Resolution (MCcSR) framework. \textbf{The key contributions of our work}\footnote{{Preliminary version of this work was presented at IEEE ICIP in September 2016 \cite{Mousavi:ColorSR_2016ICIP}}} are as follows \cite{mousavi2017sparsity}:
\begin{itemize}
    \item We explicitly address the problem of color image super-resolution by inclusion of color regularizers in the sparse coding for SR. These color regularizers capture the cross channel correlation information existing in different color channels and exploit it to better reconstruct super-resolution patches. The resulting optimization problem with added color-channel regularizers is not easily solvable and a tractable solution is proposed.
    \item The amount of color information is not the same in each region of the image and in order to be able to force color constraints we develop a measure that captures the amount of color information and then use it to balance the effect of color regularizers. Therefore, an adaptive color patch processing scheme is also proposed where patches with stronger edge similarities are optimized with more emphasis on the color constraints.
    \item In most dictionary learning algorithms for super-resolution, only the correspondence between low and high resolution patches is considered. However, we propose to learn dictionaries whose atoms (columns) are not only low resolution and high resolution counterparts of each other, but also in the high resolution dictionary in particular, we incorporate color regularizers such that the resulting learned high resolution patches exhibit high edge correlation across RGB color bands. 
    \item {\em Reproducibility:} All results in this chapter are completely reproducible. The MATLAB code as well as images corresponding to the SR results are made available at: {\url{http://signal.ee.psu.edu/MCcSR.html}}.
\end{itemize}

The rest of this chapter is organized as follows: In Section \ref{Sec:ColorSR}, we generalize the sparsity-based super resolution framework to multiple (color) channels  and motivate the choice of color regularizers. These color regularizers are used in Section \ref{Sec:ColorDL} to assist learning of color adaptive dictionaries suitable for color super resolution task. Section \ref{Sec:Experiments} includes experimental validation which demonstrates the effectiveness of our approach by comparing it with state-of-the-art image SR techniques. Concluding remarks are collected in Section \ref{Sec:Conclusion}.

\section{Sparsity Constrained Color Image Super Resolution}
\label{Sec:ColorSR}

\subsection{Problem Formulation}
\label{sec:formulation}

A characteristic associated with most natural images is strong correlation between high-frequency spatial components across the color (RGB) channels. This is based on the intuition that a luminance edge for example is spread across the RGB channels \cite{Farsiu:ColorDemosaicSR_TIP2006, Srinivas:ColorSR_CIC2011}. Fig. \ref{Fig:Motivation} illustrates this idea.

 We can hence encourage the edges across color channels to be similar to each other. Fig. \ref{Fig:MotivationEdge} also shows that RGB edges are far more close to each other than YCbCr edges. Such ideas have been exploited in traditional image fusion type super-resolution techniques \cite{Farsiu:ColorDemosaicSR_TIP2006}, yet sparsity-based single image super resolution lacks a concrete color super resolution framework.
\begin{figure}
  \centering
  \includegraphics[width=0.9\columnwidth]{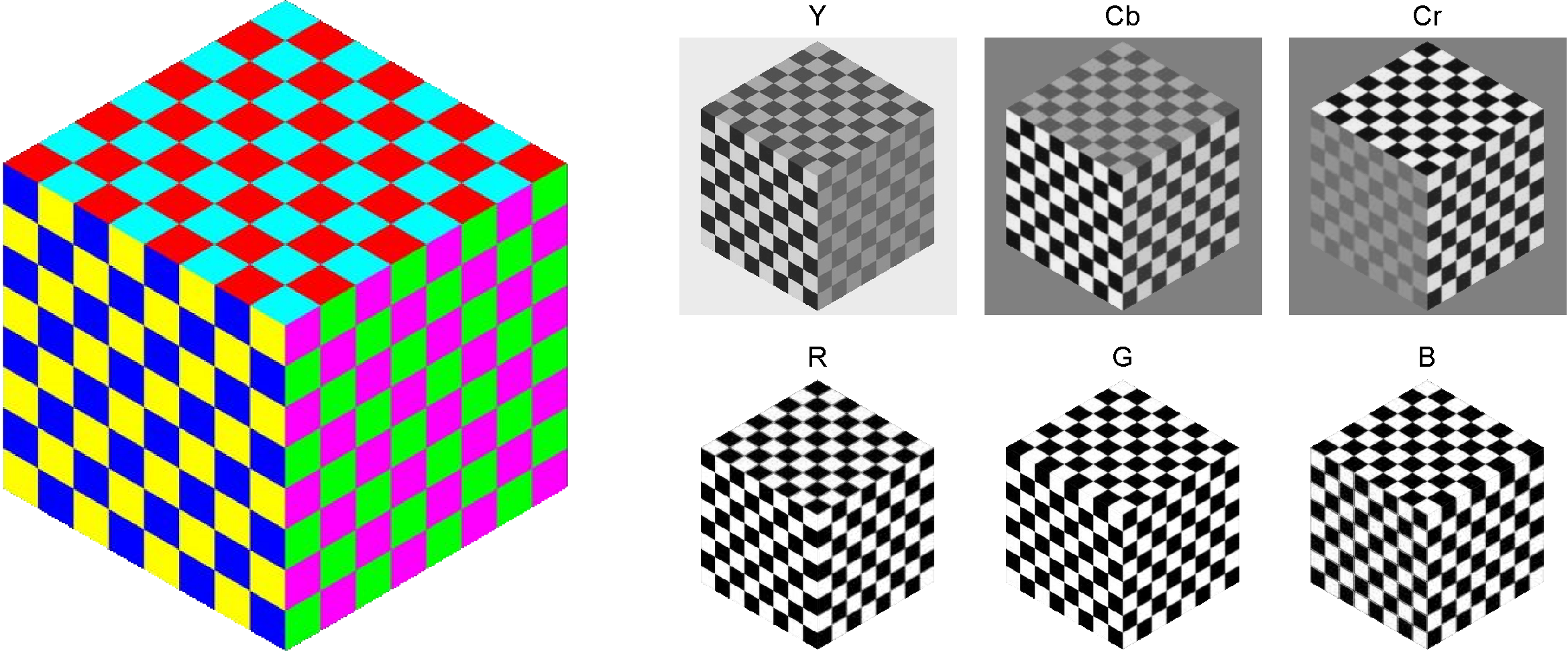}\\
  \caption{Color chessboard cube and  color channel components. }\label{Fig:Motivation}
\end{figure}
\begin{figure}
  \centering
  \centering
  \includegraphics[width=0.9\columnwidth]{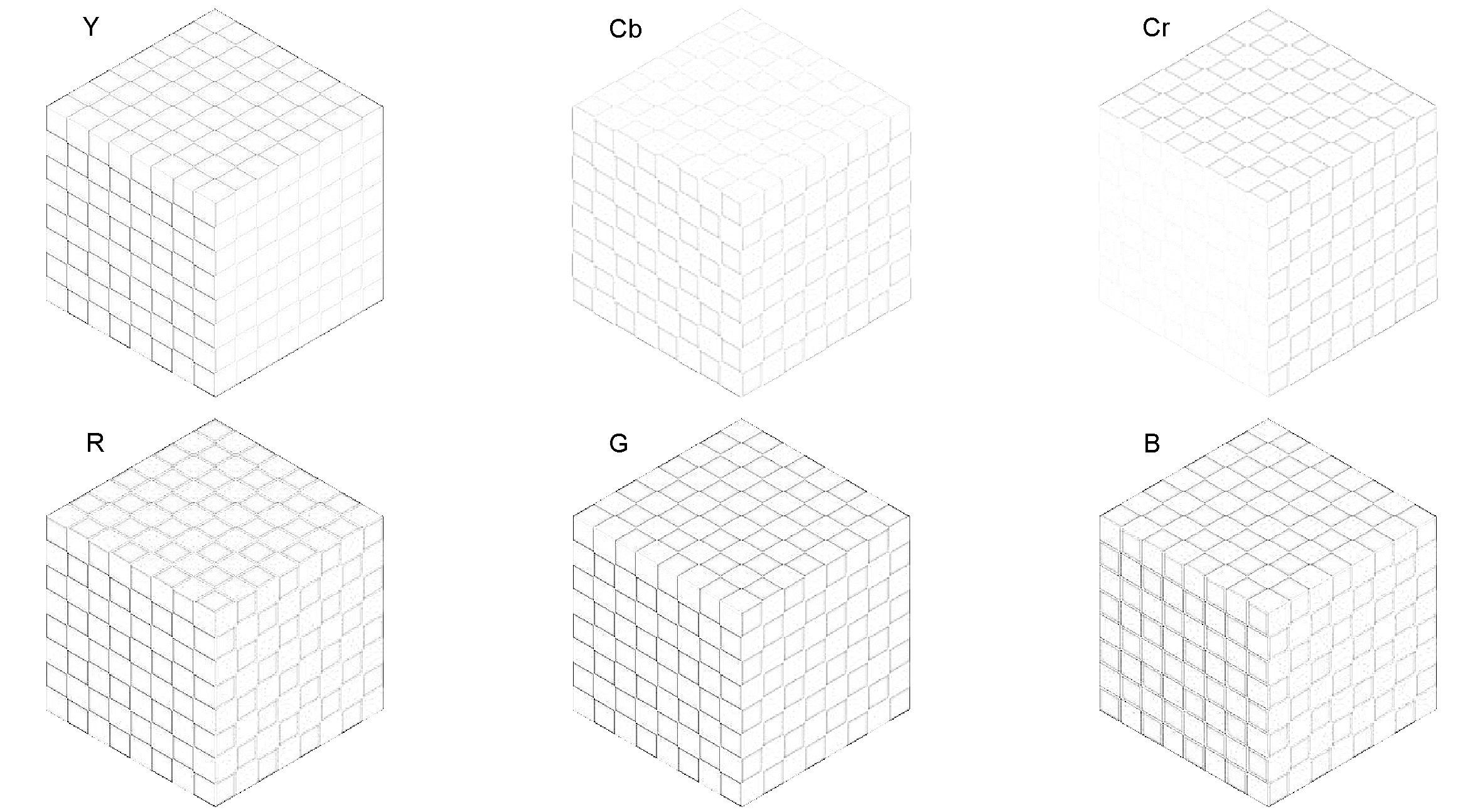}\\
  \caption{Edges for color channels of chessboard cube. }\label{Fig:MotivationEdge}
\end{figure}
Edge similarities across RGB color channels may be enforced in the following manner  \cite{Farsiu:ColorDemosaicSR_TIP2006, Keren:ColorSR_1999MachineVision, Menon:ColorDemosaick_2009TIP}.
\bea
        \|\mat S_\mu \vect y_{h_\mu} - \mat S_\nu \vect y_{h_\nu}\|_2 < \epsilon_{\mu\nu} &,~ \mu,\nu\in\{r,g,b\} &,~ \mu\ne\nu \label{Eq:CrossConstrains}
\eea
where $r,g$ and $b$ subscripts are indicating signals in R, G and B channels and $\mat S$ matrix is a high-pass edge detector filter as in \cite{Srinivas:ColorSR_CIC2011}. For instance, $\mat S_r \vect y_{h_r}$  illustrate the edges in red channel of the desired high resolution image. These constraints are essentially enforcing the edge information across color channels to be similar in {\em high resolution patches}. The underlying assumption here is that the high resolution patches need  to be known beforehand which is not true in practice. We recognize however that these constraints can be equivalently posed on the sparse coefficient vector(s) corresponding to the individual color channels, since: $\vect y_{h_r} = \mat D_{h_r} \vect x_r,~~~\vect y_{h_g} = \mat D_{h_g} \vect x_g,~~~\vect y_{h_b} = \mat D_{h_b} \vect x_b$.

Note that sparse codes for different color channels are no longer independent and they may be jointly determined by solving the following optimization problem:
\bea
     [\vect x_r,\vect x_g,\vect x_b]  &=&\arg\min \sum_{c \in \{r,g,b\}} \frac{1}{2} \|\vect y_{l_c} -\mat D_{l_c} \vect x_c\|_2^2 +  \lambda \|\vect x_c \|_1 \nonumber\\
&&     +\tau \Big[ \|\mat S_r \mat D_{h_r} \vect x_r - \mat S_g \mat D_{h_g} \vect x_g \|_2^2 \nonumber\\
&&     ~~+\|\mat S_g \mat D_{h_g} \vect x_g - \mat S_b \mat D_{h_b} \vect x_b\|_2^2 \nonumber\\
&&     ~~+\|\mat S_b \mat D_{h_b} \vect x_b-\mat S_r \mat D_{h_r} \vect x_r\|_2^2 \Big]. \label{Eq:MainOptProb}
\eea
where the cost function is equivalent to the following:
\bea
     L_1&=&   \sum_{c \in \{r,g,b\}} \Big[ \frac{1}{2} \|\vect y_{l_c} -\mat D_{l_c} \vect x_c\|_2^2 +  \lambda \|\vect x_c \|_1  \nonumber\\
     &&+2\tau \vect x_c^T \mat D_{h_c}^T \mat S_c^T \mat S_c \mat D_{h_c} \vect x_c  \Big ]     -2\tau \Big [  \vect x_r^T \mat D_{h_r}^T \mat S_r^T \mat S_g \mat D_{h_g} \vect x_g +   \nonumber\\
     && \vect x_g^T \mat D_{h_g}^T \mat S_g^T \mat S_b \mat D_{h_b} \vect x_b    +   \vect x_b^T \mat D_{h_b}^T \mat S_b^T \mat S_r \mat D_{h_r} \vect x_r  \Big ] \label{Eq:Cost2}
\eea
For simplicity, we assume the same regularization parameters $\tau$ and $\lambda$ for each of the edge difference terms and color channels. The high-pass edge detectors ($\mat S_r,\mat S_g,\mat S_b$) are also chosen to be the same for each color channel. It is worth mentioning that if  $\tau = 0$, \eqref{Eq:MainOptProb} reduces to three independent sparse coding problems (ScSR) for each color channel. With the cross channel regularization terms, these sparse codes are no longer independent and \eqref{Eq:MainOptProb} presents a challenging optimization problem in contrast with the optimization problem corresponding to single channel sparsity based super resolution. In the new problem, the additional color channel regularizers are of quadratic nature and make the optimization problem more challenging to solve. Next, we propose a tractable solution.

\subsection{Solution to the Optimization Problem}
We introduce the following vectors and matrices:
\beaa
    \vect x =
    \begin{bmatrix}
    \vect x_r  \\
    \vect x_g  \\
    \vect x_b  \\
    \end{bmatrix}_{3m\times 1}
    ,
    \vect y_l =
    \begin{bmatrix}
    \vect y_{l_r}  \\
    \vect y_{l_g}  \\
    \vect y_{l_b}  \\
    \end{bmatrix}_{3p\times 1}
    ,~
    \mat P =
    {\begin{bmatrix}
    \mat 0       & \mat 0       & \mat I      \\
    \mat I       & \mat 0       & \mat 0      \\
    \mat 0       & \mat I       & \mat 0 \\
    \end{bmatrix}}_{3m \times 3m}
\eeaa
\beaa
    \begin{bmatrix}
    \vect x_b  \\
    \vect x_r  \\
    \vect x_g  \\
    \end{bmatrix} =
    \underbrace{\begin{bmatrix}
    \mat 0       & \mat 0       & \mat I      \\
    \mat I       & \mat 0       & \mat 0      \\
    \mat 0       & \mat I       & \mat 0 \\
    \end{bmatrix}}_{\mat P}
    \underbrace{\begin{bmatrix}
    \vect x_r  \\
    \vect x_g  \\
    \vect x_b  \\
    \end{bmatrix}}_{\vect x} = \mat P \vect x
\eeaa
\bea
    \mat D_l =
        \begin{bmatrix}
        \mat D_{l_r} & \mat 0       & \mat 0      \\
        \mat 0       & \mat D_{l_g} & \mat 0      \\
        \mat 0       & \mat 0       & \mat D_{l_b}\\
        \end{bmatrix},~
    \mat D_h =
        \begin{bmatrix}
        \mat D_{h_r} & \mat 0       & \mat 0      \\
        \mat 0       & \mat D_{h_g} & \mat 0      \\
        \mat 0       & \mat 0       & \mat D_{h_b}\\
        \end{bmatrix}
        \label{Eq:DicsDef}
\eea
\beaa
    \mat S =
        \begin{bmatrix}
        \mat S_{r} & \mat 0       & \mat 0      \\
        \mat 0       & \mat S_{g} & \mat 0      \\
        \mat 0       & \mat 0       & \mat S_{b}\\
        \end{bmatrix}_{3p\times 3p},~~
    \mat P_s =
        \begin{bmatrix}
    \mat 0       & \mat 0       & \mat I      \\
    \mat I       & \mat 0       & \mat 0      \\
    \mat 0       & \mat I       & \mat 0 \\
    \end{bmatrix}_{3p\times 3p}
\eeaa

Where $\vect x$ and $\vect y_l$ respectively are concatenation of sparse codes and low resolution image patches (or features) in different color channels. $\mat P$ and $\mat P_s$ are shifting matrices that can shift the order of coefficients in the vectors and matrices. They consist of zero and identity matrices and have a size of $3m\times 3m$ and $3p\times 3p$, respectively. $m$ is the length of sparse code for each color channel, $p$ is the size of HR patches. $\mat D_l \in \mathbb{R}^{3q\times 3m}$ and $\mat D_h \in \mathbb{R}^{3p\times 3m}$ are dictionaries that contain color dictionaries in their block diagonals and
$q$ is length of LR features (patches). We also define and simplify $\mat D_{hs}$:
\bea
    \mat D_{hs} &=&
    \begin{bmatrix}
    \mat D_{h_b}^T \mat S_b^T \mat S_r \mat D_{h_r}   &                          \mat 0                              &                       \mat 0  \\
    \mat 0                                            & \mat D_{h_r}^T \mat S_r^T \mat S_g \mat D_{h_g}              &                       \mat 0   \\
    \mat 0                                            &                         \mat 0                               & \mat D_{h_g}^T \mat S_g^T \mat S_b \mat D_{h_b}    \\
    \end{bmatrix}  \nonumber\\
    &=& \mat P \mat D_h^T \mat S^T \mat P_s^T \mat S \mat D_h \label{Eq:Dhs}
\eea
Finally, the cost function in \eqref{Eq:Cost2} can be written as follows:
\bea
    L_1 &=& \frac{1}{2} \| \vect y_l - \mat D_l \vect x\|_2^2 + \lambda \|\vect x \|_1 \nonumber\\
    &&+ 2 \tau \vect x^T \mat D_h^T \mat S^T \mat S \mat D_h \vect x - 2 \tau \vect x^T \mat P^T \mat D_{hs} \vect x. \label{Eq:Cost3}\\
    &=& \vect x^T [ \frac{1}{2} \mat D_l^T \mat D_l + 2 \tau \mat D_h^T \mat S^T \mat S \mat D_h - 2\tau \mat P^T \mat D_{hs}] \vect x \nonumber\\
    && -\vect y_l^T \mat D_l \vect x + \frac{1}{2} \vect y_l^T \vect y_l + \lambda \|\vect x \|_1
\eea
Substituting \eqref{Eq:Dhs} in the above we have:
\bea
    \vect x^\ast &=~~ \arg\displaystyle\min_{\vect x}& \vect x^T \overbrace{[ \frac{1}{2} \mat D_l^T \mat D_l + 2 \tau \mat D_h^T \mat S^T (\mat I - \mat P_s^T)\mat S \mat D_h ]}^{\mat D} \vect x \nonumber\\
    && -\vect y_l^T \mat D_l \vect x + \frac{1}{2} \vect y_l^T \vect y_l + \lambda \|\vect x \|_1 \\
    &=~~ \arg\displaystyle\min_{\vect x} &\vect x^T \mat D \vect x - \vect y_l^T \mat D_l \vect x \ + \lambda\|\vect x\|_1 \label{Eq:FinalOptProb}
\eea
 The re-written cost function in \eqref{Eq:FinalOptProb}, which is now in a more familiar form, is a convex sparsity constrained quadratic optimization and consequently numerical algorithms such as FISTA \cite{Beck:IterativeShrinkageThresholdFISTA_ImagScienSIAM2009, Wright:SpaRSA_TSP2009, Minaee:Segmen_Asilomar2015} can be applied to solve it. Note that matrix $\mat D$ captures cross channel constraints using its off-diagonal blocks.


\subsection{Color Adaptive Patch Processing}
In the previous subsection we presented our color image super resolution framework by exploiting color edge similarities across color channels. However, we should emphasize that not all patches in an image have the same amount of color information and edge similarities. Therefore, any single patch should be treated differently in terms of color constraints. The regularizer parameter $\tau$ can control the emphasis on color edge similarities. Next, we explain our approach to automatically determine $\tau$ in an image/patch adaptive manner.


We use the following color variance measure to quantify the color information in each patch:
\bea
    \beta  = \frac{1}{2s} \Big( \frac{\|\mat H_1 \vect y_{Cb} \| + \|\mat H_1 \vect y_{Cr} \|}{\|\mat H_1 \vect y_{Y} \|}  +  \frac{\|\mat H_2 \vect y_{Cb} \| + \|\mat H_2 \vect y_{Cr} \|}{\|\mat H_2 \vect y_{Y} \|} \Big)
\eea
where $s$ is normalization parameter, $\mat H_1$ and $\mat H_2$ are high-pass Scharr operators and $y_Y, y_{Cb}$ and $y_{Cr}$ are Y, Cb and CR channel bands in YCbCr color space.

Mapping from  $\beta$ values to actual regularizer values ($\tau$) in the optimization framework is illustrated in Fig. \ref{Fig:AdaptivePatch}. It  is a lookup table, which is derived empirically based on a cross validation procedure. In particular: $\tau = c e^{-a \beta}$  and we used a validation dataset to find best values for $c$ and $a$.
Via the aforementioned cross validation, we found $c=0.02$ and $a = 7$.

\begin{figure}
  \centering
  \includegraphics[width=0.8\columnwidth]{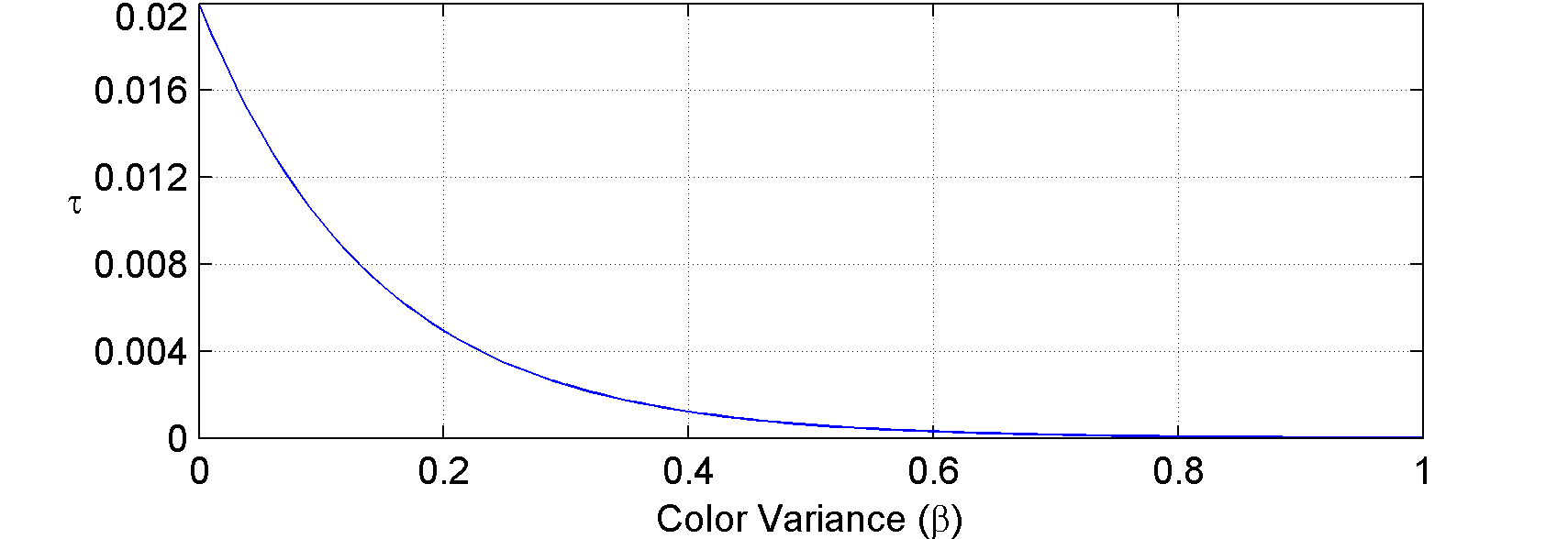}
  \caption{Relationship between color variance $\beta$ and regularizer parameter  $\tau$.}\label{Fig:AdaptivePatch}
\end{figure}

\section{Joint learning of Color Dictionaries}
\label{Sec:ColorDL}

Correlation between color channels can be even better captured if the individual color channel dictionaries are also designed to facilitate the same. In order to learn such dictionaries, we propose a new cost function which involves {\em joint learning} of color channel dictionaries.

Given a set of $N$ sampled training image patch pairs $\{\mat Y_h, \mat Y_l\}$, where $\mat Y_h = \{ \vect y_h^1, \vect y_h^2, ..., \vect y_h^N\}$ is the set of high resolution patches sampled from training images and $\mat Y_l = \{ \vect y_l^1, \vect y_l^2, ..., \vect y_l^N\}$ is the set of corresponding low resolution patches or extracted features, we aim to learn dictionaries with aforementioned characteristics. One essential requirement of course is that the sparse representation of low resolution patches and corresponding high resolution patches be the same. At the same time, the high resolution dictionary, which is responsible for reconstructing HR patches, should be designed to capture RGB edge correlations in the super-resolved images. Individually, sparse coding problems in low resolution and high resolution settings may be written as:
\bea
    \mat D_l  &=~~ \arg\displaystyle\min_{\mat D_l,\{\vect x^i\}} & \frac{1}{N} \sum_{i=1}^{N} \frac{1}{2}\| \vect y_l^i - \mat D_l \vect x^i \|_2^2 + \lambda \|\vect x^i\|_1  \nonumber \\
    & \text{st.}& \|\mat D_l(:,k) \|_2^2 \le 1, ~~k=1,2,...,K, \label{Eq:LRcost}
\eea
\bea
    \mat D_h  &=~~ \arg\displaystyle \min_{\mat D_h,\{\vect x^i\}} & \frac{1}{N} \sum_{i=1}^{N}  \frac{1}{2}\| \vect y_h^i - \mat D_h \vect x^i \|_2^2 + \lambda\|\vect x^i\|_1  \nonumber \\
    &&     +\tau \Big[ \|\mat S_r \mat D_{h_r} \vect x_r^i - \mat S_g \mat D_{h_g} \vect x_g^i \|_2^2 \nonumber\\
    &&     ~~+\|\mat S_g \mat D_{h_g} \vect x_g^i - \mat S_b \mat D_{h_b} \vect x_b^i\|_2^2 \nonumber\\
    &&     ~~+\|\mat S_b \mat D_{h_b} \vect x_b^i-\mat S_r \mat D_{h_r} \vect x_r^i\|_2^2 \Big] \nonumber \\
    & \text{st.}& \|\mat D_h(:,k) \|_2^2 \le 1, ~~k=1,2,...,K. \label{Eq:HRcost}
\eea
The additional terms in \eqref{Eq:HRcost} incorporate the edge information across color channels as in \eqref{Eq:MainOptProb}. Note that there is an implicit constraint on $\mat D_l$ and $\mat D_h$ that they both are block diagonal matrices as defined in \eqref{Eq:DicsDef}.  Considering the requirement that the sparse codes are the same for LR and HR framework, we can obtain the following optimization problem which simultaneously optimizes the LR and HR dictionaries:
\bea
     &\arg\displaystyle \min_{\mat D_h, \mat D_l,\{\vect x^i\}} & \frac{1}{N} \sum_{i=1}^{N}  \frac{\gamma}{2}\| \vect y_l^i - \mat D_l \vect x^i \|_2^2 + \frac{1-\gamma}{2}\| \vect y_h^i - \mat D_h \vect x^i \|_2^2 \nonumber \\
    &&     +\tau \Big[ \|\mat S_r \mat D_{h_r} \vect x_r^i - \mat S_g \mat D_{h_g} \vect x_g^i \|_2^2 \nonumber\\
    &&     ~~+\|\mat S_g \mat D_{h_g} \vect x_g^i - \mat S_b \mat D_{h_b} \vect x_b^i\|_2^2 \nonumber\\
    &&     ~~+\|\mat S_b \mat D_{h_b} \vect x_b^i-\mat S_r \mat D_{h_r} \vect x_r^i\|_2^2 \Big] + \lambda \|\vect x^i\|_1 \nonumber \\
    & \text{st.}& \|\mat D_h(:,k) \|_2^2 \le 1, ~\|\mat D_l(:,k) \|_2^2 \le 1,~~ k=1,2,...,K \nonumber\\ \label{Eq:DicCost}
\eea
where $\gamma$ balances the reconstruction error in low resolution and high resolution settings. Using simplifications similar to \eqref{Eq:Cost3}, this cost function can be re-written as follows:
\bea
     L_2&=& \frac{1}{N} \sum_{i=1}^{N}  \frac{\gamma}{2}\| \vect y_l^i - \mat D_l \vect x^i \|_2^2 + \frac{1-\gamma}{2}\| \vect y_h^i - \mat D_h \vect x^i \|_2^2  + \lambda \|\vect x^i\|_1 \nonumber \\
        && +2\tau   \vect x^{i^T} \mat D_h^T \mat S^T (\mat I - \mat P_s^T) \mat S \mat D_h \vect x^{i^T}  \label{Eq:DicCost2} \\
        &=&     \frac{\gamma}{2N}\| \mat Y_l - \mat D_l \mat X \|_F^2 + \frac{1-\gamma}{2N}\| \mat Y_h - \mat D_h \mat X \|_F^2  + \frac{\lambda}{N} \|\mat X\|_1 \nonumber \\
        && +\frac{2\tau}{N} ~\Tr \Big( \mat X^T \mat D_h^T \mat S^T (\mat I - \mat P_s^T) \mat S \mat D_h \mat X^T \Big).~ \label{Eq:MainDicCost}
\eea
where $\mat X = [\vect x^1~ \vect x^2~...~ \vect x^N] \in \mathbb{R}^{3m\times N}$. The first and second terms in \eqref{Eq:MainDicCost} are respectively responsible for small reconstruction error in low resolution and high resolution training data. The third term enforces sparsity and the last one encourages edge similarity via the learned dictionaries. We propose to minimize this cost function by alternatively optimizing over $\mat X, \mat D_l$  and $\mat D_h$ individually, while keeping the others fixed.

With $\mat D_l$ and $\mat D_h$ being fixed, we optimize \eqref{Eq:MainDicCost} over sparse code matrix $\mat X$. Interestingly because of the Trace operator and Frobenius norm, columns of $\mat X$  can be obtained independently. For each column of $\mat X$ ($i=1...N$) we can simplify the problem:
\bea
    \vect x^i &= ~~\arg\displaystyle\min_{\vect x}& \frac{\gamma}{2}\| \vect y_l^i -\mat D_l \vect x \|_F^2 + \frac{1-\gamma}{2}\| \vect y_h^i - \mat D_h \vect x \|_F^2  + \lambda \|\vect x\|_1 \nonumber\\
     &&     +2\tau   \vect x^{T} \mat D_h^T \mat S^T (\mat I - \mat P_s^T) \mat S \mat D_h \vect x^{T}  \nonumber\\
      &=~~ \arg\displaystyle\min_{\vect x}& \vect x^T [ \frac{\gamma}{2} \mat D_l^T \mat D_l +\frac{1-\gamma}{2} \mat D_h^T \mat D_h \nonumber\\
      &&~~~~~~~+ 2 \tau \mat D_h^T \mat S^T (\mat I - \mat P_s^T)\mat S \mat D_h ]  \vect x \nonumber\\
    && -\big( \gamma \vect y_l^{i^T} \mat D_l + (1-\gamma)\vect y_h^{i^T} \mat D_h \big)\vect x + \lambda \|\vect x \|_1  \nonumber\\
    &=~~ \arg\displaystyle\min_{\vect x} &\vect x^T \mat A \vect x - \vect b^T  \vect x \ + \lambda\|\vect x\|_1 \label{Eq:findx}
\eea
where $\mat A =  \frac{\gamma}{2} \mat D_l^T \mat D_l +\frac{1-\gamma}{2} \mat D_h^T \mat D_h + 2 \tau \mat D_h^T \mat S^T (\mat I - \mat P_s^T)\mat S \mat D_h $ and\\
 $\vect b^{i^T} = \gamma \vect y_l^{i^T} \mat D_l + (1-\gamma)\vect y_h^{i^T} \mat D_h $. The optimization in \eqref{Eq:findx} can be solved using FISTA \cite{Beck:IterativeShrinkageThresholdFISTA_ImagScienSIAM2009}.

The next step is to find the low resolution dictionary $\mat D_l$. By fixing $\mat X$ and $\mat D_h$, the cost function reduces to:
\bea
    \mat D_l &= ~~\arg\displaystyle\min_{\mat D_l}& \|\mat Y_l - \mat D_l \mat X\|_F^2 \nonumber \\
    &\text{s.t.}&   \|\mat D_l(:,k) \|_2^2 \le 1,~~ k=1,2,...,K \nonumber\\
    && \mat D_l \text{ is block diagonal as in \eqref{Eq:DicsDef}}.
\eea
Since $\mat D_l$ is block diagonal and there is no explicit cross channel constraint for the low resolution dictionary, the above optimization can be split into three separate dictionary learning procedures as follows where $c\in \{r,g,b\}$.
\bea
    \mat D_{l_c} &= ~~\arg\displaystyle\min_{\mat D_{l_c}}& \|\mat Y_{l_c} - \mat D_{l_c} \mat X_c\|_F^2 \nonumber \\
    &\text{s.t.}&   \|\mat D_{l_c}(:,k) \|_2^2 \le 1,~~ k=1,2,...,K \label{Eq:findDl}
\eea
which $\mat X_c = [\vect x_c^1~ \vect x_c^2~...~ \vect x_c^N] \in \mathbb{R}^{m\times N}$, $\mat Y_{l_c} = [\vect y_c^1~ \vect y_c^2~...~ \vect y_c^N]\in \mathbb{R}^{p\times N}$ and $c$ takes the subscripts from $\{r,g,b\}$ indicating a specific color channel.
Each of the above dictionaries are learned by the dictionary learning  method in \cite{Mairal:ODL_ICML2009}.

Finally, for finding $\mat D_h$, when $\mat X$ and $\mat D_l$ are fixed, we have:
\bea
     \mat D_h &= ~~ \arg\displaystyle\min_{\mat D_{h}}& \frac{1}{N} \sum_{i=1}^{N}   \frac{1-\gamma}{2}\| \vect y_h^i - \mat D_h \vect x^i \|_2^2  \nonumber\\
     &&~~+ 2\tau   \vect x^{i^T} \mat D_h^T \mat S^T (\mat I - \mat P_s^T) \mat S \mat D_h \vect x^{i^T} \nonumber\\
        &\text{s.t}&  \|\mat D_{h}(:,k) \|_2^2 \le 1,~~ k=1,2,...,K.    \nonumber\\
        && \mat D_h \text{ is block diagonal as in \eqref{Eq:DicsDef}}\label{Eq:findDh}
\eea
We develop a solution for (\ref{Eq:findDh}) using the Alternative Direction Method of Multipliers (ADMM) \cite{Boyd:ADMM_MachineLearn2011}.
We first define the function $g(\mat D_h, \mat Z)$ as follows which is essentially the same cost function with the multiplication by $\mat D_h$ in the final term of (\ref{Eq:findDh}) substituted by a slack matrix $\mat Z$:
\bea
    g(\mat D_h, \mat Z) = \frac{1}{N} \sum_{i=1}^{N}   \frac{1-\gamma}{2}\| \vect y_h^i - \mat D_h \vect x^i \|_2^2 + 2\tau   \vect x^{i^T} \mat D_h^T \mat S^T (\mat I - \mat P_s^T) \mat S \mat Z \vect x^{i^T}\nonumber
\eea
Then, solving the following optimization problem, which is a bi-convex problem, is equivalent to solving \eqref{Eq:findDh}.
\bea
    \mat D_h &= ~~ \arg\displaystyle\min_{\mat D_{h},\mat Z}& g(\mat D_h, \mat Z) \nonumber\\
    &\text{s.t}& \mat D_h - \mat Z = \mat 0,\nonumber\\
    &&              \|\mat D_{h}(:,k) \|_2^2 \le 1,~~ k=1,2,...,K.\nonumber\\
    && \mat D_h \text{ is block diagonal as in \eqref{Eq:DicsDef}}. \label{Eq:EquivalenDicOpt}
\eea
The following is a summary of iterative solution to \eqref{Eq:EquivalenDicOpt} using ADMM until a convergence is achieved where $t$ is the iteration index of ADMM procedure:

    \bea
        1)~~~\mat D_h^{t+1} &=~~ \arg\displaystyle\min_{\mat D_{h}}& \Big( \frac{1}{N} \sum_{i=1}^{N}   \frac{1-\gamma}{2}\| \vect y_h^i - \mat D_h \vect x^i \|_2^2  \nonumber\\
         &&~~+ 2\tau   \vect x^{i^T} \mat D_h^T \mat S^T (\mat I - \mat P_s^T) \mat S \mat Z^t \vect x^{i^T} \Big) \nonumber\\
         &&~~+ \frac{\rho}{2} \| \mat D_h -  \mat Z^t + \mat U^t\|_F^2 \nonumber\\
         &s.t.& \|\mat D_{h}(:,k) \|_2^2 \le 1, ~k=1,...,K.\nonumber\\
         &&     \mat D_h \text{ is block diagonal as \eqref{Eq:DicsDef}.} \label{Eq:ADMMstep1}
    \eea
    \bea
        2)~~~\mat Z^{t+1} &= \arg\displaystyle\min_{\mat Z}& \Big( \frac{2\tau}{N} \sum_{i=1}^{N} \vect x^{i^T} \mat D_h^{t+1^T} \mat S^T (\mat I - \mat P_s^T) \mat S \mat Z^t \vect x^{i^T} \Big) \nonumber\\
         &&~~+ \frac{\rho}{2} \| \mat D_h^{t+1} -  \mat Z + \mat U^t\|_F^2 \label{Eq:ADMMstep2}
    \eea
    \bea
        3)~~~\mat U^{t+1} &=& \mat U^{t} + \mat D_h^{t+1} - \mat Z^{t+1} ~~~~~~~~~~~~~~~~~~~~~~\label{Eq:ADMMstep3}
    \eea
Step 3 of the above ADMM procedure is straight forward. However, Steps 1 and 2 need further analytical simplifications for tractability.

\textbf{Step 1:} The optimization  in this step can be re-written as:
\bea
    \mat D_h^{t+1} &=~~ \arg\displaystyle\min_{\mat D_{h}}& \Tr (\mat D_h \mat F \mat D_h^T) - 2 \Tr (\mat E \mat D_h^T) \nonumber\\
    &\text{s.t.} & \|\mat D_{h}(:,k) \|_2^2 \le 1, ~k=1,...,K. \nonumber\\
    &&\mat D_h \text{ is block diagonal as in \eqref{Eq:DicsDef}}   \label{Eq:DhADMM}
\eea
where
\bea
    \mat F &=& \frac{1-\gamma}{2N} \mat X \mat X^T   + \frac{\rho}{2} \mat I_{3m\times 3m} \\
    \mat E &=& \frac{1-\gamma}{2N} \mat Y_h \mat X^T +\frac{\rho}{2} (\mat Z^k - \mat U^k) -\frac{\tau}{N} \mat S^T (\mat I -\mat P_s^T) \mat S \mat Z^k \mat X \mat X^T. \nonumber\\
\eea
Assuming the following block structure for $\mat E$ and $\mat F$:
\bea
    \mat F =
        {\begin{bmatrix}
        \mat F_{rr}       & \mat F_{12}     & \mat F_{13}      \\
        \mat F_{21}       & \mat F_{gg}     & \mat F_{23}   \\
        \mat F_{31}       & \mat F_{32}     & \mat F_{bb} \\
        \end{bmatrix}},~~
    \mat E =
        {\begin{bmatrix}
        \mat E_{rr}       & \mat E_{12}     & \mat E_{13}      \\
        \mat E_{21}       & \mat E_{gg}     & \mat E_{23}   \\
        \mat E_{31}       & \mat E_{32}     & \mat E_{bb} \\
        \end{bmatrix}}
\eea
and due to the block diagonal structure of $\mat D_h$ as in \eqref{Eq:DicsDef}, we can rewrite each term in \eqref{Eq:DhADMM} in the following form:
\bea
    \Tr (\mat E \mat D_h^T) = \Tr (\mat E_{rr} \mat D_{h_r}^T) +\Tr (\mat E_{gg} \mat D_{h_g}^T) +\Tr (\mat E_{bb} \mat D_{h_b}^T)
\eea
\bea
    \Tr (\mat D_h \mat F \mat D_h^T) = \Tr (\mat D_{h_r} \mat F_{rr} \mat D_{h_r}^T) +\Tr (\mat D_{h_g} \mat F_{gg} \mat D_{h_g}^T) \nonumber\\ ~~~~+\Tr (\mat D_{h_b} \mat F_{bb} \mat D_{h_b}^T)
\eea
Finally the cost function reduces to:
\bea
    & \arg\displaystyle\min_{\mat D_{h_r},\mat D_{h_g},\mat D_{h_b}}& \Tr(\mat D_{h_r} \mat F_{rr} \mat D_{h_r}^T) -2 \Tr (\mat E_{rr} \mat D_{h_r}^T) \nonumber\\
    &&+  \Tr(\mat D_{h_g} \mat F_{gg} \mat D_{h_g}^T) -2 \Tr (\mat E_{gg} \mat D_{h_g}^T) \nonumber\\
    &&+  \Tr(\mat D_{h_b} \mat F_{bb} \mat D_{h_b}^T) -2 \Tr (\mat E_{bb} \mat D_{h_b}^T) \nonumber\\
    &\text{s.t.} & \|\mat D_{h_c}(:,k) \|_2^2 \le 1, ~ c\in \{r,g,b\}.
\eea
which is a separable optimization problem, i.e. it can be solved for $\mat D_{h_r}, \mat D_{h_g}$ and $\mat D_{h_b}$ separately as follows:
 \bea
    & \arg\displaystyle\min_{\mat D_{h_c}}& \Tr(\mat D_{h_c} \mat F_{cc} \mat D_{h_c}^T) -2 \Tr (\mat E_{cc} \mat D_{h_c}^T) \nonumber\\
    &\text{s.t.} & \|\mat D_{h_c}(:,k) \|_2^2 \le 1, ~ k=1,2,...,K_c.
\eea
Each of above subproblems now is solvable using the algorithmic approach in Online Dictionary Learning \cite{Mairal:ODL_ICML2009}.

\textbf{Step 2:} This is an {\em unconstrained} convex optimization problem in terms of $\mat Z$ and we can find the minimum by taking the derivative. The closed form solution for $\mat Z$ is given by:
\bea
    \mat Z^{t+1} = \mat D_h^{t+1} + \mat U^{t+1} - \frac{2\tau}{N\rho}  \mat S^T (\mat I - \mat P_s) \mat S \mat D_h^{t+1} \mat X \mat X^T
\eea

A formal stepwise description of our color dictionary learning algorithm is given in Algorithm \ref{Alg:ColorDL}.

\begin{algorithm}[t]
\caption{Color Dictionary Learning }
\label{Alg:ColorDL}
\begin{algorithmic}
\REQUIRE $\mat Y_l, \mat Y_h, \tau, \lambda, \rho$.\\
\emph{initialize: } $\mat D_h^0, \mat D_l^0$, iteration index $n=1$.
\FOR{$n=1: $ Maxiter}
\STATE(1) Find the sparse code matrix by Solving the convex optimization problem in \eqref{Eq:findx}:
\STATE(2) Solve the LR dictionary learning problem in \eqref{Eq:findDl}
\STATE(3) Solve the HR dictionary learning problem in \eqref{Eq:findDh}:
    \WHILE{stopping criterion not met} 
    \STATE(3-1) Solve for $\mat D_h^{t+1}$ using \eqref{Eq:ADMMstep1}
    \STATE(3-2) Solve for $\mat Z^{t+1}$ using \eqref{Eq:ADMMstep2}
    \STATE(3-3) Solve for $\mat U^{t+1}$ using \eqref{Eq:ADMMstep3}
    \STATE(3-4) Increase inner iteration index $t$.
   \ENDWHILE{ if $\| \mat D_h^{t+1} - \mat D_h^{t}\|_F < tol$ }
\STATE(4) Increase iteration index $n$.
\ENDFOR
\ENSURE $\mat D_h, \mat D_l$.
\end{algorithmic}
\end{algorithm}

\section{Experimental Results}
\label{Sec:Experiments}
Our experiments are performed on the widely used \emph{set 5} and \emph{set 14} images as in \cite{Zeyde:SR_Springer2012}. We compare the proposed Multi-Channel constrained Super Resolution (MCcSR) method with several well-known single image super resolution methods. These include the ScSR \cite{Yang:CoupledDicLearnSR_TIP2012} method because our MCcSR method can be seen as a multi-channel extension of the same. Other methods for which we report results are the Single Image Scale-up using Sparse Representation by Zeyde \emph{et al.} \cite{Zeyde:SR_Springer2012}, Anchored Neighborhood Regression for Fast Example-Based Super-Resolution (ANR) \cite{Timofte:AnchoredARN+_ACCV2014} and Global Regression (GR) \cite{Timofte:AnchoredANR_ICCV2013} methods by Timofte \emph{et al}, Neighbor Embedding with Locally Linear Embedding (NE+LLE) \cite{Chang:NeighborEmbeddingSR_CVPR2004} and Neighbor Embedding with NonNegative Least Squares (NE+NNLS) \cite{Bevilacqua:NENNLS_BMVA2012} that were both adapted to learned dictionaries.

In our experiments, we will magnify the input images by a factor of $2$, $3$ or $4$, which is commonplace in the literature. For the low-resolution images, we
use $5 \times 5$ low-resolution patches with overlap of 4 pixels between adjacent patches and extract features based on method in \cite{YangAndWright:SparseSR_TIP2010}. It is noteworthy to mention that these features are not extracted from the $5 \times 5$ low resolution patches, but rather from bicubic interpolated version of the whole image with the desired magnification factor. Extracted features are then used to find the sparse codes according to \eqref{Eq:FinalOptProb} which involves color information as well. Then, high resolution patches are reconstructed based on the same sparse code using the learned high resolution dictionaries and averaged over the overlapping regions. Dictionaries are obtained by training over $100000$ patch pairs which are preprocessed by cropping out the textured regions and discarding the smooth regions. The number of columns in each learned dictionary is $512$ for most of our experiments and regularization parameter $\lambda$ is picked via cross-validation to be $0.1$.
\begin{figure*}
  \centering
  \includegraphics [width = 1\textwidth]{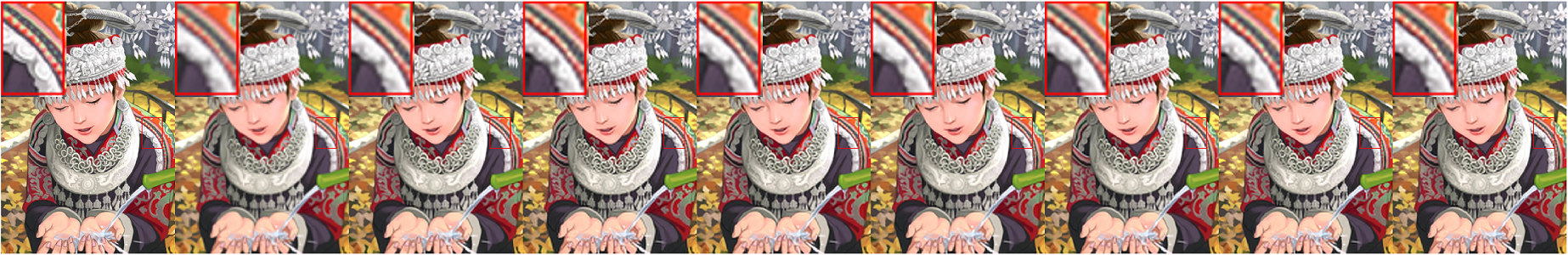}
  \includegraphics [width = 1\textwidth]{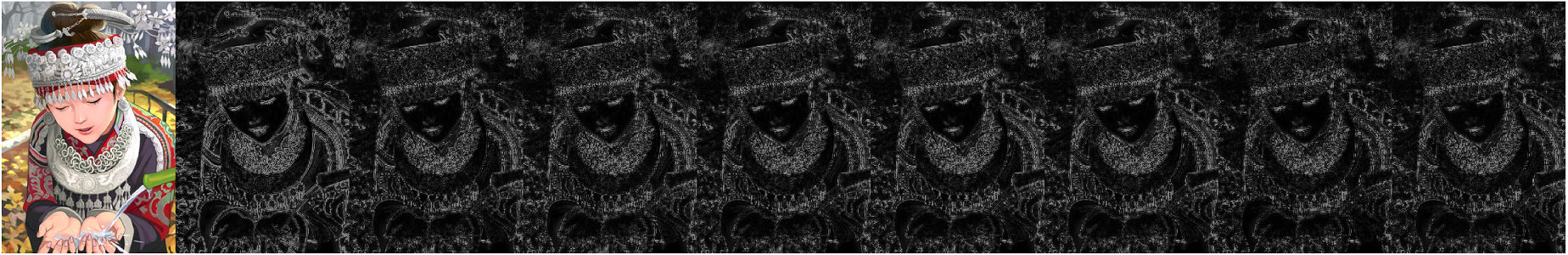}
  \caption{\scriptsize{Comparison of different methods for comic image with scaling factor of 2 (Please refer to the electronic version and zoom in for obvious comparison). Numbers in parenthesis are PSNR, SSIM and SCIELAB error measures, respectively. Left to right: Original,
                    Bicubic      (30.46, 0.840, 1.898e4),
                    Zeyde et al. (31.97, 0.887, 1.127e4),
                    GR           (31.70, 0.879, 1.198e4),
                    ANR          (32.09, 0.889, 1.077e4),
                    NENNLS       (31.87, 0.884, 1.159e4),
                    NELLE        (32.03, 0.889, 1.099e4),
                    \tb{MCcSR    (32.23, 0.899, 9.770e3)},
                    ScSR         (32.14, 0.893, 1.014e4). }}
    \label{Fig:Comic2x}
\end{figure*}
\begin{figure*}
  \centering
  \includegraphics [width = 1\textwidth]{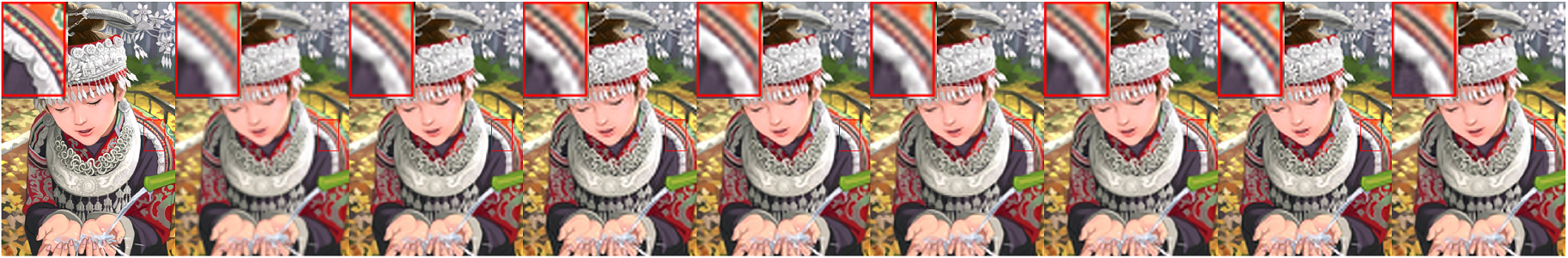}
  \includegraphics [width = 1\textwidth]{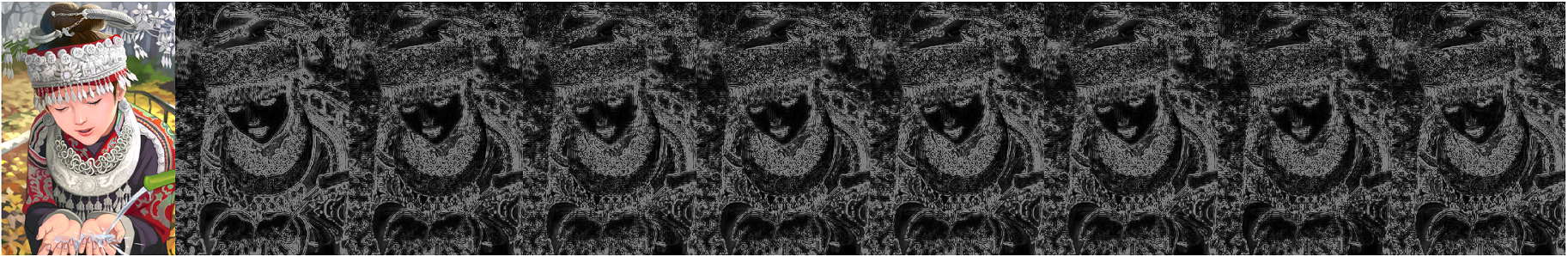}
       \caption{\scriptsize{ Super-resolution results for scaling factor 3 and quantitative measures. Left to right:
    {               Original,
                    Bicubic      (27.51, 0.685, 3.423e4),
                    Zeyde et al. (28.28, 0.737, 2.896e4),
                    GR           (28.15, 0.729, 3.008e4),
                    ANR          (28.36, 0.742, 2.865e4),
                    NENNLS       (28.17, 0.730, 2.961e4),
                    NELLE        (28.30, 0.738, 2.905e4),
                    \tb{MCcSR    (28.51, 0.758, 2.709e4)},
                    ScSR         (28.31, 0.740, 2.860e4) . }} }
        \label{Fig:Comic3x}
\end{figure*}
\begin{figure*}
  \centering
  \includegraphics [width = 1\textwidth]{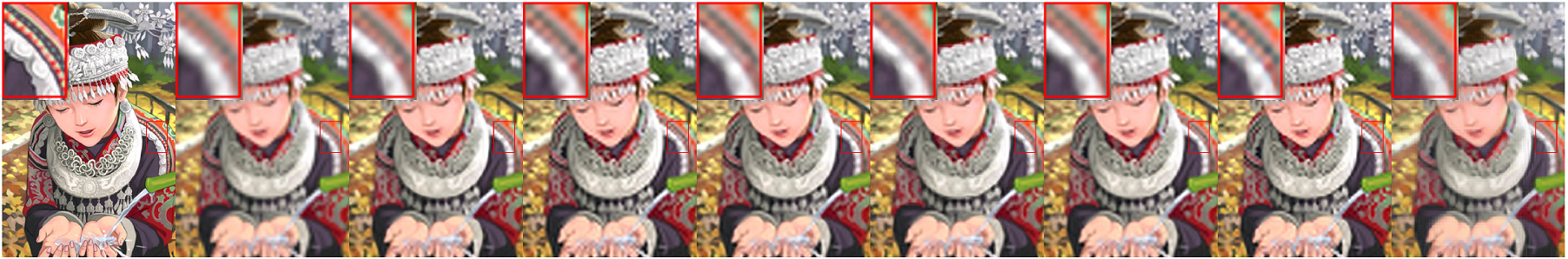}
  \includegraphics [width = 1\textwidth]{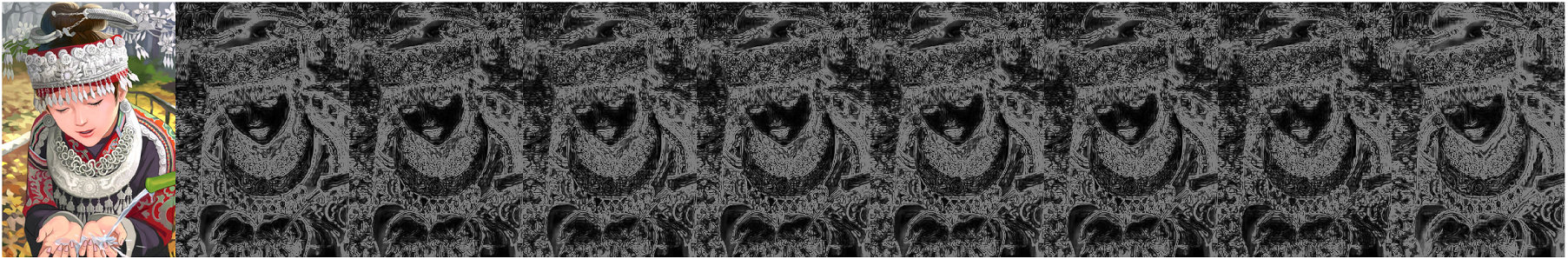}
     \caption{\scriptsize{  Super-resolution results for scaling factor 4 and quantitative measures. Left to right:
    {               Original,
                    Bicubic      (26.05, 0.566, 4.369e4),
                    Zeyde et al. (26.61, 0.615, 3.923e4),
                    GR           (26.51, 0.607, 4.045e4),
                    ANR          (26.63, 0.618, 3.928e4),
                    NENNLS       (26.50, 0.606, 3.984e4),
                    NELLE        (26.57, 0.614, 3.967e4),
                    \tb{MCcSR    (26.74, 0.632, 3.818e4)},
                    ScSR         (26.35, 0.608, 4.002e4) . }} }
      \label{Fig:Comic4x}
\end{figure*}

\begin{figure*}
  \centering
  \includegraphics [width = 1\textwidth]{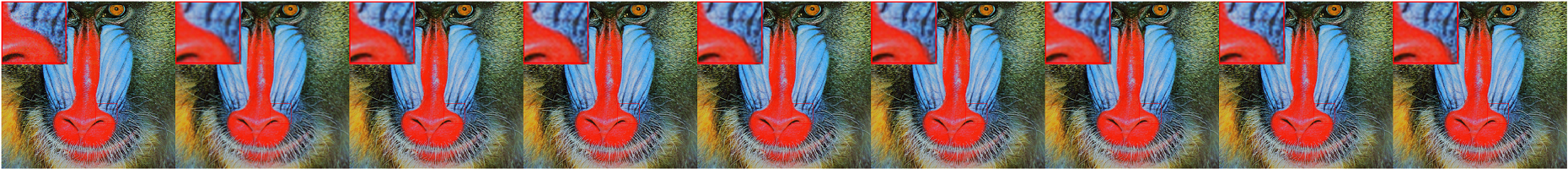}
  \includegraphics [width = 1\textwidth]{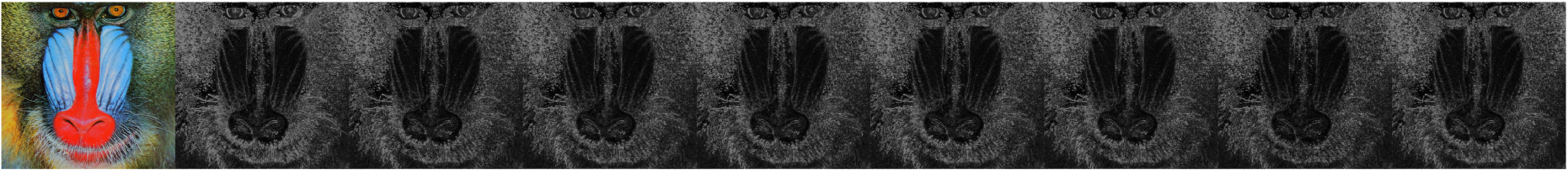}
        \caption{\scriptsize{  Comparison of different methods for baboon image with scaling factor of 2. Numbers in parenthesis are PSNR, SSIM and SCIELAB error measures, respectively. Left to right:
    {               Original,
                    Bicubic      (28.19, 0.635, 7.856e4),
                    Zeyde et al. (28.62, 0.683, 6.570e4),
                    GR           (28.63, 0.690, 6.388e4),
                    ANR          (28.67, 0.689, 3.287e4),
                    NENNLS       (28.58, 0.680, 6.585e4),
                    NELLE        (28.66, 0.688, 6.421e4),
                    \tb{MCcSR    (28.78, 0.705, 5.799e4)},
                    ScSR         (28.69, 0.692, 6.296e4) . }} }
                    \label{Fig:Baboon2x}
\end{figure*}
\begin{figure*}
  \centering
  \includegraphics [width = 1\textwidth]{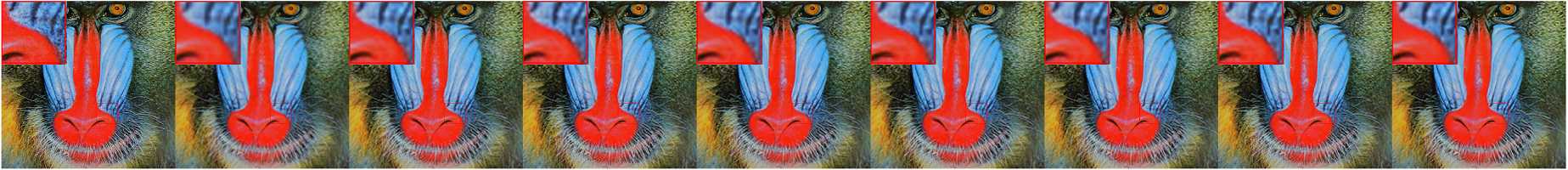}
  \includegraphics [width = 1\textwidth]{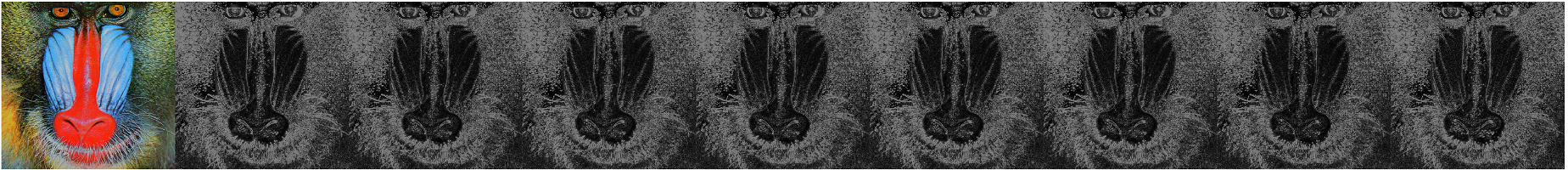}
       \caption{\scriptsize  { Super-resolution results for scaling factor 3 and quantitative measures. Left to right:
    {               Original,
                    Bicubic      (26.71, 0.480, 1.078e5),
                    Zeyde et al. (26.94, 0.520, 1.008e5),
                    GR           (26.95, 0.529, 1.000e5),
                    ANR          (26.97, 0.527, 9.962e4),
                    NENNLS       (26.92, 0.518, 1.010e5),
                    NELLE        (26.97, 0.526, 9.998e4),
                    \tb{MCcSR    (27.11, 0.549, 9.574e4)},
                    ScSR         (26.95, 0.524, 1.018e5) . }}}
                    \label{Fig:Baboon3x}
\end{figure*}
\begin{figure*}
  \centering
  \includegraphics [width = 1\textwidth]{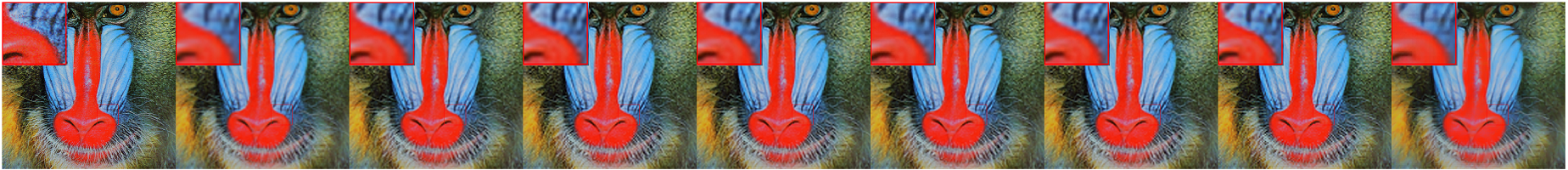}
  \includegraphics [width = 1\textwidth]{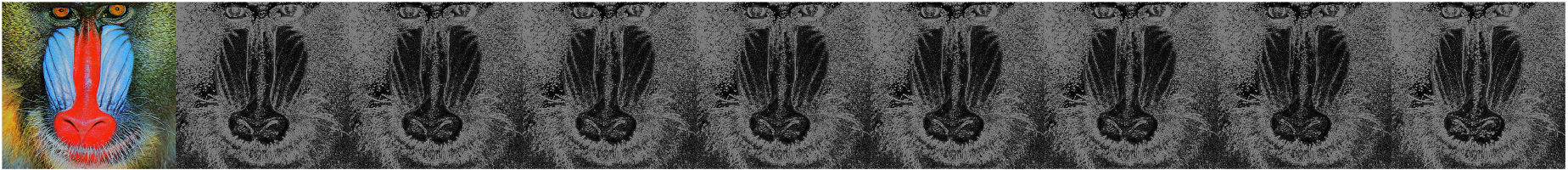}
     \caption{ \scriptsize {Super-resolution results for scaling factor 4 and quantitative measures. Left to right:
     {              Original,
                    Bicubic      (26.00, 0.390, 1.237e5),
                    Zeyde et al. (26.17, 0.420, 1.186e5),
                    GR           (26.17, 0.428, 1.183e5),
                    ANR          (26.19, 0.426, 1.180e5),
                    NENNLS       (26.15, 0.419, 1.190e5),
                    NELLE        (26.18, 0.425, 1.183e5),
                    \tb{MCcSR    (26.25, 0.446, 1.136e5)},
                    ScSR         (26.11, 0.415, 1.185e5) . }}}
                    \label{Fig:Baboon4x}
\end{figure*}
\begin{figure*}
\centering
\subfigure[]    {\includegraphics[width=0.48\textwidth]{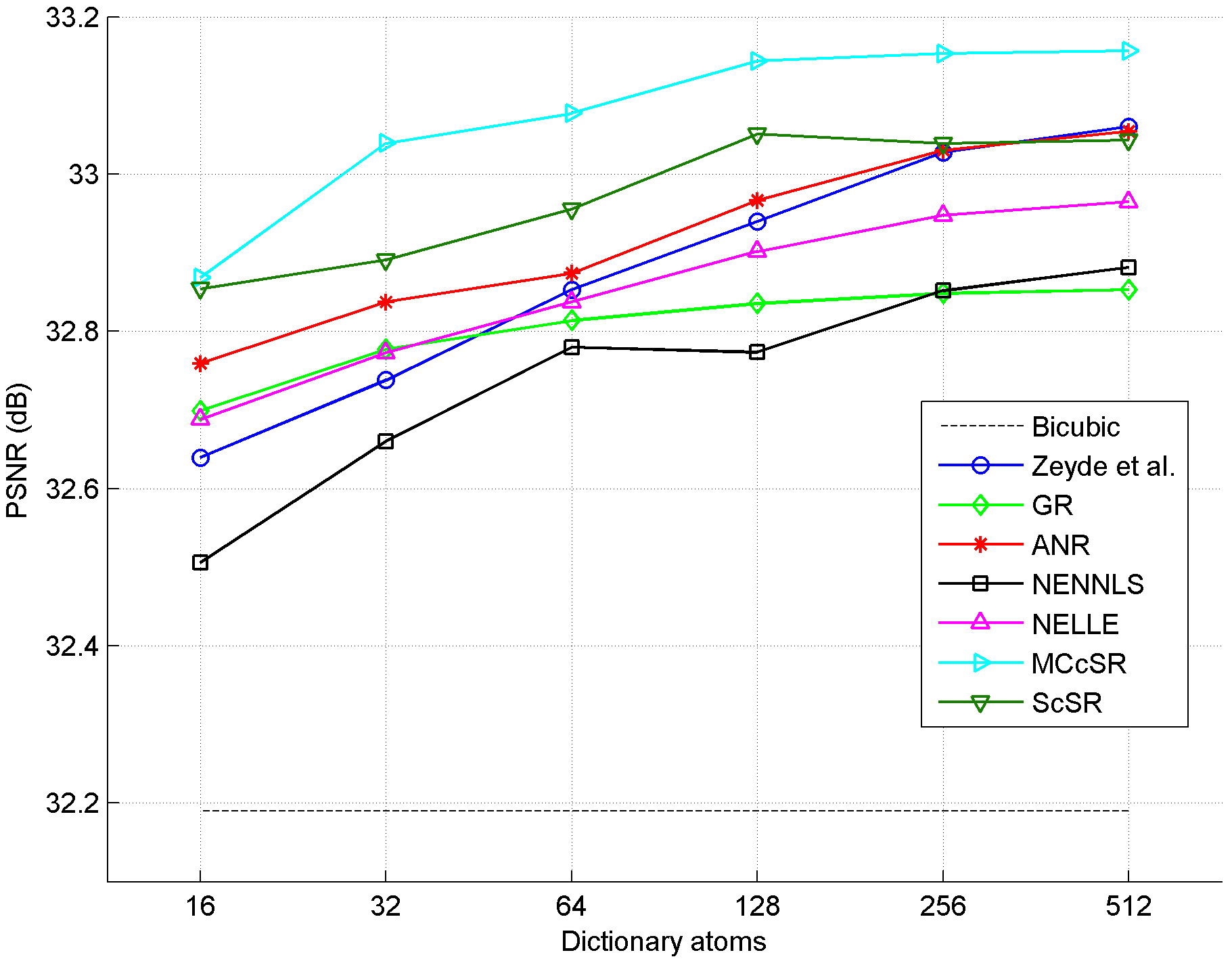}}
\subfigure[]    {\includegraphics[width=0.48\textwidth]{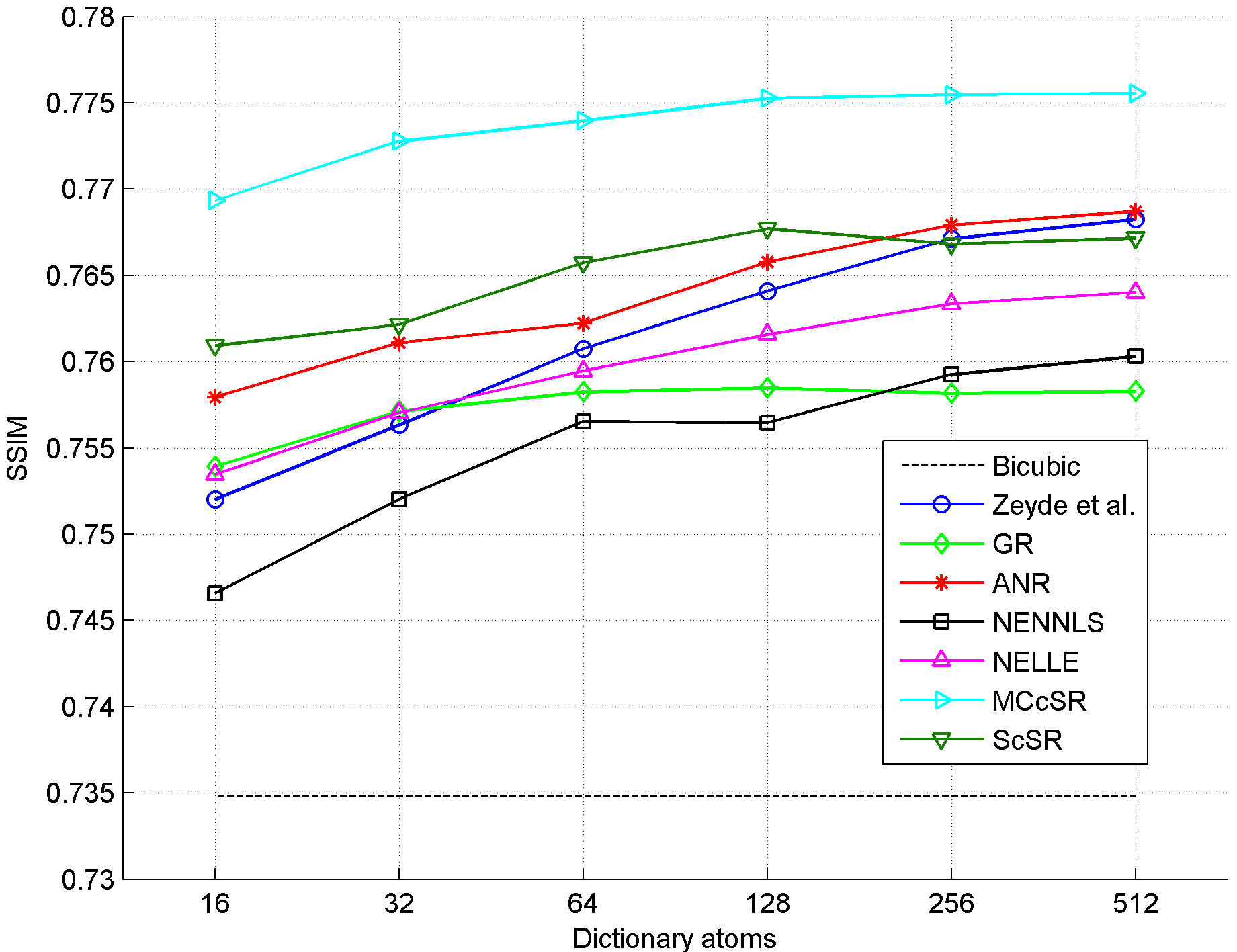}}\\
\subfigure[]    {\includegraphics[width=0.48\textwidth]{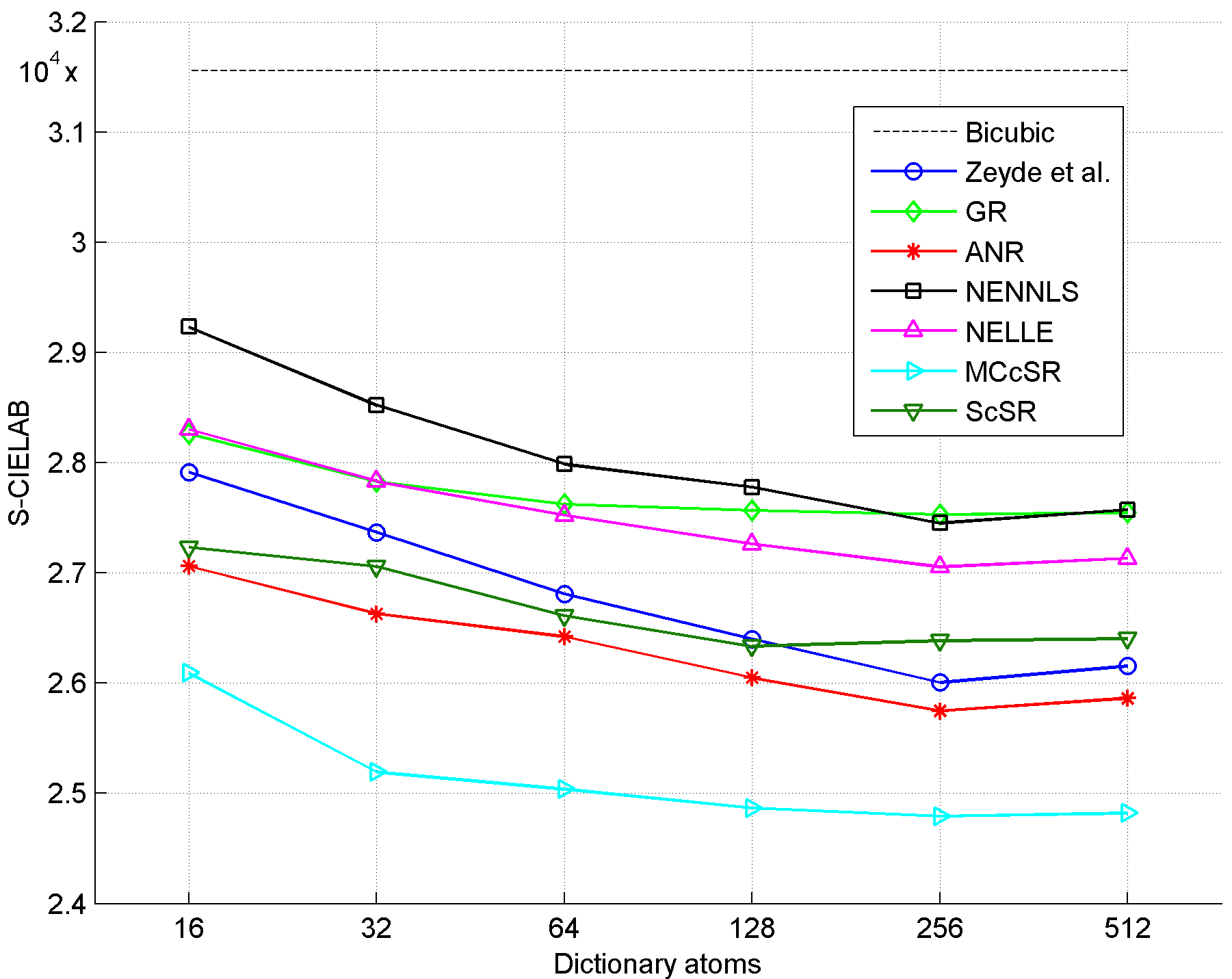}}
  \caption{Effect of dictionary size on PSNR, SSIM and S-CIELAB error of SR methods with a scaling factor of $3$. Number of dictionary atoms are varied between 16 and 512 and performance of different methods are compared.}\label{Fig:DictSizePSNR}
 \end{figure*}

\begin{table*}
\centering
\begingroup
      \caption{PSNR results of different methods for various images with scaling factor of $3$.}
\label{Tab:imagesPSNR}
      \begin{tabular}{ |l||c|c|c|c|c|c|c|c| }
\hline	
\multirow{2}{*}{Images}  & \multicolumn{8}{c|}{PSNR (dB)} 	\\															 		
      \cline{2-9}
                         &   Bicub  &	Zeyde   &	GR	    &   ANR	    &   NENNLS	&   NELLE   &	MCcSR   &	ScSR        \\ \hline
baby		             &   38.42  &	39.51   &	39.38   &\tb{39.56} &	39.22	&   39.49   &	39.51   &	39.40       \\ \hline
butterfly	             &   28.73  &	30.60   &	29.73   &	30.57   &	30.29	&   30.42   &	30.59   &\tb{30.64}     \\ \hline
bird		             &   36.37  &	37.90   &	37.44   &	37.92   &	37.68	&   37.90   &\tb{38.02} &	37.59       \\ \hline
face		             &   35.96  &	36.44   &	36.40   &\tb{36.50} &	36.39	&   36.47   &	36.48   &	36.37       \\ \hline
foreman	                 &   35.76  &	37.67   &	36.84   &	37.71   &	37.37	&   37.69   &\tb{37.74} &	37.64       \\ \hline
coastguard	             &   31.31  &	31.91   &	31.78   &	31.84   &	31.77	&   31.83   &\tb{31.95} &	31.83       \\ \hline
flowers	                 &   30.92  &	31.84   &	31.62   &	31.88   &	31.68	&   31.80   &\tb{32.07} &	31.87       \\ \hline
head		             &   36.02  &	36.47   &	36.42   &\tb{36.52} &	36.40	&   36.50   &	36.51   &	36.42       \\ \hline
lenna		             &   35.26  &	36.23   &	35.99   &	36.29   &	36.11	&   36.24   &\tb{36.33} &	36.14       \\ \hline
man		                 &   31.78  &	32.68   &	32.44   &	32.71   &	32.50	&   32.65   &\tb{32.75} &	32.68       \\ \hline
pepper		             &   35.25  &	36.27   &	35.77   &	36.13   &	35.99	&   36.12   &\tb{36.30} &	36.20       \\ \hline \hline
average                  &   33.08  &	34.06   &	33.76   &	34.07   &	33.88	&   34.03   &\tb{34.14} &	34.00       \\ \hline
      \end{tabular}
\endgroup
    \end{table*}
\begin{table*}
\centering
\begingroup
      \caption{SSIM results of different methods for various images with scaling factor of $3$.}
\label{Tab:imagesSSIM}
      \begin{tabular}{ |l||c|c|c|c|c|c|c|c| }
\hline	
\multirow{2}{*}{Images}  & \multicolumn{8}{c|}{SSIM} 	\\															 		
      \cline{2-9}
                         &   Bicub	&   Zeyde	&	GR	    &	ANR	    &	NENNLS	&	NELLE	&	MCcSR	&	ScSR        \\ \hline
baby		             &   0.88	&   0.90	&	0.90	&	0.90	&	0.89	&	0.90	&	0.90	&	0.89        \\ \hline
butterfly	             &   0.79	&   0.85	&	0.80	&	0.84	&	0.84	&	0.84	&	0.85	&	0.85        \\ \hline
bird		             &   0.90	&   0.92	&	0.91	&	0.92	&	0.92	&	0.92	&	0.93	&	0.91        \\ \hline
face		             &   0.72	&   0.74	&	0.74	&	0.74	&	0.74	&	0.74	&	0.75	&	0.74        \\ \hline
foreman	                 &   0.89	&   0.91	&	0.90	&	0.91	&	0.90	&	0.91	&	0.91	&	0.90        \\ \hline
coastguard	             &   0.57	&   0.62	&	0.63	&	0.62	&	0.61	&	0.62	&	0.63	&	0.62        \\ \hline
flowers	                 &   0.77	&   0.80	&	0.79	&	0.80	&	0.79	&	0.80	&	0.81	&	0.80        \\ \hline
head		             &   0.72	&   0.74	&	0.74	&	0.75	&	0.74	&	0.74	&	0.75	&	0.74        \\ \hline
lenna		             &   0.78	&   0.80	&	0.80	&	0.80	&	0.80	&	0.80	&	0.81	&	0.80        \\ \hline
man		                 &   0.72	&   0.76	&	0.76	&	0.77	&	0.76	&	0.76	&	0.76	&	0.76        \\ \hline
pepper		             &   0.78	&   0.80	&	0.79	&	0.80	&	0.79	&	0.79	&	0.80	&	0.79        \\ \hline \hline
average                  &   0.745	&   0.776	&   0.769	&   0.778	&   0.771	&   0.775	&\tb{0.785}	&   0.774       \\ \hline
      \end{tabular}
\endgroup
    \end{table*}
\begin{table*}
\centering
\centering
      \caption{S-CIELAB error results of different methods for various images with scaling factor of $3$.}
\label{Tab:imagesSCIELAB}
\resizebox{\textwidth}{!}{
\begin{tabular}{ |l||c|c|c|c|c|c|c|c| }
\hline
\multirow{2}{*}{Images}  & \multicolumn{8}{c|}{S-CIELAB} 	\\	
      \cline{2-9}
                         &   Bicub	    &   Zeyde	    &   GR	        &   ANR	        &   NENNLS	    &   NELLE	    &   MCcSR	    &   ScSR            \\ \hline
baby		             &   2.07E+04	&   1.36E+04	&   1.40E+04	&\tb{1.32E+04}	&   1.47E+04	&   1.34E+04	&   1.34E+04	&   1.50E+04        \\ \hline
butterfly	             &   2.28E+04	&   1.55E+04	&   1.84E+04	&   1.55E+04	&   1.60E+04	&   1.60E+04	&   1.54E+04	&\tb{1.49E+04}      \\ \hline
bird		             &   1.07E+04	&   7.36E+03	&   8.02E+03	&   7.21E+03	&   7.73E+03	&   7.30E+03	&\tb{6.50E+03}	&   7.81E+03        \\ \hline
face		             &   3.79E+03	&   2.71E+03	&   2.73E+03	&   2.57E+03	&   2.73E+03	&   2.61E+03	&\tb{2.47E+03}	&   2.70E+03        \\ \hline
foreman	                 &   8.46E+03	&   3.90E+03	&   4.79E+03	&\tb{3.48E+03}	&   4.01E+03	&   3.62E+03	&   3.72E+03	&   3.89E+03        \\ \hline
coastguard	             &   1.96E+04	&   1.71E+04	&   1.70E+04	&   1.70E+04	&   1.76E+04	&   1.71E+04	&\tb{1.69E+04}	&   1.70E+04        \\ \hline
flowers	                 &   4.47E+04	&   3.75E+04	&   3.89E+04	&   3.69E+04	&   3.84E+04	&   3.74E+04	&\tb{3.29E+04}	&   3.70E+04        \\ \hline
head		             &   3.79E+03	&   2.69E+03	&   2.74E+03	&   2.54E+03	&   2.79E+03	&   2.61E+03	&\tb{2.42E+03}	&   2.65E+03        \\ \hline
lenna		             &   2.44E+04	&   1.74E+04	&   1.85E+04	&   1.67E+04	&   1.79E+04	&   1.69E+04	&\tb{1.58E+04}	&   1.72E+04        \\ \hline
man		                 &   3.80E+04	&   2.91E+04	&   3.03E+04	&\tb{2.84E+04}	&   3.02E+04	&   2.89E+04	&   2.88E+04	&   2.95E+04        \\ \hline
pepper		             &   2.48E+04	&   1.91E+04	&   2.15E+04	&   1.96E+04	&   2.02E+04	&   1.95E+04	&\tb{1.73E+04}	&   1.91E+04        \\ \hline \hline
average                  &   2.79E+04	&   2.27E+04	&   2.36E+04	&   2.24E+04	&   2.33E+04	&   2.26E+04	&\tb{2.14E+04}	&   2.28E+04        \\ \hline
      \end{tabular}
}\end{table*}

%

We perform visual comparisons of obtained super-resolution images and additionally evaluate them quantitatively using image quality metrics. The metrics we use include: 1) Peak Signal to Noise Ratio (PSNR) while recognizing its limitations \cite{Wang:HowGoodIsMSE_SPM2009}\footnote{Note that since we  work on color images, the PSNR reported is carried out on all the color channels.}, 2) the widely used Structural Similarity Index (SSIM) \cite{Wang:SSIM_TIP2004} and 3) a popular color-specific quality measure called S-CIELAB \cite{Zhang:SCIELAB_Elsevier1998} which evaluates color fidelity while taking spatial context into account.

\subsection{Generic SR Results}
Fig. \ref{Fig:Comic2x} show SR results for a popular natural image where resolution enhancement was performed via scaling by a factor of $2$. In the description of the figure, PSNR (in dB), SSIM and S-CIELAB error measure appear in the parenthesis for each method. As can be seen in the enlarged area of Fig. \ref{Fig:Comic2x},  MCcSR more faithfully retains color texture. The bottom row of Fig. \ref{Fig:Comic2x} shows the S-CIELAB error maps for different methods. It is again apparent that the MCcSR method produces less error around edges and color textures. Consistent with the visual observations, the S-CIELAB error is lowest for MCcSR.

Fig. \ref{Fig:Comic3x} also shows the same image with a scaling factor of $3$ and the corresponding S-CIELAB error maps. In this case,  the color texture in the enlarged area is even more pronounced for MCcSR vs. other methods. The trend continues and benefits of MCcSR are most significant for a scaling factor of $4$ in Fig. \ref{Fig:Comic4x}. Similar results for the Baboon image are shown for scaling factors of $2$, $3$, $4$ respectively in Figs. \ref{Fig:Baboon2x}-\ref{Fig:Baboon4x}.


The degradation in image quality for SR results with increased scaling factor is intuitively expected. In a relative sense however, MCcSR suffers a more graceful decay. This is attributed to the use of prior information in the form of the quadratic color regularizers in our cost function, which compensates for the lack of information available to perform the superresolution task.

Tables \ref{Tab:imagesPSNR}-\ref{Tab:imagesSCIELAB} summarize the results of super resolution on images in \emph{set 5} and \emph{set 14} databases with a scaling factor of $3$. PSNR, SSIM and S-CIELAB error measures are compared and almost consistently our MCcSR method outperforms all the other competing state-of-the-art methods. The last row in these tables is essentially the average performance of each method over all the images in {\em set 5} and {\em set 14} datasets. Due to space constraints, we do not include all the LR and SR images for {\em set 5} and {\em set 14} in the chapter but they are made  available online in addition to the code at:  {\url{http://signal.ee.psu.edu/MCcSR.html}}.


\begin{figure}
  \centering
  \includegraphics[width=0.8\columnwidth]{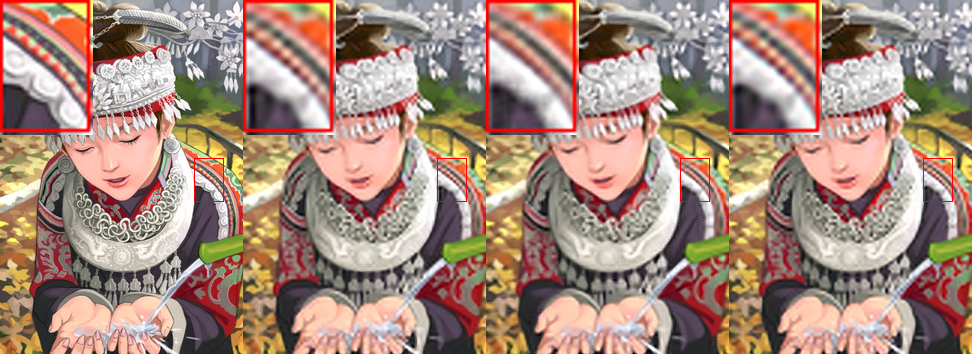}\\\vspace{-0.2in}
  \includegraphics[width=0.8\columnwidth]{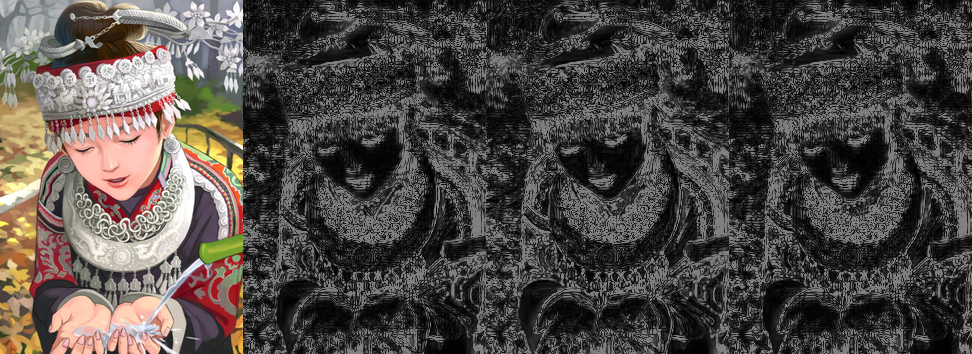}\\\vspace{-0.2in}
  \includegraphics[width=0.8\columnwidth]{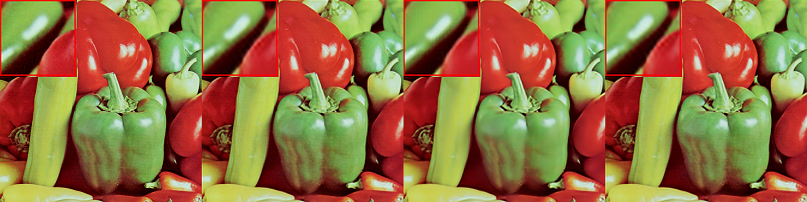}\\\vspace{-0.1in}
  \includegraphics[width=0.8\columnwidth]{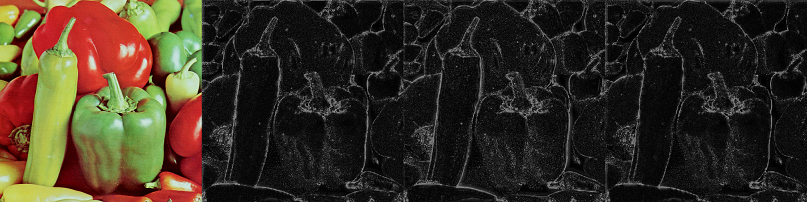}\\\vspace{-0.1in}
  \includegraphics[width=0.8\columnwidth]{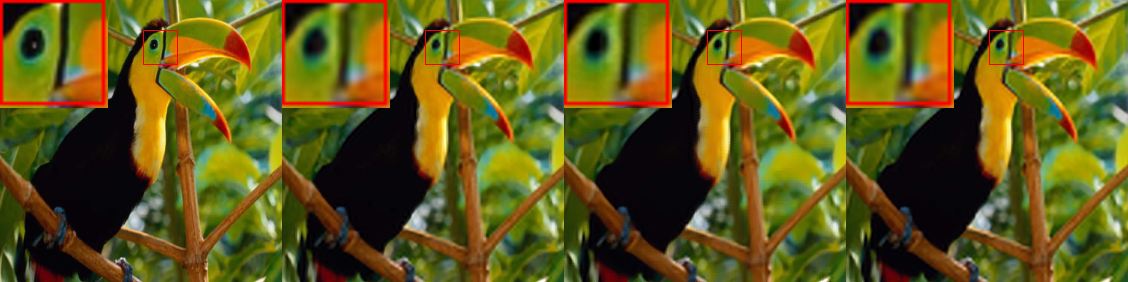}\\
  \includegraphics[width=0.8\columnwidth]{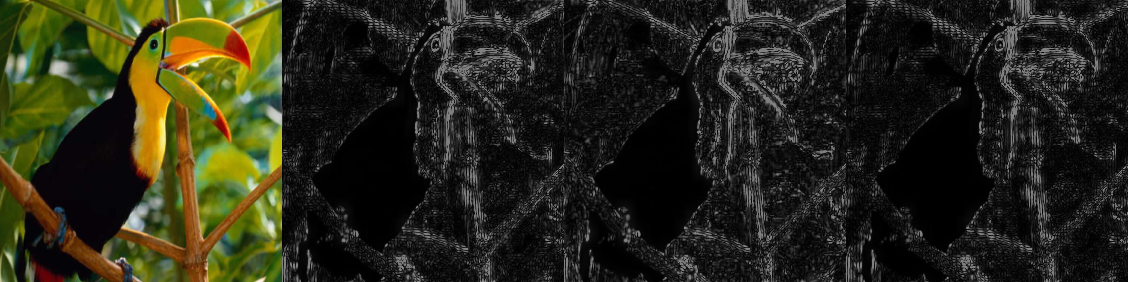}
    \caption{{Visual Images as well as S-CIELAB error maps are shown for a scaling factor of 3. From left to right for each row Images correspond to: Original Image, applying SR separately on RGB channels, ScSR, MCcSR }}\label{Fig:SeparateRGB2}
\end{figure}
\begin{figure*}
  \centering
  \includegraphics[width=\textwidth]{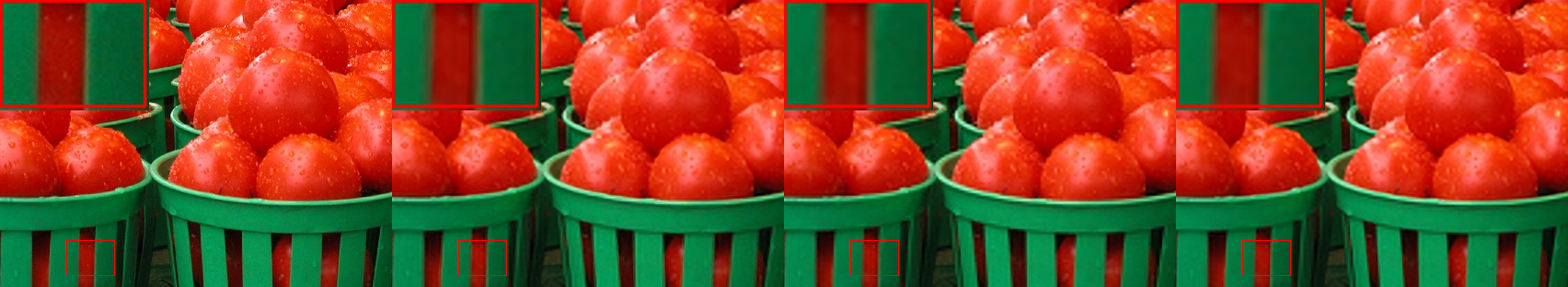}\\
  \includegraphics[width=\textwidth]{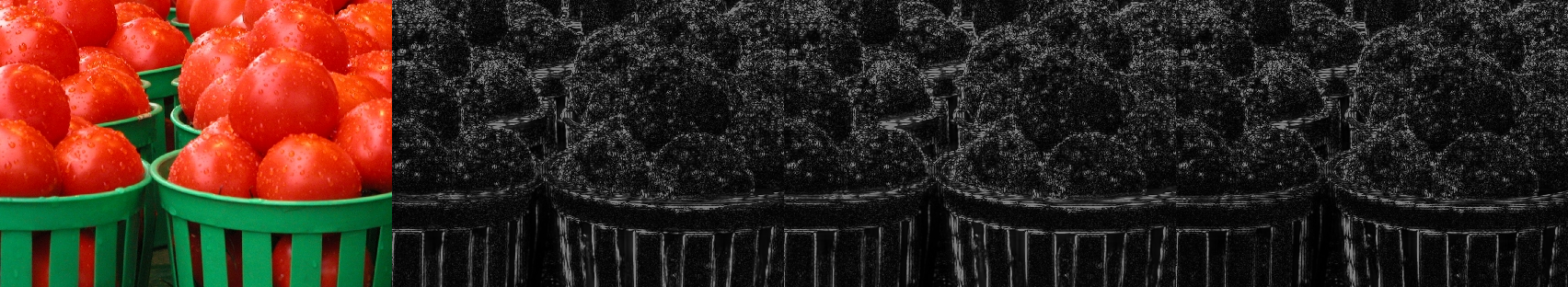}
    \caption{{Visual Images as well as S-CIELAB error maps are shown for a scaling factor of 3. From left to right for each row images correspond to: Original image, applying SR separately on RGB channels (36.26, 0.83, 1.57e4), ScSR (36.13, 0.83, 1.67e4) and \textbf{MCcSR (36.67, 0.85, 1.43e4)}. Numbers in parenthesis are PSNR, SSIM and SCIELAB error measures.}}\label{Fig:SeparateRGB1}
\end{figure*}

\subsection{Effect of Dictionary Size}

So far we have used a fix dictionary of size $512$ atoms for all the methods. In this Section, we evaluate the effect of the learned dictionary size for super-resolution. We again sampled $100,000$ image patches and train $6$ dictionaries of size $16$, $32$, $64$, $128$, $256$ and $512$ respectively. The results are evaluated both visually and quantitatively in terms of PSNR, SSIM and S-CIELAB. As is intuitively expected reconstruction artifacts gradually diminish with an increase in dictionary size and our visual observations are also supported by PSNR, SSIM and S-CIELAB of the recovered images. Fig \ref{Fig:DictSizePSNR}  shows the variation of different image quality metrics against dictionary size. For SSIM and S-CIELAB in particular, MCcSR is able to generate effective results even with smaller dictionaries.

\subsection{Effect of Color Regularizers: Separate RGBs}


We provide evidence for the importance of effectively accounting for color geometry via an illustrative example image. Three variations of color SR results are presented next:
\begin{enumerate}
  \item SR performed only on the luminance channel by ScSR \cite{Yang:CoupledDicLearnSR_TIP2012} method and bicubic interpolation is applied for chrominance channels.
  \item Single channel SR performed on red, green and blue channels independently. We again use  ScSR method; however, we learn separate dictionaries for RGB channels and apply ScSR  on RGB channels independently.
  \item Super-resolution  by explicitly incorporating cross channel information into the reconstruction  (our McCSR).
\end{enumerate}
In these experiments we use a scaling factor of $3$ and the results are reported in Figs. \ref{Fig:SeparateRGB1}, \ref{Fig:SeparateRGB2} and Table \ref{Tab:SeparateRGB}. It should particularly be noted (see Fig.\ \ref{Fig:SeparateRGB1}) that applying the SR method independently on RGB channels introduces very significant artifacts around color edges which are not visible in the results of MCcSR and ScSR. Fig.\ \ref{Fig:SeparateRGB2} shows similar results for a few other images. Table \ref{Tab:SeparateRGB} reports image quality measures which confirms the importance of using color channel constraints.


\begin{table*}
\centering
\begingroup
\fontsize{7pt}{7pt}\selectfont      \centering
      \caption{Quantitative measures to show effectiveness of color constraints in SR for a scaling factor of 3.}
      \label{Tab:SeparateRGB}
      \begin{tabular}{ |l||c|c|c|c|c|c|c|c|c| }
      \hline
\multirow{2}{*}{Images}  & \multicolumn{3}{c|}{PSNR (dB)}                     & \multicolumn{3}{c|}{SSIM}                    & \multicolumn{3}{c|}{S-CIELAB} 	\\	
      \cline{2-10}
                        &	Separate RGB      &	ScSR		&	MCcSR	&	Separate RGB      &	ScSR		&	MCcSR	&	Separate RGB      &	ScSR		&	MCcSR		  \\ \hline
comic					&	28.37	          &	28.25	    &\tb{28.51} &	0.74 	          &	0.74 	    &\tb{0.76} 	&	2.80e4            &	3.00e4      &\tb{2.71e4}      \\	\hline
baboon					&	26.95	          &	26.95	    &\tb{27.11}	&	0.53 	          &	0.52 	    &\tb{0.55} 	&	9.93e4            &	1.02e5      &\tb{9.57e4}     	 \\  \hline
pepper					&	36.14	          &	36.20	    &\tb{36.30}	&	0.79 	          &	0.79 	    &\tb{0.80} 	&	1.93e4            &	1.91e4      &\tb{1.73e4}    	 \\  \hline
bird					&	37.71	          &	37.59	    &\tb{38.02}	&	0.92 	          &	0.91	    &\tb{0.93} 	&	7.28e3            &	7.81e3      &\tb{6.50e3}     	 \\  \hline
      \end{tabular}
\endgroup
\end{table*}

\subsection{Effect of Color Regularizers: Edge Correlations}
In this part, we provide evidence that indeed our edge similarity prior is effective and encourages edge similarity among color channels in RGB space.
First we decompose an image into its constituent RGB channels. Then, on each channel we apply a high-pass edge detector filter (in this case the same filter that was applied in the learning phase, i.e. $\mat S_r, \mat S_g$ and $ \mat S_b$). Finally we find the cross correlation of edge information between RGB channels.  In this manner, we will obtain three correlation coefficient values between R and G, G and B,  and B and R channels. We report the average of these three correlation coefficients as an indicator of correlation of color information among channels. Ideally, with our proposed method we expect that edges across color channels be more consistent and similar, since we advocate for edge similarity using our optimization formulation. We also report the  average of the edge differences among color channels, e.g. $\|\mat S_r \vect y_{h_r} - \mat S_g \vect y_{h_g}\| $. These results are reported in Table \ref{Tab:EdgeCorrelation} for our MCcSR and two of leading state-of-the-art SR methods, i.e. ANR and ScSR. The results in Table \ref{Tab:EdgeCorrelation} are further averaged over the images in \emph{set 5}.
It is readily apparent that MCcSR exhibits the strongest edge correlations across the R, G, B color channels. In fact the MCcSR values are closest to the ground truth edge correlations in Table \ref{Tab:EdgeCorrelation}.

\begin{table}
\centering
      \caption{Effectiveness of color constraints on edge similarities in SR for a scaling factor of 3.}
      \label{Tab:EdgeCorrelation}
      \begin{tabular}{ |l||c|c|c|c| }
      \hline
                                            &	ANR               &	ScSR		&	MCcSR     &Groundtruth		     \\ \hline
 edge corr-coef             			  	&	0.8356	          &	0.8431	    &\tb{0.8511}  & 0.8785         \\	\hline
 edge differences					        &	35.87	          &	35.63	    &\tb{35.47}   & 35.25      \\	\hline
      \end{tabular}
\end{table}

\subsection{Robustness to Noise}

An often made assumption in single image SR is that the input images are clean and free of noise which is likely to be violated in many real world applications. Classical methods deal with noisy images by first denoising and filtering out the noise and then performing super-resolution. The final output of such a procedure highly depends on the denoising technique itself and the artifacts introduced in the denoising procedure may remain or even get magnified after super-resolution.

Similar to \cite{YangAndWright:SparseSR_TIP2010}, the parameter $\lambda$ in (\ref{Eq:MainOptProb}) is tuned based on the noise level of the input image and can control the smoothness of output results. We argue that our approach not only benefits from the noise robustness of ScSR \cite{YangAndWright:SparseSR_TIP2010}, but the  additional correlation information from multi-channels can help in further recovering more cleaner images.

We add different levels of Gaussian noise to the LR image input to test the robustness of our algorithm to noise and compare our results with ScSR method which has demonstrated success \cite{YangAndWright:SparseSR_TIP2010} in SR in the presence of noise. With a scaling factor of $3$, we chose the range of standard deviation of noise from $4$ to $12$ and similar to \cite{YangAndWright:SparseSR_TIP2010} set $\lambda$ to be one tenth of noise standard deviation. Likewise, we made the choice of $\tau$ in (\ref{Eq:MainOptProb}) using a cross-validation procedure to suppress noise. Fig \ref{Fig:NoisyBaboon} shows the SR results of an image with different levels of noise in comparison with ScSR and bicubic methods.
 Table \ref{Tab:Noise} reports the average PSNR, SSIM and S-CIELAB error measures of reconstructed images from different levels of noisy images. In all cases, MCcSR outperforms the competition.
 \begin{table}
\centering
      \caption{Average performance under different noise levels.}
      \label{Tab:Noise}
      \begin{tabular}{|l||c|c|c|c|c|c|c| }
\hline
Measure							&	Method                  &   $\sigma=0$ &	$\sigma=4$	&	$\sigma=6$	&	$\sigma=8$	&	$\sigma=12$		 \\  \hline
\multirow{3}{*}{PSNR}			&	Bicubic					&   33.08      &	32.99		&	32.75		&	32.50		&	31.88	 \\	 \cline{2-7}
								&	ScSR					&   34.00      &	33.95		&	33.92		&	33.90		&	33.86	 \\  \cline{2-7}
								&	MCcSR					&   34.14      &	34.11		&	34.09		&	34.09		&	34.07	 \\  \hline
\multirow{3}{*}{SSIM}			&	Bicubic					&   0.745      &	0.731		&	0.698		&	0.672		&	0.619	 \\	 \cline{2-7}
								&	ScSR					&   0.774      &	0.772		&	0.766		&	0.761		&	0.752	 \\	 \cline{2-7}
								&	MCcSR					&   0.785      &	0.783		&	0.780		&	0.775		&	0.768	 \\	 \hline
\multirow{3}{*}{\scriptsize{SCIELAB}}&	Bicubic					&	2.79E4     &	2.92E4  	&	4.40E4		&	5.25E4		&	6.31E4	 \\ \cline{2-7}
								&	ScSR					&	2.28E4	   &	2.31E4		&	2.36E4		&	2.39E4		&	2.43E4	 \\  \cline{2-7}
								&	MCcSR					&	2.14E4	   &	2.16E4		&	2.20E4		&	2.21E4		&	2.23E4	 \\  \hline

      \end{tabular}
\end{table}
\begin{figure}
  \centering
  \includegraphics[width=.98\columnwidth]{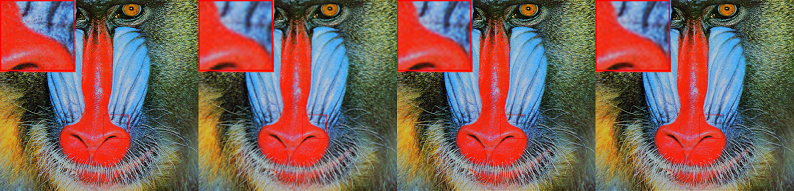}\\
  \includegraphics[width=.98\columnwidth]{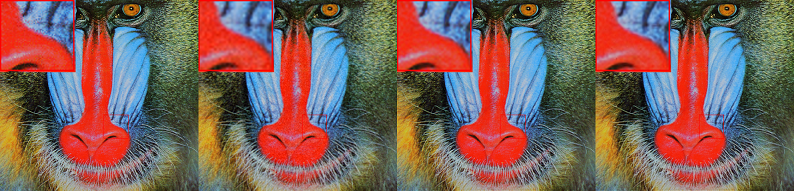}\\
  \includegraphics[width=.98\columnwidth]{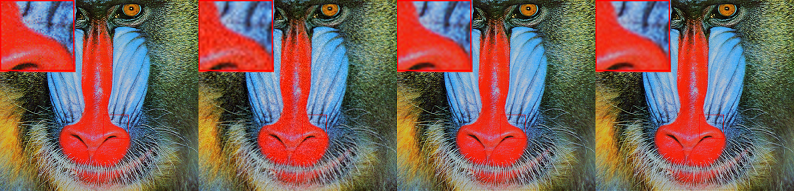}\\
  \includegraphics[width=.98\columnwidth]{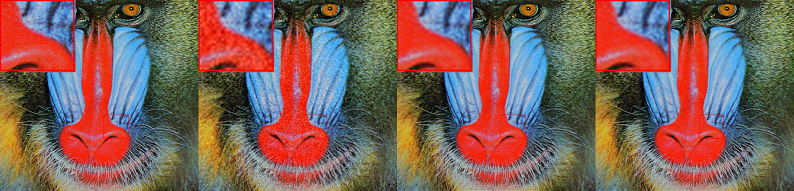}
    \caption{{SR performance under different noise standard deviations: 4,6,8,12 (from top to bottom ) with different methods: Original, bicubic, MCcSR, ScSR (from left to right) }}\label{Fig:NoisyBaboon}
\end{figure}
\section{Conclusion and Future work}
\label{Sec:Conclusion}
In this section, we extend sparsity based super-resolution to multiple color channels. We demonstrate that by using color information and cross channel constraints, significant improvement over single (luminance) channel sparsity based SR methods can be achieved.
In particular, edge similarities among color bands are exploited as cross channel correlation constraints. These additional constraints lead to new optimization problems both in the sparse coding and learning steps for which we present tractable solutions. Experimental results show the merits of our proposed method both visually and quantitatively.
 While our work offers one possible way to capture cross-channel color constraints, chrominance geometry can be captured via alternative quantitative formulations as in \cite{Srinivas:ColorSR_CIC2011, Farsiu:ColorDemosaicSR_TIP2006, Keren:ColorSR_1999MachineVision, Dai:SoftCutColorSR_2009TIP}. Incorporating these as constraints or regularizers in a sparsity based color SR framework forms a viable direction for future work. In addition to this, in the next section we tackle the problem of image super resolution from a deep learning viewpoint where a convolutional neural network in conjunction with image priors is learned for SR task.


\chapter{Contribution III: Deep Super Resolution via Exploiting Image Structures}
\label{chapter:contrib3}

\section{Introduction}
In the previous chapter, address the problem of super-resolution using priors such as edge correlations and also sparse priors. Along the same line, we address the SR task in single images from the viewpoint of deep neural networks in this chapter. We explore the use of image structures and physically meaningful priors in deep structures and convolutional networks. We first provide an overview of deep neural networks and their fundamental building blocks such as convolutional layers and then proceed with the review of existing super-resolution methods using deep neural networks.

\subsection{Background: Convolutional Neural Networks}
In recent years, deep neural networks and specifically convolutional neural network have seen a surge of applications and provided promising results in many different areas of computer vision and machine learning. These areas include classification and recognition \cite{krizhevsky2012imagenet, simonyan2014very}, segmentation, localization and  object detection \cite{girshick2014rich}, generative and descriptive networks \cite{goodfellow2014generative, karpathy2015deep} and image recovery and retrieval \cite{dong2014learning, zhou2017detecting}, etc.\cite{kamani2017skeleton}.
In machine learning, a Convolutional Neural Network (CNN) is a class of deep feed-forward artificial neural networks that has successfully been applied to analyzing visual imagery. These applications span from visual image recognition to inverse problems. A CNN consists of an input and an output layer, as well as multiple hidden layers. The hidden layers can be either convolutional, pooling or fully connected which we will briefly introduce them here. They are made up of neurons that have learnable weights and biases. Each neuron receives some inputs, performs a linear operation and optionally followed by a non-linearity. The explicit assumption in CNN architectures is that the inputs are images, which enables us to encode specific properties into the architecture \cite{CNN_Tutorial}.

As we described above, a simple CNN is a sequence of layers, and every layer of a CNN transforms one data cube (volume of activations) to another through a differentiable function. A  simple CNN architecture for super-resolution task could have the following building blocks and architecture: Input, Conv, ReLU, etc. In more detail, the input will hold the raw pixel values of the image (or the data cube corresponding to the activations of the previous layer.). Conv layer will essentially compute the convolution of the input cube with a set of 3D filters and generate the output volume. ReLU layer will apply an element-wise activation function which is non-linear. This leaves the size of the volume unchanged. Figure \ref{fig:simpleCNN} shows a regular 3-layer Neural Network and a simple CNN architecture.

\begin{figure}
  \centering
  \includegraphics[width=\textwidth]{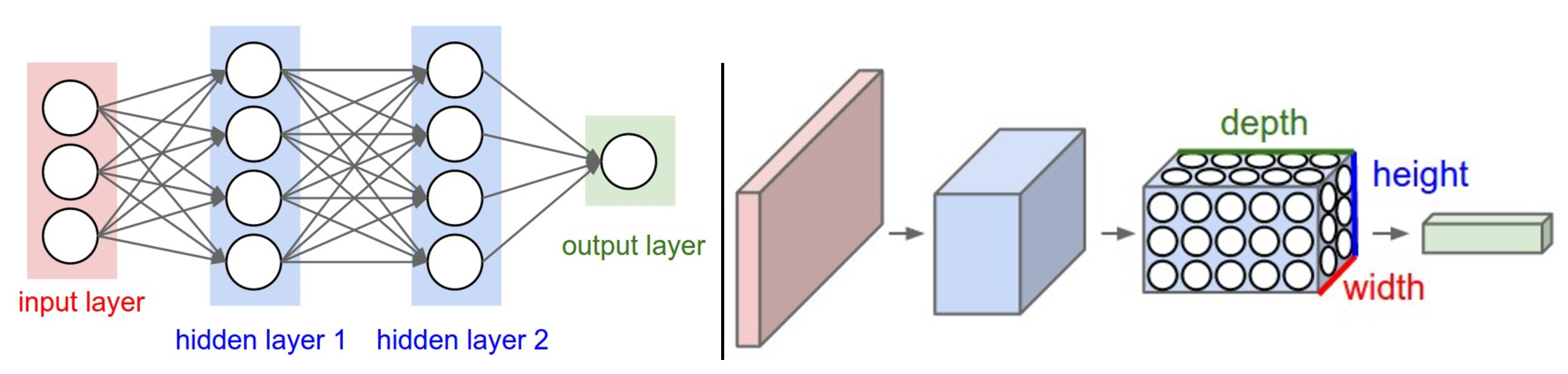}
  \caption{Left: A regular fully connected 3-layer Neural Network. Right: A 3-layer CNN. Neurons are arranged in three dimensions (width, height, depth), as visualized. Every layer transforms the 3D input data cube to a 3D output data cube of activations. In this example, input layer holds the image information, so its width and height would be the dimensions of the image, and the depth would be 3  for Red, Green, and Blue channels \cite{CNN_Tutorial}.}\label{fig:simpleCNN}
\end{figure}

by going through all the layers (feed forward), CNN transform the original image layer by layer from the original raw pixel values to the final output which can be class scores, image features, etc, depending on the purpose of the CNN model. Note that Conv layers contain parameters such as weight as biasses that should be learned and other such as ReLU, etc do not have associated parameters and will implement a fixed function. The parameters in the Conv layers will be trained with gradient descent so that the outputs that the CNN computes are consistent with the ground-truth data in the training set for each image.

The Conv layer is the core building block of a Convolutional Network that does most of the computational heavy lifting. The Conv layer consists of a set of learnable filters. Every filter is spatially small, meaning along width and height has a small receptive window, but extends through the full depth of the input volume. For instance, a typical filter on the first layer of a CNN might have a size of 5x5x3 (a 5x5 spatial size and over width and height, and 3 over depth because images have 3 color channels). During the forward pass, we convolve (slide the filter over) the input with each filter and compute dot products between the entries of the filter and the input at any position. Intuitively, the network learns filters that can activate when there is some simple type of visual feature such as an edge of some orientation, contour, etc. or some sort of color feature on the first layer, or eventually entire object parts, wheel-like patterns or more complex features on higher layers of the network. After training, we will have an entire set of filters in each layer, each of which produces a separate 2-D activation map. These activation maps are then stacked along depth dimension and produce the output data cube. An illustration of such process is shown in Fig. \ref{fig:conv_math}

\begin{figure}
  \centering
  \includegraphics[width=\textwidth]{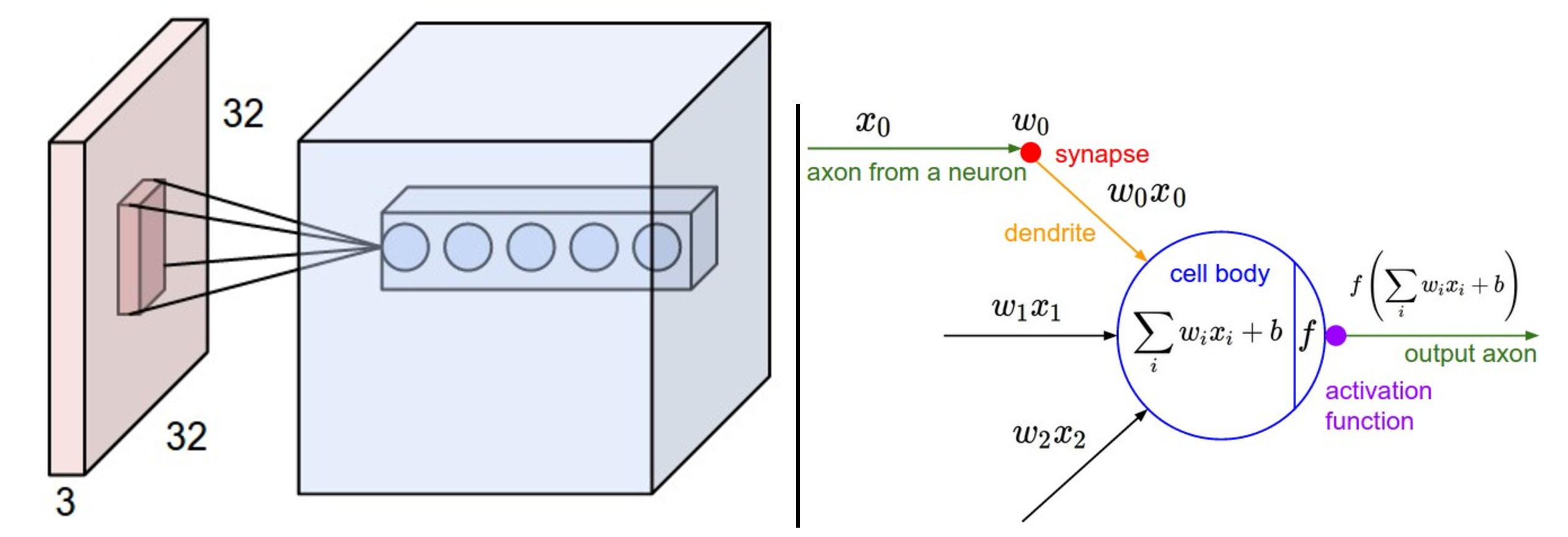}
  \caption{Left: An example input volume in red (e.g. a 32x32x3 image), and an example volume of data cubes (neurons) in the first Convolutional layer. Each neuron in the convolutional layer is connected only to a local region (spatial size of the filter) in the input volume but is connected to the full depth (i.e. all color channels). Note that in this example, there are 5 neurons in the output cube along the depth, resulting from 5 different convolutions but all looking at the same spatial region in the input. Right: The neurons compute a dot product (convolution) of their weights ($w_i$) with the input ($x_i$) followed by addition of a bias ($b$) term and then a non-linearity ($f()$) such as ReLU \cite{CNN_Tutorial}.}\label{fig:conv_math}
\end{figure}

\subsection{Deep Super Resolution}
Deep learning promotes the design of large-scale networks \cite{hinton2006fast,bengio2007greedy,poultney2006efficient} for a variety of problems including SR and recent advances have seen a surge of deep learning approaches for image super-resolution. Invariably, a network, e.g.\ a deep convolutional neural network (CNN) or auto-encoder is trained to learn the relationship between low and high-resolution image patches.

Among the first deep learning based super-resolution methods, Dong \emph{et al.} \cite{dong2014learning} trained a deep convolution neural network (SRCNN) to accomplish the image super-resolution task. In this work, the training set comprises of example LR inputs and their corresponding HR output images which were fed as training data to the SRCNN network. 
Combined with sparse coding methods, \cite{tiantong16deep} proposed a coupled network structure utilizing middle layer representations for generating SR results which reduced training and testing time. In different approaches, Cui {\em et al.} \cite{cui2014deep}  proposed a cascade network to gradually upscale LR images after each layer, while \cite{wang2015self} trained a high complexity convolutional auto-encoder called Deep Joint Super Resolution (DJSR) to obtain the SR results. Self-examples of images were explored in \cite{huang2015single} where training sets exploit self-example similarity, which leads to enhanced results. However, similar to SRCNN, DJSR suffers from the expensive computation in training and processing to generate the SR images.

Recently, residual net \cite{he2016deep} has shown great ability at reducing training time and faster convergence rate. Based on this idea, a Very Deep Super-Resolution (VDSR) \cite{Kim_2016_VDSR} method is proposed which emphasizes on reconstructing the residuals (differences) between LR and HR images rather than putting too much effort on reconstructing low-frequency details of HR images.
VDSR uses 20 convolutional layers producing state-of-the-art results in super-resolution and takes significantly shorter training time for convergence; however, VDSR is massively parameterized with these 20 layers. 

In this chapter, we address the problem of single image super-resolution from deep learning standpoint. We first show that the deep learning methods combined with transform domain data provide state-of-the-art performance in image super-resolution. Then we show how these methods suffer from performance degradation in low training scenarios and how to alleviate this problem by exploiting image priors as domain knowledge.

In the first part of this chapter, we propose to apply super-resolution in the transform domain particularly Wavelet domain. Wavelet coefficients prediction for super-resolution has been applied successfully to multi-frames SR. For instance, \cite{wahed2007image, ji2009robust,  demirel2009improved,robinson2010efficient} used multi-frames images to interpolate the missing details in the wavelet sub-bands to enhance the resolution. Several different interpolation methods for wavelet coefficients in single image SR were studied as well. \cite{anbarjafari2010image} used straightforward bicubic interpolation to enlarge the wavelet sub-bands to produce SR results in the spatial domain. \cite{nguyen2000efficient} explored interlaced sampling structure in the low-resolution data for wavelet coefficients interpolation. \cite{jiji2004single} formed a minimization problem to learn the suitable wavelet interpolation with a smooth prior. Since the detailed wavelet sub-bands are often sparse, it is suitable to apply sparse coding methods to estimate detailed wavelet coefficients and can significantly refine image details. Methods \cite{mallat2010super, tappen2003exploiting, dong2011image} used different interpolations related to sparse coding. Other attempts \cite{kinebuchi2001image,zhao2003wavelet} utilize Markov chains and  
\cite{chavez2014super} used the nearest neighbor to interpolate wavelet coefficients. However, due to limited training and straightforward prediction procedures, these methods are not powerful enough to process general input images and fail to deliver state-of-the-art SR results, especially compared to more recent deep learning based methods for super-resolution.

In the second part, we investigate the performance of deep learning methods for super-resolution in low training regime and propose to exploit image priors to alleviate the resulting performance degradation.

{\bf Motivations:} Most of the deep learning based image super-resolution methods work on spatial domain data and aim to reconstruct pixel values as the output of the network. In this work, we explore the advantages of exploiting transform domain data in the  SR task especially for capturing more structural information in the images to avoid artifacts. In addition to this and motivated by the promising performance of VDSR and residual nets in super-resolution task, we propose our Deep Wavelet network for super-resolution (DWSR). Residual networks benefit from sparsity of the input and output and the fact that learning networks with sparse activations is much easier and more robust. This motivates us to exploit spatial wavelet coefficients which are naturally sparse. More importantly, using residuals (differences) of wavelet coefficients as training data pairs further enhances the sparsity of training data resulting in more efficient learning of filters and activations. In other words, using wavelet coefficients encourages activation sparsity in middle layers as well as the output layer. Consequently, residuals for wavelet coefficients themselves become sparser and therefore easier for the network to learn. In addition to this, wavelet coefficients decompose the image into sub-bands which provide structural information depending on the types of wavelets used. For example, Haar wavelets provide vertical, horizontal and diagonal edges in wavelet sub-bands which can be used to infer more structural information about the image. Essentially our network uses complementary structural information from other sub-bands to predict the desired high-resolution structure in each sub-band.

On the other hand, deep learning methods have shown promising performance in super-resolution and many other tasks in presence of abundant training which means thousands or millions of training data points are available. However, they suffer in cases where training data is not readily available. In this chapter, we are also investigating the performance of such deep structures in low training scenarios and show that their performance drops significantly. We look for remedies to this performance degradation by exploiting prior knowledge about the problem. This could be in terms of prior knowledge about the structure of images, or inter-pixel dependencies.

The \textbf{main contributions} of this chapter are the following: 1) We proposed a Deep Wavelet Super Resolution (DWSR) framework to promote sparsity and also provide complementary structural information about the image. This complementary structural information in wavelet coefficients helps in the better reconstruction of SR results with fewer artifacts. 2) In addition to a wavelet prediction network, we built on top of residual networks which fit well to the wavelet coefficients due to their sparsity promoting nature and further enhancing it by inferring residuals. 
3) We illustrate the decrease in performance of deep super-resolution methods in low training scenarios and provide image priors as a solution to this problem. Image priors in low training scenarios enhance the recovery of high-resolution images despite having much less training data available. We now begin by the first two contributions in this chapter.



\section{Super Resolution in Transform Domain}

\subsection{2D Discrete Wavelet Transformation (2dDWT)}
To perform a 1D Discrete Wavelet Transformation, a signal $x[n]\in \mathbb{R}^N$ is first passed through a half band high-pass filter $G_H[n]$ and a low-pass filter $G_L[n]$, which are defined as (for Haar (``db1'') wavelet):
\begin{equation}
G_H[n]=\begin{cases}1, &n=0\\-1, &n=1\\0, &\text{otherwise}\end{cases}, G_L[n]=\begin{cases}1, &n=0,1\\0, &\text{otherwise}\end{cases}
\end{equation}
After filtering, half of the samples can be eliminated according to the Nyquist rule, since the signal now has a frequency bandwidth of $\pi/2$ radians instead of $\pi$.

Any digital image $x$ can be viewed as a 2D signal with index $[n,m]$ where $x[n,m]$ is the pixel value located at $n$th column and $m$th row. The 2D signal $x[n,m]$ can be treated as 1D signals among the rows $x[n,:]$ at a given $n$th column and among the columns $x[:,m]$ at a given $m$th row.
A 1-level 2D wavelet transform of an image can be captured by following the procedure in Figure \ref{fig:2dDWT} along rows and columns, respectively.
As mentioned earlier, we are using Haar kernels in this work.
\begin{figure}[t]
      \centering
      \includegraphics[width=\linewidth]{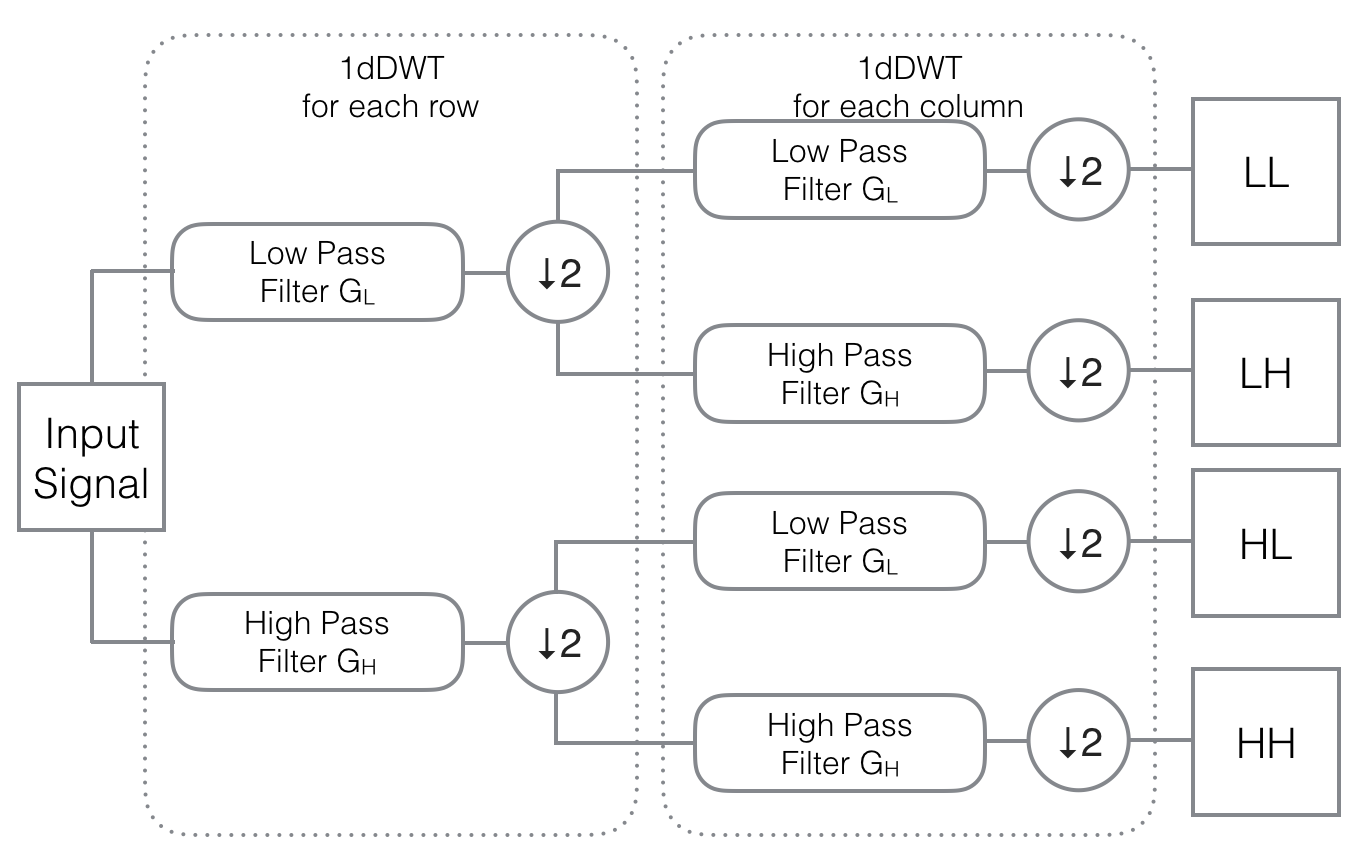}
      \caption{The procedure of 1-level 2dDWT decomposition.}
      \label{fig:2dDWT}
\end{figure}




An example of 1-level 2dDWT decomposition with Haar kernels is shown in Figure \ref{fig:2DIDWT}. The right part of Figure \ref{fig:2DIDWT} is the notation of each sub-band of wavelet coefficients. It is clear that the 2dDWT captures the image details in four sub-bands: average (LL), vertical(HL), horizontal(LH) and diagonal(HH)  information, which are corresponding to each wavelet sub-bands coefficients. Note that after 2dDWT decomposition, the combination of four sub-bands always has the same dimension as the original input image.


The 2d Inverse DWT (2dIDWT) can trace back the 2dDWT procedure by inverting the steps in Figure \ref{fig:2dDWT}. This allows the prediction of wavelet coefficients to generate SR results. Detailed wavelet decomposition introduction can be found in \cite{mallat2008wavelet}.


\subsection{Deep Wavelet Prediction for Super-resolution (DWSR)}
The SR can be viewed as the problem of restoring the details of the image given an input LR image. This viewpoint can be combined with wavelet decomposition. As shown in Figure \ref{fig:2DIDWT}, if we treat the input image as an LL output of 1-level 2dDWT, predicting the HL, LH and HH sub-bands of the 2dDWT will give us the missing details of the LL image. Then one can use  2dIDWT to gather the predicted details and generate the SR results. With Haar wavelet, the coefficients of 2dIDWT can be computed as:
\begin{equation}
\begin{cases}
A=a+b+c+d\\
B=a-b+c-d\\
C=a+b-c-d\\
D=a-b-c+d\\
\end{cases}
\end{equation}
where $A, B, C, D$ and $a, b, c, d$ represent the pixel values from corresponding image/sub-bands.
\begin{figure}[t]
      \centering
      \includegraphics[trim ={30, 20, 30, 8},clip, width=\linewidth]{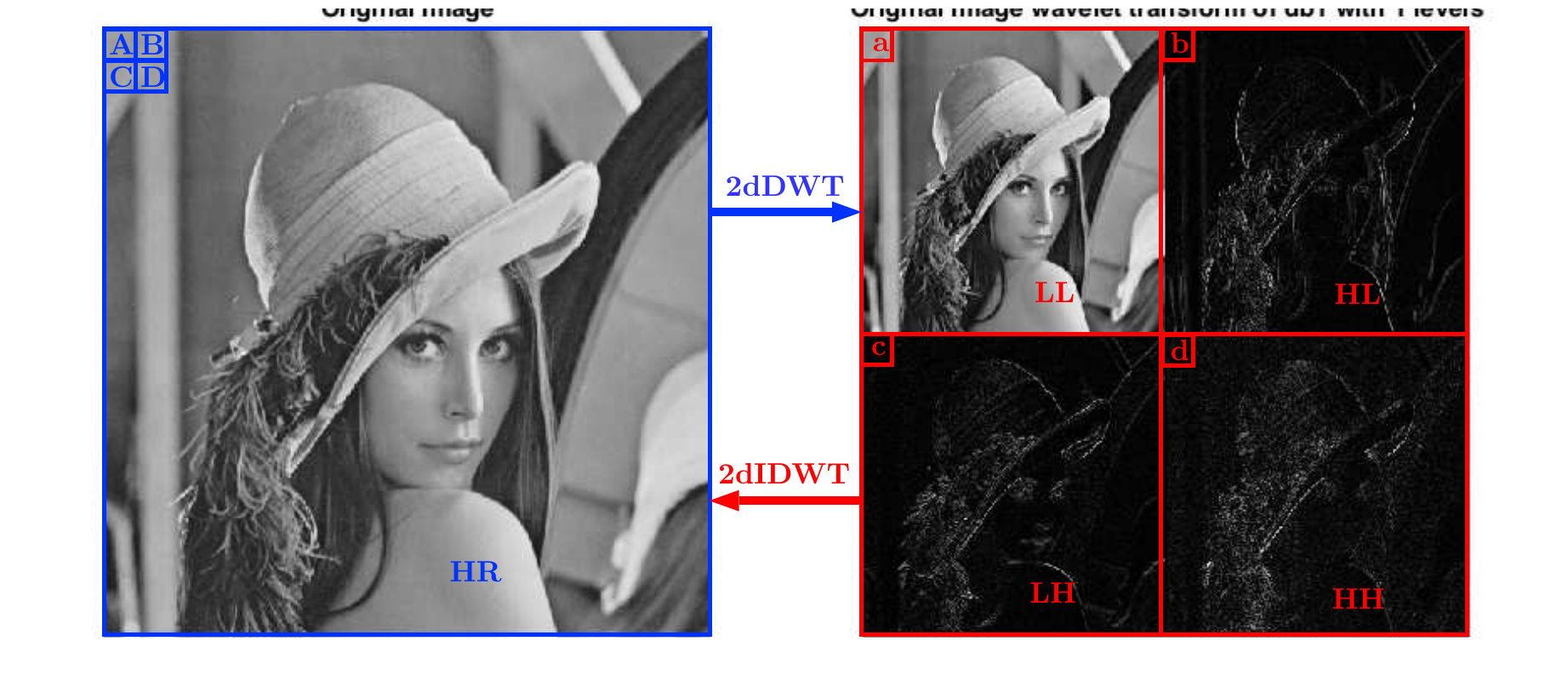}
      \caption{The 2dDWT and 2dIDWT. $A, B, C, D$ are four example pixels located in a $2\times2$ grid at the top left corner  of HR image. $a, b, c, d$ are four pixels from the top left corner of four sub-bands correspondingly.}
      \label{fig:2DIDWT}
\end{figure}
      \begin{figure*}[ht]
      \centering
      \includegraphics[width=\textwidth]{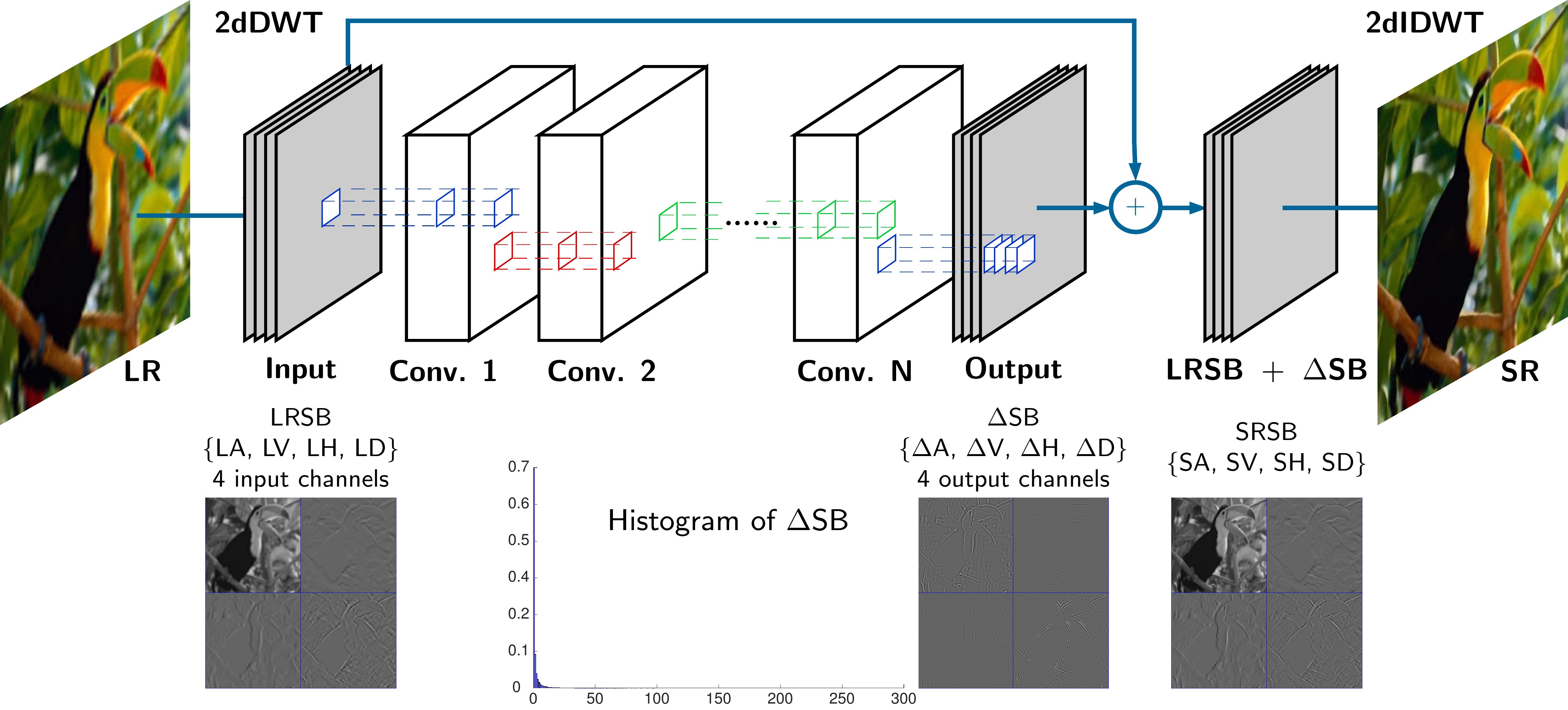}
      \caption{Wavelet prediction for SR network structure: there are input layers which takes four channels and output layers produce four channels. The network body has repeated $N$ same-sized layers with ReLU activation functions (Here $N=10$). One example of the input LRSB and network output $\Delta$SB are plotted. The histogram of all coefficients in $\Delta$SB is drawn to illustrate the sparsity of the outputs.}
      \label{fig:str}
      \end{figure*}

Therefore, with the help of wavelet transformation, the SR problem becomes a wavelet coefficients prediction problem. In this section, we propose a new deep learning based method to predict details of wavelet sub-bands from the input LR image. To the best of our knowledge, DWSR is the first deep learning based wavelet SR method \cite{guo2017deep}.

\subsection{Network Structure}\label{netstr}
The structure of the proposed network is illustrated in Figure \ref{fig:str}. The proposed network has a  deep structure similar to the residual network \cite{he2016deep} with two input and output layers with 4 channels.
While most of the deep learning based SR methods have only one channel for input and output, our network takes four input channels into consideration and produces four corresponding channels at the output. Inspired by recent advance in deep residual learning for super-resolution, we design our layers to have 64 filters of size $4\times3\times3$ in the first layer and 4 filters of size $64\times3\times3$ in the last layer.
In the middle part of the network, the network has $N$ same-sized hidden layers with $64\times3\times3\times64$ filters each. The output of each layer, except the output layer, is fed into ReLU activation function to generate a nonlinear activation map.

Usually, the CNN based SR methods only take valid regions into consideration while feeding forward the inputs. For example, in SRCNN \cite{dong2014learning}, the network has three layers with the filter size of $9\times9$, $1\times1$ then $5\times5$, from which we can compute the cropped out information width, which is $(9+1+5-3)=12$ pixels. During the training process, SRCNN takes in sub-images of size $33\times33$, but only produce outputs of size $21\times21$. This procedure is unfavorable in our deep model since the final output could be too small to contain any useful information.

To solve this problem, we use zero padding at each layer to keep the outputs having the same sizes as the inputs. In this manner, we can produce the same size final outputs as the inputs. Later, the experiments show that with the special wavelet sparsity, the padding will not affect the quality of the SR results.

\subsection{Training Procedure}
To train the network, the low-resolution training images are enlarged by bicubic interpolation with the original downscale factor. Then the enlarged LR images are passed through the 2dDWT with Haar wavelet to produce four LR wavelet Sub-Bands (LRSB) which is denoted as:
\begin{equation}
\text{LRSB} = \{\text{LA, LV, LH, LD}\} := \text{2dDWT\{LR\}}
\label{LRSB}\end{equation}
where the LA, LV, LH, and LD are sub-bands containing wavelet coefficients for average, vertical, horizontal and diagonal details of the LR image, respectively. 2dDWT\{LR\} denotes the 2dDWT of the LR image.

The transformation is also applied on the corresponding HR training images to produce four HR wavelet Sub-Bands (HRSB):
\begin{equation}
\text{HRSB}=\text{\{HA,HV,HH,HD\}}:=\text{2dDWT\{HR\}}
\label{HRSB}\end{equation}
where the HA, HV, HH, and HD denote the sub-bands containing wavelet coefficients for average, vertical, horizontal and diagonal details of the HR image, respectively.

Then the difference $\Delta$SB (residual) between corresponding LRSB and HRSB is computed as:
\begin{equation}\label{dSB}
\begin{split}
\Delta \text{SB} &= \text{HRSB} - \text{LRSB} \\
&= \{\text{HA}-\text{LA}, \text{HV}-\text{LV},\text{HH}-\text{LH},\text{HD}-\text{LD}\}\\&=\{\Delta\text{A}, \Delta\text{V}, \Delta\text{H}, \Delta\text{D}\}\end{split}
\end{equation}
$\Delta$SB is the target that we desire the network to produce with input LRSB. The feeding forward procedure is denoted as $f(\text{LRSB})$.

The cost of the network outputs is defined as:
\begin{equation}
\text{cost}=\frac{1}{2}\|\Delta \text{SB}-f(\text{LRSB})\|^2_2\label{cost}
\end{equation}

The weights and biases can be denoted as $(\Theta,\bm{b})$. Then the optimization problem is defined as:
\begin{equation}(\Theta, \bm{b}) = \arg\min_{\Theta,\bm{b}}\frac{1}{2}\|\Delta \text{SB}-f(\text{LRSB})\|^2_2 + \lambda\|\Theta\|^2_2\end{equation}where the $\|\Theta\|^2_2$ is the standard weight decay regularization with parameter $\lambda$.

Essentially, we want our network to learn the differences between wavelet sub-bands of LR and HR images. By adding these differences (residual) to the input wavelet sub-bands, we will get the final super-resolution wavelet sub-bands.
\subsection{Generating SR Results}

To produce SR results, the bicubic enlarged LR input images are transformed by 2dDWT to produce LRSB as Equation (\ref{LRSB}). Then LRSB is fed forward through the trained network to produce $\Delta$SB. Adding LRSB and  $\Delta$SB together generates four SR wavelet Sub-Bands (SRSB) denoted as:
\begin{equation}\begin{split}\text{SRSB} &=\{\text{SA}, \text{SV}, \text{SH}, \text{SD}\}\\&= \text{LRSB} + \Delta \text{SB} \\&= \{\text{LA} + \Delta \text{A}, \text{LV} + \Delta \text{V}, \text{LH} + \Delta \text{H}, \text{LD} + \Delta \text{D}\}
\end{split}\end{equation}

Finally, 2dIDWT generates the SR image results:
\begin{equation}
\text{SR} = \text{2dIDWT}\{\text{SRSB}\}
\end{equation}

\subsection{Understanding Wavelet Prediction}

Training in wavelet domain can boost up the training and testing procedure. Using wavelet coefficients encourages activation sparsity in hidden layers as well as the output layer. Moreover, by using residuals, wavelet coefficients themselves become sparser and therefore easier for the network to learn sparse maps rather than dense ones. The histogram in Figure \ref{fig:str} illustrates the sparse distribution of all the $\Delta$SB coefficients. This high level of sparsity further reduces the training time required for the network， resulting in more accurate super-resolution results.


In addition, training a deep network is actually to minimize a cost function which is usually defined by $l2$ norm. This particular norm is used because it homogeneously describes the quality of the output image compared to the ground truth. The image quality is then quantified by the assessment metric PSNR. However, SSIM \cite{wang2004image} has been proven to be a conceptually better way to describe the quality of an image (comparing to the target) which unfortunately cannot be easily optimized. Nearly all the SR methods use SSIM as final testing metric but it is not emphasized in the training procedure.


However, DWSR encourages the network to produce more structural details. As shown in Figure \ref{fig:str}, the SRSB has more defined structural details than LRSB after adding the predicted $\Delta$SB. With Haar wavelet, every fine detail has different intensity of coefficients spreading in all four sub-bands. Overlaying four sub-bands together can enhance the structural details the network taking in by providing additional relationships between structural details. At a given spatial location, the first sub-band gives the general information of the image, following three detailed sub-bands provide horizontal/vertical/diagonal structural information to the network at this location. The structural correlation information between the sub-bands helps the network weights forming in a way to emphases the fine details.

By taking more structural similarity into account while training, the proposed network increases both the PSNR and SSIM assessments to deliver a visually improved SR result (see Section \ref{SRArti} for quantitative comparisons). Moreover, benefiting from wavelet domain information, DWSR produces SR results with less artifacts while other methods suffer from misleading artificial blocks introduced by bicubic (see Section \ref{SRArti}).

\section{Experimental Evaluation}
\label{Sec:DWSR_Experiment}

\subsection{Data Preparation}

During the training phase, the NTIRE \cite{Timofte_2017_CVPR_Workshops} 800 training images are used without augmentation. The NTIRE HR images $\{Y_i\}_{i=1}^{800}$ are down-sampled by the factor of $c$. Then the down-sampled images are enlarged busing bicubic interpolation by the same factor $c$ to form the LR training images $\{X_i\}_{i=1}^{800}$. Note that the image $Y_i$ is cropped so that its width and height be multiple of $c$. Therefore $X_i$ and $Y_i$ have the same size where $Y_i$ represents the HR training image, $X_i$ represents the corresponding LR training image. Patches of size $41\times41$ pixels sub-images with $10$ pixels overlapping are then extracted from $X_i$ and $Y_i$ for training.

For each sub-image from $X_i$, the LRSB is computed as Equation (\ref{LRSB}). For each corresponding sub-image from $Y_i$, the HRSB is computed as Equation (\ref{HRSB}). Then the residual $\Delta$SB is computed as Equation (\ref{dSB}).

During the testing phase, several standard testing data sets are used. Specifically, Set5 \cite{bevilacqua2012low}, Set14 \cite{zeyde2010single}, BSD100 \cite{MartinFTM01}, Urban100 \cite{huang2015single} are used to evaluate our proposed method DWSR.

Both training and testing phases of DWSR only utilize the luminance channel information. For color images, Cr and Cb channels are directly enlarged by bicubic interpolation from LR images. These enlarged chrominance channels are combined with SR luminance channel to produce color SR results.

\subsection{Training Settings}

During the training process, several training techniques are used. The gradients are clipped to 0.01 by norm clipping option in the training package. We use Adam optimizer as described in  \cite{kingma2014adam} to updates $\Theta$ and $\bm{b}$. The initial learning rate is $0.01$ and decreases by $25\%$ every $20$ epochs. The weight regulator is set to $1\times10^{-3}$ to prevent over-fitting. 
Other than input and output layers, the DWSR has $N=10$ same-sized convolutional hidden layers with the filter size of $64\times3\times3\times64$. This configuration results in a network with only half of parameters in VDSR \cite{Kim_2016_VDSR}.

The training scheme is implemented with TensorFlow \cite{tensorflow2015-whitepaper} package with Python 2.7 interaction interface. We use one GTX TITAN X GPU 12 GB for both the training and testing.
\begin{figure*}[t]
\centering
\includegraphics[width=\linewidth]{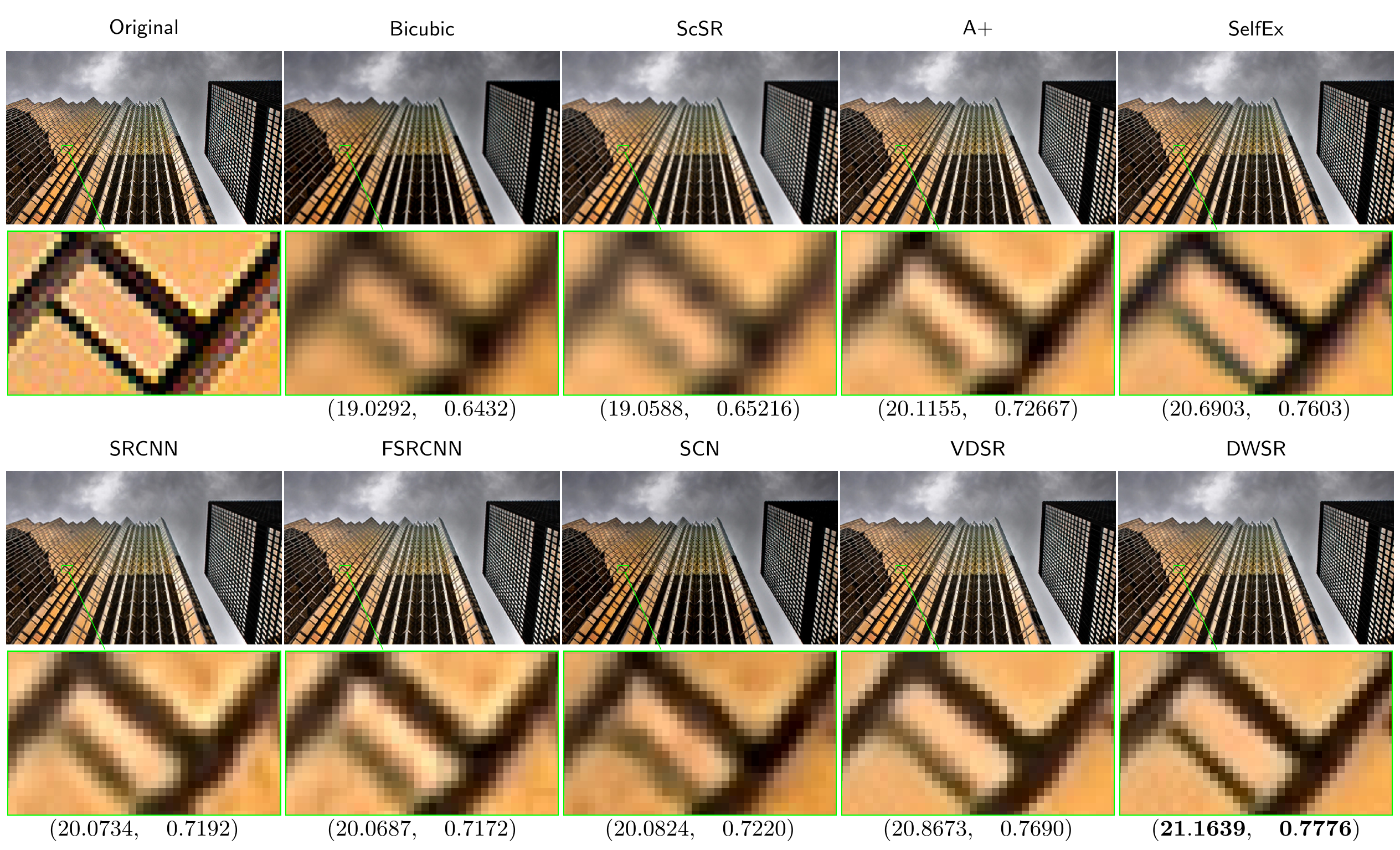}
\caption{Test image No.$19$ in Urban100 data set. From top left to bottom right are results of: ground truth, bicubic,  ScSR, A$+$, SelfEx, SRCNN, FSRCNN, SCN, VDSR, DWSR. The numeral assessments are labeled as $\left(\text{PSNR},\ \text{SSIM}\right)$. DWSR (bottom right) produces more defined structures with better SSIM and PSNR than state-of-the-art methods.}\label{19}
\end{figure*}

\subsection{Convergence Speed}

Since the gradients are clipped to a numerical large norm, with the high initial learning rate, DWSR reaches convergence with a really fast speed and produces practical results (see following reported evaluations). Figure \ref{fig:epoch} shows the convergence process during the training by plotting the evaluation of cost over training epochs. After 100 epochs, the network is fully converged and ($\Theta$,$\bm{b}$) is used for testing. The training procedure for 100 epochs takes about 4 hours to finish with one GPU.
\begin{figure}[h]
      \centering
      \includegraphics[width=\linewidth,trim = {25, 0, 25, 15}, clip]{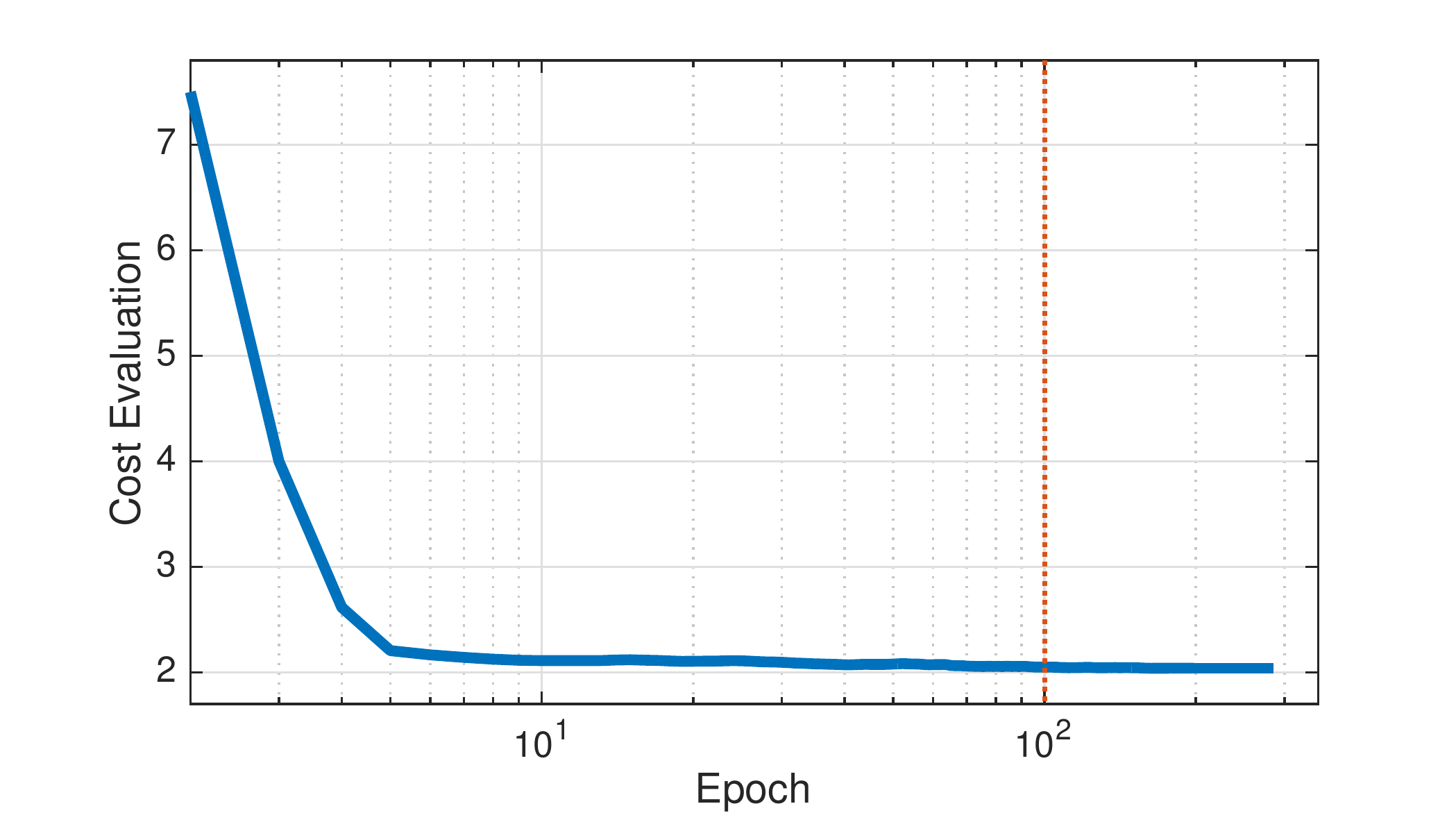}
      \caption{The evaluations of cost function (\ref{cost}) over training epochs for training scale factor 4. At 100 epoch, the network training convergences.} \label{fig:epoch}
\end{figure}

\begin{figure*}[t]
\centering
\includegraphics[width=\linewidth]{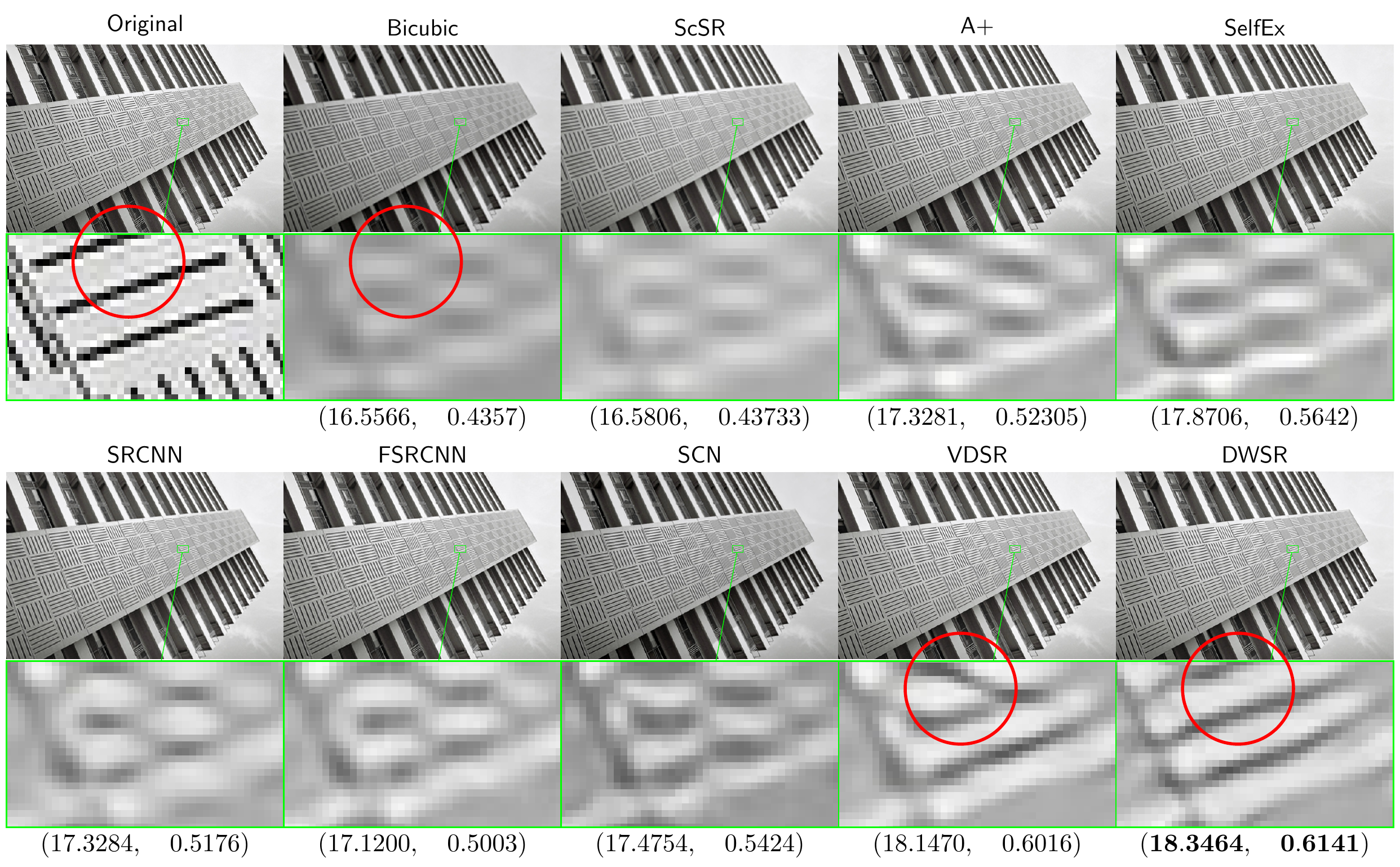}
\caption{Test image No.$92$ in Urban100 data set. From top left to bottom right are results of: ground truth, bicubic, ScSR, A$+$, SelfEx, SRCNN, FSRCNN, SCN, VDSR, DWSR. The numeral assessments are labeled as $\left(\text{PSNR},\ \text{SSIM}\right)$. DWSR (bottom right) produces more fine structures with better SSIM and PSNR than state-of-the-art methods. Also note DWSR does not produce artifacts diagonal edges in the \tcb{red circled region}.}\label{92}
\end{figure*}

\subsection{Comparison with State-of-the-Art} \label{testSet}

We compare DWSR with several state-of-the-art methods and use Bicubic as the baseline reference\footnote{Please refer to \url{http://signal.ee.psu.edu/DWSR.html} for high-quality color images and to download our code.}.

ScSR \cite{yang2010image} and A$+$ \cite{timofte2014a+} are selected to represent the sparse coding based and dictionary learning based methods. For deep learning based methods, DWSR is compared with SCN \cite{wang2015deep}, SelfEx \cite{huang2015single}, FSRCNN \cite{dong2016accelerating}, SRCNN \cite{dong2014learning} and VDSR \cite{Kim_2016_VDSR}. We use publicly published testing codes from different authors, the tests are carried on GPU as mentioned above for deep learning based methods. For FSRCNN, SRCNN and sparse based methods we use their public CPU testing codes.

Table \ref{PSNR} shows the summarized results of  PSNR and SSIM evaluations. The best results are shown in \tcr{\bf red} and second best are shown in \tcb{blue}. DWSR has a clear advantage on the large scaling factors owing to its reliance on incorporating the structural information and correlation from the wavelet transform sub-bands. For large scale factors, DWSR delivers better results than the best known method (VDSR) with {\bf only half} parameters benefiting from training in wavelet feature domain.

Table \ref{TIME} shows the execution time of different methods. Since DWSR only has half of the parameters than the most parameterized method (VDSR) and benefiting from really sparse network activations, DWSR takes much less time to apply super-resolution. For 2K images in NTIRE testing set, DWSR takes less than 0.1s to produce the outputs of the network including the loading time from GPU. 

Figure \ref{fig:PSNRvsTime} compares state-of-the-art methods for image super-resolution in terms of PSNR and running time. DWSR shows the best trade-off between running time and PSNR value among other methods.
\begin{figure}
      \centering
      \includegraphics[trim ={12, 0, 12, 8},clip, width=\linewidth]{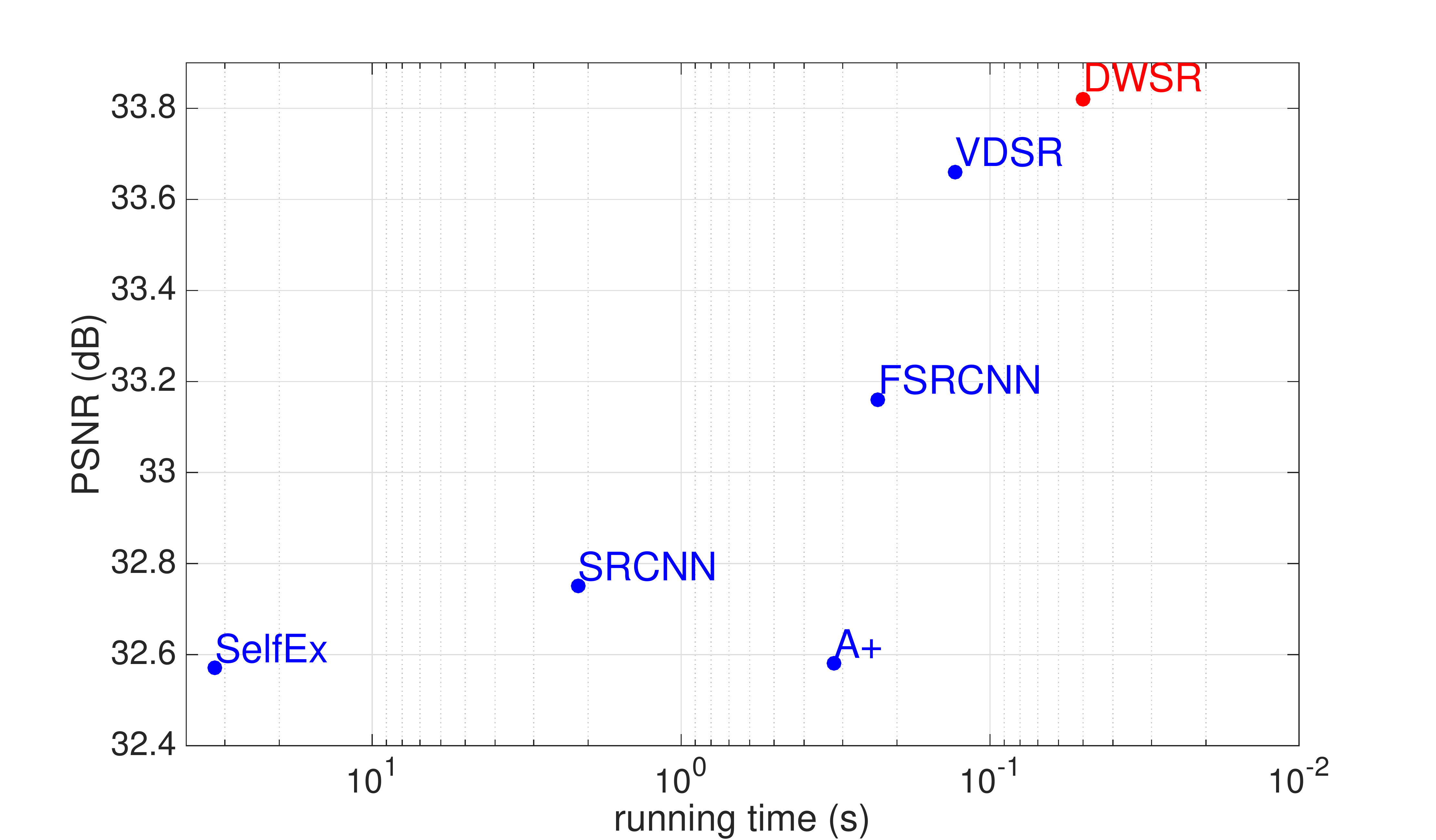}
      \caption{DWSR and other state-of-the-art methods reported PSNR with scale factor of 3 on Set5. For experimental setup see Section \ref{testSet}.}
      \label{fig:PSNRvsTime}
\end{figure}

Figure \ref{19} shows SR results of a testing image from Urban100 dataset with scale factor 4. Overall, deep learning based methods produce better results than sparse coding based and dictionary learning based methods. Compared to SRCNN, DWSR produces more defined structures benefiting from training in wavelet domain. Compared to VDSR, DWSR results give higher PSNR and SSIM values using less than half parameters of VDSR with a faster speed. Visually, the edges are more enhanced in DWSR than other state-of-the-art methods and are clearly illustrated in the enlarged areas. The image generated by DWSR has less artifacts that are caused by initial bicubic interpolation of LR image and results in sharper edges which are consistent with the ground truth image. Also quite clearly, DWSR has an advantage on reconstructing edges especially diagonal ones due to the fact that this structural information is prominently emphasized with sub-bands in Haar wavelets coefficients.
\begin{table*}[t]
\centering
\caption{PSNR and SSIM result comparisons with other approaches for 4 different datasets.}
\label{PSNR}\resizebox{\textwidth}{!}{
\begin{tabular}{|r|l||c||c||c||c||c||c||c||c|}
\hline
\begin{tabular}[c]{@{}c|c@{}}PSNR&SSIM\end{tabular}&
                               & \begin{tabular}[c]{@{}c@{}}Bicubic\\{[}Baseline{]}\end{tabular}
                               & \begin{tabular}[c]{@{}c@{}}ScSR\\{[}TIP 10{]}\end{tabular}
                               & \begin{tabular}[c]{@{}c@{}}A+\\{[}ACCV 14{]}\end{tabular}
                               & \begin{tabular}[c]{@{}c@{}}SelfEx\\{[}CVPR 15{]}\end{tabular}
                               & \begin{tabular}[c]{@{}c@{}}FSRCNN\\{[}ECCV 16{]}\end{tabular}

                               & \begin{tabular}[c]{@{}c@{}}SRCNN\\{[}PAMI 16{]}\end{tabular}
                               & \begin{tabular}[c]{@{}c@{}}VDSR\\{[}CVPR 16{]}\end{tabular}
                               & \begin{tabular}[c]{@{}c@{}}DWSR\\{[}ours{]}\end{tabular}                                   \\ \hline
Set5
& \begin{tabular}[c]{@{}l@{}}x2\\ x3\\ x4\end{tabular}
& \begin{tabular}[c]{@{}c|c@{}}33.64&0.9292\\ 30.39&0.8678\\ 28.42&0.8101\end{tabular}
& \begin{tabular}[c]{@{}c|c@{}}35.78&0.9485\\ 31.34&0.8869\\ 29.07&0.8263\end{tabular}
& \begin{tabular}[c]{@{}c|c@{}}36.55&0.9544\\ 32.58&0.9088\\ 30.27&0.8605\end{tabular}
& \begin{tabular}[c]{@{}c|c@{}}36.47&0.9538\\ 32.57&0.9092\\ 30.32&0.8640\end{tabular}
& \begin{tabular}[c]{@{}c|c@{}}36.94&0.9558\\ 33.06&0.9140\\ 30.55&0.8657\end{tabular} 
& \begin{tabular}[c]{@{}c|c@{}}36.66&0.9542\\ 32.75&0.9090\\ 30.48&0.8628\end{tabular}
& \begin{tabular}[c]{@{}c|c@{}}\tcr{\bf 37.52}&\tcr{\bf 0.9586}\\ \tcb{33.66}&\tcb{0.9212}\\ \tcb{31.35}&\tcb{0.8820}\end{tabular}
& \begin{tabular}[c]{@{}c|c@{}}\tcb{37.43}&\tcb{0.9568}\\ \tcr{\bf 33.82}&\tcr{\bf 0.9215}\\ \tcr{\bf 31.39}&\tcr{\bf 0.8833}\end{tabular} \\ \hline
Set14
& \begin{tabular}[c]{@{}l@{}}x2\\ x3\\ x4\end{tabular}
& \begin{tabular}[c]{@{}c|c@{}}30.22&0.8683\\ 27.53&0.7737\\ 25.99&0.7023\end{tabular}
& \begin{tabular}[c]{@{}c|c@{}}31.64&0.8940\\ 28.19&0.7977\\ 26.40&0.7218\end{tabular}
& \begin{tabular}[c]{@{}c|c@{}}32.29&0.9055\\ 29.13&0.8188\\ 27.33&0.7489\end{tabular}
& \begin{tabular}[c]{@{}c|c@{}}32.24&0.9032\\ 29.16&0.8196\\ 27.40&0.7518\end{tabular}
& \begin{tabular}[c]{@{}c|c@{}}32.54&0.9088\\ 29.37&0.8242\\ 27.50&0.7535\end{tabular} 
& \begin{tabular}[c]{@{}c|c@{}}32.42&0.9063\\ 29.28&0.8209\\ 27.40&0.7503\end{tabular}
& \begin{tabular}[c]{@{}c|c@{}}\tcb{33.02}&\tcb{0.9102}\\ \tcb{29.77}&\tcb{0.8308}\\ \tcb{28.01}&\tcb{0.7664}\end{tabular}
& \begin{tabular}[c]{@{}c|c@{}}\tcr{\bf 33.07}&\tcr{\bf 0.9106}\\ \tcr{\bf 29.83}&\tcr{\bf 0.8308}\\ \tcr{\bf 28.04}&\tcr{\bf 0.7669}\end{tabular} \\ \hline
B100
& \begin{tabular}[c]{@{}l@{}}x2\\ x4\end{tabular}
& \begin{tabular}[c]{@{}c|c@{}}29.55&0.8425\\ 25.96&0.6672\end{tabular}
& \begin{tabular}[c]{@{}c|c@{}}30.77&0.8744\\ 26.61&0.6983\end{tabular}
& \begin{tabular}[c]{@{}c|c@{}}31.21&0.8864\\ 26.82&0.7087\end{tabular}
& \begin{tabular}[c]{@{}c|c@{}}31.18&0.8855\\ 26.84&0.7106\end{tabular}
& \begin{tabular}[c]{@{}c|c@{}}31.66&0.8920\\ 26.92&0.7201\end{tabular} 
& \begin{tabular}[c]{@{}c|c@{}}31.36&0.8879\\ 26.84&0.7101\end{tabular}
& \begin{tabular}[c]{@{}c|c@{}}\tcr{\bf 31.85}&\tcr{\bf 0.8960}\\ \tcb{27.23}&\tcb{0.7238}\end{tabular}
& \begin{tabular}[c]{@{}c|c@{}}\tcb{31.80}&\tcb{0.8940}\\ \tcr{\bf 27.25}&\tcr{\bf 0.7240}\end{tabular}  \\ \hline
Urban100
& \begin{tabular}[c]{@{}l@{}}x2\\ x4\end{tabular}
& \begin{tabular}[c]{@{}c|c@{}}26.66&0.8408\\ 23.14&0.6573\end{tabular}
& \begin{tabular}[c]{@{}c|c@{}}28.26&0.8828\\ 24.02&0.7024\end{tabular}
& \begin{tabular}[c]{@{}c|c@{}}29.20&0.8938\\ 24.32&0.7186\end{tabular}
& \begin{tabular}[c]{@{}c|c@{}}29.54&0.8967\\ 24.78&0.7374\end{tabular}
& \begin{tabular}[c]{@{}c|c@{}}29.87&0.9010\\24.61&0.7270\end{tabular} 
& \begin{tabular}[c]{@{}c|c@{}}29.50&0.8946\\ 24.52&0.7221\end{tabular}
& \begin{tabular}[c]{@{}c|c@{}}\tcr{\bf 30.76}&\tcb{0.9140}\\ \tcb{25.18}&\tcb{0.7524}\end{tabular}
& \begin{tabular}[c]{@{}c|c@{}}\tcb{30.46}&\tcr{\bf 0.9162}\\ \tcr{\bf 25.26}&\tcr{\bf 0.7548}\end{tabular} \\ \hline
\end{tabular}}
\end{table*}
\begin{table*}[t]
\centering
\caption{Results of the execution time comparison to other approaches}
\label{TIME}\resizebox{\textwidth}{!}{
\begin{tabular}{|r|l|c|c|c|c|c|c|c|}
\hline
                               &
                               & \begin{tabular}[c]{@{}c@{}}ScSR\\{[}TIP 10{]}\end{tabular}
                               & \begin{tabular}[c]{@{}c@{}}A+\\{[}ACCV 14{]}\end{tabular}
                               & \begin{tabular}[c]{@{}c@{}}SelfEx\\{[}CVPR 15{]}\end{tabular}
                               & \begin{tabular}[c]{@{}c@{}}FSRCNN\\{[}ECCV 16{]}\end{tabular}

                               & \begin{tabular}[c]{@{}c@{}}SRCNN\\{[}PAMI 16{]}\end{tabular}
                               & \begin{tabular}[c]{@{}c@{}}VDSR\\{[}CVPR 16{]}\end{tabular}
                               & \begin{tabular}[c]{@{}c@{}}DWSR\\{[}ours{]}\end{tabular}                                   \\ \hline
Set5
& \begin{tabular}[c]{@{}l@{}}x2\\ x3\\ x4\end{tabular}
& \begin{tabular}[c]{@{}l@{}}80.22\\ 82.67\\ 84.88\end{tabular}
& \begin{tabular}[c]{@{}c@{}}0.58\\ 0.32\\ 0.24\end{tabular}
& \begin{tabular}[c]{@{}c@{}}45.76\\32.28\\ 29.32\end{tabular}
& \begin{tabular}[c]{@{}l@{}}0.30\\ 0.23\\ 0.26\end{tabular} 
& \begin{tabular}[c]{@{}c@{}}2.56\\ 2.63\\2.16\end{tabular}
& \begin{tabular}[c]{@{}l@{}}0.13\\0.13\\0.12\end{tabular}
& \begin{tabular}[c]{@{}c@{}}\bf 0.06\\\bf 0.05\\\bf 0.06\end{tabular} \\ \hline
Set14
& \begin{tabular}[c]{@{}l@{}}x2\\ x3\\ x4\end{tabular}
& \begin{tabular}[c]{@{}l@{}}86.12\\ 91.52\\ 89.25\end{tabular}
& \begin{tabular}[c]{@{}c@{}}0.85\\0.59\\0.32\end{tabular}
& \begin{tabular}[c]{@{}c@{}}112.3\\76.02\\66.06\end{tabular}
& \begin{tabular}[c]{@{}l@{}}0.32\\ 0.42\\ 0.39\end{tabular} 
& \begin{tabular}[c]{@{}c@{}}4.52\\4.25\\4.68\end{tabular}
& \begin{tabular}[c]{@{}l@{}}0.25\\0.26\\0.25\end{tabular}
& \begin{tabular}[c]{@{}c@{}}\bf 0.07\\\bf 0.08\\\bf 0.07\end{tabular} \\ \hline
B100
& \begin{tabular}[c]{@{}c@{}}x2\\ x4\end{tabular}
& \begin{tabular}[c]{@{}c@{}}98.03\\ 100.43\end{tabular}
& \begin{tabular}[c]{@{}c@{}}0.60\\0.26\end{tabular}
& \begin{tabular}[c]{@{}c@{}}62.02\\36.67\end{tabular}
& \begin{tabular}[c]{@{}c@{}}0.32\\ 0.39\end{tabular}
& \begin{tabular}[c]{@{}c@{}}2.65\\2.98\end{tabular}
& \begin{tabular}[c]{@{}l@{}}0.16\\0.26\end{tabular}
& \begin{tabular}[c]{@{}c@{}}\bf 0.09\\\bf 0.12\end{tabular} \\ \hline
Urban100
& \begin{tabular}[c]{@{}c@{}}x2\\ x4\end{tabular}
& \begin{tabular}[c]{@{}c@{}}1021.06\\ 1282.33\end{tabular}
& \begin{tabular}[c]{@{}c@{}}2.96\\1.21\end{tabular}
& \begin{tabular}[c]{@{}c@{}}663.66\\662.68\end{tabular}
& \begin{tabular}[c]{@{}c@{}}2.23\\2.35 \end{tabular}
& \begin{tabular}[c]{@{}c@{}}23.2\\25.6\end{tabular}
& \begin{tabular}[c]{@{}c@{}}0.98\\1.07\end{tabular}
& \begin{tabular}[c]{@{}c@{}}\bf 0.33\\\bf 0.38\end{tabular} \\ \hline
\end{tabular}}
\end{table*}
\subsection{Large Scaling Factor SR Artifacts} \label{SRArti}
Figure \ref{92} illustrates SR results from different methods with scale factor 4. DWSR produces more enhanced details than state-of-the-art methods. Moreover, since the scale factor is large for bicubic interpolations to keep the structural information, some artificial blocks are introduced during the bicubic  enlargement. Meanwhile nearly all the deep learning based methods are utilizing the bicubic interpolations as the starting point, these artificial blocks get more pronounced during the SR enhancements. Eventually, the enhancements on the artificial blocks produce artificial edges in the SR results. For instance, in Figure \ref{92}, these blocks and artificial edges are labeled within red circles for bicubic and VDSR. The diagonal edges are introduced by SR enhancement on the artificial blocks from bicubic enlargement, which are not present in the ground truth image.

However, DWSR utilizes wavelet coefficients to take in more structural correlation information into account which does not enhance the artificial blocks and produces edges more similar to the ground truth.

Our work so far presents a deep wavelet super-resolution (DWSR) technique that recovers the ``missing details'' by using (low-resolution) wavelet sub-bands as inputs. DWSR is significantly economical in the number of parameters compared to most state-of-the-art methods and yet achieves competitive or better results. We contend that this is because wavelets provide an image representation that naturally simplifies the mapping to be learned. While we used the Haar wavelet, effects of different wavelet basis can be examined in future work. Of particular interest could be to learn the ``optimal" wavelet basis for the SR task.

In the remainder of this chapter, we investigate the performance of deep learning methods for super-resolution in low training regime and propose to exploit image priors to alleviate the resulting performance degradation.

\section{Image Priors for Super Resolution}
\label{sec:img_priors}

The statistical information embedded in natural images have recently received much attention
from different communities for both understanding the human visual system and also designing effective image processing algorithms \cite{field1987relations}.
Natural images are different from images generated by random
noise in that they exhibit meaningful structures. Examples of such structures
are local regularities, such as edges and self-similarities, etc. Consequently, the natural images are only a tiny
fraction of the space of all the images that can be generated by
all the possibilities of pixel values.

Natural images have many unique statistical properties \cite{zontak2011internal, weiss2007makes}.
One of the most well known such properties is that they exhibit heavy-tailed
distribution when applying derivative filters onto them.
Intuitively natural images are locally smooth; therefore, local differences
will be small and  the distribution will decrease
faster than the Gaussian.
On the other hand,
natural images have many structural details such as edges, where
the derivative response can be large and it contributes to the
heavier tails than the Gaussian distribution.
This prior knowledge has been successfully applied in a wide range of applications, including image denoising \cite{weiss2007makes}, deblurring \cite{field1987relations} and super-resolution \cite{kim2010single}.
Apart from this heavy-tailed statistics characteristic of natural images, they also have many other statistical properties, such as scale invariance and similar joint statistics. The former states  that
the natural images have similar heavy-tailed distributions at
different scales \cite{zontak2011internal} and the latter means that the neighboring pixels in the natural images exhibit high statistical dependency \cite{roth2009fields}.

Image priors and statistics are a very active research topic and researchers are still investigating it. The most classical image prior, Gaussian model applied to the derivatives of images, is widely used due to its simplicity:
\bea
    P(\vect y) \propto \exp \Big( -\frac{\|\nabla \vect y \|_2^2}{\eta^2} \Big)
\eea
where $\nabla \vect y$ represents the gradient of the image $\vect y$. Gaussian prior has the advantage of having a closed form solution for many optimization problems; however, it usually fails to produce satisfying solutions as it also smoothens the image. To overcome this problem and preserve the edge structure, Laplacian prior is used which has been proved to preserve image discontinuities better:
\bea
    P(\vect y) \propto \exp \Big( -\frac{\|\nabla \vect y \|_1^2}{\eta^2} \Big). \label{Eq:NIP_l1}
\eea
Laplacian priors are related to $\ell_1$-norm regularization which promotes the sparsity in the solution. Such priors can preserve edges in the image; however, they do not capture natural images' statistics very well and the resulting images look piecewise linear. This is due to the fact that natural images follow a distribution with heavier tails than Laplacian or Gaussian. The solution presented for this problem is to use the hyper-Laplacian distribution for the edges in the image \cite{zhang2012generative,krishnan2009fast}:
\bea
    P(\vect y) \propto \exp \Big( -\frac{\|\nabla \vect y \|_\alpha }{\eta^2} \Big).
\eea
where the norm $\|\vect y\|_\alpha$ is defines as $\|\vect y\|_\alpha = \sum_i |y_i|^\alpha$.
As suggested by the literature \cite{krishnan2009fast,kim2010single}, the parameter $\alpha$ which controls the sparseness of the desired gradient in the natural images is usually picked between 0.5 and 0.8. More complicated priors that can take into account the long-range inter-relations of pixels can also be used to further capture statistical data in natural images. These priors can be set manually, or more interestingly they can be learned from training data and applied on unseen images for super-resolution. In this chapter, we take the image priors as suggested by Kim \emph{et al.} \cite{kim2010single,tappen2003exploiting} and improve upon them.
\small{\bea\resizebox{\textwidth}{!}{$
        P(\vect {y_h} | \vect {y_l}) = \frac{1}{C} \underbrace{ \prod_{\substack{\{i,j\} \\ \{s,t\}\in \mathcal{N}(i,j)} }  \exp \left[ - \left( \frac{|y_h(i,j) - y_h(s,t)|}{\sigma_N} \right)^\alpha \right] }_{\text{prefer strong edges (edge based prior)}}
                                    \underbrace{ \prod_{ \{i,j\}  }  \exp \left[ - \left( \frac{|\mathbb{T} \big( y_h(i,j) \big) - y_l(i,j)|}{\sigma_R} \right)^2 \right] }_{\text{Reconstruction is faithful to LR image}}
                                    $} \nonumber\\\label{Eq:NIP_orig}
\eea}
The above prior tries to capture natural image priors (NIP) and reconstruction model in one framework. $\vect{y_h}$ represents the estimated high resolution image and $\vect{y_l}$ denotes the corresponding low resolution image and $\mathcal{N}(i,j)$ represents a neighborhood of pixels at location $(i,j)$. For a given image, the second product
term ensures that  when the same downsampling kernel ($\mathbb{T}$) is applied on the super resolution result ($\vect{y_h}$), final super resolution result is prevented from flowing far away from the input low resolution image $\vect{y_l}$. In this form of NIP framework, the second term is the reconstruction constraint which measures the distance
between the input low-resolution image and an image reconstructed from the high-resolution configuration according to the down-sampling model (blurring and sub-sampling), while the first product term (NIP term) tends to penalize pixel value differences in the neighborhood of each pixel $(i,j)$. Subsequently, this distribution prefers a strong edge rather than
a set of small edges (such as ringing artifacts) and can be used to resolve the problem
of smooth edges.

To adapt the NIP prior and the reconstruction constraint to super-resolution problem in a learning-based method, we are revising the prior distribution in \eqref{Eq:NIP_orig} so that the reconstruction constraint is penalizing the difference between the estimated high-resolution image and the ground truth high-resolution image. This is a better fit for learning-based methods in super-resolution where the cost function is the difference between the inferred image and the ground truth. We then rewrite the NIP as follows:
\small{\bea \resizebox{\textwidth}{!}{$
        P(\vect {y_h} | \vect {y_g}) = \frac{1}{C} \prod_{\substack{\{i,j\} \\ \{s,t\}\in \mathcal{N}(i,j)} }  \exp \left[ - \left( \frac{|y_h(i,j) - y_h(s,t)|}{\sigma_N} \right)^\alpha \right]
                                    \underbrace{ \prod_{ \{i,j\}  }  \exp \left[ - \left( \frac{| y_h(i,j)  - y_g(i,j)|}{\sigma_R} \right)^2 \right] }_{\text{Revised to compare output with ground truth HR image}}
                                    $}
                                    \nonumber\\\label{Eq:NIP_prior}
\eea}

Also note that in the revised NIP prior no downsampling/blurring kernel ($\mathbb{T}$) is used. In the new NIP prior, which is specific to super resolution, we want the inferred super resolution result to be statistically close to the ground truth image.  The above  formulation is very similar to \eqref{Eq:NIP_orig} where the low resolution image $\vect{y_l}$ is now replaced with the ground truth high resolution image ($\vect{y_g}$). It essentially encourages the inferred high resolution image to be close to the ground truth using the NIP probabilistic prior. The reconstruction constraint here corresponds to a generative model and with the NIP provides a MAP framework where we can take the negative log-likelihood of the posterior and find the minimum of that. Essentially, maximizing the posterior using NIP priors leads to the following minimization problem:
\bea
\vect{y_h} =     &\arg\min\limits_{\vect{y_h}}   & \sum\limits_{\substack{\{i,j\} \\ \{s,t\}\in \mathcal{N}(i,j)} }  \left( \frac{|y_h(i,j) - y_h(s,t)|}{\sigma_N} \right) ^\alpha +
                                    \sum\limits_{ \{i,j\}  }  \left( \frac{| y_h(i,j)  - y_g(i,j)|}{\sigma_R} \right)^2 \label{Eq:NIP_cost1}\\
            = &   \arg\min\limits_{\vect{y_h}}  &  \frac{\sigma_R^2}{\sigma_N^\alpha}  \sum\limits_{\substack{\{i,j\} \\ \{s,t\}\in \mathcal{N}(i,j)} }   |y_h(i,j) - y_h(s,t)|^\alpha +
                                   \sum\limits_{ \{i,j\}  }  | y_h(i,j)  - y_g(i,j)|^2 \label{Eq:NIP_cost2}
\eea
Rewriting the MAP estimation in the form above helps us interpret the cost function often used for image super-resolution and also implement the new NIP cost function in an efficient manner using convolutions. The second sum in \eqref{Eq:NIP_cost2} is essentially summing up pixel level square differences between the estimated high-resolution image and the high-resolution ground truth image. This can be easily captured by $\| \mat{y_h}  - \mat{y_g} \|_F^2$ which is the error norm of a high resolution image and the super-resolution result. It is noteworthy to mention that this is the most commonly used cost function for image super-resolution in the deep learning frameworks which essentially back-propagate the gradient of error terms to the weights of the deep network. On the other hand, the first term in \eqref{Eq:NIP_cost2} is a \emph{local} error constraint on pixel values and summed for all the pixels in the images. If we assume a simple neighborhood $\mathcal{N}(i,j)$ to be the eight-neighborhood vicinity around any pixel, the NIP prior as defined above can be written as summation over $8$ filtered images that are also passed through a special non-linear activation function. Since the aforementioned filters are simple difference filters and are linear, they can be implemented with eight convolution filters (shown in Fig. \ref{Fig:Filters}) and followed by non-linear activation function i.e. $|\cdot|^\alpha$ with a learnable parameter $\alpha$. Despite the regular NIP assumption that $\alpha$ is fixed, it is assumed here to be learnable so we can find the best $\alpha$ that fits the training images we have. The overall cost function in its new form is written as follows:
\bea
    &\arg\min\limits_{\vect{y_h}}   &  \frac{\sigma_R^2}{\sigma_N^\alpha}  \left(\sum\limits_{k=1}^{8}   \|\mat{y_h} \ast \mat{F_k}  \|_\alpha \right) +
                                    \| \mat{y_h}  - \mat{y_g} \|_F^2 \label{Eq:NIP_cost3}
\eea
It is noteworthy to mention  that this new cost function can be implemented using convolutions followed by a simple non-linearity layer. This makes it efficient for implementation purposes in the deep leaning structures using convolutional neural networks.
\begin{figure}
  \centering
  \includegraphics[width=\textwidth]{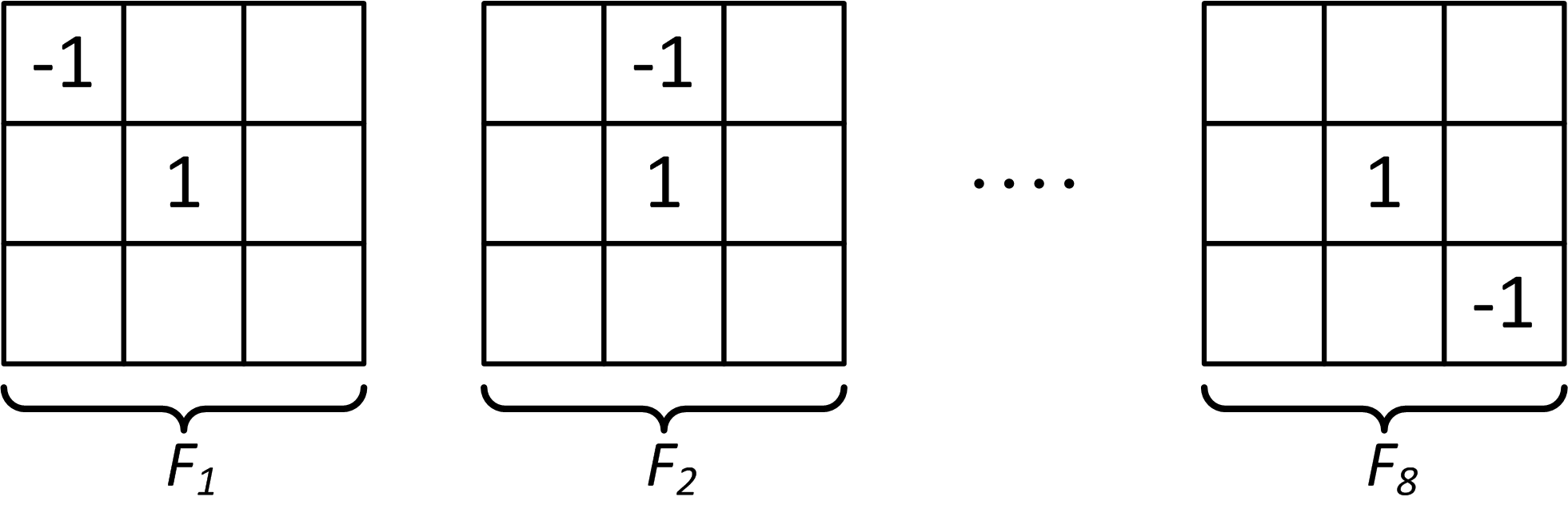}
  \caption{ Eight convolution filters that can be used to implement the NIP prior loss}
  \label{Fig:Filters}
\end{figure}

For optimizing a deep network using this cost function we need to make sure the cost function is differentiable so the error can propagate back through the network using a back-propagation approach. However, the cost function in \eqref{Eq:NIP_cost3} is not differentiable at zero since it has an infinite slope. Therefore, in the optimization procedure, it produces infinitely large gradients which makes the network unstable (see figure \ref{Fig:NIP} on the right). One way to alleviate this problem is to fix the parameter $\alpha$ to be exactly equal to one which is exactly a relaxation equivalent to \eqref{Eq:NIP_l1} (see figure \ref{Fig:NIP} on the left for $\alpha=1$). A better approach to tackle this problem is to approximate the norm function with something having a large but finite derivative at zero. For example we can approximate $\|x\|_\alpha$ for $\alpha=0.1$ with $10 \log\Big((e^{10}-1)|x|+1\Big)$ (See figure \ref{Fig:NIP} in the middle).
\begin{figure}
  \centering
  \includegraphics[width=0.99\textwidth]{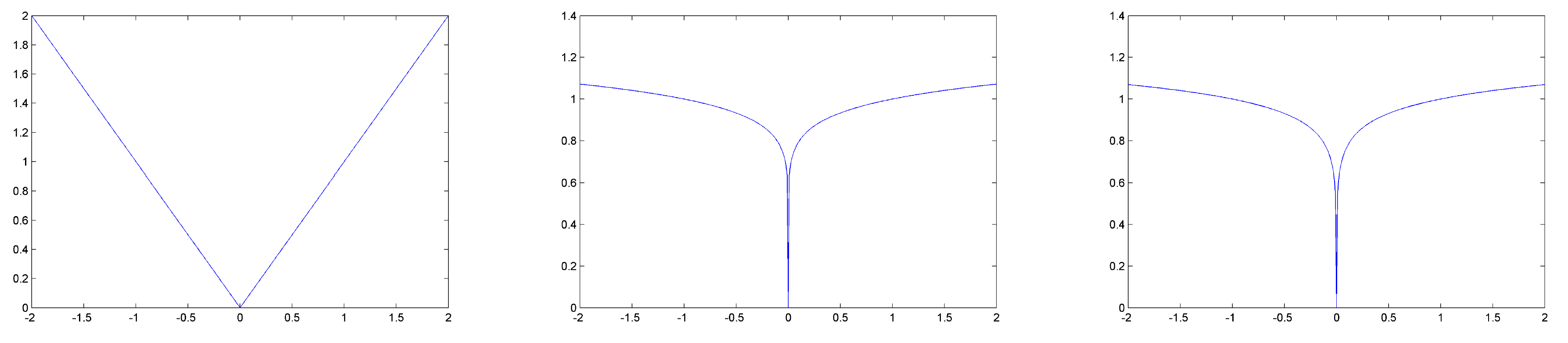}
  \caption{ Image priors introduced in this section. Illustrations are for one dimension only. From left to right: $\|x\|_1$,  $10 \log\Big((e^{10}-1)|x|+1\Big)$, $\|x\|_{0.1}$}
  \label{Fig:NIP}
\end{figure}

\subsection{Deep Super Resolution in Low Training Scenario}
\label{Sec:SR_lowTr}
The abundance of training data in deep learning provides very compelling results in areas such as object recognition and natural language processing as well as low-level vision tasks such as super-resolution and denoising. However, the performance of such networks degrades when the presence of abundant training is not an option and usually they perform very bad in these low training scenarios. One way of compensating for lack of enough training data is to use image statistics and priors as suggested in the previous section. Natural Image Priors (NIP) are among the most suitable ones for SR task. In this section, we aim to apply NIP on deep networks specialized for the super-resolution task. The proposed network structure is shown in Fig. \ref{fig:NIP_network}, which consists of an SR network for generating the super-resolution result and also a few additional convolutional layers to impose the NIP prior.
\begin{figure}
  \centering
  \includegraphics[width=0.99\textwidth]{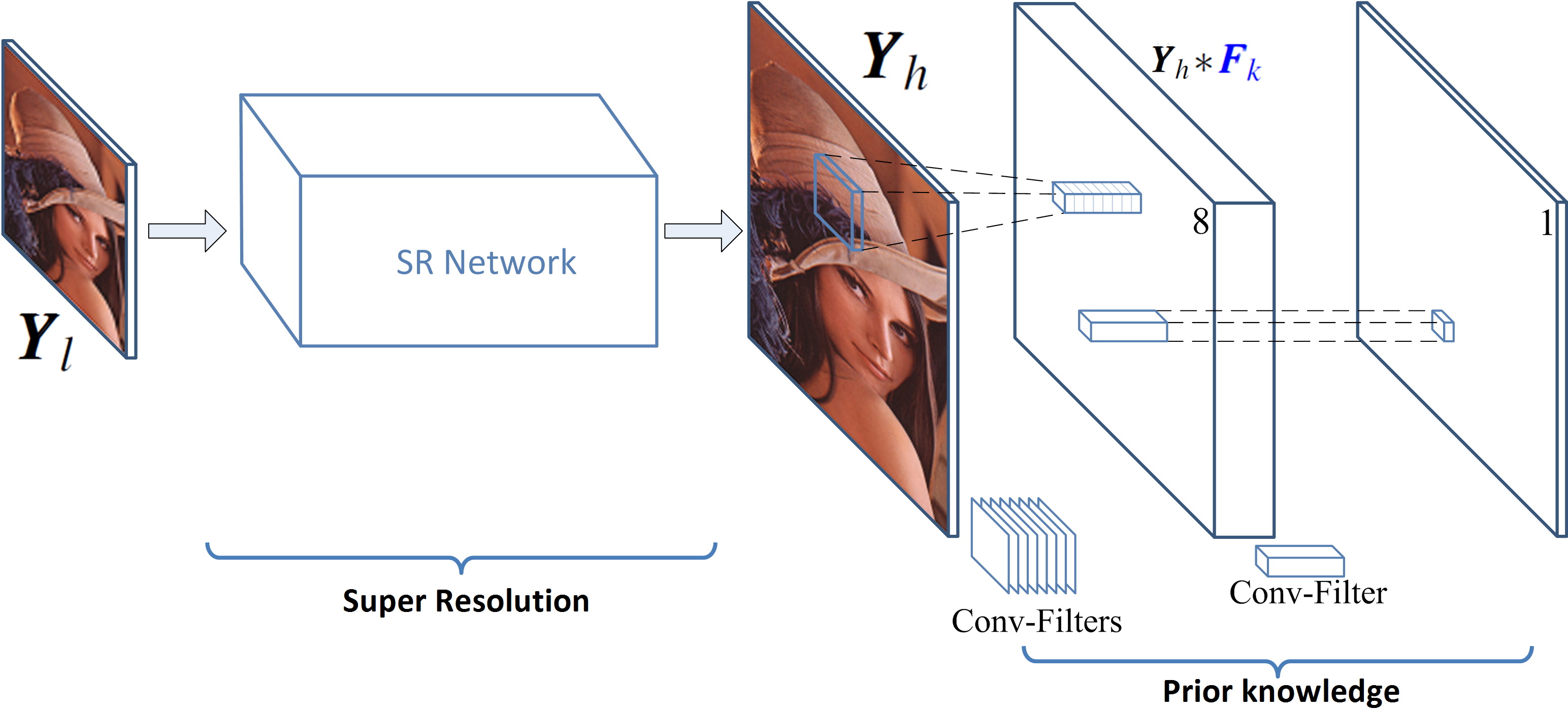}
  \caption{The network structure for imposing NIP priors}
  \label{fig:NIP_network}
\end{figure}

The ``SR Network" in Fig. \ref{fig:NIP_network} can be chosen to be any network specific for super-resolution task and here we pick the Very Deep Super Resolution (VDSR) \cite{Kim_2016_VDSR} network as one of the state-of-the-art methods for validating our idea. However, this idea can be applied to any other SR network such as SRCNN, etc. VDSR (shown in Fig. \ref{fig:VDSR}) is a residual network with 20 convolutional layers that takes the input low-resolution image as input and generates the output residuals that needed to be added to the input image in order to generate a high-resolution output image.
\begin{figure}
  \centering
    \includegraphics[width=0.9\textwidth]{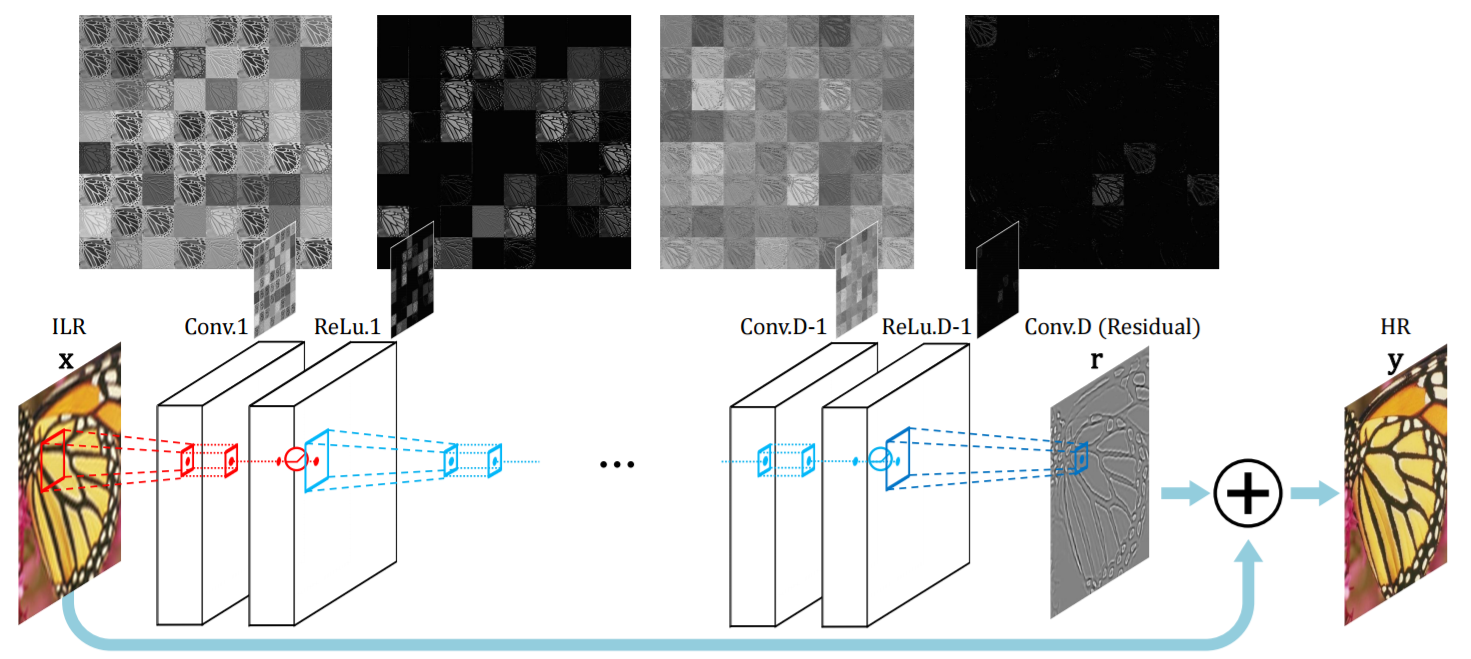}
    \caption{VDSR network for super resolution}
    \label{fig:VDSR}
\end{figure}

The output ``SR Network" goes into another layer of convolution with 8 non-learnable filters that are illustrated in Fig \ref{Fig:Filters} and passed through the nonlinear equivalent of $\alpha$-norm to form a data cube with 8 channels. These channels are summed across channels and then across spatial dimensions to provide the NIP part of the loss function in \eqref{Eq:NIP_cost3}. Note that the parameter $\alpha$ here can be learnable. Although the filters in the last layer are not learnable and are fixed, the error that is caused by NIP layers propagates back to the main SR network and causes the weights to adjust for NIP. The cost function to be minimized here is as proposed in \eqref{Eq:NIP_cost3} which is approximated using the log-function so that it is differentiable. The back-propagation rules needed to optimize the network stay the same for the reconstruction error term (second term in \eqref{Eq:NIP_cost3}) but for the last layer which corresponds only to computing the loss for NIP priors can be summarized as follows where we need $\frac{\partial E}{\partial W_{ab}^{\ell_p}}$ for every layer $\ell_p=2,..,L_p-1$. Using chain rule we can derive:
\bea
    \frac{\partial E }{\partial W_{a,b,c}^{\ell_p}} = \sum_{k=1}^{3}  \sum_{i=0}^{n-1}  \sum_{j=0}^{n-1} \delta_{i,j,k}^{\ell_p+1} ~.~ A_{i+a,j+b,k+c}^{\ell_p}
    \label{Eq:dEdW}\\
    \delta_{i,j,k}^{\ell_p} = \sum_{s=1}^{3} \; \sum_{u=0}^{n-1} \; \sum_{v=0}^{n-1} \delta_{i-u,j-v,k-s}^{\ell_p+1} ~.~ W_{u,v,s}^{\ell_p}
    \label{Eq:modified_delta}
\eea
And for the last layer $\ell_p=L_p$:
\bea
    \delta_{i,j}^{L_p} = \frac{\partial E }{\partial Z_{i,j}^{L_p}} \stackrel{\mat Z^{L_p}=\mat A^{L_p}}{=\joinrel=\joinrel=\joinrel=} \frac{\partial E }{\partial A_{i,j,k}^{L_p}}
        = 1
\eea
where $\mat W^{\ell_p}$ are the weights associated with layer $\ell_p$ of convolutions, $\mat A^{\ell_p}$ are the activation outputs of layer $\ell_p$, $\mat Z^{\ell_p}$ are the inputs to the layer $\ell_p$ and $\delta_{\ell_p+1}$ is the error propagated back from layer $\ell_{p+1}$ to layer $\ell_p$. For the last layer, Activation maps $\mat A^{L_p}$ and inputs $\mat Z^{L_p}$ are the same, i.e. $\mat Z^{L_p}=\mat A^{L_p}$.

This network is designed so that it can capture image statistics from the training data and generate output images with respect to natural image prior (NIP), especially in scenarios where training data is limited and generic deep SR networks fail to provide satisfying results for super-resolution. In the following sections, we provide different types of prior that can come in handy in low training scenarios and then discuss the effects of such priors in practice.
%
%
%
%
%

\section{Experimental Results}
In this section, we provide the experimental results and procedures corresponding to our method. We first describe the datasets used for training and testing, then explain the training procedure used. Finally, we compare our method with state of the art methods for super-resolution in high training and low training scenarios to show the benefits of regularizing deep networks with image priors.

\subsection{Dataset Preparation and Training Procedure }
For training dataset we use the 291 images from \cite{schulter2015fast} which contains natural images. Data Augmentation, including flipping, rotation, and scaling, was performed for training with high amount of data. For test scenario, we use the `set 14' \cite{zeyde2010single} dataset. The training procedure is very similar to what was proposed in section \ref{Sec:DWSR_Experiment}; however, with some small modifications for purpose of stability and faster convergence. As mentioned before, the SR network is chosen to be similar to VDSR \cite{Kim_2016_VDSR} with 20 convolutional layers. An additional convolutional layer with non-learnable (fixed) weights is also added to compute the loss function corresponding to natural image priors. The training uses batches of size 64 and momentum and weight decay parameters are set to $0.9$ and $0.0001$. Also, gradient clipping is used as proposed by \cite{Kim_2016_VDSR} to prevent gradients from exploding.

We train all experiments over 300 epochs over all training data (no matter how much training data is used). The learning rate was initially set to $0.1$ and then decreased by a factor of 10 at epochs 60 and 140. Similar to other recent SR methods, our framework applies bicubic interpolation to color components of images and only the luminance channel is fed to the deep network.

\subsection{NIP with Abundant Training}
In this section, we provide the experimental results under an abundant amount of training data. Essentially, we train our NIP network with the regularized cost function which also takes natural image priors into account. We denote $\sigma$ as $\frac{\sigma_R^2}{\sigma_N^\alpha}$ and train the network with more than $140,000$ training data pairs from the database. The core super-resolution network in our NIP network is the state-of-the-art VDSR network and we are showing comparison against it. Note that we train our network and also VDSR from scratch using the same initialization of weights and the same order of batches for fairness of comparison and to rule out any performance boost or degradation due to local minima. The first set of results for a scaling factor of $3$ are shown in Fig \ref{Fig:Woman_HighTr} and \ref{Fig:Lenna_HighTr}. We show the ground truth image as well as the bicubic interpolation, VDSR and also the results of our NIP network. Note that the value of $\sigma$ is picked according to a cross-validation procedure on a different set of images and the best value of $\sigma = e^{-5}$ is picked. However, we are also showing the results for $\sigma = e^{-4}$ which reveals that assigning higher regularizer parameters to NIP prior can cause the images to become smoother and piecewise linear as described and expected before. Comparing the visual results of VDSR and our NIP network with the right value of $\sigma$ shows that there is no visible difference in the output images of VDSR and our NIP networks if unlimited training is available. Also the average performance of each method over \emph{Set 14} is provided in Table \ref{Table:HighTrSet14}. It is quite clear that when training data is readily available, the VDSR network can capture structural information from the training data and incorporate it in the weights of the network, and again validates that adding prior information and regularizing the network does not change the performance of network. Fig. \ref{Fig:diagrams_HighTr} also shows the evolution of the cost function value, the NIP term in the cost function and PSNR on the training batches as the optimization progresses.
\begin{figure}
  \centering
  \includegraphics[width=\textwidth]{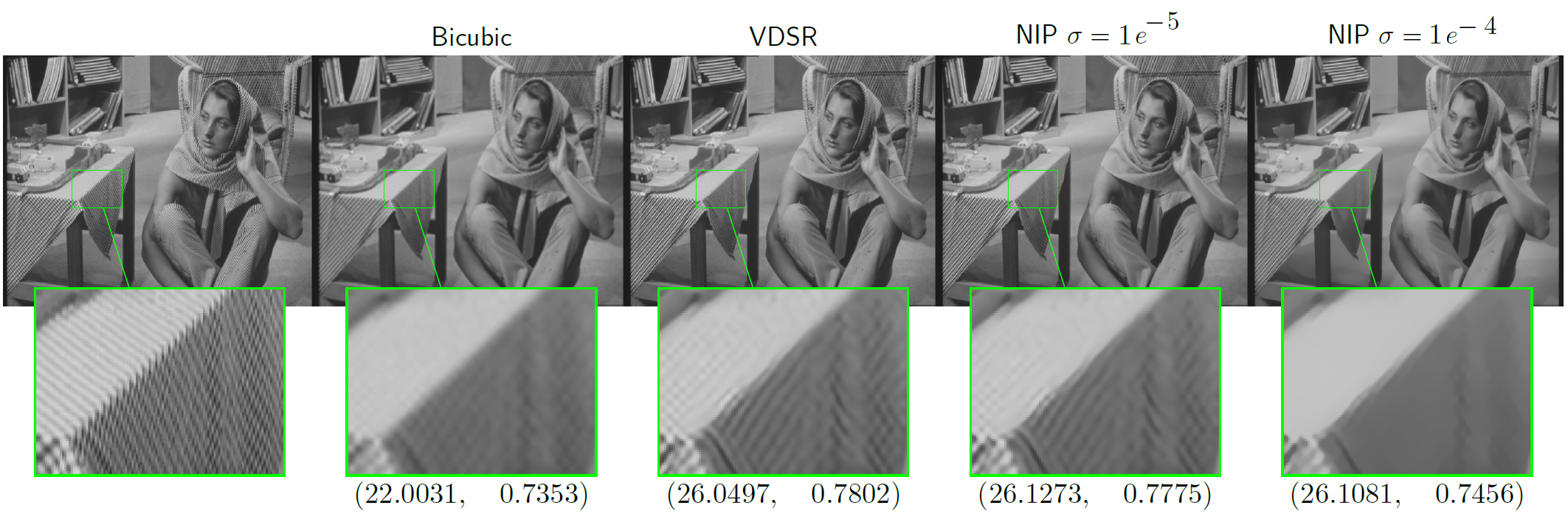}
  \caption{The image ``Woman" from Set 14. Numbers in parenthesis denote the PSNR and SSIM values respectively.}
  \label{Fig:Woman_HighTr}
\end{figure}
\begin{figure}
  \centering
  \includegraphics[width=\textwidth]{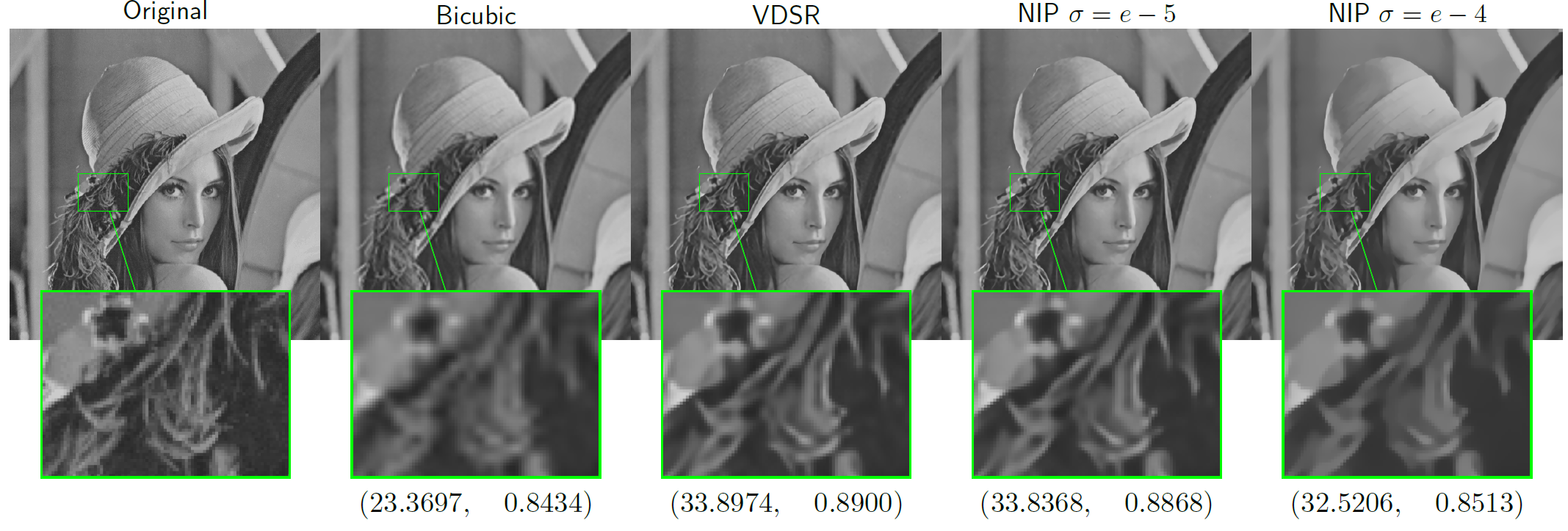}
  \caption{The image ``Lenna" from Set 14. Numbers in parenthesis denote the PSNR and SSIM values respectively.}
  \label{Fig:Lenna_HighTr}
\end{figure}
\begin{table}
\caption{ {Quantitative Results average over Set 14} } 
\centering
\begin{tabular}{c|c c  } 
\hline\hline 
Method      & SSIM              & PSNR       \\  
\hline 
VDSR                 &    \textbf{0.8301}  &     \textbf{29.7396}  \\
Bicubic                  &    0.7427  &     22.4457  \\
NIP $\sigma=e^{-4}$ &    0.7776  &    28.8047      \\
NIP $\sigma=e^{-5}$ &    0.8264  &    29.7242      \\
\hline 
\end{tabular}
\label{Table:HighTrSet14} 
\end{table}
\begin{figure}
      \centering
      \includegraphics[width=0.85\textwidth]{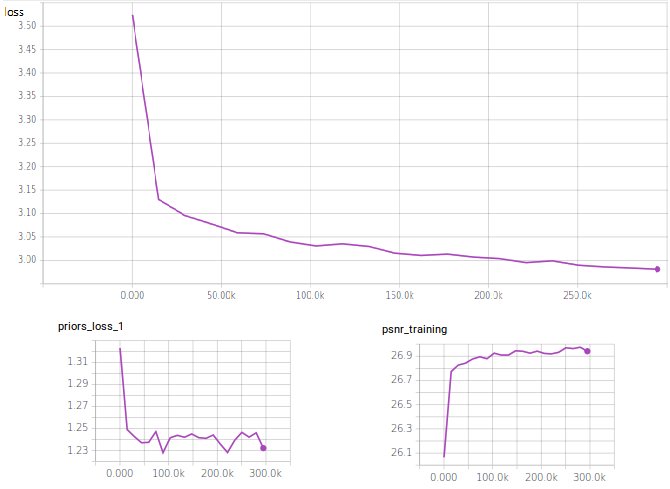}
      \caption{Evolution of cost function value, the NIP term in the cost function and PSNR on the training batches as the optimization progresses }
      \label{Fig:diagrams_HighTr}
\end{figure}

\subsection{NIP with Limited Training}
In this section, we investigate the performance of well-known super-resolution methods such as VDSR in low training scenario and provide evidence that how incorporating priors in the learning stage of neural networks can help alleviate the problem of lack of training data. We partially use the database and exploit only 5000 sample training patches. We first train the VDSR network with this limited training data. Unsurprisingly, the performance of VDSR degrades both visually and quantitatively based on PSNR and SSIM values and is illustrated in Fig \ref{Fig:baboon_HighLowExtLow_Tr}.

\begin{figure}
  \centering
  \includegraphics[width=\textwidth]{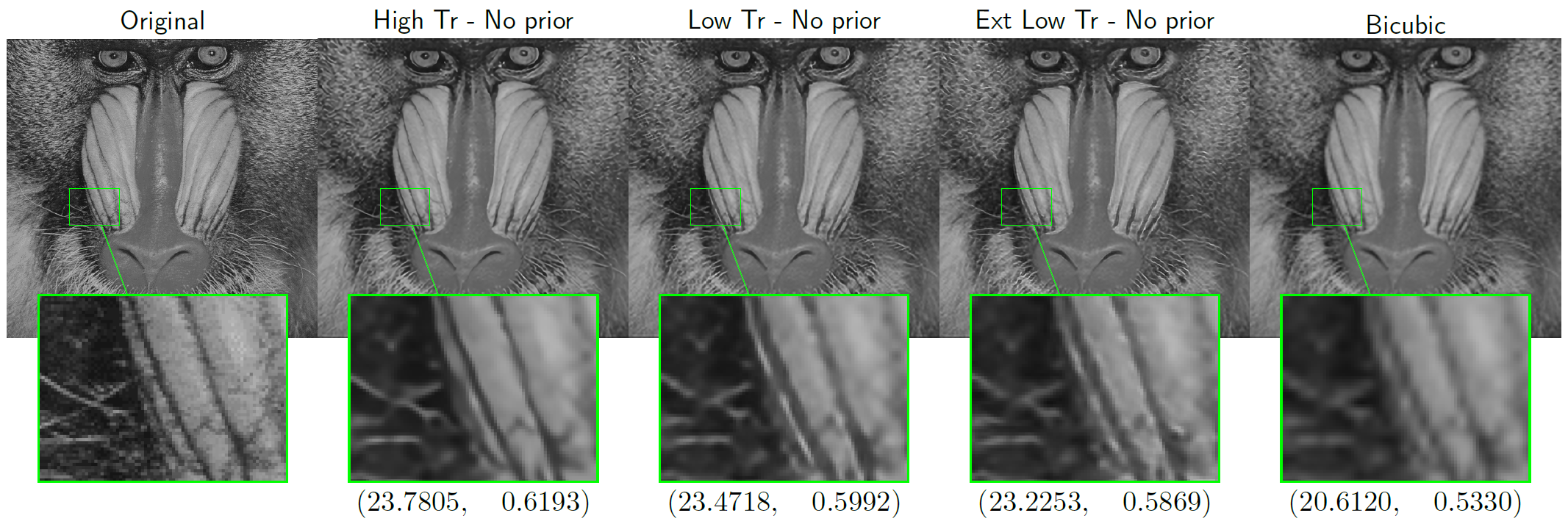}
  \caption{Performance of VDSR network under different training scenarios. Scenarios from left to right: Original ground truth image - VDSR trained with abundant training - VDSR trained low training data (5000 pairs) - VDSR trained with extremely low training data (1000 pairs) - Bicubic results. Numbers in parenthesis denote the PSNR and SSIM values respectively.}
  \label{Fig:baboon_HighLowExtLow_Tr}
\end{figure}

Next, we train our NIP network with low training data and compare it with the previously learned VDSR network without any prior knowledge and again in the low training scenario. We use $\sigma = e^{-7}$ which is obtained from a cross-validation procedure on a separate set of images. Experimental results on three different images are shown in Fig \ref{Fig:monarch_LowTr_VDSRvsNIP} to \ref{Fig:ppt3_LowTr_VDSRvsNIP}. In addition to quality measure improvements on these images based on PSNR and SSIM values, it is visually apparent that introducing priors in the low training scenario reduces artifacts around edges and provides visually more pleasant images. Table \ref{Table:LowTr} provides the average performance of NIP with priors compared to VDSR method with low training. It is now clear that in absence of abundant training data, presence of priors helps to improve the SR results. Essentially, priors compensate for the lack of enough training data for learning fine structures in the network. Fig. \ref{Fig:diagrams_LowTr} also shows the evolution of cost function value, the NIP term in the cost function and PSNR on the training batches as the optimization progresses in the low training scenario.

\begin{figure}
  \centering
  \includegraphics[width=\textwidth]{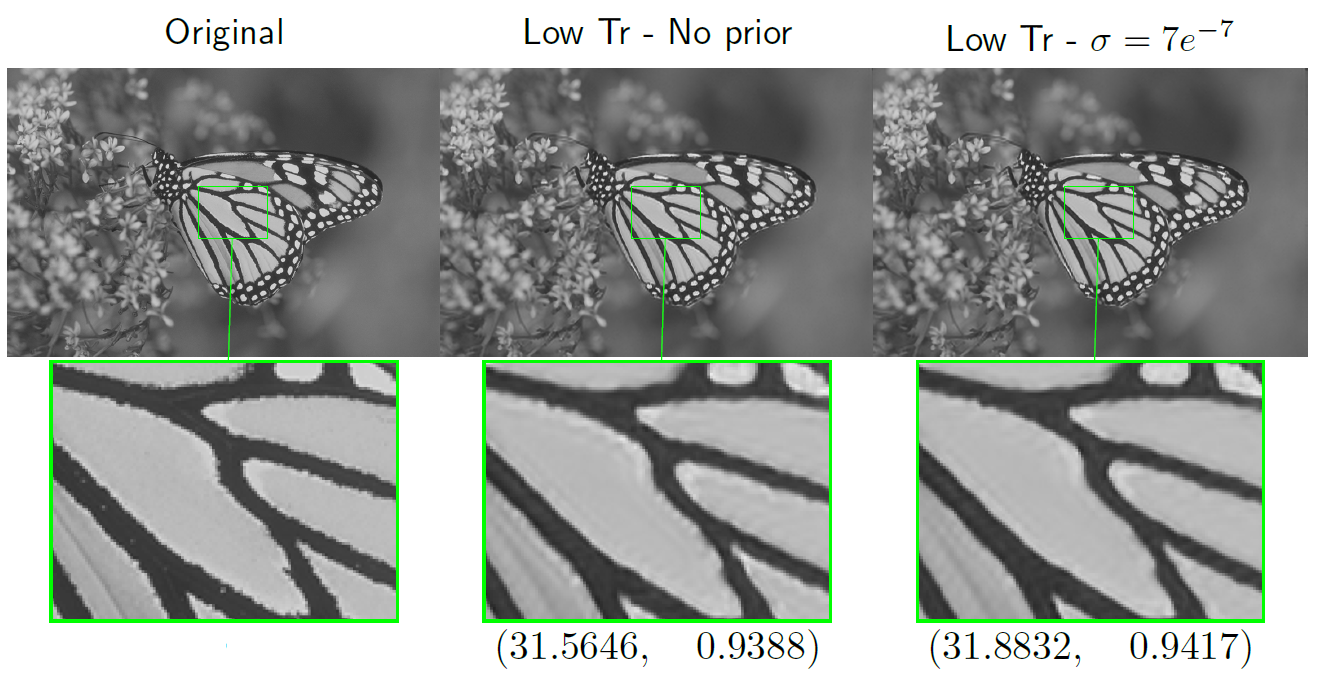}
  \caption{``monarch" - Performance of NIP network with and without prior knowledge. From left to right: Original ground truth image - VDSR trained with low training and no priors - NIP network trained with low training exploiting prior knowledge. Numbers in parenthesis denote the PSNR and SSIM values respectively.}
  \label{Fig:monarch_LowTr_VDSRvsNIP}
\end{figure}
\begin{figure}
  \centering
  \includegraphics[width=\textwidth]{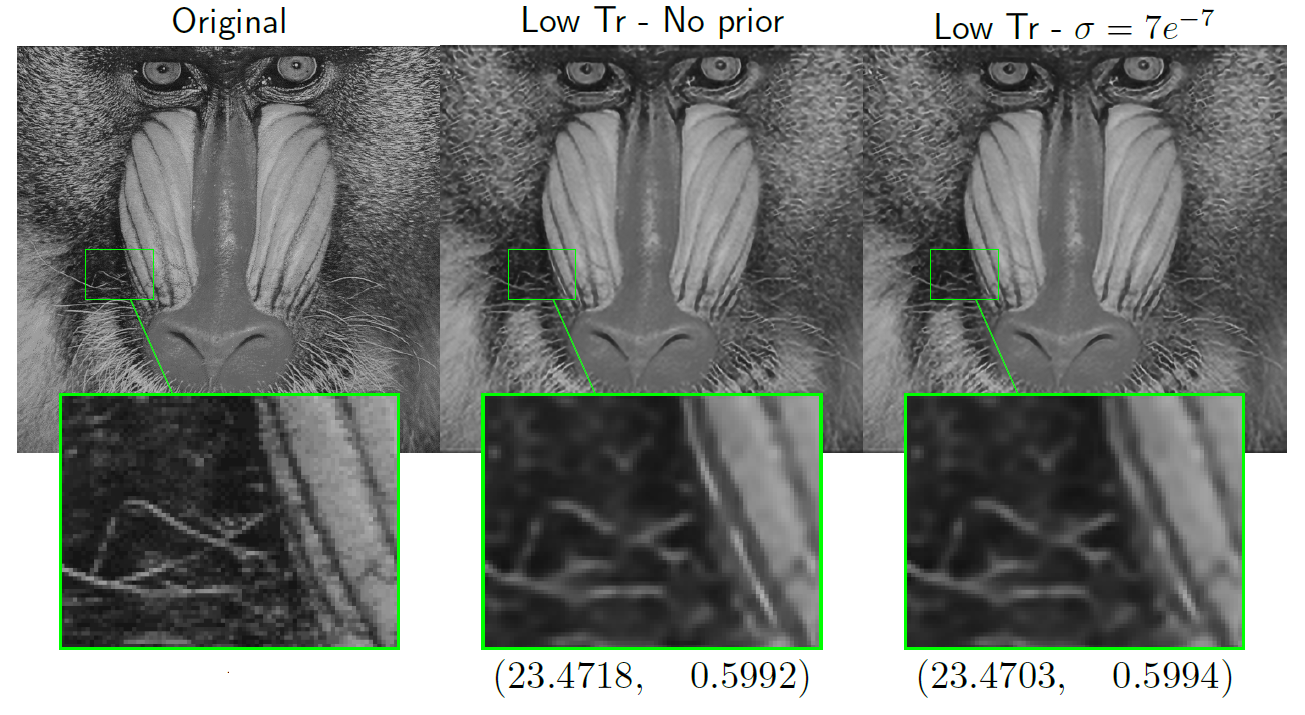}
  \caption{``baboon" - Performance of NIP network with and without prior knowledge. From left to right: Original ground truth image - VDSR trained with low training and no priors - NIP network trained with low training exploiting prior knowledge. Numbers in parenthesis denote the PSNR and SSIM values respectively.}
  \label{Fig:baboon_LowTr_VDSRvsNIP}
\end{figure}
\begin{figure}
  \centering
  \includegraphics[width=\textwidth]{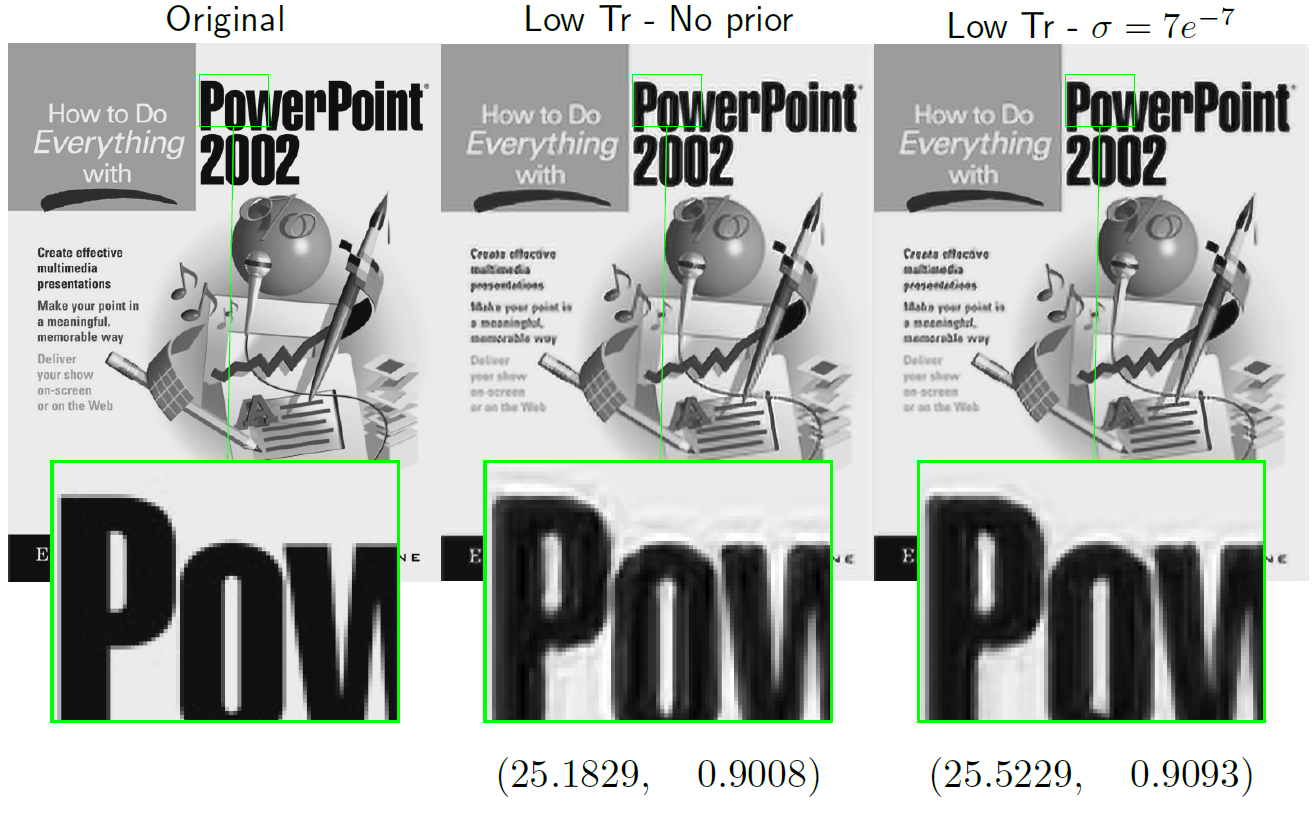}
  \caption{``ppt3" - Performance of NIP network with and without prior knowledge. From left to right: Original ground truth image - VDSR trained with low training and no priors - NIP network trained with low training exploiting prior knowledge. Numbers in parenthesis denote the PSNR and SSIM values respectively.}
  \label{Fig:ppt3_LowTr_VDSRvsNIP}
\end{figure}
\begin{table}
\caption{ {Quantitative results averaged over Set 14 in low training scenario} } 
\centering
\begin{tabular}{c c c  } 
\hline\hline 
Method      & SSIM              & PSNR       \\  
\hline 
Bicubic                               &    0.7427  &     22.4457  \\
VDSR - no prior                  &    0.8039  &     28.4035  \\
NIP                              &    \textbf{0.8051}  &    \textbf{28.4660}     \\
\hline 
\end{tabular}
\label{Table:LowTr} 
\end{table}
\begin{figure}
      \centering
      \includegraphics[width=\textwidth]{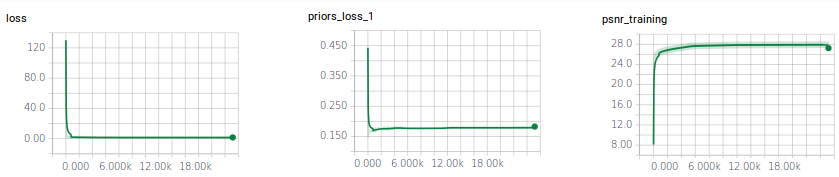}
      \includegraphics[width=\textwidth]{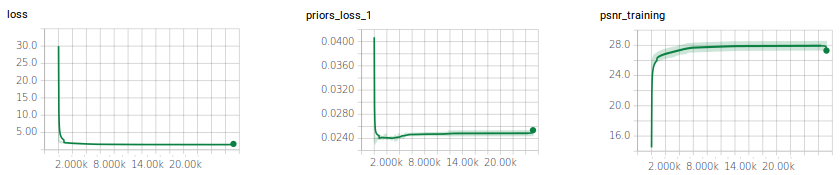}
        \caption{Evolution of cost function value, the NIP term in the cost function and PSNR on the training batches as the optimization progresses in low training scenario. Top row corresponds to VDSR with low training and bottom row corresponds to NIP network which uses prior knowledge.  }
        \label{Fig:diagrams_LowTr}
\end{figure}

Finally, to show the importance of image priors and how they can help the super-resolution task, we perform another experiment in which the amount of available training data is further limited. We reduced the amount of training sample pairs from 5000 to 1000 in this new experiment (namely extremely low training scenario) and repeated the procedure explained above. Using a cross-validation technique, $\sigma$ is picked to be $7e^{-6}$. Fig. \ref{Fig:baboon_LowTr_VDSRvsNIP} shows the ``monarch" image and compares the ground truth image with the VDSR method and NIP under extremely low training scenario. Prevalence of NIP with priors is also visually clear especially in the zoomed regions and also in terms of SSIM and PSNR. To further support that the notion of priors in lack of sufficient training is very beneficial, Table \ref{Table:ExtLowTr} provides the average performance of NIP with priors compared to the VDSR method in this extremely low training scenario.

\begin{figure}
  \centering
  \includegraphics[width=\textwidth]{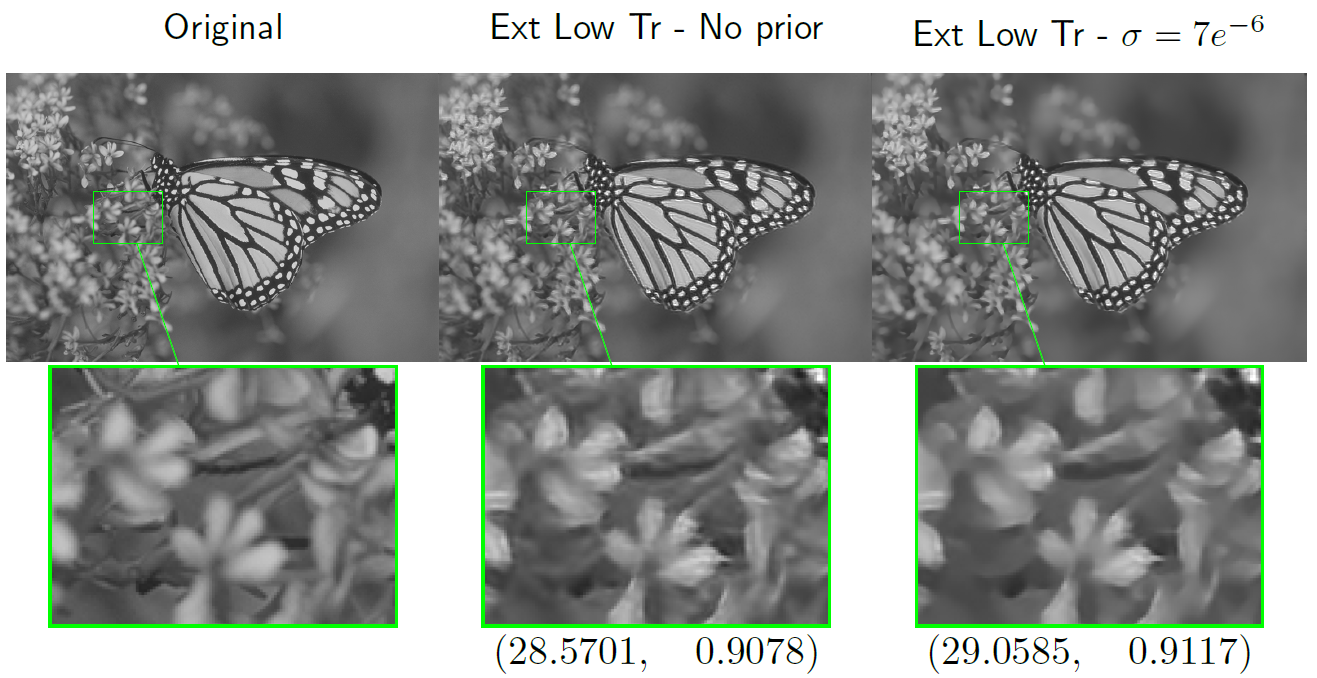}
   \caption{Performance of NIP network with and without prior knowledge. From left to right: Original ground truth image - VDSR trained with extremely low training and no priors - NIP network trained with extremely low training exploiting prior knowledge. Numbers in parenthesis denote the PSNR and SSIM values respectively.}
  \label{Fig:baboon_LowTr_VDSRvsNIP}
\end{figure}
\begin{table}
\caption{ {Quantitative Results average over Set 14 in extremely  low training scenario} } 
\centering
\begin{tabular}{c c c } 
\hline\hline 
Method      & SSIM              & PSNR     \\  
\hline 
Bicubic                               &    0.7427  &     22.4457 \\
VDSR -  no prior                 &    \textbf{0.7663}  &     26.8243 \\
NIP                              &    0.7550  &    \textbf{26.9855}     \\
\hline 
\end{tabular}
\label{Table:ExtLowTr} 
\end{table}

\section{Conclusions and Future work}
In this section, we first provided a novel deep wavelet super-resolution (DWSR) technique that recovers the ``missing details'' by using (low-resolution) wavelet sub-bands as inputs. DWSR is significantly economical in the number of parameters compared to most state-of-the-art methods and yet achieves competitive or better results. We contend that this is because wavelets provide an image representation that naturally sparsifies the mappings to be learned. While we used the Haar wavelet, effects of different wavelet bases can be examined in future work. Of particular interest could be to learn the ``optimal" wavelet basis for the SR task. Further, we investigated the effect of training data on performance of super-resolution networks in deep learning and proposed the use of natural image priors to encourage some notion of sparsity in the edge statistics of the image. This physically meaningful prior information on natural images shows promising performance improvement particularly in absence of abundant training data and shows the benefits of proposed NIP priors.
As viable future research direction for this line of work, we propose to identify other meaningful physical prior information for use in SR tasks and to demonstrate its benefits, especially in low training data scenarios.

\chapter{Conclusions and Future Work}
\label{chapter:conclusion}

\section{Summary of Main Contributions}
The overarching theme in this research is the design of \emph{signal recovery}  algorithms by exploiting  \emph{physically meaningful prior information}. On the theoretical front, This dissertation solves challenging problems in sparse signal and image processing. Using sparsity as a prior is tremendously interesting in a wide variety of applications; however, existing solutions to address this issue were sub-optimal and often fail to capture the intrinsic sparse structure of physical phenomenon. We address a very fundamental question in this area of how to efficiently and effectively capture sparsity in natural signals. More specifically, capturing general sparse structure in signals and images is a very challenging task and is considered an NP-hard mixed integer problem. However, in this work, we tried to break the trade-off between computational burden and performance where we proposed a novel method called Iterative Convex Refinement (ICR) for sparse recovery. In this work, a sequence of tractable convex optimization problems are solved in order to solve the original hard non-convex mixed integer programming problem for sparse recovery. ICR resulted in solutions that are far less computationally expensive and provides significant performance improvement over existing state-of-the-art solutions. Many signal processing problems in computer vision and recognition world can benefit from this result in sparse signal processing. 

On the other hand, one of the most significant challenges in signal recovery and image processing is the enhancement of image quality. We again addressed this challenge by using prior knowledge from physically meaningful assumptions on the sparse signals. In particular, we pose probabilistic priors to promote sparsity on design parameters of the problem and showed performance improvements in many applications including image super-resolution.

Throughout this dissertation, we proposed extensions of the super-resolution task where we addressed this problem from two different perspectives with the common theme of exploiting prior information: (1) extension of sparsity-based super-resolution problems
to color channels by taking edge similarities amongst RGB color bands into account as cross-channel correlation constraints. (2) view the super-resolution problem from a deep learning standpoint and provide deep network architectures designed for using structural knowledge of images for super-resolution. Furthermore, we investigated exploiting sparsifying priors into deep networks and analyzed their influence on the performance of super-resolution especially in the absence of abundant amount of training data.

In order to verify that the use of prior information  is indeed beneficial in a variety of scenarios including low training scenarios, we considered  important real-world applications including: (1) Signal Recovery with sparsifying priors
(2) Image Recovery and
(3) Image Super-Resolution with cross-channel constraints and natural image priors.
In each problem, we observed that our prior model exhibited performance boosts and robustness
to low training scenarios.

\section{Potential Future Research Directions}
The contributions in the previous chapters naturally point towards various directions for
future research. We mention some of the possible extensions in this section.
\subsection{Signal and Image Recovery}
Future research may investigate further analysis of our Iterative Convex Refinement (ICR) properties and provide more analytical evidence about the convergence of ICR and its rate of convergence. Accelerating the ICR algorithm and parameter learning for obtaining more accurate recovery results can be further investigated. Also, extensions to multi-task scenarios where measurements are available from multiple sources is a viable research direction.
In this line of research exploiting different sparsifying priors and extension to collaborative signal recovery is the most reasonable and promising future research direction.

For image restoration, we used a color prior for super-resolution task. While our multi-channel super-resolution work incorporated signal priors to capture cross-channel color constraints, chrominance geometry can be captured via many different ways as is suggested in \cite{Srinivas:ColorSR_CIC2011, Farsiu:ColorDemosaicSR_TIP2006, Keren:ColorSR_1999MachineVision, Dai:SoftCutColorSR_2009TIP}. Incorporating these as constraints or regularizers in a sparsity-based color SR framework forms a viable direction for future work. For example, prior knowledge of gradient maps of images or edge structures can be used for boosting the performance of super-resolution. Especially when the training data is not readily available.

\subsection{Deep Learning for Inverse Problems in Computational Imaging}

Clearly, the most recent trend in many of computational imaging problems is deep learning, which has arisen as a promising framework providing
state-of-the-art performance for many other applications including but not limited to image classification, segmentation, etc.  Moreover, regression-type neural networks demonstrated significant improvements in results on inverse problems such as denoising \cite{burger2012image, xie2012image}, deconvolution \cite{xu2014deep} and super-resolution \cite{dong2014learning}.

More recently, researchers are investigating the link between conventional approaches in signal processing and deep learning
networks \cite{gregor2010learning, chen2015learning}. For instance, LeCun et al. \cite{gregor2010learning} explored the similarity between the ISTA method \cite{daubechies2004iterative} and a specific neural network and demonstrated that layer-wise neural networks act as an approximated sparse coder \cite{jin2017deep}.

Despite these, many practical and theoretical questions remain regarding how to benefit from conventional approaches in deep learning frameworks. For example, in scenarios where training data is not sufficiently available, how we can benefit from the established methods in conventional signal processing to help deep frameworks. The main question here is can the same gain as in the conventional established methods be realized in deep learning frameworks?

Along the same direction, we explored the potentials of regularized deep neural networks in this dissertation. Motivated by the fact that prior information can be beneficial in low training scenarios in conventional learning-based methods, we investigate the use of prior knowledge for image super-resolution from a deep learning standpoint.  However, regularized networks with different kinds of priors can be applied on a various range of applications in computational imaging. Among these, super-resolution, denoising, inpainting, etc. are the most common ones.
\begin{itemize}

\item Super-resolution: Specifically, deep super-resolution where we tackled in this dissertation by applying prior knowledge has many potentials for future work. Specifically, From a deep learning standpoint, there are many potential future directions. For example, deep learning community always builds their models with the assumption of having abundant training data. However, as stated before this is not always the case and there are many applications especially in the medical side that the training data is often limited, e.g. MRI image reconstruction, CT super resolution, etc. To the best of our knowledge, we were among the firsts to address such issue in the deep neural networks and we believe there is significant and potential room for improvement in this direction.
As viable future research direction for this line of work, we propose to identify other meaningful physical prior information for use in deep networks for the SR tasks and to demonstrate its benefits, especially in low training data scenarios.

  \item Denoising: Observed image signals are often corrupted by acquisition channel. The goal of
image restoration techniques is to restore the original image from a noisy observation of it. Image
denoising  is a very common image recovery problem. Deep learning community has successfully addressed this problem \cite{xie2012image} but the assumption here is the availability of generous amount of training data which is not always readily available. For example in many medical imaging applications, or target detection in radar applications, training data is very limited and deep frameworks may fail in this situation. However, as suggested by conventional methods, regularizing the learning procedure with prior knowledge or physically meaningful priors may help in these scenarios.

\item Inpainting is another example of an inverse problem in computational imaging. This problem has also a long history of literature from conventional methods to recent deep frameworks \cite{xie2012image, jin2017deep}. This problem has also the same limitation when it comes to performance in the low training scenarios. Using prior knowledge or regularizers for the deep structure can be further investigated for this inverse problem as well. Such priors can be used in different forms such as the knowledge of gradient maps of the images, edge information \cite{kim2010single} or color constraints \cite{Srinivas:ColorSR_CIC2011}.


\end{itemize}



%


} 

   \begin{singlespace}
   \bibliographystyle{IEEEtran}
   \addcontentsline{toc}{chapter}{Bibliography}
   \bibliography{Biblio-Database,IEEEabrv}

\begin{thebibliography}{100}
\providecommand{\url}[1]{#1}
\csname url@rmstyle\endcsname
\providecommand{\newblock}{\relax}
\providecommand{\bibinfo}[2]{#2}
\providecommand\BIBentrySTDinterwordspacing{\spaceskip=0pt\relax}
\providecommand\BIBentryALTinterwordstretchfactor{4}
\providecommand\BIBentryALTinterwordspacing{\spaceskip=\fontdimen2\font plus
\BIBentryALTinterwordstretchfactor\fontdimen3\font minus
  \fontdimen4\font\relax}
\providecommand\BIBforeignlanguage[2]{{%
\expandafter\ifx\csname l@#1\endcsname\relax
\typeout{** WARNING: IEEEtran.bst: No hyphenation pattern has been}%
\typeout{** loaded for the language `#1'. Using the pattern for}%
\typeout{** the default language instead.}%
\else
\language=\csname l@#1\endcsname
\fi
#2}}

\bibitem{CNN_Tutorial}
\BIBentryALTinterwordspacing
A.~Karpathy. (2017) Cs231n convolutional neural networks for visual
  recognition. [Online]. Available:
  \url{http://cs231n.github.io/convolutional-networks/}
\BIBentrySTDinterwordspacing

\bibitem{Taubman:JPEG_Book01}
D.~S. Taubman and M.~W. Marcellin, \emph{{JPEG 2000: Image Compression
  Fundamentals, Standards and Practice}}.\hskip 1em plus 0.5em minus
  0.4em\relax Kluwer Academic, Norwell, MA, 2001.

\bibitem{Olshausen_SimpleCellReceptive_Nature96}
B.~A. Olshausen and D.~J. Field, ``Emergence of simple-cell receptive field
  properties by learning a sparse code for natural images,'' \emph{Nature}, no.
  381, pp. 607--609, July 1996.

\bibitem{Candes:ExactRecons_InfoTheory06}
E.~Candes, J.~Romberg, and T.~Tao, ``Robust uncertainty principles: Exact
  signal reconstruction from highly incomplete frequency information,''
  vol.~52, no.~2, pp. 489--509, Feb 2006.

\bibitem{Lustig:CSforMRI_MRM07}
M.~Lustig, D.~L. Donoho, and J.~L. Pauly, ``Sparse {MRI}: The application of
  compressed sensing for rapid {MR} imaging,'' \emph{Magnetic Resonance in
  Medicine}, vol.~58, pp. 1182--1195, 2007.

\bibitem{Donoho:CS_InfoTheory06}
D.~Donoho, ``Compressed sensing,'' vol.~52, no.~4, pp. 1289--1306, Apr 2006.

\bibitem{Cevher_SparseRecoveryGraphModels_SPM2010}
V.~Cevher, P.~Indyk, L.~Carin, and R.~G. Baraniuk, ``Sparse signal recovery and
  acquisition with graphical models,'' \emph{Signal Processing Magazine, IEEE},
  vol.~27, no.~6, pp. 92--103, 2010.

\bibitem{Wright:SRC_PAMI2009}
J.~Wright, A.~Y. Yang, A.~Ganesh, S.~S. Sastry, and Y.~Ma, ``Robust face
  recognition via sparse representation,'' \emph{IEEE Trans.\ on Pattern
  Analysis and Machine Int.}, vol.~31, no.~2, pp. 210--227, 2009.

\bibitem{Pillai:IrisSRC_PAMI2011}
J.~K. Pillai, V.~M. Patel, R.~Chellappa, and N.~Ratha, ``Secure and robust iris
  recognition using sparse representations and random projections,''
  \emph{{IEEE} Trans. Pattern Anal. Machine Intell.}, vol.~33, no.~9, pp.
  1877--1893, Sept. 2011.

\bibitem{Yu:GroupSparsity_ISBI11}
Y.~Yu, J.~Huang, S.~Zhang, C.~Restif, X.~Huang, and D.~Metaxas, ``Group
  sparsity based classification for cervigram segmentation,'' in \emph{Proc.\
  IEEE Int.\ Symp.\ Biomed.\ Imag.}, 2011, pp. 1425--1429.

\bibitem{Yuan:GroupRegression_RoyalStatSociet2006}
M.~Yuan and Y.~Lin, ``Model selection and estimation in regression with grouped
  variables,'' \emph{Journal of the Royal Statistical Society: Series B
  (Statistical Methodology)}, vol.~68, no.~1, pp. 49--67, 2006.

\bibitem{Sprechmann:CHI-LASSO_TSP2011}
P.~Sprechmann, I.~Ramirez, G.~Sapiro, and Y.~C. Eldar, ``C-hilasso: A
  collaborative hierarchical sparse modeling framework,'' \emph{IEEE Trans.\ on
  Signal Processing}, vol.~59, no.~9, pp. 4183--4198, 2011.

\bibitem{Majumdar:GroupSparseClassification_ICASSP09}
A.~Majumdar and R.~K. Ward, ``Classification via group sparsity promoting
  regularization,'' 2009, pp. 861--864.

\bibitem{Majumdar:GroupSparseClassifier_PatternRecog10}
------, ``Improved group sparse classifier,'' \emph{Pattern Recogn. Lett.},
  vol.~31, no.~13, pp. 1959--1964, Oct. 2010.

\bibitem{Tychonoff:IllPosedProblems_Book77}
A.~N. Tikhonov and V.~Y. Arsenin, \emph{Solution of Ill-posed Problems}.\hskip
  1em plus 0.5em minus 0.4em\relax Winston \& Sons, 1977.

\bibitem{Mohimani:fast_l_0_TSP2009}
H.~Mohimani, M.~Babaie-Zadeh, and C.~Jutten, ``A fast approach for overcomplete
  sparse decomposition based on smoothed norm,'' \emph{IEEE Trans.\ on Signal
  Processing}, vol.~57, no.~1, pp. 289--301, 2009.

\bibitem{Tropp:OMP_InfoTheory2007}
J.~A. Tropp and A.~C. Gilbert, ``Signal recovery from random measurements via
  orthogonal matching pursuit,'' \emph{IEEE Trans.\ on Info.\ Theory}, vol.~53,
  no.~12, pp. 4655--4666, 2007.

\bibitem{Tropp:ConvexSOMP_2006Elsevier}
J.~A. Tropp, ``Algorithms for simultaneous sparse approximation. part ii:
  Convex relaxation,'' \emph{Signal Processing}, vol.~86, no.~3, pp. 589--602,
  2006.

\bibitem{Baraniuk:Model_CS_InfoTheory2010}
R.~G. Baraniuk, V.~Cevher, M.~F. Duarte, and C.~Hegde, ``Model-based
  compressive sensing,'' \emph{IEEE Trans.\ on Image Processing}, vol.~56,
  no.~4, pp. 1982--2001, 2010.

\bibitem{JiAndCarin:BayesianCS_TSP2008}
S.~Ji, Y.~Xue, and L.~Carin, ``Bayesian compressive sensing,'' \emph{IEEE
  Trans.\ on Signal Processing}, vol.~56, no.~6, pp. 2346--2356, 2008.

\bibitem{Carin:WaveletBayesCS_TSP2009}
L.~He and L.~Carin, ``Exploiting structure in wavelet-based bayesian
  compressive sensing,'' \emph{IEEE Trans.\ on Signal Processing}, vol.~57,
  no.~9, pp. 3488--3497, 2009.

\bibitem{Babacan_BayesianCSLaplacePriors_TIP2010}
S.~Babacan, R.~Molina, and A.~Katsaggelos, ``Bayesian compressive sensing using
  laplace priors,'' \emph{IEEE Trans.\ on Image Processing}, vol.~19, no.~1,
  pp. 53--63, 2010.

\bibitem{Andersen:BayesianSpikeSlab_NIPS2014}
M.~R. Andersen, O.~Winther, and L.~K. Hansen, ``Bayesian inference for
  structured spike and slab priors,'' in \emph{Advances in Neural Information
  Processing Systems}, 2014, pp. 1745--1753.

\bibitem{Eldar:MultiChannelSparseRecov_InfoTheo2010}
Y.~C. Eldar and H.~Rauhut, ``Average case analysis of multichannel sparse
  recovery using convex relaxation,'' \emph{IEEE Trans.\ on Info.\ Theory},
  vol.~56, no.~1, pp. 505--519, 2010.

\bibitem{Ishwaran_SpikeSlab_AnnStat2005}
H.~Ishwaran and J.~S. Rao, ``Spike and slab variable selection: frequentist and
  bayesian strategies,'' \emph{Annals of Statistics}, pp. 730--773, 2005.

\bibitem{George_VariableSelectionGibbsSampling_StatAssoc1993}
E.~I. George and R.~E. McCulloch, ``Variable selection via gibbs sampling,''
  \emph{Journal of the American Statistical Association}, vol.~88, no. 423, pp.
  881--889, 1993.

\bibitem{Chipman_BaysianVariableSelection_Stat1996}
H.~Chipman, ``Bayesian variable selection with related predictors,''
  \emph{Canadian Journal of Statistics}, vol.~24, no.~1, pp. 17--36, 1996.

\bibitem{Carvalho_SparseFactorModeling_StatAssoc2008}
C.~M. Carvalho, J.~Chang, J.~E. Lucas, J.~R. Nevins, Q.~Wang, and M.~West,
  ``High-dimensional sparse factor modeling: applications in gene expression
  genomics,'' \emph{Journal of the American Statistical Association}, vol. 103,
  no. 484, 2008.

\bibitem{Cevher_LearningCompressiblePriors_NIPS2009}
V.~Cevher, ``Learning with compressible priors,'' in \emph{Advances in Neural
  Information Processing Systems}, 2009, pp. 261--269.

\bibitem{Cevher:SparseRecovGraphicalModel_SPMagaz2010}
V.~Cevher, P.~Indyk, L.~Carin, and R.~G. Baraniuk, ``Sparse signal recovery and
  acquisition with graphical models,'' \emph{Signal Processing Magazine, IEEE},
  vol.~27, no.~6, pp. 92--103, 2010.

\bibitem{Mitchell:BayesVarSelectSpikeSlab_StatAssoc1988}
T.~J. Mitchell and J.~J. Beauchamp, ``Bayesian variable selection in linear
  regression,'' \emph{Journal of the American Statistical Association},
  vol.~83, no. 404, pp. 1023--1032, 1988.

\bibitem{Srinivas:MetaClassifierATR_RadarConf2011}
U.~Srinivas, V.~Monga, and R.~G. Raj, ``Meta-classifiers for exploiting feature
  dependencies in automatic target recognition,'' pp. 147--151, 2011.

\bibitem{Zhang:ATR_EUSAR2000}
J.~Zhang and J.~Schroeder, ``Small target detection in lo-resolution sar
  imagery using the svd,'' \emph{EUSAR}, May 2000.

\bibitem{Zhang:MultiViewATR_taes12}
H.~Zhang, N.~M. Nasrabadi, Y.~Zhang, and T.~S. Huang, ``Multi-view automatic
  target recognition using joint sparse representation,'' \emph{{IEEE} Trans.
  Aerosp. Electron. Syst.}, vol.~48, no.~3, pp. 2481--2497, July 2012.

\bibitem{Srinivas:SARATR_TAES2014}
U.~Srinivas, V.~Monga, and R.~G. Raj, ``Sar automatic target recognition using
  discriminative graphical models,'' \emph{Aerospace and Electronic Systems,
  IEEE Transactions on}, vol.~50, no.~1, pp. 591--606, 2014.

\bibitem{Bahrampour:KernelDicLearn_Arxiv2015}
S.~Bahrampour, N.~M. Nasrabadi, A.~Ray, and W.~K. Jenkins, ``Kernel task-driven
  dictionary learning for hyperspectral image classification,'' \emph{arXiv
  preprint arXiv:1502.03126}, 2015.

\bibitem{Grosse:SparseCodingAudio_Arxiv2012}
R.~Grosse, R.~Raina, H.~Kwong, and A.~Y. Ng, ``Shift-invariance sparse coding
  for audio classification,'' \emph{arXiv preprint arXiv:1206.5241}, 2012.

\bibitem{Elad:SparsityRoleImageProcc_IEEE2010}
M.~Elad, M.~A. Figueiredo, and Y.~Ma, ``On the role of sparse and redundant
  representations in image processing,'' \emph{Proceedings of the IEEE},
  vol.~98, no.~6, pp. 972--982, 2010.

\bibitem{Wright:SparsityComputerVision_IEEE2010}
J.~Wright, Y.~Ma, J.~Mairal, G.~Sapiro, T.~S. Huang, and S.~Yan, ``Sparse
  representation for computer vision and pattern recognition,''
  \emph{Proceedings of the IEEE}, vol.~98, no.~6, pp. 1031--1044, 2010.

\bibitem{Elhamifar:SparseSubspaceClustering_PAMI2013}
E.~Elhamifar and R.~Vidal, ``Sparse subspace clustering: Algorithm, theory, and
  applications,'' \emph{IEEE Trans.\ on Pattern Analysis and Machine Int.},
  vol.~35, no.~11, pp. 2765--2781, 2013.

\bibitem{YangAndWright:SparseSR_TIP2010}
J.~Yang, J.~Wright, T.~S. Huang, and Y.~Ma, ``Image super-resolution via sparse
  representation,'' \emph{IEEE Trans.\ on Image Processing}, vol.~19, no.~11,
  pp. 2861--2873, 2010.

\bibitem{YangAndWright2:SparseSR_CVPR2010}
J.~Yang, J.~Wright, T.~Huang, and Y.~Ma, ``Image super-resolution as sparse
  representation of raw image patches,'' in \emph{Proc.\ IEEE Conf.\ Computer
  Vision Pattern Recognition}.\hskip 1em plus 0.5em minus 0.4em\relax IEEE,
  2008, pp. 1--8.

\bibitem{Zhang:JointDynamicSparseFaceRecognition_pattern12}
H.~Zhang, N.~M. Nasrabadi, Y.~Zhang, and T.~S. Huang, ``Joint dynamic sparse
  representation for multi-view face recognition,'' \emph{Pattern Recognition},
  vol.~45, no.~4, pp. 1290 -- 1298, 2012.

\bibitem{Yuan:VisualClassificationMultitask_TIP12}
X.-T. Yuan, X.~Liu, and S.~Yan, ``Visual classification with multitask joint
  sparse representation,'' \emph{{IEEE} Trans. Image Processing}, vol.~21,
  no.~10, pp. 4349--4360, Oct. 2012.

\bibitem{Gross:MultiPIE}
R.~Gross, I.~Matthews, J.~Cohn, T.~Kanade, and S.~Baker, ``Multi-pie,''
  \emph{Image Vision Comput.}, vol.~28, no.~5, pp. 807--813, May 2010.

\bibitem{Guha:learning_PAMI12}
T.~Guha and R.~K. Ward, ``Learning sparse representations for human action
  recognition,'' \emph{IEEE Trans.\ on Pattern Analysis and Machine Int.},
  vol.~34, no.~8, pp. 1576--1588, 2012.

\bibitem{Bahrampour:TreeSparsity_CVPR2014}
S.~Bahrampour, A.~Ray, N.~M. Nasrabadi, and K.~W. Jenkins, ``Quality-based
  multimodal classification using tree-structured sparsity,'' in \emph{Proc.\
  IEEE Conf.\ Computer Vision Pattern Recognition}.\hskip 1em plus 0.5em minus
  0.4em\relax IEEE, 2014, pp. 4114--4121.

\bibitem{Srinivas:SHIRC_ISBI2013}
U.~Srinivas, H.~S. Mousavi, C.~Jeon, V.~Monga, A.~Hattel, and B.~Jayarao,
  ``{SHIRC}: A simultaneous sparsity model for histopathological image
  representation and classification,'' in \emph{Proc.\ IEEE Int.\ Symp.\
  Biomed.\ Imag.}, 2013, pp. 1118--1121.

\bibitem{Srinivas:SHIRC_TMI2014}
U.~Srinivas, H.~S. Mousavi, V.~Monga, A.~Hattel, and B.~Jayarao, ``Simultaneous
  sparsity model for histopathological image representation and
  classification.'' \emph{IEEE Trans.\ on Medical Imaging}, vol.~33, no.~5, pp.
  1163--1179, 2014.

\bibitem{Nasrabadi:MultiSensorFusion_Fusion11}
N.~Nguyen, N.~Nasrabadi, and T.~Tran, ``Robust multi-sensor classification via
  joint sparse representation,'' in \emph{Information Fusion (FUSION), 2011
  Proceedings of the 14th International Conference on}, July 2011, pp. 1--8.

\bibitem{Tropp:SOMP_Elsevier2006}
J.~A. Tropp, A.~C. Gilbert, and M.~J. Strauss, ``Algorithms for simultaneous
  sparse approximation. part i: Greedy pursuit,'' \emph{Signal Processing},
  vol.~86, no.~3, pp. 572--588, 2006.

\bibitem{Srinivas:ColorSR_CIC2011}
U.~Srinivas, X.~Mo, M.~Parmar, and V.~Monga, ``Image-adaptive color
  super-resolution,'' in \emph{Color and Imaging Conference}, vol. 2011,
  no.~1.\hskip 1em plus 0.5em minus 0.4em\relax Society for Imaging Science and
  Technology, 2011, pp. 120--125.

\bibitem{Srinivas:SSPIC_TIP2015}
U.~Srinivas, Y.~Suo, M.~Dao, V.~Monga, and T.~D. Tran, ``Structured sparse
  priors for image classification,'' \emph{Image Processing, IEEE Transactions
  on}, vol.~24, no.~6, pp. 1763--1776, 2015.

\bibitem{Mousavi:MICHS_ICIP2014}
H.~S. Mousavi, U.~Srinivas, V.~Monga, Y.~Suo, M.~Dao, and T.~D. Tran,
  ``Multi-task image classification via collaborative, hierarchical
  spike-and-slab priors,'' in \emph{Proc.\ IEEE Conf.\ on Image Processing},
  2014, pp. 4236--4240.

\bibitem{electronics4020221}
K.~Kotobi, P.~B. Mainwaring, C.~S. Tucker, and S.~G. Bilén, ``Data-throughput
  enhancement using data mining-informed cognitive radio,'' \emph{Electronics},
  vol.~4, no.~2, pp. 221--238, 2015.

\bibitem{farhat2014towards}
F.~Farhat and S.~Ghaemmaghami, ``Towards blind detection of low-rate spatial
  embedding in image steganalysis,'' \emph{IET Image Processing}, vol.~9,
  no.~1, pp. 31--42, 2014.

\bibitem{Suo1:DirtyDicLearn_ICIP2014}
Y.~Suo, M.~Dao, T.~Tran, H.~Mousavi, U.~Srinivas, and V.~Monga, ``Group
  structured dirty dictionary learning for classification,'' in \emph{Proc.\
  IEEE Conf.\ on Image Processing}, 2014, pp. 150--154.

\bibitem{Pourkamali:CompresiveKSVD_ICASSP2013}
F.~P. Anaraki and S.~M. Hughes, ``Compressive k-svd,'' in \emph{Proc.\ IEEE
  Int.\ on Conf.\ Acoustics, Speech, and Signal Processing}.\hskip 1em plus
  0.5em minus 0.4em\relax IEEE, 2013, pp. 5469--5473.

\bibitem{SadeghiAndBabaiezade1:DicLearnSparse_SPLetter2013}
M.~Sadeghi, M.~Babaie-Zadeh, and C.~Jutten, ``Dictionary learning for sparse
  representation: A novel approach,'' \emph{IEEE Signal Processing Letters},
  vol.~20, no.~12, pp. 1195--1198, Dec 2013.

\bibitem{Vu:DFDL_ISBI2015}
T.~H. Vu, H.~S. Mousavi, V.~Monga, U.~Rao, and G.~Rao, ``{DFDL}: Discriminative
  feature-oriented dictionary learning for histopathological image
  classification,'' \emph{arXiv preprint arXiv:1502.01032}, 2015.

\bibitem{Bahrampour:DicLearn_Arxiv2015}
S.~Bahrampour, N.~M. Nasrabadi, A.~Ray, and W.~K. Jenkins, ``Multimodal
  task-driven dictionary learning for image classification,'' \emph{arXiv
  preprint arXiv:1502.01094}, 2015.

\bibitem{Wright:SpaRSA_TSP2009}
S.~J. Wright, R.~D. Nowak, and M.~A. Figueiredo, ``Sparse reconstruction by
  separable approximation,'' \emph{IEEE Trans.\ on Signal Processing}, vol.~57,
  no.~7, pp. 2479--2493, 2009.

\bibitem{Elad:ImageDenoiseSparsity_TIP2006}
M.~Elad and M.~Aharon, ``Image denoising via sparse and redundant
  representations over learned dictionaries,'' \emph{IEEE Trans.\ on Image
  Processing}, vol.~15, no.~12, pp. 3736--3745, 2006.

\bibitem{Mousavi:AssymLASSO_arXive2013}
A.~Mousavi, A.~Maleki, and R.~G. Baraniuk, ``Asymptotic analysis of lassos
  solution path with implications for approximate message passing,''
  \emph{arXiv preprint arXiv:1309.5979}, 2013.

\bibitem{bilen2014data}
S.~Bil{\'e}n, K.~Kotobi, and C.~Tucker, ``Data mining--informed cognitive radio
  networks,'' in \emph{Proceedings of the 2014 New England Workshop on Software
  Defined Radio (NEWSDR 14), Boston, MA, USA}, vol.~6, 2014.

\bibitem{7390994}
K.~Kotobi and S.~G. Bilen, ``Introduction of vigilante players in cognitive
  networks with moving greedy players,'' in \emph{2015 IEEE 82nd Vehicular
  Technology Conference (VTC2015-Fall)}, Sept 2015, pp. 1--2.

\bibitem{Lu:SparseCodeBayesPerspec_NeuralNetLearn2013}
X.~Lu, Y.~Wang, and Y.~Yuan, ``Sparse coding from a bayesian perspective,''
  \emph{Neural Networks and Learning Systems, IEEE Transactions on}, vol.~24,
  no.~6, pp. 929--939, 2013.

\bibitem{DobigeonAndHero:HierarchyBayesImageRecons_TIP2009}
N.~Dobigeon, A.~O. Hero, and J.-Y. Tourneret, ``Hierarchical bayesian sparse
  image reconstruction with application to mrfm,'' \emph{IEEE Trans.\ on Image
  Processing}, vol.~18, no.~9, pp. 2059--2070, 2009.

\bibitem{BeckerAndCandes:SparseRecoveryNESTA_ImagScienSIAM2011}
S.~Becker, J.~Bobin, and E.~J. Cand{\`e}s, ``Nesta: a fast and accurate
  first-order method for sparse recovery,'' \emph{SIAM Journal on Imaging
  Sciences}, vol.~4, no.~1, pp. 1--39, 2011.

\bibitem{Boyd:ADMM_MachineLearn2011}
S.~Boyd, N.~Parikh, E.~Chu, B.~Peleato, and J.~Eckstein, ``Distributed
  optimization and statistical learning via the alternating direction method of
  multipliers,'' \emph{Foundations and Trends{\textregistered} in Machine
  Learning}, vol.~3, no.~1, pp. 1--122, 2011.

\bibitem{farhat2016stochastic}
F.~Farhat, D.~Tootaghaj, Y.~He, A.~Sivasubramaniam, M.~Kandemir, and C.~Das,
  ``Stochastic modeling and optimization of stragglers,'' \emph{IEEE
  Transactions on Cloud Computing}, 2016.

\bibitem{Yen:MM_VariableSelectionSpikeSlab_Stat2011}
T.-J. Yen \emph{et~al.}, ``A majorization--minimization approach to variable
  selection using spike and slab priors,'' \emph{The Annals of Statistics},
  vol.~39, no.~3, pp. 1748--1775, 2011.

\bibitem{Gorodnitsky:L1ReweightFOCUSS_TSP1997}
I.~F. Gorodnitsky and B.~D. Rao, ``Sparse signal reconstruction from limited
  data using focuss: A re-weighted minimum norm algorithm,'' \emph{IEEE Trans.\
  on Signal Processing}, vol.~45, no.~3, pp. 600--616, 1997.

\bibitem{Candes:ReweightedL1_FourierAnalysis2008}
E.~J. Candes, M.~B. Wakin, and S.~P. Boyd, ``Enhancing sparsity by reweighted
  l1 minimization,'' \emph{Journal of Fourier analysis and applications},
  vol.~14, no. 5-6, pp. 877--905, 2008.

\bibitem{Suo:HierarchySpikeSlab_ICASSP2013}
Y.~Suo, M.~Dao, T.~Tran, U.~Srinivas, and V.~Monga, ``Hierarchical sparse
  modeling using spike and slab priors,'' in \emph{Proc.\ IEEE Int.\ on Conf.\
  Acoustics, Speech, and Signal Processing}.\hskip 1em plus 0.5em minus
  0.4em\relax IEEE, 2013, pp. 3103--3107.

\bibitem{Lazaro:SpikeSlabInferMultiTask_NIPS2011}
M.~L{\'a}zaro-gredilla and M.~K. Titsias, ``Spike and slab variational
  inference for multi-task and multiple kernel learning,'' in \emph{Advances in
  neural information processing systems}, 2011, pp. 2339--2347.

\bibitem{Hernandez:EP-SpikeSlab_MachineLearning2014}
J.~M. Hern{\'a}ndez-Lobato, D.~Hern{\'a}ndez-Lobato, and A.~Su{\'a}rez,
  ``Expectation propagation in linear regression models with spike-and-slab
  priors,'' \emph{Machine Learning}, pp. 1--51, 2014.

\bibitem{Hernandez:EP-GeneralizeSpikeSlab_MachineLearnResearch2013}
D.~Hern{\'a}ndez-Lobato, J.~M. Hern{\'a}ndez-Lobato, and P.~Dupont,
  ``Generalized spike-and-slab priors for bayesian group feature selection
  using expectation propagation,'' \emph{The Journal of Machine Learning
  Research}, vol.~14, no.~1, pp. 1891--1945, 2013.

\bibitem{Kappen:VariationalGarrote_MachineLearning2014}
H.~J. Kappen and V.~G{\'o}mez, ``The variational garrote,'' \emph{Machine
  Learning}, vol.~96, no.~3, pp. 269--294, 2014.

\bibitem{Vila:EMBGAMP_Asilomar2011}
J.~Vila and P.~Schniter, ``Expectation-maximization bernoulli-gaussian
  approximate message passing,'' in \emph{proc. of Asilomar Conf on Signals,
  Systems and Computers}.\hskip 1em plus 0.5em minus 0.4em\relax IEEE, 2011,
  pp. 799--803.

\bibitem{Chouzenoux:l2l1l0withMM_SIAMImageScience2013}
E.~Chouzenoux, A.~Jezierska, J.-C. Pesquet, and H.~Talbot, ``A
  majorize-minimize subspace approach for $\ell_2-\ell_0$ image
  regularization,'' \emph{SIAM Journal on Imaging Sciences}, vol.~6, no.~1, pp.
  563--591, 2013.

\bibitem{Chaari:l2l1l0BAyesian_ICASSP2014}
L.~Chaari, H.~Batatia, N.~Dobigeon, and J.-Y. Tourneret, ``A hierarchical
  sparsity-smoothness bayesian model for $\ell_0+ \ell_1+ \ell_2$
  regularization,'' in \emph{Proc.\ IEEE Int.\ on Conf.\ Acoustics, Speech, and
  Signal Processing}.\hskip 1em plus 0.5em minus 0.4em\relax IEEE, 2014, pp.
  1901--1905.

\bibitem{Zou:AdapElasticNet_AnnalStat2009}
H.~Zou and H.~H. Zhang, ``On the adaptive elastic-net with a diverging number
  of parameters,'' \emph{Annals of statistics}, vol.~37, no.~4, p. 1733, 2009.

\bibitem{Mohammadi:PCADicLearningSRC_Elsevier2014}
M.~Mohammadi, E.~Fatemizadeh, and M.~Mahoor, ``Pca-based dictionary building
  for accurate facial expression recognition via sparse representation,''
  \emph{Journal of Visual Communication and Image Representation}, vol.~25,
  no.~5, pp. 1082--1092, 2014.

\bibitem{Burton:QuasiCauchy_AmericanMAth2010}
D.~Burton and J.~Coleman, ``Quasi-cauchy sequences,'' \emph{The American
  Mathematical Monthly}, vol. 117, no.~4, pp. 328--333, 2010.

\bibitem{SPARSA:Code_Online}
\BIBentryALTinterwordspacing
S.~J. Wright, R.~D. Nowak, and M.~Figueiredo. (2014, July) {SpaRSA} software.
  [Online]. Available: \url{http://www.lx.it.pt/~mtf/SpaRSA/}
\BIBentrySTDinterwordspacing

\bibitem{CPLEX:IBM_Online}
\BIBentryALTinterwordspacing
{IBM}. (2014, Oct.) {ILOG} {CPLEX} optimization studio. [Online]. Available:
  \url{http://www-01.ibm.com/software/commerce/optimization/cplex-optimizer/}
\BIBentrySTDinterwordspacing

\bibitem{Lecun:MNIST_Online}
\BIBentryALTinterwordspacing
L.~Yann, C.~Cortes, and C.~J. Burges. (2014, Dec.) {MNIST} dataset. [Online].
  Available: \url{http://yann.lecun.com/exdb/mnist/}
\BIBentrySTDinterwordspacing

\bibitem{MarioAndAfonso:ImageRecoverySALSA_TIP2010}
M.~V. Afonso, J.~M. Bioucas-Dias, and M.~A. Figueiredo, ``Fast image recovery
  using variable splitting and constrained optimization,'' \emph{IEEE Trans.\
  on Image Processing}, vol.~19, no.~9, pp. 2345--2356, 2010.

\bibitem{Chambolle:TV_Minimization_MathImagVision2004}
A.~Chambolle, ``An algorithm for total variation minimization and
  applications,'' \emph{Journal of Mathematical imaging and vision}, vol.~20,
  no. 1-2, pp. 89--97, 2004.

\bibitem{Freeman:ExampleBasedSR_CompGraph2002}
W.~T. Freeman, T.~R. Jones, and E.~C. Pasztor, ``Example-based
  super-resolution,'' \emph{Computer Graphics and Applications, IEEE}, vol.~22,
  no.~2, pp. 56--65, 2002.

\bibitem{Farsiu:MultiFrameSR_TIP2004}
S.~Farsiu, M.~D. Robinson, M.~Elad, and P.~Milanfar, ``Fast and robust
  multiframe super resolution,'' \emph{IEEE Trans.\ on Image Processing},
  vol.~13, no.~10, pp. 1327--1344, 2004.

\bibitem{Park:ReviewSR_SPM2003}
S.~C. Park, M.~K. Park, and M.~G. Kang, ``Super-resolution image
  reconstruction: a technical overview,'' \emph{Signal Processing Magazine,
  IEEE}, vol.~20, no.~3, pp. 21--36, 2003.

\bibitem{Tappen:SparsePriorSR_2003}
M.~F. Tappen, B.~C. Russell, and W.~T. Freeman, ``Exploiting the sparse
  derivative prior for super-resolution and image demosaicing,'' in \emph{IEEE
  Workshop on Statistical and Computational Theories of Vision}, 2003.

\bibitem{Fattal:SRstatistic_ACM2007}
R.~Fattal, ``Image upsampling via imposed edge statistics,'' in \emph{ACM
  Trans. Graph. (TOG)}, vol.~26, no.~95.

\bibitem{Dai:edgeSR_CVPR2007}
S.~Dai, M.~Han, W.~Xu, Y.~Wu, and Y.~Gong, ``Soft edge smoothness prior for
  alpha channel super resolution,'' in \emph{Proc.\ IEEE Conf.\ Computer Vision
  Pattern Recognition}, 2007, pp. 1--8.

\bibitem{Minaee:Segmen_ICIP2015}
S.~Minaee and Y.~Wang, ``Screen content image segmentation using least absolute
  deviation fitting,'' in \emph{Proc.\ IEEE Conf.\ on Image Processing}, 2015,
  pp. 3295--3299.

\bibitem{Glasner:SRSingle2009ICCV}
D.~Glasner, S.~Bagon, and M.~Irani, ``Super-resolution from a single image,''
  in \emph{Proc.\ IEEE Conf.\ on Computer Vision}, 2009, pp. 349--356.

\bibitem{Kim:SRNaturalPrior_PAMI2010}
K.~I. Kim and Y.~Kwon, ``Single-image super-resolution using sparse regression
  and natural image prior,'' \emph{IEEE Trans.\ on Pattern Analysis and Machine
  Int.}, vol.~32, no.~6, pp. 1127--1133, 2010.

\bibitem{Sun:HallucinationSR_CVPR2003}
J.~Sun, N.-N. Zheng, H.~Tao, and H.-Y. Shum, ``Image hallucination with primal
  sketch priors,'' in \emph{Proc.\ IEEE Conf.\ Computer Vision Pattern
  Recognition}, vol.~2, 2003, pp. 729--736.

\bibitem{Chang:NeighborEmbeddingSR_CVPR2004}
H.~Chang, D.-Y. Yeung, and Y.~Xiong, ``Super-resolution through neighbor
  embedding,'' in \emph{Proc.\ IEEE Conf.\ Computer Vision Pattern
  Recognition}, pp. 275--282.

\bibitem{Zeyde:SR_Springer2012}
R.~Zeyde, M.~Elad, and M.~Protter, ``On single image scale-up using
  sparse-representations,'' in \emph{Curves and Surfaces}.\hskip 1em plus 0.5em
  minus 0.4em\relax Springer, 2012, pp. 711--730.

\bibitem{Timofte:AnchoredANR_ICCV2013}
R.~Timofte, V.~De, and L.~Van~Gool, ``Anchored neighborhood regression for fast
  example-based super-resolution,'' in \emph{Proc.\ IEEE Conf.\ on Computer
  Vision}, 2013, pp. 1920--1927.

\bibitem{Timofte:AnchoredARN+_ACCV2014}
R.~Timofte, V.~De~Smet, and L.~Van~Gool, ``A+: Adjusted anchored neighborhood
  regression for fast super-resolution,'' in \emph{Computer
  Vision--ACCV}.\hskip 1em plus 0.5em minus 0.4em\relax Springer, 2014, pp.
  111--126.

\bibitem{Peleg:StatisticalSR_2014TIP}
T.~Peleg and M.~Elad, ``A statistical prediction model based on sparse
  representations for single image super-resolution,'' \emph{IEEE Trans.\ on
  Image Processing}, vol.~23, no.~6, pp. 2569--2582, 2014.

\bibitem{Zhou:KPCAcodingSR2015_SPL}
F.~Zhou, T.~Yuan, W.~Yang, and Q.~Liao, ``Single-image super-resolution based
  on compact kpca coding and kernel regression,'' \emph{IEEE Signal Processing
  Letters}, vol.~22, no.~3, pp. 336--340, 2015.

\bibitem{Polatkan:BayesianSR_2015PAMI}
G.~Polatkan, M.~Zhou, L.~Carin, D.~Blei, and I.~Daubechies, ``A bayesian
  nonparametric approach to image super-resolution,'' \emph{IEEE Trans.\ on
  Pattern Analysis and Machine Int.}, vol.~37, no.~2, pp. 346--358, 2015.

\bibitem{Huang:TransformedExampleSR_2015cvpr}
J.-B. Huang, A.~Singh, and N.~Ahuja, ``Single image super-resolution from
  transformed self-exemplars,'' in \emph{Proc.\ IEEE Conf.\ Computer Vision
  Pattern Recognition}, 2015, pp. 5197--5206.

\bibitem{Zhang:MultiLinearMap_2015TIP}
K.~Zhang, D.~Tao, X.~Gao, X.~Li, and Z.~Xiong, ``Learning multiple linear
  mappings for efficient single image super-resolution,'' \emph{IEEE Trans.\ on
  Image Processing}, vol.~24, no.~3, pp. 846--861, 2015.

\bibitem{He:BetaProcessDL_CVPR2013}
L.~He, H.~Qi, and R.~Zaretzki, ``Beta process joint dictionary learning for
  coupled feature spaces with application to single image super-resolution,''
  in \emph{Proc.\ IEEE Conf.\ Computer Vision Pattern Recognition}, 2013, pp.
  345--352.

\bibitem{YangAndWright:RawSparseSR_CVPR2008}
J.~Yang, J.~Wright, T.~Huang, and Y.~Ma, ``Image super-resolution as sparse
  representation of raw image patches,'' in \emph{Proc.\ IEEE Conf.\ Computer
  Vision Pattern Recognition}, 2008, pp. 1--8.

\bibitem{Wang:SemiCoupledDL_CVPR2012}
S.~Wang, L.~Zhang, Y.~Liang, and Q.~Pan, ``Semi-coupled dictionary learning
  with applications to image super-resolution and photo-sketch synthesis,'' in
  \emph{Proc.\ IEEE Conf.\ Computer Vision Pattern Recognition}, 2012, pp.
  2216--2223.

\bibitem{Yang:CoupledDicLearnSR_TIP2012}
J.~Yang, Z.~Wang, Z.~Lin, S.~Cohen, and T.~Huang, ``Coupled dictionary training
  for image super-resolution,'' \emph{IEEE Trans.\ on Image Processing},
  vol.~21, no.~8, pp. 3467--3478, 2012.

\bibitem{Zhang:DualDLSR_ICME2011}
H.~Zhang, Y.~Zhang, and T.~S. Huang, ``Efficient sparse representation based
  image super resolution via dual dictionary learning,'' in \emph{Multimedia
  and Expo (ICME), IEEE Int. Conf. on}, 2011, pp. 1--6.

\bibitem{Mousavi:ICR_SPL2015}
H.~S. Mousavi, V.~Monga, and T.~D. Tran, ``Iterative convex refinement for
  sparse recovery,'' \emph{IEEE Signal Processing Letters}, vol.~22, no.~11,
  pp. 1903--1907, 2015.

\bibitem{Shah:ColorVideoSR_TIP1999}
N.~R. Shah and A.~Zakhor, ``Resolution enhancement of color video sequences,''
  \emph{IEEE Trans.\ on Image Processing}, vol.~8, no.~6, pp. 879--885, 1999.

\bibitem{Tom:ColorVideoSR_TIP2001}
B.~C. Tom and A.~K. Katsaggelos, ``Resolution enhancement of monochrome and
  color video using motion compensation,'' \emph{IEEE Trans.\ on Image
  Processing}, vol.~10, no.~2, pp. 278--287, 2001.

\bibitem{Farsiu:ColorDemosaicSR_TIP2006}
S.~Farsiu, M.~Elad, and P.~Milanfar, ``Multiframe demosaicing and
  super-resolution of color images,'' \emph{IEEE Trans.\ on Image Processing},
  vol.~15, no.~1, pp. 141--159, 2006.

\bibitem{Belekos:MultiChannelVideoSR_2010TIP}
S.~P. Belekos, N.~P. Galatsanos, and A.~K. Katsaggelos, ``Maximum a posteriori
  video super-resolution using a new multichannel image prior,'' \emph{IEEE
  Trans.\ on Image Processing}, vol.~19, no.~6, pp. 1451--1464, 2010.

\bibitem{Mairal:SparsityImageRestoration_2008TIP}
J.~Mairal, M.~Elad, and G.~Sapiro, ``Sparse representation for color image
  restoration,'' \emph{IEEE Trans.\ on Image Processing}, vol.~17, no.~1, pp.
  53--69, 2008.

\bibitem{Xu:QuaternionColorSR_2015TIP}
Y.~Xu, L.~Yu, H.~Xu, H.~Zhang, and T.~Nguyen, ``Vector sparse representation of
  color image using quaternion matrix analysis,'' \emph{IEEE Trans.\ on Image
  Processing}, vol.~24, no.~4, pp. 1315--1329, 2015.

\bibitem{Jung:MumfordShahSR_2011TIP}
M.~Jung, X.~Bresson, T.~F. Chan, and L.~A. Vese, ``Nonlocal mumford-shah
  regularizers for color image restoration,'' \emph{IEEE Trans.\ on Image
  Processing}, vol.~20, no.~6, pp. 1583--1598, 2011.

\bibitem{Shen:SparsityColorSR_2013AppliedMechanics}
M.~F. Shen, L.~S. Zhang, and H.~Z. Fu, ``Color image super-resolution
  reconstruction based on sparse representation,'' in \emph{Applied Mechanics
  and Materials}, vol. 278, 2013, pp. 1221--1227.

\bibitem{Dai:SoftCutColorSR_2009TIP}
S.~Dai, M.~Han, W.~Xu, Y.~Wu, Y.~Gong, and A.~K. Katsaggelos, ``Softcuts: a
  soft edge smoothness prior for color image super-resolution,'' \emph{IEEE
  Trans.\ on Image Processing}, vol.~18, no.~5, pp. 969--981, 2009.

\bibitem{Gong:NeighborEmbedColorSR_2011CompInfo}
M.~Gong, K.~He, J.~Zhou, and J.~Zhang, ``Single color image super-resolution
  through neighbor embedding,'' \emph{Journal of computational information
  systems}, vol.~7, no.~1, pp. 49--56, 2011.

\bibitem{Wang:EdgeColorSR_2015APPC}
R.~Wang, Y.~Liang, and Y.~Liang, ``Color image super-resolution reconstruction
  based on pocs with edge preserving,'' in \emph{Applied Optics and Photonics
  (AOPC)}, vol. 9675, 2015, pp. 1--6.

\bibitem{Maalouf:GroupletColorSR_2009EuroSP}
A.~Maalouf and M.~C. Larabi, ``Grouplet-based color image super-resolution,''
  in \emph{Signal Processing Conference, European}.\hskip 1em plus 0.5em minus
  0.4em\relax IEEE, 2009, pp. 25--29.

\bibitem{Liu:ColorizationForSR_2010ECCV}
S.~Liu, M.~S. Brown, S.~J. Kim, and Y.-W. Tai, ``Colorization for single image
  super resolution.''\hskip 1em plus 0.5em minus 0.4em\relax Springer, 2010,
  pp. 323--336.

\bibitem{Cheng:RGBSR_2015IETImaPrc}
M.~Cheng, C.~Wang, and J.~Li, ``Single-image super-resolution in rgb space via
  group sparse representation,'' \emph{IET Image Processing}, vol.~9, no.~6,
  pp. 461--467, 2015.

\bibitem{Maalouf:GroupSR_2012IETImgPrc}
A.~Maalouf and M.-C. Larabi, ``Colour image super-resolution using geometric
  grouplets,'' \emph{IET image processing}, vol.~6, no.~2, pp. 168--180, 2012.

\bibitem{Mousavi:ColorSR_2016ICIP}
H.~S. Mousavi and V.~Monga, ``Sparsity based super resolution using color
  channel constraints,'' in \emph{Proc.\ IEEE Conf.\ on Image Processing},
  2016, pp. 579--583.

\bibitem{mousavi2017sparsity}
------, ``Sparsity-based color image super resolution via exploiting cross
  channel constraints,'' \emph{IEEE Transactions on Image Processing}, vol.~26,
  no.~11, pp. 5094--5106, 2017.

\bibitem{Keren:ColorSR_1999MachineVision}
D.~Keren and M.~Osadchy, ``Restoring subsampled color images,'' \emph{Machine
  Vision and applications}, vol.~11, no.~4, pp. 197--202, 1999.

\bibitem{Menon:ColorDemosaick_2009TIP}
D.~Menon and G.~Calvagno, ``Regularization approaches to demosaicking,''
  \emph{IEEE Trans.\ on Image Processing}, vol.~18, no.~10, pp. 2209--2220,
  2009.

\bibitem{Beck:IterativeShrinkageThresholdFISTA_ImagScienSIAM2009}
A.~Beck and M.~Teboulle, ``A fast iterative shrinkage-thresholding algorithm
  for linear inverse problems,'' \emph{SIAM Journal on Imaging Sciences},
  vol.~2, no.~1, pp. 183--202, 2009.

\bibitem{Minaee:Segmen_Asilomar2015}
S.~Minaee, A.~Abdolrashidi, and Y.~Wang, ``Screen content image segmentation
  using sparse-smooth decomposition,'' \emph{arXiv preprint arXiv:1511.06911},
  2015.

\bibitem{Mairal:ODL_ICML2009}
J.~Mairal, F.~Bach, J.~Ponce, and G.~Sapiro, ``Online dictionary learning for
  sparse coding,'' in \emph{Proc.\ IEEE Conf.\ on Machine Learning}.\hskip 1em
  plus 0.5em minus 0.4em\relax ACM, 2009, pp. 689--696.

\bibitem{Bevilacqua:NENNLS_BMVA2012}
M.~Bevilacqua, A.~Roumy, C.~Guillemot, and M.~L. Alberi-Morel, ``Low-complexity
  single-image super-resolution based on nonnegative neighbor embedding,'' in
  \emph{British Machine Vision Conference}, 2012, pp. 135.1--135.10.

\bibitem{Wang:HowGoodIsMSE_SPM2009}
Z.~Wang and A.~C. Bovik, ``Mean squared error: love it or leave it? a new look
  at signal fidelity measures,'' \emph{Signal Processing Magazine, IEEE},
  vol.~26, no.~1, pp. 98--117, 2009.

\bibitem{Wang:SSIM_TIP2004}
Z.~Wang, A.~C. Bovik, H.~R. Sheikh, and E.~P. Simoncelli, ``Image quality
  assessment: from error visibility to structural similarity,'' \emph{IEEE
  Trans.\ on Image Processing}, vol.~13, no.~4, pp. 600--612, 2004.

\bibitem{Zhang:SCIELAB_Elsevier1998}
X.~Zhang and B.~A. Wandell, ``Color image fidelity metrics evaluated using
  image distortion maps,'' \emph{Signal processing}, vol.~70, no.~3, pp.
  201--214, 1998.

\bibitem{krizhevsky2012imagenet}
A.~Krizhevsky, I.~Sutskever, and G.~E. Hinton, ``Imagenet classification with
  deep convolutional neural networks,'' in \emph{Advances in neural information
  processing systems}, 2012, pp. 1097--1105.

\bibitem{simonyan2014very}
K.~Simonyan and A.~Zisserman, ``Very deep convolutional networks for
  large-scale image recognition,'' \emph{arXiv preprint arXiv:1409.1556}, 2014.

\bibitem{girshick2014rich}
R.~Girshick, J.~Donahue, T.~Darrell, and J.~Malik, ``Rich feature hierarchies
  for accurate object detection and semantic segmentation,'' in
  \emph{Proceedings of the IEEE conference on computer vision and pattern
  recognition}, 2014, pp. 580--587.

\bibitem{goodfellow2014generative}
I.~Goodfellow, J.~Pouget-Abadie, M.~Mirza, B.~Xu, D.~Warde-Farley, S.~Ozair,
  A.~Courville, and Y.~Bengio, ``Generative adversarial nets,'' in
  \emph{Advances in neural information processing systems}, 2014, pp.
  2672--2680.

\bibitem{karpathy2015deep}
A.~Karpathy and L.~Fei-Fei, ``Deep visual-semantic alignments for generating
  image descriptions,'' in \emph{Proceedings of the IEEE Conference on Computer
  Vision and Pattern Recognition}, 2015, pp. 3128--3137.

\bibitem{dong2014learning}
C.~Dong, C.~C. Loy, K.~He, and X.~Tang, ``Learning a deep convolutional network
  for image super-resolution,'' in \emph{Computer Vision, ECCV}.\hskip 1em plus
  0.5em minus 0.4em\relax Springer, 2014, pp. 184--199.

\bibitem{zhou2017detecting}
Z.~Zhou, F.~Farhat, and J.~Wang, ``Detecting dominant vanishing points in
  natural scenes with application to composition-sensitive image retrieval,''
  \emph{IEEE Transactions on Multimedia}, 2017.

\bibitem{kamani2017skeleton}
M.~M. Kamani, F.~Farhat, S.~Wistar, and J.~Z. Wang, ``Skeleton matching with
  applications in severe weather detection,'' \emph{Applied Soft Computing},
  2017.

\bibitem{hinton2006fast}
G.~E. Hinton, S.~Osindero, and Y.-W. Teh, ``A fast learning algorithm for deep
  belief nets,'' \emph{Neural computation}, vol.~18, no.~7, pp. 1527--1554,
  2006.

\bibitem{bengio2007greedy}
Y.~Bengio, P.~Lamblin, D.~Popovici, H.~Larochelle, \emph{et~al.}, ``Greedy
  layer-wise training of deep networks,'' \emph{Advances in neural information
  processing systems}, vol.~19, p. 153, 2007.

\bibitem{poultney2006efficient}
C.~Poultney, S.~Chopra, Y.~L. Cun, \emph{et~al.}, ``Efficient learning of
  sparse representations with an energy-based model,'' in \emph{Advances in
  neural information processing systems}, 2006, pp. 1137--1144.

\bibitem{tiantong16deep}
T.~Guo, H.~S. Mousavi, and V.~Monga, ``Deep learning based image
  super-resolution with coupled backpropagation,'' in \emph{Signal and
  Information Processing, IEEE Global Conference on}, 2016, pp. 237--241.

\bibitem{cui2014deep}
Z.~Cui, H.~Chang, S.~Shan, B.~Zhong, and X.~Chen, ``Deep network cascade for
  image super-resolution,'' in \emph{Computer Vision, ECCV}.\hskip 1em plus
  0.5em minus 0.4em\relax Springer, 2014, pp. 49--64.

\bibitem{wang2015self}
Z.~Wang, Y.~Yang, Z.~Wang, S.~Chang, W.~Han, J.~Yang, and T.~S. Huang,
  ``Self-tuned deep super resolution,'' \emph{arXiv preprint arXiv:1504.05632},
  2015.

\bibitem{huang2015single}
J.-B. Huang, A.~Singh, and N.~Ahuja, ``Single image super-resolution from
  transformed self-exemplars,'' in \emph{Computer Vision and Pattern
  Recognition, IEEE Conference on}, 2015, pp. 5197--5206.

\bibitem{he2016deep}
K.~He, X.~Zhang, S.~Ren, and J.~Sun, ``Deep residual learning for image
  recognition,'' in \emph{Computer Vision and Pattern Recognition, IEEE
  Conference on}, 2016, pp. 770--778.

\bibitem{Kim_2016_VDSR}
J.~Kim, J.~K. Lee, and K.~M. Lee, ``Accurate image super-resolution using very
  deep convolutional networks,'' in \emph{Computer Vision and Pattern
  Recognition, IEEE Conference on}, June 2016.

\bibitem{wahed2007image}
M.~E.-S. Wahed, ``Image enhancement using second generation wavelet super
  resolution,'' \emph{International Journal of Physical Sciences}, vol.~2,
  no.~6, pp. 149--158, 2007.

\bibitem{ji2009robust}
H.~Ji and C.~Ferm{\"u}ller, ``Robust wavelet-based super-resolution
  reconstruction: theory and algorithm,'' \emph{Pattern Analysis and Machine
  Intelligence, IEEE Transactions on}, vol.~31, no.~4, pp. 649--660, 2009.

\bibitem{demirel2009improved}
H.~Demirel, S.~Izadpanahi, and G.~Anbarjafari, ``Improved motion-based
  localized super resolution technique using discrete wavelet transform for low
  resolution video enhancement,'' in \emph{Signal Processing, IEEE European
  Conference on}, 2009, pp. 1097--1101.

\bibitem{robinson2010efficient}
M.~D. Robinson, C.~A. Toth, J.~Y. Lo, and S.~Farsiu, ``Efficient
  fourier-wavelet super-resolution,'' \emph{Image Processing, IEEE Transactions
  on}, vol.~19, no.~10, pp. 2669--2681, 2010.

\bibitem{anbarjafari2010image}
G.~Anbarjafari and H.~Demirel, ``Image super resolution based on interpolation
  of wavelet domain high frequency subbands and the spatial domain input
  image,'' \emph{ETRI journal}, vol.~32, no.~3, pp. 390--394, 2010.

\bibitem{nguyen2000efficient}
N.~Nguyen and P.~Milanfar, ``An efficient wavelet-based algorithm for image
  superresolution,'' in \emph{Image Processing. IEEE International Conference
  on}, vol.~2, 2000, pp. 351--354.

\bibitem{jiji2004single}
C.~Jiji, M.~V. Joshi, and S.~Chaudhuri, ``Single-frame image super-resolution
  using learned wavelet coefficients,'' \emph{International journal of Imaging
  systems and Technology}, vol.~14, no.~3, pp. 105--112, 2004.

\bibitem{mallat2010super}
S.~Mallat and G.~Yu, ``Super-resolution with sparse mixing estimators,''
  \emph{Image Processing, IEEE Transactions on}, vol.~19, no.~11, pp.
  2889--2900, 2010.

\bibitem{tappen2003exploiting}
M.~F. Tappen, B.~C. Russell, and W.~T. Freeman, ``Exploiting the sparse
  derivative prior for super-resolution and image demosaicing,'' in
  \emph{Statistical and Computational Theories of Vision, IEEE Workshop
  on}.\hskip 1em plus 0.5em minus 0.4em\relax Citeseer, 2003.

\bibitem{dong2011image}
W.~Dong, L.~Zhang, G.~Shi, and X.~Wu, ``Image deblurring and supe r-resolution
  by adaptive sparse domain selection and adaptive regularization,''
  \emph{Image Processing, IEEE Transactions on}, vol.~20, no.~7, pp.
  1838--1857, 2011.

\bibitem{kinebuchi2001image}
K.~Kinebuchi, D.~D. Muresan, and T.~W. Parks, ``Image interpolation using
  wavelet based hidden markov trees,'' in \emph{Acoustics, Speech, and Signal
  Processing, IEEE International Conference on}, vol.~3, 2001, pp. 1957--1960.

\bibitem{zhao2003wavelet}
S.~Zhao, H.~Han, and S.~Peng, ``Wavelet-domain hmt-based image
  super-resolution,'' in \emph{Image Processing, IEEE International Conference
  on}, vol.~2, 2003, pp. II--953.

\bibitem{chavez2014super}
H.~Chavez-Roman and V.~Ponomaryov, ``Super resolution image generation using
  wavelet domain interpolation with edge extraction via a sparse
  representation,'' \emph{IEEE Geoscience and Remote Sensing Letters}, vol.~11,
  no.~10, pp. 1777--1781, 2014.

\bibitem{mallat2008wavelet}
S.~Mallat, \emph{A wavelet tour of signal processing: the sparse way}.\hskip
  1em plus 0.5em minus 0.4em\relax Academic press, 2008.

\bibitem{guo2017deep}
T.~Guo, H.~S. Mousavi, T.~H. Vu, and V.~Monga, ``Deep wavelet prediction for
  image super-resolution,'' in \emph{Computer Vision and Pattern Recognition
  Workshops (CVPRW), 2017 IEEE Conference on}.\hskip 1em plus 0.5em minus
  0.4em\relax IEEE, 2017, pp. 1100--1109.

\bibitem{wang2004image}
Z.~Wang, A.~C. Bovik, H.~R. Sheikh, and E.~P. Simoncelli, ``Image quality
  assessment: from error visibility to structural similarity,'' \emph{Image
  Processing, IEEE Transactions on}, vol.~13, no.~4, pp. 600--612, 2004.

\bibitem{Timofte_2017_CVPR_Workshops}
R.~Timofte, E.~Agustsson, L.~Van~Gool, M.-H. Yang, L.~Zhang, \emph{et~al.},
  ``Ntire 2017 challenge on single image super-resolution: Methods and
  results,'' in \emph{Computer Vision and Pattern Recognition Workshops, IEEE
  Conference on}, July 2017.

\bibitem{bevilacqua2012low}
M.~Bevilacqua, A.~Roumy, C.~Guillemot, and M.~L. Alberi-Morel, ``Low-complexity
  single-image super-resolution based on nonnegative neighbor embedding,''
  2012.

\bibitem{zeyde2010single}
R.~Zeyde, M.~Elad, and M.~Protter, ``On single image scale-up using
  sparse-representations,'' in \emph{International conference on curves and
  surfaces}.\hskip 1em plus 0.5em minus 0.4em\relax Springer, 2010, pp.
  711--730.

\bibitem{MartinFTM01}
D.~Martin, C.~Fowlkes, D.~Tal, and J.~Malik, ``A database of human segmented
  natural images and its application to evaluating segmentation algorithms and
  measuring ecological statistics,'' in \emph{Proc. 8th Int'l Conf. Computer
  Vision}, vol.~2, July 2001, pp. 416--423.

\bibitem{kingma2014adam}
D.~Kingma and J.~Ba, ``Adam: A method for stochastic optimization,''
  \emph{arXiv preprint arXiv:1412.6980}, 2014.

\bibitem{tensorflow2015-whitepaper}
\BIBentryALTinterwordspacing
M.~Abadi, A.~Agarwal, and P.~B. et. al., ``{TensorFlow}: Large-scale machine
  learning on heterogeneous systems,'' 2015, software available from
  tensorflow.org. [Online]. Available: \url{http://tensorflow.org/}
\BIBentrySTDinterwordspacing

\bibitem{yang2010image}
J.~Yang, J.~Wright, T.~S. Huang, and Y.~Ma, ``Image super-resolution via sparse
  representation,'' \emph{Image Processing, IEEE Transactions on}, vol.~19,
  no.~11, pp. 2861--2873, 2010.

\bibitem{timofte2014a+}
R.~Timofte, V.~De~Smet, and L.~Van~Gool, ``A+: Adjusted anchored neighborhood
  regression for fast super-resolution,'' in \emph{Computer Vision,
  ACCV}.\hskip 1em plus 0.5em minus 0.4em\relax Springer, 2014, pp. 111--126.

\bibitem{wang2015deep}
Z.~Wang, D.~Liu, J.~Yang, W.~Han, and T.~Huang, ``Deep networks for image
  super-resolution with sparse prior,'' in \emph{Computer Vision, IEEE
  International Conference on}, 2015, pp. 370--378.

\bibitem{dong2016accelerating}
C.~Dong, C.~C. Loy, and X.~Tang, ``Accelerating the super-resolution
  convolutional neural network,'' in \emph{Computer Vision, ECCV}.\hskip 1em
  plus 0.5em minus 0.4em\relax Springer, 2016, pp. 391--407.

\bibitem{field1987relations}
D.~J. Field, ``Relations between the statistics of natural images and the
  response properties of cortical cells,'' \emph{Josa a}, vol.~4, no.~12, pp.
  2379--2394, 1987.

\bibitem{zontak2011internal}
M.~Zontak and M.~Irani, ``Internal statistics of a single natural image,'' in
  \emph{Computer Vision and Pattern Recognition (CVPR), 2011 IEEE Conference
  on}.\hskip 1em plus 0.5em minus 0.4em\relax IEEE, 2011, pp. 977--984.

\bibitem{weiss2007makes}
Y.~Weiss and W.~T. Freeman, ``What makes a good model of natural images?'' in
  \emph{Computer Vision and Pattern Recognition, 2007. CVPR'07. IEEE Conference
  on}.\hskip 1em plus 0.5em minus 0.4em\relax IEEE, 2007, pp. 1--8.

\bibitem{kim2010single}
K.~I. Kim and Y.~Kwon, ``Single-image super-resolution using sparse regression
  and natural image prior,'' \emph{Pattern Analysis and Machine Intelligence,
  IEEE transactions on}, vol.~32, no.~6, pp. 1127--1133, 2010.

\bibitem{roth2009fields}
S.~Roth and M.~J. Black, ``Fields of experts,'' \emph{International Journal of
  Computer Vision}, vol.~82, no.~2, pp. 205--229, 2009.

\bibitem{zhang2012generative}
H.~Zhang, Y.~Zhang, H.~Li, and T.~S. Huang, ``Generative bayesian image super
  resolution with natural image prior,'' \emph{IEEE Transactions on Image
  processing}, vol.~21, no.~9, pp. 4054--4067, 2012.

\bibitem{krishnan2009fast}
D.~Krishnan and R.~Fergus, ``Fast image deconvolution using hyper-laplacian
  priors,'' in \emph{Advances in Neural Information Processing Systems}, 2009,
  pp. 1033--1041.

\bibitem{schulter2015fast}
S.~Schulter, C.~Leistner, and H.~Bischof, ``Fast and accurate image upscaling
  with super-resolution forests,'' in \emph{Proceedings of the IEEE Conference
  on Computer Vision and Pattern Recognition}, 2015, pp. 3791--3799.

\bibitem{burger2012image}
H.~C. Burger, C.~J. Schuler, and S.~Harmeling, ``Image denoising: Can plain
  neural networks compete with bm3d?'' in \emph{Computer Vision and Pattern
  Recognition (CVPR), 2012 IEEE Conference on}.\hskip 1em plus 0.5em minus
  0.4em\relax IEEE, 2012, pp. 2392--2399.

\bibitem{xie2012image}
J.~Xie, L.~Xu, and E.~Chen, ``Image denoising and inpainting with deep neural
  networks,'' in \emph{Advances in Neural Information Processing Systems},
  2012, pp. 341--349.

\bibitem{xu2014deep}
L.~Xu, J.~S. Ren, C.~Liu, and J.~Jia, ``Deep convolutional neural network for
  image deconvolution,'' in \emph{Advances in Neural Information Processing
  Systems}, 2014, pp. 1790--1798.

\bibitem{gregor2010learning}
K.~Gregor and Y.~LeCun, ``Learning fast approximations of sparse coding,'' in
  \emph{Proceedings of the 27th International Conference on Machine Learning
  (ICML-10)}, 2010, pp. 399--406.

\bibitem{chen2015learning}
Y.~Chen, W.~Yu, and T.~Pock, ``On learning optimized reaction diffusion
  processes for effective image restoration,'' in \emph{Proceedings of the IEEE
  Conference on Computer Vision and Pattern Recognition}, 2015, pp. 5261--5269.

\bibitem{daubechies2004iterative}
I.~Daubechies, M.~Defrise, and C.~De~Mol, ``An iterative thresholding algorithm
  for linear inverse problems with a sparsity constraint,''
  \emph{Communications on pure and applied mathematics}, vol.~57, no.~11, pp.
  1413--1457, 2004.

\bibitem{jin2017deep}
K.~H. Jin, M.~T. McCann, E.~Froustey, and M.~Unser, ``Deep convolutional neural
  network for inverse problems in imaging,'' \emph{IEEE Transactions on Image
  Processing}, vol.~26, no.~9, pp. 4509--4522, 2017.

\end{thebibliography}
   \end{singlespace}

\newpage
\thispagestyle{empty}

\vita{SupplementaryMaterial/Vita}

\backmatter

\end{document}